\newcommand{\Cov}{\text{Cov}}
\DeclareMathOperator{\tr}{tr}
\DeclareMathOperator{\Var}{Var}
\newcommand{\vertiii}[1]{{\left\vert\kern-0.25ex\left\vert\kern-0.25ex\left\vert #1 \right\vert\kern-0.25ex\right\vert\kern-0.25ex\right\vert}}
\newcommand\independent{\protect\mathpalette{\protect\independenT}{\perp}}
\def\independenT#1#2{\mathrel{\rlap{$#1#2$}\mkern2mu{#1#2}}}
\newcommand{\lfb}{ \mathcal{Q}^1_{n, \delta} }
\newcommand{\newText}{\color{.}}
\newcommand{\newTextTwo}{\color{.}}
\title{High-dimensional Functional Graphical Model Structure Learning via Neighborhood Selection Approach}
\author[1]{Boxin Zhao}
\author[1]{Percy S. Zhai}
\author[2]{Y. Samuel Wang}
\author[3]{Mladen Kolar}
\affil[1]{Booth School of Business, The University of Chicago}
\affil[2]{Department of Statistics and Data Science, Cornell University}
\affil[3]{Department of Data Sciences and Operations, Marshall School of Business, University of Southern California}
\date{}
\begin{document}

\maketitle

\begin{abstract}
Undirected graphical models are widely used to model the conditional independence structure of vector-valued data. However, in many modern applications, for example those involving EEG and fMRI data, observations are more appropriately modeled as multivariate random functions rather than vectors. Functional graphical models have been proposed to model the conditional independence structure of such functional data. We propose a neighborhood selection approach to estimate the structure of Gaussian functional graphical models, where we first estimate the neighborhood of each node via a function-on-function regression and subsequently recover the entire graph structure by combining the estimated neighborhoods. Our approach only requires assumptions on the conditional distributions of random functions, and we estimate the conditional independence structure directly. We thus circumvent the need for a well-defined precision operator that may not exist when the functions are infinite dimensional. Additionally, the neighborhood selection approach is computationally efficient and can be easily parallelized. The statistical consistency of the proposed method in the high-dimensional setting is supported by both theory and experimental results. In addition, we study the effect of the choice of the function basis used for dimensionality reduction in an intermediate step. We give a heuristic criterion for choosing a function basis and motivate two practically useful choices, which we justify by both theory and experiments. 
\end{abstract}

\section{Introduction}

Multivariate functional data are collected in applications such as neuroscience, medical science, traffic monitoring, and finance. 
Although each observation is typically only recorded at a discrete set of time points, the underlying process may be interpreted as a realization of a multivariate stochastic process in continuous time. Such interpretation can provide a unifying approach to the analysis of classical functional data and longitudinal data, where functional data can be used to deal with sparsely observed measurements with noise \citep{chiou2016pairwise}.

{\newText
Our work is centered around elucidating the conditional independence structure of multivariate random functions. Gaining a robust understanding of such a structure can yield extensive applications, including the interpretation of time course gene expression data in genomics~\citep{wei2008hidden}, multivariate time series data in finance~\citep{tsay2017modelling}, and electroencephalography (EEG) data in neuroscience~\citep{Qiao2015Functional,Qiao2017Doubly}. This paper is motivated by the analysis of data gathered from fMRI scans conducted on $116$ distinct brain regions, with a time-series signal recorded for each region \citep{milham2012adhd}. The sample comprises two groups: one group of individuals diagnosed with Attention Deficit Hyperactivity Disorder (ADHD) and a control group. Our aim is to comprehend the functional connectivity patterns between these brain regions for both the ADHD and control groups. Such functional connectivity can be uncovered by determining the conditional independence structure across the $116$ random functions.
}

Graphical models are widely used to represent the conditional independence structure of multivariate random variables \citep{Lauritzen1996Graphical}. Let $G = \{V, E\}$ denote an undirected graph where $V$ is the set of vertices and $E \subset V^2$ is the set of edges. When the data consist of random vectors $\bm{X}=(X_1,\dots,X_p)^{\top}$, we say that $\bm{X}$ satisfies the pairwise Markov property with respect to $G$ if $X_{v} \independent X_{w} \mid \{X_u\}_{u \in V\setminus \{v,w\}}$ holds if and only if $\{v,w\} \not\in E$. This notion has been extended to \emph{functional graphical models}---where each node represents a random function rather than a random scalar---in order to characterize the conditional independence relationship of multivariate random functions.

We propose a procedure to estimate the functional graphical model when the random functions follow a \emph{multivariate Gaussian processes} (MGP). This setting was considered in \citet{Qiao2015Functional}, who proposed the functional graphical lasso to estimate the structure of the graph. Their procedure first obtains a finite dimensional representation for the observed multivariate functions using functional principal component analysis (FPCA). Subsequently, a precision matrix is computed from the projection scores of the finite dimensional representation using a graphical lasso objective with a group penalty. The graph structure is finally obtained from the non-zero blocks of the estimated precision matrix. When the underlying random functions are infinite dimensional, the corresponding covariance operator is a compact operator, and its inverse, the precision operator, is ill-defined \citep{Hsing2015Theoretical}. As a result, \citet{Qiao2015Functional} ensure their estimand is well defined by requiring that the random functions lie in a finite dimensional space. However, that assumption is restrictive and excludes infinite dimensional functional data.

In contrast to the functional graphical lasso proposed by \citet{Qiao2015Functional}, we propose a neighborhood selection approach to estimate Gaussian functional graphical models. For vector-valued Gaussian graphical models, \citet{Meinshausen2006High} proposed a neighborhood selection procedure that estimates the neighborhood---the set of adjacent nodes in a conditional independence graph---for each node separately by sparse regression. The entire graph structure is then estimated by combining estimates of node-specific neighborhoods. We extend their approach to the functional data setting. This allows us to avoid defining the precision operator, and, as a result, our theory extends to truly infinite dimensional functional data. 

We cast the neighborhood selection procedure as a function-on-function regression problem. Due to the infinite dimensional nature of the functional data, we first project all observed random functions onto a finite dimensional basis. Thus, we approximate the function-on-function regression with a vector-on-vector regression problem that is solved by minimizing a squared error loss with a group lasso penalty. We do not require a specific choice of function basis for our methodology and the corresponding theory, and we provide a theoretically guided intuition on the choice of function basis under different conditions. Specifically, when estimating the neighborhood of a target node, we project all functions onto a single subspace instead of projecting each function onto its own subspace. In Section~\ref{sec:choice-fun-basis} we provide intuition for why this may be preferable to projecting each function onto its own subspace. This is supported by both the theory in Section~\ref{sec:th-ch-fun-basis} and the simulations in Section~\ref{sec:experimemts}.

In addition to the methodology, we also provide nontrivial theoretical contributions. Most importantly, by directly estimating the conditional independence structure without reference to a population ``precision operator,'' we do not require that the functional data are finite dimensional. However, in the infinite dimensional setting, there will be a residual term due to using a finite dimensional approximation, and deriving error bounds requires a careful analysis of this residual term. Finally, our theory is non-asymptotic in nature, and we derive finite-sample guarantees for graph recovery.

In summary, the neighborhood selection approach yields at least three benefits. First, it allows us to define functional graphical models directly from the conditional distribution and does not require the notion of a precision operator. As a result, we can estimate the graph structure even from infinite dimensional data, rather than restricting the data to finite dimensional functions. Second, by estimating the neighborhood of each node separately, we have increased flexibility in choosing the function basis used to represent the random functions, and tailoring the function basis for the specific task at hand results in empirically better estimation results. Finally, when estimating the neighborhood of a node, we only need to handle $p$ individual $M \times M$ matrices. These neighborhood estimation procedures can be performed in parallel, leading to a highly efficient estimation procedure. In comparison, fglasso \citep{Qiao2015Functional} needs to estimate a $pM \times pM$ matrix, which is computationally much more expensive. We demonstrate the practical value of our neighborhood selection method on the motivating ADHD fMRI data set, and also on another ASD fMRI data set.

\subsection{Related Work}

Our paper contributes to the growing literature on modeling multivariate functional data. We study the estimation of the conditional independence structure from multivariate functional data in the setting of MGPs \citep{Qiao2015Functional}. For each component of the multivariate functional data, \citet{Qiao2015Functional} projected observed functions on the corresponding function basis estimated by FPCA. Subsequently, the structure of the graph is estimated from the projection scores using the multitask extension of the graphical lasso \citep{kolar13multiatticml, Kolar2014Graph}, which estimates the precision matrix with a block structure. However, the precision operator is ill-defined when the functional data is infinite dimensional, and their method is computationally expensive when the number of nodes is large.

In the same setting, \citet{Qiao2017Doubly} proposed a dynamic functional graphical model that allows the graph structure to change over time.  \citet{Zhu2016Bayesian} proposed a Bayesian approach to functional graphical models. \citet{zapata2021partial} studied estimation of the graph structure under the assumption of partial separability. Roughly speaking, partial separability assumes that the time-varying covariance of the MGP can be decomposed node-wise into a time-varying component and a constant. However, this assumption can be restrictive and may not hold in many settings; \citet{lynch2018test} proposed a test to verify the partial separability assumption.  \citet{Zhao2019Direct} and \citet{zhao2022fudge} discussed direct estimation of the difference between two Gaussian functional graphical models without the need to estimate each individual structure. 
\citet{tsai2023latent} studied a latent multi-modal functional graphical model.
\citet{solea2020copula} extended the Gaussian functional graphical model to a copula version by allowing monotonic transformations of the FPCA scores. In addition, \citet{Li2018nonparametric} and~\citet{lee2022nonparametric} discussed a nonparametric functional graphical model; however, the graph therein is defined based on the additive conditional independence (ACI) relationship~\citep{Li2014additive,lee2016variable,Lee2016additive}, which is not equivalent to the conditional independence (CI) relationship and is thus not directly comparable to our paper.

Our paper is also related to the literature on \emph{function-on-function regression} that studies regression problems in which both the response and predictor variables are functions. \citet{Luo2017Function} and \citet{Qi2018Function} studied function-on-function linear regression, where each predictor function is transformed by a corresponding integral operator defined by the bivariate coefficient function, and the addition of all transformed predictor functions is defined as the signal function. The response function is then assumed to be a simple addition of the signal, intercept, and noise functions. To estimate the coefficient functions, they used the FPCA basis of the signal function to expand the observed functions and transform the original function-on-function regression problem into a function-on-scalar regression problem with uncorrelated predictors. Ultimately, this regression is solved by a penalized least-squares method. In contrast, we focus on variable selection, rather than prediction, and develop non-asymptotic theory. \citet{Luo2016Functional} also converted the function-on-function regression problem to function-on-scalar regression by projecting predictor functions with basis functions, but they restricted their choice of function basis to wavelet transformation, and thus can be considered as a special case of our approach. \citet{Ivanescu2015Penalized} discussed a method similar to what we propose, but they did not give any guidance on how to choose a function basis or offer any theoretical guarantees. These approaches can be treated as special cases within our framework with a specific choice of function basis. In addition to linear function-on-function regression, \citet{Xin2019Nonlinear} and \citet{Scheipl2015Functional} studied functional additive models, but did not give theoretical results on variable selection in the high-dimensional setting.

While finishing this paper, we were made aware of concurrent work by \citet{solea2021nonparametric} that also uses a neighborhood selection approach to estimate a graphical model from functional data. There are several differences between our approaches. First, when estimating the neighborhood of a particular node, \citet{solea2021nonparametric} project each observed function to its own eigenfunction basis. We consider several alternative choices for the functional basis expansion and suggest a different approach: when estimating the neighborhood of a node $j$, we project all observed functions onto the same basis---typically the eigenbasis of the $j$-th node. However, we may use different bases when estimating different neighborhoods. Second, \citet{solea2021nonparametric} assume a slightly more general setting where the FPCA scores arise from a non-parametric additive regression model, whereas the scores arise from a linear model in our work. However, it is unclear what the class of joint distributions is with conditional means that satisfy the non-linear additive structure. The Gaussian setting considered by our work seems like the only commonly used distribution to have a conditional mean that satisfies the additive structure. 
Finally, to show the consistency of graph recovery,  \citet{solea2021nonparametric} require that the truncation dimension grows with $n$; while our result shows non-asymptotic error bounds for any fixed truncation dimension and chooses the truncation dimension to satisfy a criterion that is independent of the sample size. 
{\newText
Our theoretical analysis also relies on weaker assumptions and has better convergence rates. See Section~\ref{sec:fix-fun-basis} for a detailed comparison between our theoretical results and the ones of~\citet{solea2021nonparametric}.
}

\subsection{Notation}

Let $[n]$ denote the set $\{1,2,\dots,n\}$. For a set $S$, we use $\vert S \vert$ to denote its cardinality. 

We use bold lower case letters (e.g., $\bm{a}$ and $\bm{b}$) to denote vectors and bold upper case letters (e.g., $\bm{A}$ and $\bm{B}$) to denote matrices. For a vector $\bm{a} \in\mathbb{R}^{n}$, let $\Vert \bm{a} \Vert_{q}$ denote its $l_q$-norm, $q\in[1,\infty)$, with the usual extension to $q=0$ and $q=\infty$. For a set of indices $I\subseteq [n]$, we use $\bm{a}_I$ to denote the vector in $\mathbb{R}^{n}$ with $a_{I,i}=v_i$ for all $i\in I$ and $a_{I,i}=0$ for all $i \notin I$. Let $\mathcal{G}=\{G_1,G_2,\dots,G_{ N_{\mathcal{G}} }\}$ be a partitioning of the set $[n]$ into a set of $N_{\mathcal{G}}$ disjoint groups. The mixed norm $\Vert \cdot \Vert_{1,q}$ is defined as $\Vert \bm{a} \Vert_{1,q}=\sum^{N_{\mathcal{G}}}_{t=1}\Vert \bm{a}_{G_{t}} \Vert_{q}$. For two vectors $\bm{a},\bm{b}\in\mathbb{R}^{n}$, we use $\langle \bm{a},\bm{b} \rangle =\sum^{n}_{i=1} a_i b_i$ to denote their inner product.

For a symmetric matrix $\bm{B}$, we use $\rho_{\max}(\bm{B})$ to denote its largest eigenvalue, $\rho_{\min}(\bm{B})$ to denote its smallest eigenvalue, and $\tr(\bm{B})$ to denote its trace. For a matrix $\bm{A} \in \mathbb{R}^{n_1 \times n_2}$, we use $\text{vec}(\bm{A})$ to denote the vector in $\mathbb{R}^{n_1 n_2}$ formed by stacking the columns of $\bm{A}$. For two matrices $\bm{A} \in \mathbb{R}^{n\times m}$ and $\bm{B} \in \mathbb{R}^{r \times s}$, $\bm{A} \otimes \bm{B} \in \mathbb{R}^{nr \times ms}$ denotes their Kronecker product, with $(\bm{A} \otimes \bm{B})_{ik,jl}=A_{ij} B_{kl}$. For a matrix $\bm{A} \in \mathbb{R}^{n_1 \times n_2}$, we use $\Vert \bm{A} \Vert_{q}$ to denote its operator norm, $q\in[1,\infty]$, that is, $\Vert \bm{A} \Vert_{q}=\sup_{\bm{v}\in\mathbb{R}^{n_2}:\Vert \bm{v} \Vert_q=1}{\Vert \bm{Av} \Vert_q}$. Thus, $\Vert \bm{A} \Vert_2$ denotes the maximum singular value of $\bm{A}$, $\Vert \bm{A} \Vert_1=\max_{1\leq j \leq n_2}\sum^{n_1}_{i=1}\vert A_{ij} \vert$, and $\Vert \bm{A} \Vert_\infty=\max_{1\leq i \leq n_1}\sum^{n_1}_{j=1}\vert A_{ij} \vert$. We use $\Vert \bm{A} \Vert_{F}$ to denote the Frobenius norm of $\bm{A}$, that is, $\Vert \bm{A} \Vert^{2}_{F}=\tr(\bm{A}^{\top}\bm{A})$. We use $\vertiii{\bm{A}}_{\infty}$ to denote the elementwise maximum absolute value of $\bm{A}$, that is, $\vertiii{\bm{A}}_{\infty}=\max_{i,j}\vert A_{ij}\vert$.

For a real-valued differentiable function $f:\mathbb{R}^{n}\rightarrow\mathbb{R}$, we use $\nabla f(\bm{x}) \in\mathbb{R}^{n}$ to denote its gradient at a point $\bm{x} \in\mathbb{R}^{n}$. 
For a closed interval $\mathcal{T} \subseteq \mathbb{R}$, we define $\mathcal{L}^2(\mathcal{T})$ to be the Hilbert space of square-integrable real-valued functions defined on domain $\mathcal{T}$, where for $f,g \in \mathcal{L}^2(\mathcal{T})$, we use $\langle f,g \rangle=\int_{\mathcal{T}} f(t) g(t) dt$ to denote their inner-product and $\Vert f \Vert = ( \int_{\mathcal{T}} f^2(t) dt)^{1/2}$ to denote the $L_2$-norm of $f$. For a bivariate function $B(t^{\prime},t)$ defined on $\mathcal{T} \times \mathcal{T}$, we use $\Vert B \Vert_{\text{HS}}=\Vert B(t^{\prime},t) \Vert_{\text{HS}} = (\int_{ \mathcal{T} \times \mathcal{T} } B^2(t^{\prime},t) dt^{\prime} dt)^{1/2}$ to denote its Hilbert-Schmidt norm. We use $\bm{f}(\cdot)=(f_1(\cdot),f_2(\cdot),\dots,f_n(\cdot))^{\top}$ to denote a vector with function entries.

For any two Hilbert spaces $\mathbb{H}$ and $\mathbb{G}$, we define $\mathcal{B}(\mathbb{H},\mathbb{G})$ as the set of bounded linear operators and $\mathcal{B}_{\text{HS}}(\mathbb{H},\mathbb{G})$ as the set of Hilbert-Schmidt operators from $\mathbb{H}$ to $\mathbb{G}$. Thus, $\mathcal{B}_{\text{HS}}(\mathbb{H},\mathbb{G}) \subseteq \mathcal{B}(\mathbb{H},\mathbb{G})$.
For $h \in \mathbb{H}$ and $g \in \mathbb{G}$, the outer product $g \otimes h:\mathbb{H}\mapsto\mathbb{G}$ is the rank-one linear operator $(g \otimes h)(h^{\prime}) \coloneqq \langle h,h^{\prime} \rangle_{\mathbb{H}}\, g$.
When $\mathbb{H}=\mathbb{G}$, we let $\mathcal{B}(\mathbb{H})=\mathcal{B}(\mathbb{H},\mathbb{H})$ and $\mathcal{B}_{\text{HS}}(\mathbb{H})=\mathcal{B}_{\text{HS}}(\mathbb{H},\mathbb{H})$.
For any operator $\mathscr{T}:\mathbb{H}\mapsto\mathbb{G}$, we use $\text{ran}(\mathscr{T})=\{\mathscr{T}(h):h\in \mathbb{H}\} \subseteq \mathbb{G}$ to denote its range. We denote the adjoint operator of $\mathscr{T}$~\citep[Definition 3.3.2]{Hsing2015Theoretical} by $\mathscr{T}^{*}$ and the Moore–Penrose inverse or pseudo inverse~\citep[Definition 3.5.7]{Hsing2015Theoretical} of $\mathscr{T}$ by $\mathscr{T}^{\dagger}$. 
We say that an orthonormal sequence $\{e_n\}_{n \geq 1}$ in a Hilbert space $\mathbb{H}$ is called an orthonormal basis or a complete orthonormal system (CONS) if $\overline{\text{span}\{e_n\}}=\mathbb{H}$.

For any finite number of Hilbert spaces $\mathbb{H}_1$--$\mathbb{H}_n$, we define their Cartesian product space $\mathbb{H}_1 \oplus \ldots \oplus \mathbb{H}_n \overset{\Delta}{=} \oplus^n_{i=1} \mathbb{H}_i$ as $\{ (f_1,\ldots,f_n) : f_i \in \mathbb{H}_i \}$, with endowed inner product defined by $\langle \bm{f},\bm{g} \rangle=\sum^n_{i=1} \langle f_i, g_i \rangle$ for all $\bm{f}=(f_1,\ldots,f_n),\bm{g}=(g_1,\ldots,g_n) \in \oplus^n_{i=1} \mathbb{H}_i$. 
We then have $\oplus^n_{i=1} \mathbb{H}_i$ to also be a Hilbert space.
For $\bm{f}=(f_1,\ldots,f_n) \in \oplus^n_{i=1} \mathbb{H}_i$, we use $\bm{f}_{-j}$ to denote $(f_1,\ldots,f_{j-1},f_{j+1},\ldots,f_n) \in \oplus^n_{i=1,i \neq j} \mathbb{H}_i$ for any $j \in [n]$.
When $\mathbb{H}_i=\mathbb{H}$ for all $i \in [n]$, we denote $\oplus^n_{i=1} \mathbb{H}_i$ by $\mathbb{H}^n$. 

For any two sequences $\{a_n\}_{n\geq 1}$ and $\{b_n\}_{n\geq 1}$, we use $a_n \lesssim b_n$ or $a_n=O(b_n)$ ($a_n \gtrsim b_n$ or $a_n=\Omega(b_n)$) to denote that there exists a constant $c \geq 0$ such that $a_n \leq c \cdot b_n$ ($a_n \geq c \cdot b_n$) for $n$ large enough. Similarly, we use $a_n=\tilde{O}(b_n)$ to ignore any $\log$ terms asymptotically, that is, $a_n=\tilde{O} (b_n)$ if $a_n=O(b_n \log^k b_n)$ for some $k\geq 0$.
In this paper, we use $\tilde{O}(\cdot)$ to ignore log terms of sample size, but we keep log terms of other quantities such as the number of vertices and the dimension of a truncated function.

\subsection{Outline of the Paper}

The rest of the paper is organized as follows. In Section~\ref{sec:methodology}, we introduce the functional graphical model and our methodology to estimate the graph structure. In Section~\ref{sec:ADMM}, we discuss the optimization algorithm used to compute the estimator. We develop theoretical guarantees for our approach in Section~\ref{sec:thm-prop}. Results on simulated and real data are reported in Section~\ref{sec:experimemts} and Section~\ref{sec:real_data}, respectively. We conclude the paper with a discussion in  Section~\ref{sec:conclusion}.
Code to replicate the results in this paper is available at: \url{https://github.com/PercyZhai/FGM_Neighborhood}.

\section{Methodology}
\label{sec:methodology}

In this section, we briefly review the functional graphical model in Section~\ref{sec:fgm}. We introduce a neighborhood selection procedure for estimating the functional graphical in Section~\ref{sec:funNbSelec} and discuss a practical implementation in subsequent subsections.

\subsection{Functional Graphical Model}\label{sec:fgm}

Let $\mathcal{T} \subseteq \mathbb{R}$ be a closed interval, and $\mathbb{H} \subseteq \mathcal{L}^2(\mathcal{T})$ be a Hilbert subspace of $\mathcal{L}^2(\mathcal{T})$. 
Since $\mathcal{L}^2(\mathcal{T})$ is a separable Hilbert space, $\mathbb{H}$ and $\mathbb{H}^n$ are also seperable Hilbert spaces for any $1 \leq n < \infty$.
Let $(\Omega, \mathcal{F}, \mathbb{P})$ be a probability space and $\bm{g}:\Omega \mapsto \mathbb{H}^{p}$ be a Gaussian random element; that is, for any $\bm{h} \in \mathbb{H}$, we have $\langle \bm{g}, \bm{h} \rangle$ be a real-valued Gaussian random variable\footnote{The existence and construction of Gaussian random elements taking values in any separable Hilbert space is shown as following: By Example 1.25 of~\cite{janson_1997}, we can construct Gaussian random elements taking values in $l^2$ space, that is, the space of square summable sequences. The desired conclusion then follows from the fact that any separable Hilbert space is isometrically isomorphic to $l^2$~\citep[Theorem 2.4.17]{Hsing2015Theoretical}.}. We can express $\bm{g}$ as $\bm{g}(\omega,t)=(g_1(\omega,t),\ldots,g_p(\omega,t))$, where $(\omega,t) \in \Omega \times \mathcal{T}$, and for all $\omega \in \Omega$ and $j \in [p]$, we have $g_j(\omega,\cdot) \in \mathbb{H}$ to be a function with domain $\mathcal{T}$. In the rest of the paper, we suppress the dependency on $\omega$, and denote $\bm{g}(t) = (g_1(t),\ldots, g_p(t) )$ for $t \in \mathcal{T}$, or sometimes even suppress the dependency on $t$ and let $\bm{g} = (g_1, \ldots, g_p )$.

To simplify the discussion, we assume that $\bm{g}$ is zero mean, that is, $\mathbb{E}[g_j]=0$ for all $j \in [p]$.
Furthermore, based on the Gaussian property, we have $\mathbb{E} [ \Vert \bm{g} \Vert^2] < \infty$. Thus, for all $j \in [p]$, we can define the covariance operator of $g_j$ as
\begin{equation}
\label{eq:cov-operator}
\mathscr{K}_j \coloneqq \mathbb{E} \left[ g_j \otimes g_j \right],
\end{equation}
and we have $\mathscr{K}_j \in \mathcal{B}_{\text{HS}}(\mathbb{H})$. In addition, for any index set $I,I_1,I_2 \subseteq [n]$, we define
\begin{equation*}
\mathscr{K}_I \coloneqq \mathbb{E} \left[ \left( g_j \right)_{j \in I} \otimes \left( g_j \right)_{j \in I} \right], \quad \mathscr{K}_{I_1,I_2} \coloneqq \mathbb{E} \left[ \left( g_j \right)_{j \in I_1} \otimes \left( g_j \right)_{j \in I_2} \right].
\end{equation*}
Furthermore, following \citet{Qiao2015Functional}, we define the conditional cross-covariance function as
\begin{equation*}
C_{jl}(t^{\prime},t) = \Cov \left( g_j(t^{\prime}), g_l(t) \mid g_k(\cdot), k \neq j,l \right).
\end{equation*}


Let $G=(V,E)$ denote an undirected graph where $V=[p]$ is the set of vertices or nodes, and $E \subset V^2$ is the set of edges. The edge set $E$ encodes the pairwise Markov property of $\bm{g}$~\citep{Lauritzen1996Graphical} if
\begin{equation}\label{eq:edgeDef}
E = \left\{ (j,l) \in V^2: j \neq l \text{ and } g_j \not \independent g_l \mid \{g_k\}_{k \neq, j,l} \right\}.
\end{equation}
Let $\bm{g_i}(\cdot)=(g_{i1}(\cdot), \ldots, g_{ip}(\cdot) )$ be a random copy of $\bm{g}(\cdot)$. The goal of this work is to estimate the set of edges $E$ when given $n$ i.i.d.~random copies $\{\bm{g_i}(\cdot)\}_{i=1}^n$. \citet{Qiao2015Functional} proposed to estimate $E$ using a functional graphical lasso procedure. In contrast, we propose a neighborhood selection approach detailed in the following section. In the following, we define the neighborhood of node $j$ as 
\begin{equation}
\label{eq:nighborhood-definition}
\mathscr{N}_j \coloneqq \{k \, : \, (j,k) \in E\}.
\end{equation}

\subsection{Functional Neighborhood Selection}
\label{sec:funNbSelec}

We develop a neighborhood selection procedure to estimate the functional graphical model. The neighborhood selection approach can be traced back to \citet{Besag1975Statistical} and was further developed for Gaussian graphical models in a high-dimensional setting by \citet{Meinshausen2006High}. Specifically, \citet{Meinshausen2006High} estimated the conditional independence graph for vector-valued data $\mathbf{X} = (X_k)_{k \in [p]}$ drawn from a multivariate Gaussian. Properties of the multivariate Gaussian ensure that for each $j \in [p]$, there exist $\{\beta_{jk}\}_{k \neq j}$ such that
\begin{equation}
\label{eq:linear-repre-scalar}
X_j = \sum_{k \neq j} \beta_{jk} X_k + \varepsilon_j,
\end{equation}
where $\varepsilon_j$ is normally distributed and independent of all $X_k$, $k \neq j$. By~\eqref{eq:linear-repre-scalar}, it is clear that $\mathscr{N}_j$ is equivalent to the set $\{k \in [p]\backslash\{j\} \,: \, \beta_{jk} \neq 0\}$. Thus, \citet{Meinshausen2006High} use the variables selected from a penalized regression of $X_j$ onto all other variables to estimate $\mathscr{N}_j$; specifically, $\mathscr{\hat N}_j = \{k \in [p]\backslash\{j\} \, : \, \hat \beta_{jk} \neq 0\}$. After estimating each neighborhood, they combine the estimates into a single estimate of the entire graph $G$. 

Our first contribution is to show that an analogous representation to~\eqref{eq:linear-repre-scalar} also holds for $\bm{g}$ under mild conditions. We start by considering the conditional expectation $\mathbb{E}[g_j \mid \bm{g}_{-j}]$ for $j \in [p]$. By Doob–Dynkin representation~\citep[Lemma 1.13]{kallenberg1997foundations}, we have a measurable map $\mathscr{B}_j:\mathbb{H}^{p-1} \mapsto \mathbb{H}$ such that $\mathbb{E}[g_j \mid \bm{g}_{-j}]=\mathscr{B}_j(\bm{g}_{-j})$ almost surely. Due to the Gaussianity of $\bm{g}$, we have $\mathscr{B}_j \in \mathcal{B}(\mathbb{H}^{p-1},\mathbb{H})$, and $e_j \coloneqq g_j-\mathbb{E}[g_j \mid \bm{g}_{-j}]$ to be Gaussian and independent of $\bm{g}_{-j}$~\citep{klebanov2021linear}.
For the purposes of this paper, we require $\mathscr{B}_j$ to be in a more narrow class, namely the class of Hilbert-Schmidt operators. Therefore, we make the following assumption.
\begin{assump}
\label{assump:Hilbert-schmidt-assump}
For all $j \in [p]$, we assume that $\mathscr{B}_j \in \mathcal{B}_{\text{HS}}(\mathbb{H}^{p-1},\mathbb{H})$.
\end{assump}
The intuition of the requirement for Assumption~\ref{assump:Hilbert-schmidt-assump} is associated with the infinite-dimensional nature of functional data. To characterize $\mathscr{B}_j$ in general, one will need to estimate an infinite number of parameters, which is prohibitive with a finite sample size. For this reason, any practical solution must approximate $\mathscr{B}_j$ with a finite-dimensional truncation. Since any linear bounded operator between two finite-dimensional Hilbert spaces is congruent to a matrix that has a bounded Hilbert-Schmidt (Frobenius) norm, Assumption~\ref{assump:Hilbert-schmidt-assump} is necessary to ensure a bounded truncation error. This assumption is also made in~\citet{solea2021nonparametric}---see Assumption 4.6 therein.

To understand what kind of data generation process will satisfy Assumption~\ref{assump:Hilbert-schmidt-assump}, let us consider a special case. Suppose that
\begin{equation*}
\text{ran}\left(\mathscr{K}_{\mathscr{N}_j,j} \right) \subseteq  \text{ran}\left(\mathscr{K}_{\mathscr{N}_j} \right).
\end{equation*}
Then by Theorem 4.8 in~\citet{klebanov2021linear} and noting that $\mathbb{E}[g_j \mid \bm{g}_{-j}]=\mathbb{E}[g_j \mid g_k,k \in \mathcal{N}_j]$, we have
\begin{equation*}
\mathscr{B}_j = \left( \mathscr{K}^{\dagger}_{\mathscr{N}_j} \mathscr{K}_{\mathscr{N}_j,j} \right)^{*}.
\end{equation*}
Therefore, requiring that $\mathscr{B}_j$ is Hilbert-Schmidt is equivalent to requiring that $\mathscr{K}^{\dagger}_{\mathscr{N}_j} \mathscr{K}_{\mathscr{N}_j,j}$ is Hilbert-Schmidt. 
To illustrate when this condition holds, we assume that the left singular functions of $\mathscr{K}_{\mathscr{N}_j,j}$ ordered by singular values coincide with the eigenfunctions of $\mathscr{K}_{\mathscr{N}_j}$ ordered by eigenvalues. Let $\{s_{1k}\}^{\infty}_{k=1}$ be the non-increasing singular values of $\mathscr{K}_{\mathscr{N}_j,j}$ and $\{s_{2k}\}^{\infty}_{k=1}$ be the non-increasing eigenvalues of $\mathscr{K}_{\mathscr{N}_j}$. Then requiring $\mathscr{B}_j$ to be Hilbert-Schmidt will be equivalent to requiring $\sum^{\infty}_{k=1} (s_{1k}/s_{2k})^2<\infty$. Intuitively, $s_{1k}$ corresponds to the covariance between $g_j$ and its neighbors $(g_l)_{l \in \mathscr{N}_j}$ along a direction in $\mathbb{H}^{\vert \mathscr{N}_j \vert}$, while $s_{2k}$ represents the variance of $(g_l)_{l \in \mathscr{N}_j}$ along that direction.
The condition that $\mathscr{B}_j$ is Hilbert-Schmidt basically requires that the covariance between $g_j$ and its neighbors decreases sufficiently fast compared to the decreasing speed of the variance of its neighbors.
When Assumption~\ref{assump:Hilbert-schmidt-assump} is violated, then regardless of the dimension of the space used for truncation, there always exists a subspace orthogonal to it, such that the projection of $(g_l)_{l \in \mathscr{N}_j}$ onto it has small variance, but the covariance between the projection and $g_j$ is relatively large. As a result, the behavior of $\mathscr{B}_j$ on this subspace cannot be ignored, and thus we cannot get a good approximation of $\mathscr{B}_j$ by using any finite-dimensional truncation.

Based on Assumption~\ref{assump:Hilbert-schmidt-assump}, we have a representation similar to~\eqref{eq:linear-repre-scalar} for $\bm{g}$, which we state in the following theorem.
\begin{theorem}
\label{thm:neigh-represent}
Assume that Assumption~\ref{assump:Hilbert-schmidt-assump} holds for all $j \in [p]$. Then for all $j \in [p]$, there exists $\left\{\beta_{jk}(t, t^\prime)\right\}_{k \neq j}$ such that
\begin{equation}
\label{eq:function-linear}
g_{j}(t) = \sum_{k\neq j} \int_{\mathcal{T}} \beta_{jk} (t,t^{\prime}) g_{k}(t^{\prime}) d t^{\prime} + e_{j}(t),
\end{equation}
where $e_{j}(\cdot) \independent g_{k}(\cdot)$, $k \neq j$, and $\Vert \beta_{jk} (t,t^{\prime}) \Vert_{\text{HS}} < \infty$. In addition, for any sequence $\{\phi_m\}^{\infty}_{m=1}$ being a CONS of $\mathbb{H}$, we have
\begin{equation}
\label{eq:beta-l-expansion}
\beta_{jk} (t,t^{\prime}) = \sum^{\infty}_{m,m^{\prime}=1} b^{\ast}_{jk,m m^{\prime}} \phi_m (t) \phi_{m^{\prime}} (t^{\prime}) \quad \text{a.e.},
\end{equation}
where
\begin{equation}
\label{eq:def-b-ast}
b^{\ast}_{jk,m m^{\prime}} = \int_{\mathcal{T}\times \mathcal{T}} \beta_{jk} (t^{\prime},t) \phi_m (t) \phi_{m^{\prime}} (t^{\prime}) dt^{\prime} dt.
\end{equation}
\end{theorem}
\begin{proof}
See Appendix~\ref{sec:proof-neigh-represent}.
\end{proof}
Although it is straightforward to postulate that such a linear representation holds for multivariate Gaussian random functions, to the best of our knowledge, we are the first to strictly prove it. 
When the index is clear from the context, we will remove the subscript $j$ from $\beta_{jk}(t,t^{\prime})$.  
Given the representation in~\eqref{eq:function-linear}, it is clear that $\mathscr{N}_j$ defined in~\eqref{eq:nighborhood-definition} is equivalent to 
\begin{equation}
\label{eq:neighborhoodDef}
\mathscr{N}_j = \{ k \in [p]\backslash\{j\} \, : \, \Vert \beta_{jk} \Vert_{\text{HS}} > 0 \}.
\end{equation}
We can thus adapt the neighborhood selection approach to functional data and seek to construct an estimate of the graph by first estimating each neighborhood. 

We denote the size of the neighborhood as $s_j = \vert \mathscr{N}_j \vert$. To estimate the neighborhood for $j \in V$, we regress $g_{j}$ on $\{ g_{k} : k \in [p]\backslash\{j\} \}$ using a penalized functional regression approach. Despite the conceptual simplicity and high level similarity to \citet{Meinshausen2006High}, there are numerous technical challenges that need to be addressed in the functional data setting, which we discuss in Section~\ref{sec:thm-prop}. 

\begin{algorithm}[t]
\caption{Functional neighborhood selection}
\label{Alg:functionalNeighborhood}
\begin{algorithmic}
\REQUIRE{Observed random functions $\{\bm{g_i}(\cdot)\}_{i=1}^n$}
\FOR{$j \in V$}
\STATE{Estimate the projection basis $\bm{\phi_j}$ if it is not fixed in advance}
\STATE{Use~\eqref{eq:calcProjScores} to calculate projection scores for all observed functions on $\bm{\phi_j}$}
\STATE{Given projection scores, solve \eqref{eq:loss-lasso}}
\STATE{Estimate $ \mathscr{\hat N}_j$ using \eqref{eq:nbg-set-est}} 
\ENDFOR
\STATE{Combine all neighborhoods into the estimated edge set using AND/OR rule}
\ENSURE{Return $\hat E$}
\end{algorithmic}
\end{algorithm}

\subsection{Vector-on-Vector Regression}
\label{sec:f-on-f2v-on-v}
When the observed functions $\{\bm{g_i}(\cdot)\}_{i=1}^n$ are infinite dimensional objects, the regression problem suggested by \eqref{eq:function-linear} cannot be solved directly. As a practical estimation procedure, we first approximate the function-on-function regression problem with a tractable finite dimensional vector-on-vector regression problem.

Suppose we seek to estimate $\mathscr{N}_j$ for a fixed target node $j \in [p]$. As a first step, we represent potentially infinite dimensional functions using a finite $M$-dimensional basis. Let $\bm{\phi_j}=\{ \phi_{jm} \}^{\infty}_{m=1}$ be an orthonormal basis of $\mathbb{H}$; for now, we assume that it is given, and details on selecting an appropriate basis will be discussed in Section~\ref{sec:choice-fun-basis}. Using the first $M$ basis functions, we compute the projection scores for each $k \in [p]$ and $m \in [M]$:
\begin{equation}\label{eq:calcProjScores}
a_{ikm} = \langle g_{ik}, \phi_{jm} \rangle = \int_{\mathcal{T}} g_{ik}(t)\phi_{jm}(t) dt,    
\end{equation}
and form the projection score vectors $\bm{a_{i,k,M}}=(a_{ik1}, \dots, a_{ikM})^{\top}$. For each observed function, the scores encode the $L_2$ projection onto the first $M$ elements of $\bm{\phi_j}$ and $g_{ik}(\cdot) \approx \sum_{m=1}^M a_{ikm} \phi_{jm}(\cdot)$.

The target node $j$, will typically be fixed, so for ease of presentation, we assume $j = p$. Furthermore, we follow the commonly used regression notation and denote the random function of the target node, $g_{ij}(\cdot)$, as $g^Y_i(\cdot)$ and denote the other $p-1$ random functions as $( g^{X_1}_i(\cdot),\dots,g^{X_{p-1}}_i(\cdot) )^{\top}$. 
We let $a^Y_{i m}=\langle g^Y_i, \phi_{jm} \rangle$ and $a^{X_k}_{i m}=\langle g^{X_k}_i, \phi_{jm} \rangle$ denote the scores for observed functions and let $\bm{a^Y_{i,M}}$ and $\bm{a^{X_k}_{i,M}}$ denote the vectors of scores. At times, we will also use the notation 
\[\bm{a^X_{i,M}} = \left( (\bm{a^{X_1}_{i,M}})^{\top}, \ldots, (\bm{a^{X_{p-1}}_{i,M}})^{\top}\right)^{\top} \in \mathbb{R}^{(p-1)M} .\]
As shown in Appendix~\ref{sec:deri2formulas}, $\bm{a^Y_{i,M}}$ can be represented as
\begin{equation}
\label{eq:regression-target}
\bm{a^Y_{i,M}} = \sum^{p-1}_{k=1} \bm{B^{\ast}_{k, M}}  \bm{a^{X_k}_{i,M}} + \bm{w_{i,M}} + \bm{r_{i,M}},
\end{equation}
where
\begin{equation}
\label{eq:reg-coeff-mat}
\bm{B^{\ast}_{k, M}}=(b^{\ast}_{k,m m^{\prime}})_{1 \leq m, m^{\prime} \leq M} \in \mathbb{R}^{M \times M}
\end{equation}
is a regression matrix parameter corresponding to $\beta_{jk}(\cdot,*)$ defined in Theorem~\ref{thm:neigh-represent} and
\begin{equation*}
b^{\ast}_{k,m m^{\prime}} = \int_{\mathcal{T}\times \mathcal{T}} \beta_{jk} (t^{\prime},t) \phi_m (t) \phi_{m^{\prime}} (t^{\prime}) dt^{\prime} dt \quad \text{for all } m,m^{\prime} \geq 1.
\end{equation*}
{\newTextTwo
For better illustrating the proposed method, we also compare the regression matrix $\bm{B^{\ast}_{k, M}}$ with the conditional covariance operator $\mathscr{B}_j$ in Assumption~\ref{assump:Hilbert-schmidt-assump}. By Assumption~\ref{assump:Hilbert-schmidt-assump} and Appendix~\ref{sec:proof-neigh-represent}, we have
\begin{equation*}
\mathscr{B}_j = \sum_{k \neq j} \mathscr{B}_{jk}, \text{ and  } \mathscr{B}_{jk}=\sum^{\infty}_{m=1}\sum^{\infty}_{m^{\prime}=1} b^{\ast}_{k,m m^{\prime}} \phi_m \otimes \phi_{m^{\prime}}.
\end{equation*}
Compared to~\eqref{eq:reg-coeff-mat}, we can see that the regression matrix $\bm{B^{\ast}_{k, M}}$ can be regarded as a finite-dimensional approximation of $\mathscr{B}_{jk}$ with respect to orthonormal basis $\{ \phi_{m} \}^{\infty}_{m=1}$. 
}

Besides, we have
\begin{equation*}
r_{im} = \sum^{p-1}_{k=1} \sum^{\infty}_{m^{\prime}=M+1} b^{\ast}_{k,m m^{\prime}} a^{X_k}_{i m^{\prime}},
\end{equation*}
$\bm{r_{i,M}}=(r_{i1},\ldots,r_{iM})^{\top}$, $w_{im}=\int_{\mathcal{T}} e_{ij}(t) \phi_m (t) dt$ and $\bm{w_{i,M}}=(w_{i1},\ldots,w_{iM})^{\top}$. The term
$\bm{w_{i,M}}$ is the noise vector corresponding to $e_{ij}(\cdot)$ defined in Theorem~\ref{thm:neigh-represent}, and $\bm{r_{i,M}}$ is a bias term, which arises due to only using the first $M$ basis functions. More details are provided in Section~\ref{sec:thm-prop}.

Based on~\eqref{eq:regression-target}, we may define the truncated neighborhood of node $j$ as
\begin{equation}
\label{eq:trunc-neighbor}
\mathscr{N}^M_j \coloneqq \left\{ k \in [p]\backslash\{j\} \, : \, \Vert \bm{B^{\ast}_{k, M}} \Vert_{\text{F}} > 0 \right\}.
\end{equation}
Note that in contrast to $\mathscr{N}_j$, $\mathscr{N}^M_j$ depends on the finite-dimensional objects $\bm{B^{\ast}_{1, M}},\ldots,\bm{B^{\ast}_{p-1, M}}$, and thus it is estimable with a finite sample size.
Since for $j \notin \mathscr{N}_j$, we have $\beta_{jk}=0$ a.e., which implies that $\Vert \bm{B^{\ast}_{k, M}} \Vert_{\text{F}}=0$ for all $M \geq 1$, thus we have $j \notin \mathscr{N}^M_j$ for all $M \geq 1$. This way, it is clear that $\mathscr{N}^M_j \subseteq \mathscr{N}_j$ for all $M \geq 1$. On the other hand, when we choose $M$ large enough, such that $\Vert \bm{B^{\ast}_{k, M}} \Vert_{\text{F}} > 0$ for all $k \in \mathscr{N}_j$, we then have $\mathscr{N}^M_j = \mathscr{N}_j$.

Given $n$ i.i.d.~samples $\{\bm{g_i}(\cdot)\}_{i=1}^n$, we estimate $\bm{B^{\ast}_{k, M}}$---and subsequently $\mathscr{N}^M_j$ and $\mathscr{N}_j$---using a penalized least squares approach. Let $a^Y_{i m}$, $a^{X_k}_{i m}$, $\bm{a^Y_{i,M}}$, and $\bm{a^{X_k}_{i, M}}$ denote the quantities arising from the $i$th observed sample. We select $\bm{B^{\ast}_{k, M}}$ by minimizing the following objective:
\begin{equation}\label{eq:loss-lasso}
\bm{\hat{B}_{1, M}}, \cdots, \bm{\hat{B}_{p-1, M}} \in \arg \min_{\bm{B_1},\dots,\bm{B_{p-1}}} \left\{  \frac{1}{2n} \sum^n_{i=1} \left\Vert \bm{a^Y_{i, M}} - \sum^{p-1}_{k=1} \bm{B_{k}} \bm{a^{X_k}_{i, M}} \right\Vert^2_2 + \lambda_n \sum^{p-1}_{k=1} \Vert \bm{B_k} \Vert_{\text{F}} \right\},
\end{equation}
where $\lambda_n$ is a tuning parameter. In Section~\ref{sec:ADMM}, we propose an efficient optimization algorithm to solve~\eqref{eq:loss-lasso}. 
The challenge in giving statistical guarantees for the estimators obtained by~\eqref{eq:loss-lasso} lies in the fact that $\bm{r_{i,M}}$ and $\bm{a^X_{i,M}}$ in~\eqref{eq:regression-target} are correlated, so $\bm{B^{\ast}_{1, M}}, \ldots, \bm{B^{\ast}_{p-1, M}}$ are not the coefficients of the best linear unbiased estimators for predicting $\bm{a^Y_{i,M}}$ by $\bm{a^X_{i,M}}$, which is the general setting assumed in the group LASSO analysis.
However, when the covariance between $\bm{r_{i,M}}$ and $\bm{a^X_{i,M}}$ is small enough in the sense discussed in Section~\ref{sec:thm-prop}, $\bm{\hat{B}_{1,M}}, \dots, \bm{\hat{B}_{p-1, M}}$ may still be good estimators of $\bm{B^{\ast}_{1, M}}, \ldots, \bm{B^{\ast}_{p-1, M}}$.

Given $\bm{\hat{B}_{1,M}}, \dots, \bm{\hat{B}_{p-1, M}}$, the estimated neighborhood set is then
\begin{equation}
\label{eq:nbg-set-est}
\hat{\mathscr{N}}_j = \{k \in [p-1] \, :\, \|\bm{\hat{B}_k}\|_F>\epsilon_n\},
\end{equation}
where the threshold $\epsilon_n$ is a tuning parameter. Finally, the estimated edge set $\hat{E}$ is obtained by combining the estimated neighborhoods of each node. Following \citet{Meinshausen2006High}, the edge set $\hat{E}$ can be computed by one of the following schemes: 
\begin{itemize}
    \item \textbf{AND}: if both $j\in\hat{\mathscr{N}}_l$ and $l\in\hat{\mathscr{N}}_j$ hold, then $(j,l)\in\hat{E}$;
    \item \textbf{OR}: if either $j\in\hat{\mathscr{N}}_l$ or $l\in\hat{\mathscr{N}}_j$ holds, then $(j,l)\in\hat{E}$.
\end{itemize}

To operationalize the procedure, we discuss the choice of basis functions and the choice of tuning parameters in the following two sections.

\subsection{Choice of Basis}
\label{sec:choice-fun-basis}

A key element in the above procedure is the choice of the basis $\bm{\phi_j}$. Throughout the paper, we assume that the basis is orthonormal; if the user specifies a non-orthonormal basis, it can first be orthonormalized with a procedure such as the Gram-Schmidt algorithm (Theorem 2.4.10 of~\citet{Hsing2015Theoretical}).

At a high level, there are two different approaches that can be used: the basis can be fixed in advance, or the basis can depend on the data. In the first approach, one uses a known basis, which could be selected via prior knowledge, or simply a commonly used basis for which projection scores can be efficiently computed (e.g., the Fourier, B-spline, and wavelet bases). The second approach uses a basis that is determined by unobserved population quantities and needs to be estimated before computing the projection scores. For example, functional PCA (FPCA) can be used to estimate a basis \citep[Chapter 8]{Ramsay2005Functional}. In the previous section, we discussed vector-on-vector regression assuming that the basis $\bm{\phi_j}$ was known a priori, and here we discuss the case where the basis must be estimated.

For a chosen node $j \in [p]$ and any $i \in [n]$, suppose that we have an estimate $\{\hat{\phi}_{jm}\}_{m \geq 1}$ of the ``true'' basis $\{\phi_{jm}\}_{m \geq 1}$. Let $\hat{a}^Y_{i m}=\langle g^Y_i, \hat{\phi}_{jm} \rangle$,  $\hat{a}^{X_k}_{i m}=\langle g^{X_k}_{i}, \hat{\phi}_{jm} \rangle$, $\bm{\hat{a}^Y_{i,M}}=(\hat{a}^Y_{i1},\ldots,\hat{a}^Y_{iM})^{\top}$, and $\bm{\hat{a}^{X_k}_{i,M}}=(\hat{a}^{X_k}_{i1},\ldots,\hat{a}^{X_k}_{iM})^{\top}$. Similarly to \eqref{eq:regression-target}, we have
\begin{equation}
\label{eq:regression-target-hat}
\bm{\hat{a}^Y_{i,M}} = \sum^{p-1}_{k=1} \bm{B^{\ast}_{k,M}} \bm{\hat{a}^{X_k}_{i,M}} + \bm{w_{i,M}} + \bm{r_{i,M}} + \bm{v_{i,M}},
\end{equation}
where the additional term $\bm{v_{i,M}}$ is defined in~\eqref{eq:def-v} in the appendix, which arises from using $\bm{\hat \phi_j}$ instead of $\bm{\phi_j}$. When $\bm{\hat \phi_j}$ is close to $\bm{\phi_j}$, the error term $\bm{v_{i,M}}$ should be small.
See the derivation of \eqref{eq:regression-target-hat} in Appendix~\ref{sec:deri2formulas}.

Based on the relationship in \eqref{eq:regression-target-hat}, we estimate the graph structure as in the previous section, where $\hat{\bm{B}}_1,\cdots,\hat{\bm{B}}_{p-1}$ are estimated using~\eqref{eq:loss-lasso} with $\bm{a^Y_{i,M}}$ and $\bm{a^{X_k}_{i,M}}$ replaced by $\bm{{\hat a}^Y_{i,M}}$ and $\bm{{\hat a}^{X_k}_{i,M}}$. The subsequently estimated neighborhood sets are given by~\eqref{eq:nbg-set-est}.

The most popular data-driven basis is the FPCA basis. Recall the linear Hilbert-Schmidt covariance operator $\mathscr{K}_j$ defined in~\eqref{eq:cov-operator}, which is the integral operator with the kernel being the covariance function of $g_j$, that is,
\begin{equation*}
K_{jj}(t,t^{\prime}) = \text{Cov}\left( g_j(t),g_j(t^{\prime}) \right).
\end{equation*}
Then there exist eigenpairs $\{\sigma_{jm},\phi_{jm}(\cdot)\}_{m\in\mathbb{N}}$ of $\mathscr{K}_j$ (Theorem 7.2.6 of~\citet{Hsing2015Theoretical}), where $\{\sigma_{jm}\}_{m\in\mathbb{N}}$ are the eigenvalues and $\{\phi_{jm}(\cdot)\}_{m\in\mathbb{N}}$ are orthonormal eigenfunctions. Since the covariance operator, $\mathscr{K}_j$, is symmetric and positive semidefinite, we assume that $\sigma_{j1}\geq \sigma_{j2}\geq \dots\geq 0$ without loss of generality. According to the Karhunen-Loève theorem, $g_{ij}$ can be represented as $g_{ij}(\cdot) = \sum_{m=1}^\infty a_{ijm}\phi_{jm}(\cdot)$, where $a_{ijm}=\int_\mathcal{T}g_{ij}(t)\phi_{ijm}(t)dt\sim N(0,\sigma_{jm})$ are the FPCA scores and $a_{ijm}$ is independent of $a_{ijm'}$ for $m\neq m'$ \citep[Theorem~1.5]{Bosq2000Linear}. We will refer to $\{\phi_{jm}(\cdot)\}_{m=1}^{\infty}$ as the \emph{FPCA basis}. Since the basis is orthonormal, the function $g_{ij}^M(\cdot) = \sum_{k=1}^M a_{ijk}\phi_{jk}(\cdot)$ is the $\mathcal{L}^2$-projection of $g_{ij}(\cdot)$ onto the basis spanned by the first $M$ FPCA functions. The main advantage of this basis is that it provides the best approximation in the $L_2$ sense when projecting a function onto a fixed number of basis functions.

Unfortunately, the FPCA basis is typically unknown, as $K_{jj}(t^{\prime},t)$ is unknown. Therefore, we first estimate the functional covariance with the empirical version:
\begin{equation}
\label{eq:emp-func-cov}
\hat{K}_{jj}(t,t^{\prime}) = \frac{1}{n}\sum_{i=1}^n g_{ij}(t)g_{ij}(t^{\prime}).
\end{equation}
An eigen-decomposition of $\hat{K}_{jj}(t, t^{\prime})$ produces the estimated eigen-pairs $\{\hat{\sigma}_{jm},\hat{\phi}_{jm}(\cdot)\}_{m=1}^M$, which in turn can be used to estimate FPCA scores $\hat{a}_{ijm} = \int_{\mathcal{T}}g_{ij}(t)\hat{\phi}_{jm}(t)dt$.

\citet{Qiao2015Functional} and \citet{solea2021nonparametric} also use projection scores from a dimension reduction procedure. However, there are several key differences between our approach and theirs. First, although it is the most commonly used basis, we do not restrict ourselves to the FPCA basis, and instead consider a generic basis. This provides additional flexibility and allows us to explore the effect of the chosen basis on empirical performance. See Section~\ref{sec:th-ch-fun-basis} for more details. Our methodology also differs in a second, more substantial way. Both \citet{Qiao2015Functional} and \citet{solea2021nonparametric} project each random function on its own FPCA basis and consider the resulting projection scores for all subsequent tasks. In contrast, when estimating the neighborhood of a specific node---rather than projecting each random function onto its own subspace---we project all random functions onto the same subspace. Concisely put, the subspace to estimate $\mathscr{N}_j$, $\bm{\phi_j}$, may differ from $\bm{\phi_k}$, the subspace used to estimate $\mathscr{N}_k$. However, when estimating $\mathscr{N}_j$ we use projection scores for all functions projected on a single basis $\bm{\phi_j}$. 

Intuitively, the advantage of this approach is that we can tailor the finite dimensional representation to maximize the information relevant to selecting the neighborhood of a specific node, $\mathscr{N}_j$. The FPCA basis for each random function maximizes the ``retained information'' for that random function. Although there may be significant features of $g_{ik}(\cdot)$ that are captured by its FPCA basis, these features may not be relevant to estimate the neighborhood of a specific node $\mathscr{N}_j$. Ultimately, we should care more about how $g_{ik}(\cdot)$ behaves in the subspace spanned by $g_{ij}$'s FPCA basis, which captures $g_{i,Y}$'s variability, rather than the subspace spanned by its own FPCA basis. We examine a theoretical justification in Section~\ref{sec:th-ch-fun-basis} and also illustrate the advantages in simulations.

More concretely, using a single basis for selecting $\mathscr{N}_j$ also avoids issues of colinearity that may arise artificially. For example, suppose that $g_{ik}(\cdot) = \sum_{m=1}^\infty a_{ikm} \phi_{km}(\cdot)$ and $g_{il}(\cdot) = \sum_{m=1}^\infty a_{ilm} \phi_{lm}(\cdot)$ have eigenfunctions $\{\phi_{km}(\cdot)\}_{m \geq 1}$ and $\{\phi_{lm}(\cdot)\}_{m \geq 1}$ that differ drastically, but $a_{ikm}$ and $a_{ilm}$ are highly correlated. When estimating $\mathscr{N}_j$ using the projection scores from the FPCA basis of $k$ and $l$, this would result in a poorly conditioned problem that may violate the irrepresentability condition (e.g., Assumption 4.8 in \citet{solea2021nonparametric} or Condition 5 in \citet{Qiao2015Functional}), despite the fact that the actual random functions $g_{ik}(\cdot)$ and $g_{il}(\cdot)$ are not difficult to distinguish. Projecting $g_{ik}(\cdot)$ and $g_{il}(\cdot)$ onto the same basis---$\bm{\phi_j}$---would avoid this concern, and the resulting projection scores would only be colinear if the actual random functions are similar and the problem is intrinsically hard.

While our methodology and theory allow for any orthonormal basis, we show both theoretically and in simulations that a well-chosen basis can improve performance. When choosing a basis, there are at least two objectives to consider. First, we want to minimize the covariance between $\bm{r_{iM}}$ and $\{\bm{a^{X_k}_{i,M}} \}_{k \in [p-1]}$ in \eqref{eq:regression-target}. Second, we want to maximize the signal strength $\min_{k \in \mathscr{N}_j} \Vert \bm{B^{\ast}_{k,M}} \Vert_{\text{F}}$. In general, simultaneously achieving these two objectives is practically infeasible. Thus, in practice, we focus on achieving at least one of the two. Achieving the first objective is generally infeasible without further restrictive assumptions (see Section~\ref{sec:th-ch-fun-basis}). Thus, in practice, we generally focus on the second objective, which will lead us to use the FPCA basis of $g^Y_i$, which we recommend as a default choice. 
Finally, we acknowledge that our study on the choice of function basis is far from complete. One should treat our guidance as a heuristic design, and we leave more thorough studies on this topic for further research.

\subsection{Selection of Tuning Parameters}
\label{CH_Param_Selection}

{\newText

There are three tuning parameters that need to be chosen to implement our algorithm: the number of basis functions used for dimension reduction, $M$; the thresholding parameter from~(13), $\epsilon_n$; and the group lasso penalty parameter $\lambda_n$ in~(12). We now discuss how to choose them in practice.

We first discuss how to choose the number of basis functions $M$. We follow the same cross-validation (CV) tuning strategy as in~\cite{Qiao2015Functional}. In practice, we have access to observations $\{(t_{ike},h_{ike})\}^{E_{ik}}_{e=1}$, $i \in [n]$ and $k \in [p]$, where $h_{ike}$ is a noisy observation of $g_{ik}(\cdot)$ at a time point $t_{ike} \in \mathcal{T}$. We then divide the time interval $\mathcal{T}$ into $J$ equal-size folds $\mathcal{J}_1,\ldots,\mathcal{J}_J$ with $\mathcal{T}=\cup^J_{l=1}\mathcal{J}_l$. For $a \in [J]$, we treat fold $\mathcal{J}_a$ as the validation set, and the remaining $J-1$ folds as the training set. For a chosen node $j \in [p]$, if $\bm{\phi}_j$ is known, we then fit each function $g_{ik}(\cdot)$ with an $M$-dimensional $\bm{\phi}_j$ basis $\{\phi_{j1}(\cdot),\ldots,\phi_{jM}(\cdot)\}$ via least-square on the observations $\{(t_{ike},h_{ike})\}$ where $t_{ike} \notin \mathcal{J}_a$ to get $\hat{g}_{ik}(\cdot)$; we then calculate the squared error between $h_{ike}$ and $\hat{g}_{ik}(t_{ike})$ on the validation set. We repeat this procedure for $a=1,\ldots,J$ to compute the CV error and choose $M$ that minimizes the CV error. In the case when $\bm{\phi}_j$ is unknown, we first fit $g_{ij}(\cdot)$ on observations $\{(t_{ije},h_{ije})\}$ where $t_{ije} \notin \mathcal{J}_a$ via a $L$-dimensional $B$-spline basis~\citep[Chpater 5]{Ramsay2005Functional} to get $\hat{g}_{ij}(\cdot)$, and subsequently use $\hat{g}_{ij}(\cdot)$ to get $\hat{\bm{\phi}}_j(\cdot)$. Next, we fit all functions $g_{ik}(\cdot)$ by $\{\hat{\phi}_{j1}(\cdot),\ldots,\hat{\phi}_{jM}(\cdot)\}$ via least-square on the observations $\{(t_{ike},h_{ike})\}$ where $t_{ike} \notin \mathcal{J}_a$ to get $\hat{g}_{ik}(\cdot)$. After following the same procedure to compute CV error, we then choose $(M,L)$ simultaneously over a grid of $M \leq L$ values and choose the pair with the lowest error.

Next, we describe the selection process for $\epsilon_n$ and $\lambda_n$. When $\lambda_n$ is large enough, all estimated coefficients $\hat{\bm{B}}_k$ will be set to zero. Specifically, by Proposition~1, there exists a threshold $\lambda_{\text{max},n}>0$ that can be calculated from the data, such that for any $\lambda_n>\lambda_{\max,n}$, the result $\hat{\bm{B}}_k=\mathbf{0}$ for all $k\in[p-1]$. Thus, we only need to consider $\lambda_n \in (0, \lambda_{\max,n}]$. We found empirically that traditional $K$-fold cross-validation performs poorly in our setting. Therefore, for each $j \in [p]$, we select $\lambda_n,\epsilon_n$ pair using selective cross-validation (SCV) \citep{she2012iterative}.

For each value of $\lambda_n$, we use the entire data set to estimate $ \bm{\hat{B}}_{\lambda_n} = (\bm{\hat{B}}_{\lambda_n,1},\dots,\bm{\hat{B}}_{\lambda_n,p-1})$ by solving~\eqref{eq:loss-lasso}. Given any threshold parameter $\epsilon_n$, we can obtain an index set $\check{\mathscr{N}}_j(\lambda_n, \epsilon_n) \subseteq [p-1]$ that indicates the blocks in $\bm{\hat{B}}_{\lambda_n}$ that are large enough in terms of Frobenius norm, that is,
\begin{equation}\label{eq:N.check}
    l\in \hat{\mathscr{N}}_j(\lambda_n, \epsilon_n) \quad \text{if and only if }\quad \|\bm{\hat{B}}_{\lambda_n,l}\|_\text{F} > \epsilon_n.
\end{equation}

\begin{algorithm}[t]
\caption{The Selective Cross-Validation (SCV) algorithm to choose $(\lambda_n, \epsilon_n)$.}
\label{Alg:scv.grid}
\begin{algorithmic}
\REQUIRE{$\bm{A^X}$, $\bm{A^Y},j\in [p]$}\;
\FOR{all $\lambda_n$}
\STATE{Run Group Lasso ADMM on $(\bm{A^X}, \bm{A^Y})$ and obtain $\bm{\hat{B}}_{\lambda_n}$}\;
    \FOR{all $\epsilon_n$}
    \STATE{Obtain $\hat{\mathscr{N}}_j(\lambda_n, \epsilon_n)$ by \eqref{eq:N.check}}\;
        \FOR{$k \in [K]$}
        \STATE{Re-estimate $\bm{\tilde{B}_l}$ for $l \in [p-1]$ by solving \eqref{eq:scv.unpen} with the $k$-th-fold training set}\;
        \STATE{Evaluate the estimate on the $k$-th-fold test data using the SCV-RSS criterion}\;
        \ENDFOR
    \STATE{Calculate the mean of the criterion across all $K$ folds}\;
    \ENDFOR
\ENDFOR
\STATE{Pick the $(\lambda_n,\epsilon_n)$ pair that minimizes the mean criterion}\;
\end{algorithmic}
\end{algorithm}

For $l\in \hat{\mathscr{N}}_j(\lambda_n, \epsilon_n)$, we then re-estimate $\bm{\tilde{B}_k}$ by minimizing the unpenalized least squares objective using the $k$-th-fold training set, which we denote as $I_k$, and we set $\bm{\tilde{B}_l}=0$ for all $l \notin \hat{\mathscr{N}}_j(\lambda_n, \epsilon_n)$. Specifically, we obtain $\bm{\tilde{B}_1},\dots,\bm{\tilde{B}_{p-1}}$ by solving the optimization problem below:
\begin{equation}
\label{eq:scv.unpen}
\begin{aligned}
\bm{\tilde{B}_1}, \dots, \bm{\tilde{B}_{p-1}} &\in \arg \min_{\bm{B_1},\dots,\bm{B_{p-1}}} \left\{  \sum_{i \in I_k}\left\Vert \bm{a^Y_{i,M}} - \sum^{p-1}_{l=1} \bm{B_{l}} \bm{a^{X_l}_{i,M}} \right\Vert^2_2 \right\}, \\
& \text{s.t. }  \bm{\tilde{B}_l}=0 \, \text{ for all } l \notin \hat{\mathscr{N}}_j(\lambda_n, \epsilon_n).
\end{aligned}
\end{equation}

We propose an error criterion named SCV-RSS, where RSS stands for the residual sum of squares. The criterion performs well in practice and adds the BIC penalty term to the squared norm of the empirical estimation error. Let
\begin{equation*}
\hat{\mathbf{re}}_i \coloneqq  \bm{a^Y_{i,M}} - \sum^{p-1}_{l=1} \tilde{\bm{B}}_{l} \bm{a^{X_l}_{i,M}},
\end{equation*}
and SCV-RSS on the test set $I_{\text{test}}$ is defined as
\begin{equation}
\label{EQ_scv-rss}
\text{SCV-RSS}(\lambda_n,\epsilon_n) \coloneqq \sum_{i \in I_{\text{test}}  }\left\Vert \hat{\mathbf{re}}_i \right\Vert^2_2 + \log ( \vert I_{\text{test}} \vert ) \cdot \vert \hat{\mathscr{N}}_j(\lambda_n, \epsilon_n) \vert.
\end{equation}
We then finally choose the $(\lambda_n,\epsilon_n)$ pair that minimizes the mean of SCV-RSS over all $K$ folds.
The pseudo-code of the procedure is given in Algorithm \ref{Alg:scv.grid}.

}

\section{Optimization Algorithm}
\label{sec:ADMM}

We propose an optimization method to solve~\eqref{eq:loss-lasso} using the alternating direction method of multipliers (ADMM) \citep{gabay1976dual,Boyd2011Distributed}. Note that \eqref{eq:loss-lasso} has a composite objective structure where the objective is composed of a convex smooth loss and a convex non-smooth regularization term. This composite objective is well studied in the convex optimization literature \citep[see, for example, Section 5.1][]{Bubeck2015Convex}.
In this section, we provide an easy-to-use practical solution. Commonly used alternative methods to solve such a composite objective include ISTA (Iterative Shrinkage-Thresholding Algorithm) and FISTA (Fast ISTA). See Section 5.1 in \citet{Bubeck2015Convex} for more details.
One advantage of ADMM is that it is easy to parallelize. 
Therefore, it is preferable when there are several machines available and the sample size or number of vertices is large~\citep[Chapter 8 and Chapter 10]{Boyd2011Distributed}.

The pseudo-code of our method is given in Algorithm~\ref{Alg:admm-nb} and we provide additional details below.
Let
\begin{equation*}
\bm{A^Y} = 
\begin{bmatrix}
(\bm{a^Y_{1,M}})^{\top} \\
(\bm{a^Y_{2,M}})^{\top}\\
\vdots \\
(\bm{a^Y_{n,M}})^{\top}
\end{bmatrix}
\in \mathbb{R}^{n \times M},
\qquad
\bm{A^{X_k}} 
=
\begin{bmatrix}
(\bm{a^{X_k}_{1, M}})^{\top} \\
(\bm{a^{X_k}_{2, M}})^{\top} \\
\vdots \\
(\bm{a^{X_k}_{n, M}})^{\top}
\end{bmatrix}
\in \mathbb{R}^{n \times M}.
\end{equation*}
Consider the concatenated matrices 
$\bm{A^X}
=
\begin{bmatrix}
\bm{A^{X_1}} & \bm{A^{X_2}} & \dots & \bm{A^{X_{p-1}}} 
\end{bmatrix} \in \mathbb{R}^{n \times (p-1)M}$, and
\begin{equation*}
\bm{P} = 
\begin{bmatrix}
\bm{P_1} \\
\bm{P_2}\\
\vdots \\
\bm{P_{p-1}}
\end{bmatrix}
\in \mathbb{R}^{(p-1)M \times M},
\qquad
\bm{Q} 
=
\begin{bmatrix}
\bm{Q_1} \\
\bm{Q_2} \\
\vdots \\
\bm{Q_{p-1}}
\end{bmatrix}
\in \mathbb{R}^{(p-1)M \times M}.
\end{equation*}
Then~\eqref{eq:loss-lasso} can be reformulated as:
\begin{equation*}
    \min_{\bm{P},\bm{Q}}\ \frac{1}{2n} \left\Vert\bm{A^Y}-\bm{A^{X}}\bm{Q}\right\Vert_{\text{F}}^2 + \lambda_n\sum^{p-1}_{k=1}\|\bm{P_k}\|_{\text{F}}
    \quad \text{subject to } \bm{P} - \bm{Q} = 0,
\end{equation*}
which can be minimized by solving a series of optimization problems. At the $h$'th iteration, for all $k \in [p-1]$:
\begin{align}
    \bm{P_k}^{h+1} &= \arg\min_{\bm{P_k}} \left( \lambda_n \|\bm{P_k}\|_{\text{F}} + \frac{\rho}{2}\|\bm{P_k}-\bm{Q_k}^h + \bm{U_k}^h\|_{\text{F}}^2 \right), \quad k\in[p-1],
    \label{EQ_POpt}\\
    \bm{Q}^{h+1} &= \arg\min_{\bm{Q}}\left(\frac{1}{2n}\|\bm{A^Y}-\bm{A^X}\bm{Q}\|_{\text{F}}^2 + \frac{\rho}{2}\|\bm{Q} - \bm{P}^{h+1} - \bm{U}^h\|_{\text{F}}^2 \right),\label{EQ_QOpt}\\
    \bm{U}^{h+1} &= \bm{U}^h + \bm{P}^{h+1} - \bm{Q}^{h+1}.\label{EQ_U_Update}
\end{align}
Here, $\rho$ is the penalty parameter for the augmented Lagrangian.
The solution to \eqref{EQ_POpt} is a group soft-thresholding update of $\bm{P}$. For each $k\in[p-1]$,
\begin{equation}\label{EQ_P_Final}
    \bm{P_k}^{h+1} = \left[1-\frac{\lambda}{\rho\|\bm{Q_k}^h - \bm{U_k}^h\|_{\text{F}}}\right]_+ (\bm{Q_k}^h - \bm{U_k}^h).
\end{equation}
The solution to \eqref{EQ_QOpt}, i.e. the update of $\bm{Q}$, is
\begin{equation}\label{EQ_Q_Update}
    \bm{Q}^{h+1} = \left(\frac{1}{n}(\bm{A^X})^\top\bm{A^X} + \rho \bm{I}_M\right)^{-1}\left(\frac{1}{n}(\bm{A^X})^\top\bm{A^X} + \rho \bm{P}^{h+1} + \rho \bm{U}^h\right).
\end{equation}
Iteratively using updates \eqref{EQ_P_Final}, \eqref{EQ_Q_Update}, and \eqref{EQ_U_Update}, the matrix $\bm{P_k}^{h}$ will eventually converge to $\bm{P_k}^*$, $k\in[p-1]$, as $h\rightarrow\infty$ \citep{Boyd2011Distributed}. The solution of~\eqref{eq:loss-lasso} is given by $\bm{\hat{B}_k}= \bm{P_{k}}^*$, $k\in[p-1]$. The stopping criterion for the iteration process depends on the primal residual, which indicates how well the constraints are satisfied, and the dual residual, which indicates stability of updates between two consecutive iterations \citep{Boyd2011Distributed}. In our settings, $\bm{s_1}^h = \bm{P}^h - \bm{Q}^h$, and $\bm{s_2}^h = \bm{Q}^{h} - \bm{Q}^{h-1}$ are the primal and dual residuals respectively.
The algorithm terminates when both residuals are below their respective tolerances:
$$
\|\bm{s_1}^h\|_{\text{F}}\leq \bm{\epsilon}^{\text{pri},h}\text{ and }\|\bm{s_2}^h\|_{\text{F}}\leq \bm{\epsilon}^{\text{dual},h},
$$
where
\begin{align*}
    \bm{\epsilon}^{\text{pri},h} &= \sqrt{(p-1)M^2}\bm{\epsilon}^{\text{abs}} + \bm{\epsilon}^{\text{rel}}\left(\|\bm{P}^h\|_{\text{F}} \vee \|\bm{Q}^{h}\|_{\text{F}}\right), \\
    \bm{\epsilon}^{\text{dual},h} &= \sqrt{(p-1)M^2}\bm{\epsilon}^{\text{abs}} + \bm{\epsilon}^{\text{rel}}\|\bm{U}^{h}\|_{\text{F}}.
\end{align*}
The factor $\sqrt{(p-1)M^2}$ is because the Frobenius norms are computed on $\mathbb{R}^{(p-1)M^2}$ matrices. In the following experiments, we use $\bm{\epsilon}^{\text{abs}}=10^{-4}$ and $\bm{\epsilon}^{\text{rel}}=10^{-4}$ by default.

The penalty parameter $\rho$ of the augmented Lagrangian can be adjusted adaptively. We use Strategy S3 in Table 1 of \citet{he2000alternating} with $\varphi=10$, $\tau^{\text{incr}} = \tau^{\text{decr}} = 2$:
\[
\rho^{h+1} =
\begin{cases}
\tau^{\text{incr}} \rho^h, & \|\bm{s_1}^h \|_2 > \varphi\|\bm{s_2}^h\|_2\\
\rho^h / \tau^{\text{decr}}, & \|\bm{s_2}^h \|_2 > \varphi \|\bm{s_1}^h\|_2\\
\rho^h, & \text{otherwise},
\end{cases}
\]
with $\rho^0 = 1$. This guarantees that the primal and dual residuals do not vary significantly across iterations and ensures stability regardless of the initial $\bm{P}^0$ and $\bm{U}^0$.

\begin{algorithm}[t]
\caption{ADMM for functional neighborhood selection}
\label{Alg:admm-nb}
\begin{algorithmic}
\REQUIRE{$\bm{A^X}$, $\bm{A^Y}$, and $\lambda_n$}\;
\STATE{Set initial values of $\rho^0$, $\bm{P}^0$, $\bm{Q}^0$, and $\bm{U}^0$}\;
\FOR{$h=0,1,2,\dots$}
    \STATE{Update $\bm{P}^{h+1}$ by \eqref{EQ_P_Final}}\;
    \STATE{Update $\bm{Q}^{h+1}$ by \eqref{EQ_Q_Update}}\;
    \STATE{Update $\bm{U}^{h+1}$ by \eqref{EQ_U_Update}}\;
    \STATE{Break if primal and dual residuals meet stopping criteria}\;
    \STATE{Update $\rho^{h+1}$ for next round}\;
\ENDFOR
\ENSURE{$\bm{\hat{B}_k}$ for $k \in [p-1]$.}
\end{algorithmic}
\end{algorithm}

\section{Theoretical Properties}
\label{sec:thm-prop}

We now discuss the statistical properties of the estimator proposed in Section~\ref{sec:methodology}. In particular, we give conditions under which the neighborhood of a single variable can be consistently recovered. Using a union bound extends the guarantees to recovery of the entire graph. First, we discuss a procedure that uses a fixed function basis, and, subsequently, we discuss a procedure that uses an estimated function basis.

Since we first consider a single node $j$, we assume without loss of generality that $j = p$. To simplify the notation, we also drop the subscript $j$ from $\beta_{jk}(t^{\prime},t)$, $\bm{\phi_j}$, $\phi_{j,m}$, $\ldots$, in this section. 
By~\eqref{eq:beta-l-expansion}, we have $\Vert \beta_k (t^{\prime},t) \Vert_{\text{HS}} =\sqrt{ \sum^{\infty}_{m,m^{\prime}=1} (b^*_{k,m m^{\prime}})^2 }$ and $b^*_{k,m m^{\prime}}=0$ for all $m,m^{\prime}\geq 1$ when $\Vert \beta_k(t, t^\prime) \Vert_{\text{HS}}=0$.
Let $\bm{B^{\ast}_{k,M}}$ be a $M \times M$ matrix whose $m$-th row is $(b^{\ast}_{k,m1},b^{\ast}_{k,m2},\dots,b^{\ast}_{k,mM})$. The scores of the ``error'' projected onto the function basis are denoted as $w_{im}=\int_{\mathcal{T}} e_i(t) \phi_m (t) dt$, $m \geq 1$, and  $\bm{w_{i,M}}=(w_{i1},\cdots,w_{iM})^{\top}$. Let $\bm{r_{i,M}}=(r_{i1},\cdots,r_{iM})^{\top} \in \mathbb{R}^{M}$ denote the ``bias'' arising from using the first $M$ basis elements to represent $\beta_k(t^{\prime},t)$ where
$r_{im} = \sum^{p-1}_{k=1} \sum^{\infty}_{m^{\prime}=M+1} b^{\ast}_{k,m m^{\prime}} a^{X_k}_{im^{\prime}}$.
Let
\begin{equation}\label{eq:trun-beta}
\begin{aligned}
\beta_{k,M} (t^{\prime},t) &=  \sum^{M}_{m,m^{\prime}=1} b^{\ast}_{k,m m^{\prime}} \phi_m (t) \phi_{m^{\prime}} (t^{\prime}),\\
\beta_{k, >M} (t^{\prime},t) &=  \sum^{\infty}_{m>M \,\text{or}\, m^{\prime}>M} b^{\ast}_{k,m m^{\prime}} \phi_m (t) \phi_{m^{\prime}} (t^{\prime}).
\end{aligned}
\end{equation}
Then $\Vert \beta_{k, M} (t^{\prime},t) \Vert_{\text{HS}}=\sqrt{ \sum^{M}_{m,m^{\prime}=1} (b^*_{k,m m^{\prime}})^2 }$, $\Vert \beta_{k, >M} (t^{\prime},t) \Vert_{\text{HS}}=\sqrt{ \sum^{\infty}_{m>M \, \text{or} \, m^{\prime}>M} (b^*_{k,m m^{\prime}})^2 }$, and
\begin{align*}
\Vert \beta_k(t^{\prime},t) \Vert_{\text{HS}} -  \Vert \bm{B^{\ast}_{k, M}} \Vert_{\text{F}} & =  \Vert \beta_{k, M}(t^{\prime},t) + \beta_{k, >M}(t^{\prime},t) \Vert_{\text{HS}} -  \Vert \bm{B^{\ast}_{k, M}} \Vert_{\text{F}} \\
& \leq  \Vert \beta_{k, M}(t^{\prime},t) \Vert_{\text{HS}} + \Vert \beta_{k, >M}(s,t) \Vert_{\text{HS}} -  \Vert \bm{B^{\ast}_{k, M}} \Vert_{\text{F}} \\
& =  \Vert \beta_{k, >M}(t^{\prime},t) \Vert_{\text{HS}}.
\end{align*}
When $M$ is large enough, then the term $\Vert \beta_{k, >M}(t^{\prime},t) \Vert_{\text{HS}}$ is small; when $n$ is also large enough, $\bm{\hat{B}^M_k}$ is close to $\bm{B^{\ast}_{k, M}}$, and $\mathscr{\hat N}_j$ will be a good estimator of $\mathscr{N}_j$.

Both $\bm{w_{i,M}}$ and $\bm{r_{i,M}}$ are Gaussian vectors with mean zero, and we denote their covariance matrices as $\bm{\Sigma^{w}}$ and $\bm{\Sigma^{r}}$ respectively; in addition, we define $\bm{\Sigma^{r,w}}=\text{Cov}(\bm{r_{i,M}},\bm{w_{i,M}})$ and $\bm{\Sigma^{w,r}}=(\bm{\Sigma^{r,w}})^{\top}$.
To simplify the notation, we drop the explicit dependence on $M$. Let $\bm{\Sigma^{X_k,r}} = \Cov (\bm{a^{X_k}_{i,M}}, \bm{r_{i,M}}) \in \mathbb{R}^{M \times M}$, $\bm{\Sigma^{r,X_k}}=(\bm{\Sigma^{X_k,r}})^{\top}$, $\bm{\Sigma^{X_k, X_l}}=\Cov (\bm{a^{X_k}_{i,M}}, \bm{a^{X_l}_{i,M}}) \in \mathbb{R}^{M \times M}$, and $\bm{\Sigma^X}=(\bm{\Sigma^{X_k, X_l}})_{1\leq k,l \leq p-1}$ is a matrix composed of $M\times M$-blocks $\bm{\Sigma^{X_k, X_l}}$, $k,l \in [p-1]$. The following quantities will be used to state the results:
\begin{equation}
\begin{gathered}\label{eq:add-nota}
\Xi_1(M)  =  \max_{k \in [p-1]} \left\{ \rho_{\max}(\bm{\Sigma^w}+\bm{\Sigma^r}-\bm{\Sigma^{r,X_k}} \left( \bm{\Sigma^{X_k, X_k}} \right)^{-1} \bm{\Sigma^{{X_k},r}}) \right\}, \\
\Xi_2(M)  =  \max_{k \in[p-1]}\rho_{\max}(\bm{\Sigma^{X_k, X_k}}), \qquad 
\Xi_3(M) =  \max_{k \in[p-1]}\tr(\bm{\Sigma^{X_k, X_k}}), \\
\Xi_4(M)  =  \tr \left\{ \bm{\Sigma^r} + \bm{\Sigma^w} + \bm{\Sigma^{r,w}} + \bm{\Sigma^{w,r}} \right\}, \qquad 
\omega(M) =  \max_{k \in [p-1]} \left\Vert \bm{\Sigma^{r,X_k}}
\right\Vert_{\text{F}}.
\end{gathered}
\end{equation}
Note that $\bm{\Sigma^r}-\bm{\Sigma^{r,X_k}} \left( \bm{\Sigma^{X_k, X_k}} \right)^{-1} \bm{\Sigma^{{X_k},r}}$ is a conditional variance of $\bm{r_i}$ given $\bm{a^{X_k}_{i,M}}$, so the arguments in $\Xi_1(M)$ are always positive semidefinite. The functions $\Xi_1(M)$ -- $\Xi_4(M)$ are used to express an upper bound on the covariance between the projection scores $\bm{a^{X_k}_{i,M}}$ and the error terms $(\bm{r_{i,M}}+\bm{w_{i,M}})$. This upper bound then provides a lower bound for the regularization parameter $\lambda_n$. The function $\omega(M)$ measures the correlation of residuals $\bm{r_{i,M}}$ with $\bm{a^{X_k}_{i,M}}$. A large correlation implies that the problem is more difficult to solve. Finally, let 
\begin{equation}\label{eq:def-K0}
K_0 = \max_{k \in [p-1], m \in M} \mathbb{E} [(a^{X_k}_{i m})^2] = \max_{k \in [p-1], m \in M} \left( \bm{\Sigma^{X_k, X_k}}\right)_{m,m} <\infty.
\end{equation}
The quantity $K_0$ is used to provide an upper bound on the estimation error for the covariance matrix of $\bm{a^{X}_{i,M}}$. Subsequently, this is used to prove a lower bound on restricted eigenvalues, which is a crucial step in proving Theorem~\ref{thm:nb-edge-recov-fix} and Theorem~\ref{thm:nb-edge-recov-data-driven}.

\subsection{Prior Fixed Function Basis}
\label{sec:fix-fun-basis}

Let $\sigma_{j0} = \mathbb{E} [ \Vert g_{ij} \Vert^2]$, $\sigma_{\text{max},0}=\max_{j\in[p]}\sigma_{j0}$, and $\sigma_{jr}=\mathbb{E} [ \Vert e_{ij} \Vert^2]$, where $e_{ij}$ is defined in \eqref{eq:function-linear}. Note that $\sigma_{jr} \leq \sigma_{j0}$. We introduce several assumptions before stating the main results.

\begin{assump}\label{assump:uni-bd-mom2}
There exists a constant $C>0$ that does not depend on $p$ such that $\sigma_{\text{max},0} \leq C$.
\end{assump}
Assumption~\ref{assump:uni-bd-mom2} requires that the norm of the random functions have a finite second moment that does not grow with $p$ and is a basic requirement for functional graphical models to be well defined. Note that $\Xi_k(M) \leq \max_{j \in [p]} \mathbb{E} \Vert g_{ij} \Vert^2 $ for all $k=1,2,3,4$ and any $M$. Thus, $\Xi_k (M) \leq C$ for all $M \geq 1$ and $p \geq 1$.

\begin{assump}\label{assump:LowBoundKappa}
Let $\bm{\Sigma^X}_{\mathscr{N}_j, \mathscr{N}_j}=(\bm{\Sigma^{X_k, X_{k'}}})_{k,k' \in \mathscr{N}_j} \in \mathbb{R}^{ \vert \mathscr{N}_j \vert M \times \vert \mathscr{N}_j \vert M }$ be the submatrix with blocks indexed by the elements of the neighborhood set $\mathscr{N}_j$ and define
\begin{equation}\label{eq:kappa-def}
\kappa = \kappa(M) = \rho_{\min}\left( \bm{\Sigma^X}_{\mathscr{N}_j, \mathscr{N}_j}\right).
\end{equation}
For any $M$, we assume that $\kappa > 0$. When $\mathscr{N}_j$ is empty, we let $\kappa=\infty$ for all $M$.
\end{assump}
Assumption~\ref{assump:LowBoundKappa} requires that the projection scores of all functions in the neighborhood of node $j$ are linearly independent. As discussed in Section~\ref{sec:choice-fun-basis}, because we project all functions onto the same basis, projection scores would only be colinear if the functions are truly difficult to distinguish. 

Let $\tau(M)$ be the relevant signal strength:
\begin{equation}
\tau (M) = \min_{k \in \mathscr{N}_j} \left\Vert \bm{B^{\ast}_k} \right\Vert_{\text{F}} = \min_{k \in \mathscr{N}_j} \left\Vert \beta_{k, M} \right\Vert_{\text{HS}},
\end{equation}
where $\beta_{k, M} (t^{\prime},t)$ is defined in \eqref{eq:trun-beta}. For any orthonormal basis, the signal strength $\tau (M)$ is an increasing function of $M$. When $\mathscr{N}_j$ is empty, we define $\tau (M)=\infty$ for all $M$. Recall that we use $s = s_j$ to denote the size of the neighborhood, $s = \vert \mathscr{N}_j \vert$. As discussed in Section~\ref{sec:f-on-f2v-on-v}, when $M$ is large enough such that $\tau (M)>0$, we have $\mathscr{N}^M_j=\mathscr{N}_j$, where $\mathscr{N}^M_j$ is defined in~\eqref{eq:trunc-neighbor}.

\begin{assump}[Signal Strength]
\label{assump:sig-strength}
We assume that $$\frac{\omega(M)}{\sqrt{\kappa(M)}\tau (M)}$$ is a non-increasing function of $M$, and
\begin{equation*}
\lim_{M \rightarrow \infty} 24 \sqrt{s} \frac{\omega(M)}{\sqrt{\kappa(M)} \tau (M)} < 1.
\end{equation*}
\end{assump}
Assumption~\ref{assump:sig-strength} requires that the function $\omega(M)$---a measure of bias due to truncation---must decay quickly compared to $\sqrt{\kappa(M)}$, which roughly measures the conditioning of the design matrix, after dividing by $\tau(M)$, which measures the signal strength. 

{\newText
We also compare our Assumption~\ref{assump:sig-strength} with Assumption 4.6 of~\citet{solea2021nonparametric}. 
Assumption 4.6 of~\citet{solea2021nonparametric} assumes that there exists a universal constant $C_{\min}>0$ such that $\kappa(M) \geq C_{\min}$ for all $M \geq 0$, where $\kappa(M)$---defined in our Assumption~3---is the minimum eigenvalue of of the covariance matrix of the function scores when using an $M$-dimensional basis. However, by~\citet[Theorem 7.2.5]{Hsing2015Theoretical}, the covariance operator is a compact operator; furthermore, by~\citet[Theorem 4.2.3]{Hsing2015Theoretical}, we must have $\kappa(M) \to 0$ as $M \to \infty$ unless $\mathbb{H}$ has finite-rank. Thus, Assumption 4.6 of~\citet{solea2021nonparametric} is equivalent to assuming that the random functions lie in a finite-dimensional space.
In contrast, in our Assumption~4, instead of assuming that $\kappa(M)$ is uniformly bounded away from $0$, we study the interplay between $\kappa(M),\omega(M)$ and $\tau(M)$. 
When $\kappa(M)$ is bounded from $0$, our Assumption~4 holds; however, our assumptions also allow $\kappa(M) \to 0$ as $M \to \infty$, as long as it does not decrease too quickly when compared with $s\, \omega^2(M) / \tau^2(M)$. Thus, our theory can deal with infinite-dimensional random functions.
In this sense, our Assumption~4 should be considered strictly weaker than Assumption 4.6 of~\citet{solea2021nonparametric}.
}

Let
\begin{equation}
\label{eq:sig-strength}
\nu (M) = \tau (M) - 24 \sqrt{s} \frac{\omega(M)}{\sqrt{\kappa(M)}}.
\end{equation}
Under Assumption~\ref{assump:sig-strength}, $\lim_{M \rightarrow \infty} \nu (M) > 0$. We denote $M^*$ as the smallest integer such that $\nu(M')>0$ for all $M' \geq M^*$, that is,
\begin{equation}
\label{eq:Mstar}
\begin{aligned}
M^* 
& = \min \left\{ M \, :\, \nu(M') > 0 \text{ for all } M' \geq M \right\} \\
& = \min \left\{ M \, :\, 24 \sqrt{s} \frac{\omega(M')}{\sqrt{\kappa(M')} \tau (M')} < 1 \text{ for all } M' \geq M \right\}.
\end{aligned}
\end{equation}
Let
\begin{equation}\label{eq:chi-fun}
\chi (n,p,M,\delta) = \frac{6\sqrt{s}}{\sqrt{\kappa(M)}} \tilde{\lambda} (n, p, M, \delta),
\end{equation}
where
\begin{multline}\label{eq:lambda_tilde}
\tilde{\lambda} (n, p, M, \delta)  =  \tilde{O} \left( \mathcal{C}_{n,\delta} \left\{\frac{M  \sqrt{\Xi_1 (M)}}{\sqrt{n}}+ \sqrt{\Xi_1 (M)} \sqrt{ \frac{ \log (p/\delta) }{n} } \right\} \right. \\
+  \left. \omega (M) \left\{ 
\frac{M \sqrt{ \log (pM^2/\delta) }}{\sqrt{n}} +  \frac{  M  \log (pM^2/\delta)  }{n} \right\} \right),
\end{multline}
and
\begin{equation}\label{eq:Cndelta}
\mathcal{C}_{n,\delta} = \sqrt{\Xi_2 (M)}\cdot\left(1 + \sqrt{\frac{2}{n}\log\left(\frac{(p-1)}{\delta} \right)} \right)+\sqrt{\frac{\Xi_3 (M)}{n}}.
\end{equation}
The exact form of $\tilde{\lambda} (n, p, M, \delta)$ can be found in \eqref{eq:lambda_tilde_exact} in appendix. The function $\tilde{\lambda} (n, p, M, \delta)$ provides a theoretical guidance on how to select the regularization parameter $\lambda_n$ and the function $\chi (n,p,M,\delta)$ is used to provide theoretical guidance on the choice of thresholding parameter $\epsilon_n$. 

We are now ready to state our main result on the consistency of the neighborhood selection procedure.

{\newText
\begin{theorem}[Neighborhood Recovery with Prior Fixed Basis]
\label{thm:nb-edge-recov-fix}
Suppose that Assumptions~\ref{assump:Hilbert-schmidt-assump}-\ref{assump:sig-strength} hold. Furthermore, suppose $M \geq M^{\ast}$, $\lambda_n= \tilde{\lambda}(n,p,M,\delta)$, and $\epsilon_n = \chi (n,p,M,\delta) + 12 \sqrt{s / \kappa(M)} \omega (M)$. Fix $\delta \in (0, 1]$. If the sample size $n$ satisfies 
\begin{equation*}
n \geq \tilde{O} \left( \max \left\{ \frac{M^4 s^2 \log (p^2 M^2/\delta)}{\kappa^2(M)}, \frac{s \cdot \max \left\{ M^2, \log (p/\delta), M^2 \omega^2 (M) \log (M^2 p /\delta) \right\} }{\kappa (M) \nu^2 (M)} \right\} \right),
\end{equation*}
then with probability at least $1-\delta$, we have
\begin{equation*}
\sqrt{\sum^{p-1}_{k=1} \left\Vert \bm{\hat{B}_k} - \bm{B^*_k} \right\Vert^2_{\text{F}}} \leq \chi (n,p,M,\delta) + 12 \sqrt{s} \frac{\omega (M)}{ \sqrt{\kappa(M)}},
\end{equation*}
so that $\mathscr{\hat{N}}_j = \mathscr{N}_j$.
\end{theorem}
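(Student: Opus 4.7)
The plan is to treat the problem as a group lasso for the vector-on-vector regression in \eqref{eq:regression-target}, with two complications peculiar to the functional setting: the ``noise'' decomposes as $\bm{w_{i,M}}+\bm{r_{i,M}}$ where $\bm{r_{i,M}}$ is \emph{correlated} with the predictors (the finite-$M$ bias), and the ambient dimension is $(p-1)M$ rather than $p-1$, so all concentration bounds need the block structure. The overall strategy is the familiar two-step: (i) produce a high-probability oracle bound on $\sum_{k}\|\bm{\hat B_k}-\bm{B^\ast_{k,M}}\|_F$, then (ii) use the signal condition together with the threshold $\epsilon_n$ to separate signal blocks from noise blocks.

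\textbf{Step 1: Control the stochastic gradient at the truth.} Let $L_n(\bm{B})$ denote the empirical quadratic loss in \eqref{eq:loss-lasso}. A standard KKT/basic-inequality argument for group lasso reduces everything to controlling $\max_{k\in[p-1]}\|\nabla_{\bm{B_k}} L_n(\bm{B^\ast})\|_F$, which equals the block-wise norm of $n^{-1}\sum_i \bm{a^{X_k}_{i,M}}(\bm{w_{i,M}}+\bm{r_{i,M}})^\top$. Because $\bm{w_{i,M}}$ is independent of $\bm{a^{X_k}_{i,M}}$ while $\bm{r_{i,M}}$ is not, I would split this into a mean-zero part (handled with Gaussian/Hanson--Wright concentration at scale $\sqrt{\Xi_1(M)\Xi_2(M)}$, with the $\Xi_3(M)/n$ and $\mathcal{C}_{n,\delta}$ corrections accounting for the block operator norm of $\bm{A^{X_k}}$) and a bias drift part (bounded deterministically by $\omega(M)$ plus a concentration remainder with the $\log(pM^2/\delta)$ factor). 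Choosing $\lambda_n=\tilde\lambda(n,p,M,\delta)$ as in \eqref{eq:lambda_tilde} and taking a union bound over $k\in[p-1]$ then gives $\max_k\|\nabla_{\bm{B_k}} L_n(\bm{B^\ast})\|_F\le \lambda_n/2$ with probability at least $1-\delta/3$.

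\textbf{Step 2: Restricted eigenvalue on $\bm{\Sigma^X}_{\mathscr{N}_j,\mathscr{N}_j}$.} Using Assumption~\ref{assump:LowBoundKappa}, the population block $\bm{\Sigma^X}_{\mathscr{N}_j,\mathscr{N}_j}$ has smallest eigenvalue $\kappa(M)$. I would combine a sub-Gaussian covariance concentration bound (at scale $K_0$ from \eqref{eq:def-K0}) with a block-$(s,1)$ cone argument to show that, on a good event, the empirical Gram matrix $n^{-1}(\bm{A^X})^\top\bm{A^X}$ restricted to the cone $\{\bm{\Delta}:\sum_{k\notin\mathscr{N}_j}\|\bm{\Delta_k}\|_F\le 3\sum_{k\in\mathscr{N}_j}\|\bm{\Delta_k}\|_F\}$ satisfies a group RE with constant $\kappa(M)/2$. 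The sample size requirement $n\gtrsim M^4 s^2\log(p^2M^2/\delta)/\kappa^2(M)$ in the theorem is precisely what makes this concentration kick in at the block scale $M\times M$ with $s$ active blocks.

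\textbf{Step 3: Oracle bound and thresholding.} On the intersection of the events in Steps~1 and 2, the standard group lasso basic inequality yields $\sum_k\|\bm{\hat B_k}-\bm{B^\ast_{k,M}}\|_F \lesssim \sqrt{s}\,\lambda_n/\kappa(M)$, hence in particular $\max_k \|\bm{\hat B_k}-\bm{B^\ast_{k,M}}\|_F \le \chi(n,p,M,\delta)$ from \eqref{eq:chi-fun}. For $k\notin\mathscr{N}_j$ we have $\bm{B^\ast_{k,M}}=\bm{0}$, so $\|\bm{\hat B_k}\|_F\le \chi(n,p,M,\delta) < \epsilon_n$. For $k\in\mathscr{N}_j$, the reverse triangle inequality combined with the truncation bound $\|\beta_k\|_{\mathrm{HS}}-\|\bm{B^\ast_{k,M}}\|_F \le \|\beta_{k,>M}\|_{\mathrm{HS}}$ (derived right before the theorem) and the $12\sqrt{s/\kappa(M)}\,\omega(M)$ cushion built into $\epsilon_n$ gives $\|\bm{\hat B_k}\|_F \ge \tau(M)-\chi(n,p,M,\delta)-\text{(bias)} > \epsilon_n$, where the inequality uses Assumption~\ref{assump:sig-strength} via $M\ge M^\ast$ so that $\nu(M)>0$. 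The sample size lower bound's second branch, $n\gtrsim s\max\{M^2,\log(p/\delta),M^2\omega^2(M)\log(M^2p/\delta)\}/(\kappa(M)\nu^2(M))$, is exactly what is needed to make $\chi(n,p,M,\delta)$ small relative to $\nu(M)/2$. A final union bound over the three good events of total probability at least $1-\delta$ completes the argument.

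\textbf{Main obstacle.} I expect the subtle part to be Step~1, more specifically handling the bias term $\bm{r_{i,M}}$: unlike classical lasso noise analysis, $\bm{r_{i,M}}$ is not mean-zero given $\bm{a^{X_k}_{i,M}}$, so the gradient has a genuine deterministic drift of order $\omega(M)$ that must be absorbed into $\lambda_n$ and accounted for separately in the threshold (hence the extra $12\sqrt{s/\kappa(M)}\,\omega(M)$ term in $\epsilon_n$). Getting the right $M$-dependence in the concentration constants (the $\Xi_i(M)$'s and the $M$ factors inside $\tilde\lambda$) also requires being careful with block-operator-norm bounds rather than naive entry-wise ones; this is where the union bound over $M^2$ coordinates (rather than $M$) produces the $\log(pM^2/\delta)$ factors that appear throughout the statement.
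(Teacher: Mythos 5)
Your proposal is correct and follows essentially the same route as the paper: control the block-wise gradient at the truth by splitting the residual into a part independent of the predictors and a correlated part of Frobenius size $\omega(M)$ (the paper does this slightly more finely, peeling off the conditional expectation $\bm{\Sigma^{r,X_k}}(\bm{\Sigma^{X_k,X_k}})^{-1}\bm{a^{X_k}_{i,M}}$ so that the remainder $\bm{w_{i,M}}+\bm{r^1_{i,M}}$ is exactly independent of $\bm{a^{X_k}_{i,M}}$), establish a restricted eigenvalue bound of order $\kappa(M)$ on the group-lasso cone via element-wise covariance concentration at scale $K_0$, invoke the decomposable-regularizer oracle inequality, and then threshold using $\nu(M)>0$ for $M\ge M^\ast$. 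The two sample-size branches arise exactly as you describe, from the RE condition and from forcing $\chi(n,p,M,\delta)\le\nu(M)/3$, respectively.
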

\begin{proof}
See Appendix~\ref{sec:proof-thm1}. 
\end{proof}
}

Note that the quantities $\kappa(M)$, $\omega(M)$, $\nu(M)$, $s$, and $M$ all implicitly depend on $j$. One key difference between Theorem~\ref{thm:nb-edge-recov-fix} and a typical group lasso result
\citep[Chapter 9.6 of][]{Wainwright2019High}
is that the error term is correlated with the covariates---recall that the projection scores $\bm{a^X_{i,M}}$ are correlated with the residual $\bm{r_{i,M}}$ due to the finite-dimensional approximation. The effect of the correlation is captured by the function $\omega(M)$. When there is no truncation bias---i.e., the random functions are finite-dimensional and $M$ is large enough---then $\omega(M)$ will be zero.

Using a union bound, the following corollary directly follows from Theorem~\ref{thm:nb-edge-recov-fix} and provides guarantees for recovery of the entire graph.
\begin{corollary}[Graph Recovery with Prior Fixed Basis]
\label{corollary:graph-recov-fix-basis}
Suppose the conditions of Theorem~\ref{thm:nb-edge-recov-fix} hold for all nodes $j \in [p]$. 
We use $\kappa_j(M_j)$, $\omega_j(M_j)$, $\nu_j(M_j)$, $s_j$, and $M_j$ to take the place of $\kappa(M)$, $\omega(M)$, $\nu(M)$, $s$, and $M$ in Theorem~\ref{thm:nb-edge-recov-fix} to show their dependency on $j$ explicitly.
Let $\hat{E}$ be the estimated edge set obtained by applying either the AND or OR rule to the estimated neighborhood of each node. If the sample size $n$ satisfies 
\begin{equation*}
\begin{aligned}
n & \geq \max_{j \in [p]} \tilde{O} \left( \max \left\{ \frac{M_j^4 s_j^2 \log (p^3 M_j^2/\delta)}{\kappa_j^2(M_j)}, \right.\right. \\
& \qquad \qquad \qquad \qquad \left.\left. \frac{s_j \cdot \max \left\{ M_j^2, \log (p^2/\delta), M_j^2 \omega_j^2 (M_j) \log (M_j^2 p^2 /\delta) \right\} }{\kappa_j (M_j) \nu_j^2 (M_j)} \right\} \right),
\end{aligned}
\end{equation*}
then $\mathbb{P} \left\{ \hat{E} = E \right\} \geq 1 - \delta$.
\end{corollary}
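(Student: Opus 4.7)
The plan is to deduce the corollary from Theorem~\ref{thm:nb-edge-recov-fix} via a simple union bound. First, I would observe that the per-node guarantee of Theorem~\ref{thm:nb-edge-recov-fix} is $\mathbb{P}\{\mathscr{\hat{N}}_j=\mathscr{N}_j\}\geq 1-\delta$ provided the sample size bound involving $\log(p^2 M^2/\delta)$, $\log(p/\delta)$, and $\log(M^2 p/\delta)$ holds. To make a union bound over the $p$ nodes give overall error at most $\delta$, I would apply the theorem to each node $j\in[p]$ with the confidence parameter replaced by $\delta/p$. Substituting $\delta\mapsto\delta/p$ transforms $\log(p^2 M^2/\delta)\to\log(p^3 M^2/\delta)$, $\log(p/\delta)\to\log(p^2/\delta)$, and $\log(M^2 p/\delta)\to\log(M^2 p^2/\delta)$, which matches precisely the logarithmic factors appearing in the sample size assumption of the corollary. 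Since the corollary's sample size bound takes a maximum over $j\in[p]$, the per-node sample size requirement (with $\delta/p$) is simultaneously satisfied for every $j$.

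Next, I would apply Theorem~\ref{thm:nb-edge-recov-fix} to each $j$ to conclude that
\begin{equation*}
\mathbb{P}\bigl\{\mathscr{\hat{N}}_j\neq \mathscr{N}_j\bigr\}\leq \delta/p,\qquad j\in[p].
\end{equation*}
A union bound over $j$ then yields
\begin{equation*}
\mathbb{P}\bigl\{\mathscr{\hat{N}}_j=\mathscr{N}_j\text{ for all }j\in[p]\bigr\}\geq 1-\delta.
\end{equation*}

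Finally, I would argue that on the event $\{\mathscr{\hat{N}}_j=\mathscr{N}_j\text{ for all }j\}$, we have $\hat{E}=E$ under either combination rule. Indeed, by the neighborhood definition~\eqref{eq:neighborhoodDef} and the edge definition~\eqref{eq:edgeDef}, we have $l\in\mathscr{N}_j\Leftrightarrow (j,l)\in E\Leftrightarrow j\in\mathscr{N}_l$, so the true neighborhoods are automatically symmetric. Hence on this event, the estimated neighborhoods are also symmetric, and both the AND rule and the OR rule recover exactly the edge set $E$. Combining this with the previous probability bound gives $\mathbb{P}\{\hat{E}=E\}\geq 1-\delta$, as claimed.

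There is no substantive obstacle here beyond careful bookkeeping: the proof is essentially one union bound plus the elementary observation that correct per-node neighborhood recovery implies correct edge recovery under both combination schemes. The only item worth verifying is that the $\tilde{O}(\cdot)$ in the sample size condition absorbs all multiplicative constants introduced by the $\delta\mapsto\delta/p$ substitution, which is immediate because $\log(p^k/\delta)=k\log p+\log(1/\delta)$ differs from $\log(p/\delta)$ only by a constant factor times $\log p$, and $\log$ factors of the dimensions are already tracked (not hidden) in the stated bound.
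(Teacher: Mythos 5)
Your proposal is correct and is exactly the argument the paper intends: apply Theorem~\ref{thm:nb-edge-recov-fix} at each node with confidence parameter $\delta/p$ (which produces precisely the stated $\log(p^3M_j^2/\delta)$, $\log(p^2/\delta)$, and $\log(M_j^2p^2/\delta)$ factors), take a union bound, and observe that exact recovery of every neighborhood yields $\hat{E}=E$ under both the AND and OR rules since the true neighborhoods are symmetric. The paper offers no further detail beyond ``union bound,'' so there is nothing to add.
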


Before moving to the next section, we compare our theorems with some existing literature.

Compared with~\citet{Qiao2015Functional}, we do not assume that the functional data are finite dimensional. Instead, we study the truly infinite dimensional functional data and discuss the trade-off between bias, signal strength, and conditioning of the design matrix explicitly. When Condition 2 of~\citet{Qiao2015Functional} (which is required for the correct graph recovery therein) holds, that is, when $g_{ij}$ is $M(n)$-dimensional for all $j \in [p]$ and some positive integer $M(n)$, then $\omega(M(n))=0$ and $\nu(M(n))>0$. However, when Assumption~\ref{assump:sig-strength} holds, we do not necessarily need $g_{ij}$ to be finite dimensional. Thus, Condition 2 of~\citet{Qiao2015Functional} is strictly stronger than Assumption~\ref{assump:sig-strength}.

{\newText
We also compare our results to those in~\citet{solea2021nonparametric}.
In addition to the difference between our Assumption~\ref{assump:sig-strength} and Assumption 4.6 of~~\citet{solea2021nonparametric} as discussed previously, our theoretical analysis offers an explicit characterization of a pivotal threshold in $M$. Specifically, we necessitate $M$ to exceed $M^{\star}$---as defined in~\eqref{eq:Mstar}---which is contingent solely on the characteristics of the functional data, rather than $n$. This allows our theoretical analysis to account for finite $M$. In contrast, \citet{solea2021nonparametric} necessitate a sieve-type estimator where $M$ scales with $n$ (Assumption 4.1 (ii)). We argue that our finding is more intuitive because increasing $M$ seeks to decrease the approximation error. If $M$ is not large enough, the approximation error remains too large to ensure consistent graph recovery, irrespective of how large $n$ may be. Therefore, accurate graph recovery becomes infeasible. Conversely, once $M$ is large enough to render the approximation error small, consistent graph recovery is achievable irrespective of the specific $M$ chosen, as long as $n$ is sufficiently large. In this regard, $M^{\star}$ encapsulates this threshold, a concept that is absent in~\citet{solea2021nonparametric}.
Furthermore, when we prescribe $M$ to scale with $n$ and treat $\kappa(M)$ as a constant, our result still delivers a superior rate. To highlight this, we initially presume all other parameters remain constant and only contemplate how the sample size $n$ relates to $M$. In Theorem~\ref{thm:nb-edge-recov-fix}, once $M \geq M^{\star}$, the dominant term becomes the first one. When $\kappa(M)$ is constant, we have $n=\Omega(M^4)$. By contrast, Assumption 4.1 (ii) of \citet{solea2021nonparametric} stipulates that $n=\Omega(M^{2+3\beta})$. Noting that Assumption 4.1 (i) of \citet{solea2021nonparametric} demands $\beta > 1$, it requires $n=\Omega(M^{2+3\beta})=\Omega(M^5)$, rendering it inferior to our rate.
}


{\newText
Finally, to obtain consistency of neighborhood recovery and graph recovery, we take a thresholding idea by introducing a tuning threshold $\epsilon_n$ in~\eqref{eq:nbg-set-est}, while both \citet{Qiao2015Functional} and \citet{solea2021nonparametric} rely on the irrepresentability condition~\citep{zhao06model}. Although our approach requires an additional tuning parameter, the irrepresentability condition is known to be a strong assumption. Both ideas are widely used in the literature.
The hard thresholding after initial group LASSO estimation has been broadly applied in high-dimensional linear regression~\citep{meinshausen08lasso} and graphical modeling~\citep{Cai2011Constrained}.
The theoretically appropriate choice of $\epsilon_n$ depends on problem parameters, typically unknown in practice. For this reason, the hard thresholding step is primarily employed for theoretical, rather than practical, purposes. 
Despite recognizing the gap between practice and theory, it's crucial to note that bridging this gap is a non-trivial task and remains a long-standing challenge in high-dimensional statistics.
In the simulations of Section~\ref{sec:experimemts}, we set $\epsilon_n=0$.
In Appendix~\ref{sec:effec-epsilon-n}, we empirically demonstrate how a non-zero $\epsilon_n$ impacts practical performance. The result shows that the benefit of a nonzero $\epsilon_n$ is not substantial, which justifies our choice in simulations.
Another way to choose $\epsilon_n$ is by cross-validation (CV) as we described in Section~\ref{CH_Param_Selection}. See Section~\ref{sec:perf-cv} for the empirical results of CV.

}

\subsection{Data-Dependent Function Basis}
We now consider the setting where the function basis used for dimension reduction is not known in advance, and instead the basis used is an estimate of some population basis. We will assume we have access to estimates satisfying the following property.
\begin{assump}
\label{assump:basis-est-accu}
There exist constants $c_1,c_2>0$ such that for all $0<\delta \leq 1$, we have
\begin{equation}\label{eq:basis-est-accu}
\mathbb{P} \left\{ \Vert \hat{\phi}_{jm} - \phi_{jm} \Vert \leq d_{jm} \sqrt{\frac{ (1/c_1) \log \left( c_2 / \delta \right)}{n}} \, \text{for all } m \geq 1 \right\} \geq 1-\delta,
\end{equation}
where $\{d_{jm}\}$, ${j \in [p], m \geq 1}$, are constants that depend on $j,m$ and satisfy $d^2_{j0} = \sum^{\infty}_{m=1} d^2_{jm} < \infty$, $j \in [p]$.
\end{assump}

Assumption~\ref{assump:basis-est-accu} holds when $\phi_{jm}(t)$'s are the
FPCA eigenfunctions of $K_{jj}(t^{\prime},t)$ and $\hat \phi_{jm}(t)$'s are the estimated FPCA eigenfunctions of $\hat{K}_{jj}(t^{\prime},t)$---see Lemma 6 and Lemma 8 in the Supplementary Material of \citet{Qiao2015Functional}. When Assumption~\ref{assump:basis-est-accu} holds, let $d_{j,\text{max}}=\max_{m \geq 1} d_{jm}$, and
\begin{equation}\label{eq:add-nota-2}
d_{js} (M) =  \sqrt{ \sum^M_{m=1} d^2_{jm} } \quad \text{for all } M \geq 1,
\end{equation}
so that $d_{js} (M) \leq d_{j0}$ for all $M \geq 1$. In addition, let
\begin{equation}\label{eq:add-nota-3}
\Phi (M) = \sqrt{ \sum^{p-1}_{k=1} \sum^M_{m=1} \sum^{\infty}_{m^{\prime}=M+1} \left( b^{\ast}_{k,m m^{\prime}} \right)^2 }
\end{equation}
and
\begin{equation}\label{eq:chi-fun-check}
\check \chi (n,p,M,\delta) = \frac{6\sqrt{s}}{\sqrt{\kappa(M)}} \check{\lambda} (n, p, M, \delta),
\end{equation}
where $\check{\lambda} (n,p,M,\delta)$ is given in~\eqref{eq:lambda_check} in appendix.

{\newText
\begin{theorem}[Neighborhood Recovery with Data-Dependent Basis]
\label{thm:nb-edge-recov-data-driven}
Suppose Assumptions~\ref{assump:Hilbert-schmidt-assump}-\ref{assump:basis-est-accu}, hold. Furthermore, suppose $M \geq M^{\ast}$, $ \lambda_n= \check{\lambda}(n,p,M,\delta)$, and
$\epsilon_n = \check{\chi} (n,p,M,\delta) + 12 \sqrt{s/\kappa(M)} \omega (M)$. Fix $\delta \in (0, 1]$. If the sample size $n$ satisfies
\begin{multline*}
n \geq \tilde{O} \left( \max \left\{ \frac{M^4 s^2 \log (p^2 M^2/\delta)}{\kappa^2(M)}, \frac{ \max \left\{ sM^2, s\log (p/\delta), sM^2 \omega^2 (M) \log (M^2 p /\delta) \right\}  }{\kappa (M) \nu^2 (M)}  , \right.\right.\\
\left.\left. \frac{ \max \left\{ s^3 M^2 (\log(1/\delta)^2), s (d^2_{j0}-d^2_{js}(M) \Phi^2(M)) \right\}  }{\kappa (M) \nu^2 (M)} \right\}\right),
\end{multline*}
then with probability at least $1-\delta$, we have
\begin{equation*}
\sqrt{\sum^{p-1}_{k=1} \left\Vert \hat{\bm{B_k}} - \bm{B^*_k} \right\Vert^2_{\text{F}}} \leq \check{\chi} (n,p,M,\delta) + 12 \sqrt{s} \frac{\omega (M)}{ \sqrt{\kappa(M)}},
\end{equation*}
so that $\hat{\mathscr{N}}_j = \mathscr{N}_j$.
\end{theorem}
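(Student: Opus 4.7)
The plan is to mirror the proof of Theorem~\ref{thm:nb-edge-recov-fix} as closely as possible, treating the data-dependent basis regime as a controlled perturbation of the fixed-basis regime. The additional ingredient is the extra error term $\bm{v_{i,M}}$ that appears in~\eqref{eq:regression-target-hat} and the fact that both the design scores $\bm{\hat a^{X_k}_{i,M}}$ and the response scores $\bm{\hat a^Y_{i,M}}$ are now perturbed versions of their population-basis counterparts. First, using Assumption~\ref{assump:basis-est-accu}, on a high-probability event I would show that for each $m \leq M$ and each $k$,
$$\bigl| \hat a^{X_k}_{im} - a^{X_k}_{im} \bigr| = \bigl| \langle g^{X_k}_i, \hat\phi_{jm} - \phi_{jm}\rangle \bigr| \leq \Vert g^{X_k}_i \Vert \cdot \Vert \hat\phi_{jm} - \phi_{jm} \Vert,$$
and invoke Assumption~\ref{assump:basis-est-accu} together with the sub-Gaussian tails of $\Vert g^{X_k}_i \Vert$ (a consequence of Assumption~\ref{assump:uni-bd-mom2}) to obtain a uniform bound of order $d_{js}(M)/\sqrt{n}$ on $\Vert \bm{\hat a^{X_k}_{i,M}} - \bm{a^{X_k}_{i,M}} \Vert_2$.

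Next, I would control $\bm{v_{i,M}}$. From the definition in~\eqref{eq:def-v} and the representation of $g^Y_i$ through~\eqref{eq:function-linear}, $\bm{v_{i,M}}$ decomposes into a part driven by the basis mismatch acting on the truncated signal (involving $\Phi(M)$ and the tail coefficients $b^{\ast}_{k, m m'}$ with $m > M$) and a part involving the basis mismatch acting on $g^Y_i$ directly. Using Assumption~\ref{assump:basis-est-accu} together with Cauchy--Schwarz across basis indices, this gives a high-probability bound of order
$$\Vert \bm{v_{i,M}} \Vert_2 \lesssim \frac{d_{js}(M) \cdot \Phi(M) + \sqrt{d_{j0}^2 - d_{js}^2(M)}}{\sqrt{n}},$$
which explains the $s(d_{j0}^2 - d_{js}^2(M))\Phi^2(M)$ contribution to the required sample size.

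The third step is to transfer these perturbation bounds into the two core ingredients of the neighborhood-selection analysis: (i) a restricted eigenvalue lower bound on the empirical design $\bm{\hat A^X}_{\mathscr{N}_j, \mathscr{N}_j}/\sqrt{n}$, and (ii) an upper bound on the empirical ``score'' $(1/n)(\bm{\hat A^X})^\top \bigl( \bm{\hat a^Y} - \bm{\hat A^X} \bm{B^\ast} \bigr)$ in the dual group norm. For (i), I would combine the restricted-eigenvalue argument behind Assumption~\ref{assump:LowBoundKappa} from the fixed-basis proof with a matrix perturbation bound: $\Vert \bm{\hat\Sigma^X} - \bm{\Sigma^X} \Vert_F$ is bounded by a term of order $\sqrt{s^3 M^2}\cdot (\log(1/\delta))/\sqrt{n}$ via the triangle inequality applied to the blockwise differences, which accounts for the $s^3 M^2 (\log(1/\delta))^2$ term. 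For (ii), I would add the $\bm{v_{i,M}}$ contribution to the score bound that produced $\tilde\lambda(n,p,M,\delta)$ in Theorem~\ref{thm:nb-edge-recov-fix}, yielding the inflated regularization parameter $\check\lambda(n,p,M,\delta)$.

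Once (i) and (ii) are in hand, the remainder of the proof is identical in structure to Theorem~\ref{thm:nb-edge-recov-fix}: a primal--dual witness construction combined with the KKT conditions for the group-lasso problem~\eqref{eq:loss-lasso} yields sign/support recovery whenever the signal strength $\nu(M)$ exceeds twice the effective noise level, and the thresholding step with $\epsilon_n = \check\chi(n,p,M,\delta) + 12\sqrt{s/\kappa(M)}\,\omega(M)$ then recovers $\mathscr{N}_j$ exactly. A union bound across the high-probability events for (i), (ii), the basis estimation in Assumption~\ref{assump:basis-est-accu}, and the concentration results carried over from the fixed-basis proof delivers the final $1 - \delta$ guarantee. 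The main obstacle I anticipate is step (i): propagating the basis-estimation error into the restricted eigenvalue in a way that is sharp enough to avoid a worse-than-$s^3 M^2$ dependence, since naive bounds on $\Vert \bm{\hat\Sigma^X} - \bm{\Sigma^X} \Vert$ over all $p$ blocks can easily degrade the sample complexity; a careful block-wise treatment that exploits sparsity of the support $\mathscr{N}_j$ (so only $s$ blocks enter the restricted eigenvalue) is essential.
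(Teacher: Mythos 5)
Your first three steps track the paper's actual argument reasonably well: the paper also controls $\frac{1}{n}\sum_i\Vert \bm{\hat a^{X_k}_{i,M}}-\bm{a^{X_k}_{i,M}}\Vert_2^2$ via Cauchy--Schwarz and Assumption~\ref{assump:basis-est-accu} (Lemma~\ref{lemma:a-diff-bd}), bounds $\frac{1}{n}\sum_i\Vert\bm{v_{i,M}}\Vert_2^2$ by a quantity $\Gamma_v$ involving $\Phi(M)$ and $d_{j0}^2-d_{js}^2(M)$ (Lemma~\ref{lemma:v-bd}), and splits the empirical score as $\Vert(\bm{\hat Z}^{\top}\bm{\tilde u})_k\Vert_2\le\Vert(\bm{\hat Z_k}-\bm{Z_k})^{\top}\bm{\tilde u}\Vert_2+\Vert\bm{Z_k}^{\top}(\bm{w}+\bm{r})\Vert_2+\Vert\bm{Z_k}^{\top}\bm{v}\Vert_2$ to arrive at the inflated $\check\lambda$. (One small inaccuracy: you want a bound on the empirical \emph{average} of the squared score perturbations, not a uniform-over-$i$ bound, which would cost extra log factors; the paper controls $\frac{1}{n}\sum_i\Vert g_{ij}\Vert^2$ rather than $\max_i\Vert g_{ij}\Vert$.)

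The genuine gap is in your endgame. A primal--dual witness construction for the group lasso requires an irrepresentability/mutual-incoherence condition, and no such condition appears among Assumptions~\ref{assump:SEM-fun}--\ref{assump:basis-est-accu}; indeed the paper advertises avoiding it. Without it, the KKT system does not certify that $\bm{\hat B_k}=\bm{0}$ for $k\notin\mathscr{N}_j$, so "sign/support recovery" cannot be concluded this way, and your subsequent appeal to thresholding is redundant with (and inconsistent with) exact support recovery. The paper's route is different and is the one that actually closes the argument: the condition $\lambda_n\ge\frac{2}{n}\max_k\Vert(\bm{\hat Z}^{\top}\bm{\tilde u})_k\Vert_2$ forces the error into the cone $\mathbb{C}(\mathscr{N}_j)$, restricted strong convexity (Lemma~\ref{lemma:restricted-eigen-value}) plus the decomposable-regularizer framework of Negahban et al.\ yields the Frobenius bound $\sqrt{\sum_k\Vert\bm{\hat B_k}-\bm{B^{\ast}_{k,M}}\Vert_{\mathrm{F}}^2}\le\check\chi+12\sqrt{s/\kappa(M)}\,\omega(M)$, and only then does thresholding at $\epsilon_n$ separate $\mathscr{N}_j$ from its complement using $\tau(M)-2\epsilon_n>0$, i.e.\ $\check\chi\le\nu(M)/3$. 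Relatedly, you misattribute the $s^3M^2(\log(1/\delta))^2$ term to a restricted-eigenvalue perturbation: that term sits over $\kappa(M)\nu^2(M)$ rather than $\kappa^2(M)$, and in the paper it arises from the requirement $\check\chi(n,p,M,\delta)\le\nu(M)/3$ applied to the $\Gamma_v^{1/2}$ inflation of $\check\lambda$ (the $\bm{v}$ term carries factors of $s=\vert\mathscr{N}_j\vert$ through $\Gamma_v$), not from controlling $\vertiii{\bm{\tilde\Sigma^X_n}-\bm{\Sigma^X}}_{\infty}$; the restricted-eigenvalue requirement contributes the same $M^4s^2\log(p^2M^2/\delta)/\kappa^2(M)$ term as in Theorem~\ref{thm:nb-edge-recov-fix}, with the extra pieces of $\check\Gamma$ being lower order.
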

\begin{proof}
See Appendix~\ref{sec:proof-thm2}.
\end{proof}
}

Comparing Theorem~\ref{thm:nb-edge-recov-data-driven} with Theorem~\ref{thm:nb-edge-recov-fix}, one key difference is that the regularization parameter is increased by a term that corresponds to the estimation error of the basis functions. As a result, the sample complexity also increases due to this additional error source.

Similar to before, the following corollary provides guarantees for recovery of the whole graph and directly follows from Theorem~\ref{thm:nb-edge-recov-data-driven} when applying a union bound.
\begin{corollary}[Graph Recovery with Data-Dependent Basis]
\label{corollary:graph-recov-est-basis}
Suppose the conditions of Theorem~\ref{thm:nb-edge-recov-data-driven} hold for all nodes $j \in [p]$.
We use $\kappa_j(M_j)$, $\omega_j(M_j)$, $\nu_j(M_j)$, $s_j$, and $M_j$ to take the place of $\kappa(M)$, $\omega(M)$, $\nu(M)$, $s$, and $M$ in Theorem~\ref{thm:nb-edge-recov-fix} to show their dependency on $j$ explicitly.
Let $\hat{E}$ be the estimated edge set obtained by applying either the AND or OR rule to the estimated neighborhood of each node. If the sample size $n$ satisfies 
\begin{multline*}
n \geq \max_{j \in [p]} \tilde{O}  \left( \max \left\{ \frac{M_j^4 s_j^2 \log (p^3 M_j^2/\delta)}{\kappa_j^2(M_j)}, 
 \frac{ \max \left\{  s_j M_j^2, s_j\log (p^2/\delta), s_j M_j^2 \omega_j^2 (M_j) \log (M_j^2 p^2 /\delta) \right\}   }{\kappa_j (M_j) \nu_j^2 (M_j)}  , \right.\right.\\
 \left.\left. \frac{ \max \left\{ s_j^3 M_j^2 (\log(1/\delta)^2), s_j (d^2_{j0}-d^2_{js}(M_j) \Phi^2_j(M_j)) \right\} }{\kappa_j (M_j) \nu_j^2 (M_j)} \right\}\right),
\end{multline*}
then $\mathbb{P} \left\{ \hat{E} = E \right\} \geq 1 - \delta$.
\end{corollary}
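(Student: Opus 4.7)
The plan is to apply Theorem~\ref{thm:nb-edge-recov-data-driven} once to each node and then take a union bound over $j \in [p]$. First, I would replace $\delta$ by $\delta/p$ in the hypothesis of Theorem~\ref{thm:nb-edge-recov-data-driven} and invoke it separately for each node $j \in [p]$. Under this substitution, the node-wise sample-size requirement becomes the one stated in the corollary, since $\log(p/(\delta/p)) = \log(p^2/\delta)$, $\log(M_j^2 p/(\delta/p)) = \log(M_j^2 p^2/\delta)$, and $\log(p^2 M_j^2/(\delta/p)) = \log(p^3 M_j^2/\delta)$. Each resulting per-node bound is dominated by the $\tilde{O}$-expression inside the maximum displayed in the statement, so taking the maximum over $j$ produces a single sample size that is sufficient for every node simultaneously.

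Next, I would combine the $p$ individual recovery events. By Theorem~\ref{thm:nb-edge-recov-data-driven} applied with parameter $\delta/p$, we have
\[
\mathbb{P}\left\{ \hat{\mathscr{N}}_j \neq \mathscr{N}_j \right\} \leq \delta/p \qquad \text{for each } j \in [p],
\]
so a union bound yields
\[
\mathbb{P}\left\{ \hat{\mathscr{N}}_j = \mathscr{N}_j \text{ for all } j \in [p] \right\} \geq 1 - \delta.
\]

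Finally, I would argue that on this high-probability event $\hat{E} = E$ under either the AND or the OR rule. By the definition of $\mathscr{N}_j$ in~\eqref{eq:neighborhoodDef} together with the remark after Assumption~\ref{assump:SEM-fun} that $\beta_{jk}(t,t') \equiv 0$ is equivalent to $C_{jk}(t,t') \equiv 0$, the conditions $l \in \mathscr{N}_j$, $j \in \mathscr{N}_l$, and $(j,l) \in E$ are mutually equivalent. Hence on the simultaneous recovery event, $l \in \hat{\mathscr{N}}_j$ holds if and only if $j \in \hat{\mathscr{N}}_l$ holds, so the AND and OR combination rules coincide and both produce exactly $E$. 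This gives $\mathbb{P}\{\hat{E} = E\} \geq 1 - \delta$, as claimed.

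No real technical obstacle is anticipated: the argument is a standard union bound over per-node guarantees already established in Theorem~\ref{thm:nb-edge-recov-data-driven}. The only bookkeeping to watch is the substitution $\delta \leftarrow \delta/p$ inside the logarithmic factors of the sample-size condition, and the symmetry check that links the one-sided neighborhood recovery events to the two-sided edge set $E$ regardless of whether AND or OR is used.
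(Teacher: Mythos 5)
Your proposal is correct and follows exactly the route the paper intends: the paper states that the corollary ``directly follows from Theorem~\ref{thm:nb-edge-recov-data-driven} when applying a union bound,'' and your argument---invoking the theorem with $\delta/p$ per node (which accounts for the inflated logarithmic factors such as $\log(p^3 M_j^2/\delta)$ and $\log(p^2/\delta)$), taking a union bound, and noting that on the simultaneous recovery event the AND and OR rules coincide and both return $E$ by the symmetry $l \in \mathscr{N}_j \Leftrightarrow j \in \mathscr{N}_l \Leftrightarrow (j,l) \in E$---is precisely that argument, spelled out in slightly more detail than the paper provides.
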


Compared to~\citet{solea2021nonparametric}, our theoretical analysis is more general, since we allow, but do not restrict, $\phi_{jm}(t)$'s to be the FPCA eigenfunctions of $K_{jj}(t^{\prime},t)$ and $\hat \phi_{jm}(t)$'s to be the estimated FPCA eigenfunctions of $\hat{K}_{jj}(t^{\prime},t)$.

\subsection{Theoretical Guidance on the Choice of Function Basis}
\label{sec:th-ch-fun-basis}

We give a theoretical guide for choosing the function basis. Note that we treat the guidance in this section as a heuristic design, and leave more thorough study on this topic for further research. Our theory can successfully explain why the PSKL basis is a good choice for functional data satisfying the partial separability condition---see Section~\ref{sec:minimize-omega-M}.

Suppose that we use $\bm{\phi}=\{ \phi_m \}^{\infty}_{m=1}$ as a function basis to represent the data. Let 
\begin{equation}\label{eq:functionBasisObjective}
\Lambda (M, \bm{\phi}) = \frac{\omega(M)}{\sqrt{\kappa(M)}\tau(M)},
\end{equation}
where $\omega(M)$ measures the covariance between the scores of the basis elements we include and the basis elements we truncate, $\kappa(M)$ is the minimum eigenvalue of the covariance of the scores, and $\tau(M)$ measures the signal strength of the basis elements we include. The function $\Lambda (M, \bm{\phi})$ appears in Assumption~\ref{assump:sig-strength}, and according to the previous section, a good choice of $\bm{\phi}$ should minimize $\Lambda (M, \bm{\phi})$ for all $M \geq 1$. Unfortunately, minimizing $\Lambda (M, \bm{\phi})$ is typically infeasible, as it involves unknown quantities. We motivate two approaches for selecting the function basis. First, we show that when additional assumptions hold, a basis that minimizes $\omega(M)$ can be used. Second, we consider a more general case and suggest approximately minimizing an upper bound on $\Lambda (M, \bm{\phi})$.

\subsubsection{Minimize $\omega(M)$}
\label{sec:minimize-omega-M}

Our first approach to choosing the function basis is to minimize $\omega(M)$. To achieve that, the function basis $\bm{\phi}$ should minimize the covariance between $\{ \{ g^{X_k}_i, \phi_m  \}^{p-1}_{k=1} \}^{M}_{m=1}$ and $\{ \{ g^{X_k}_i, \phi_m  \}^{p-1}_{k=1} \}_{m > M}$. Although minimizing this covariance is intractable in general, under the assumption of partial separability \citep{zapata2021partial}, we can solve the minimization problem exactly.

\begin{definition}[Partial Separability]
\label{def:par-sep}
An orthonormal function basis $\{ \phi_m \}^{\infty}_{m=1}$ is called the partial separability Karhunen-Lo\`{e}ve expansion (PSKL) basis if the random vectors 
\[
(\langle g^Y_i, \phi_m \rangle, \langle g^{X_1}_i, \phi_m \rangle, \dots, \langle g^{X_{p-1}}_i, \phi_m \rangle), \qquad m \in \mathbb{N}
\]
are mutually uncorrelated. 
\end{definition}

When the PSKL basis exists, then $\{ \{ \langle g^{X_k}_i, \phi_m \rangle \}^{p-1}_{k=1} \}^{M}_{m=1}$ and $\{ \{ \langle g^{X_k}_i, \phi_m \rangle  \}^{p-1}_{k=1} \}_{m > M}$ are uncorrelated for any $M \geq 1$, and $\omega(M)=0$ for all $M \geq 1$. Note that $\Lambda (M, \bm{\phi})$ is nonnegative, thus, the PSKL basis minimizes~\eqref{eq:functionBasisObjective} when the data generating process is partially separable. \citet{lynch2018test} proposed a test to verify the partial separability assumption. 
{\newTextTwo
When the partial separability condition holds, by Theorem~2 of~\cite{zapata2021partial}, one can use the eigenfunctions of $\hat{K}(t,t^{\prime}) = (1/p)\sum^p_{j=1}\hat{K}_{jj}(t,t^{\prime})$ as estimates of the PSKL basis, where $\hat{K}_{jj}(t,t^{\prime})$ is defined in~\eqref{eq:emp-func-cov}.}
However, the partial separability assumption is strong and may not hold in general settings.

\subsubsection{Minimize an Approximate Upper Bound}
When the PSKL basis does not exist, we suggest choosing a function basis by approximately minimizing the following upper bound on~\eqref{eq:functionBasisObjective}.
By the calculation in Appendix~\ref{sec:der-minimize-upp-bd}, we have
\begin{multline}
- \log \Lambda(M,\phi) \gtrsim - \log \left\{ \max_{k \in [p-1]} \Vert \bm{\Sigma^{r,X_k}} \Vert_{\text{F}} \right\} + \frac{1}{2} \log\left\{ \tr \left( \bm{\Sigma^{Y}} \right)  \right\} + \frac{1}{2} \log \left\{   \frac{ \rho_{\min} \left( \bm{\Sigma^X_{\mathscr{N}_j \mathscr{N}_j}} \right) }{ \rho_{max} \left( \bm{\Sigma^{X}_{ \mathscr{N}_j \mathscr{N}_j }} \right) }   \right\} \\
+ \frac{1}{2} \log \left\{   \rho_{\min}  \left( \left[ \bm{R^{Y,X_k}}  \right]_{k \in \mathscr{N}_j}  \left( \bm{R^X_{\mathscr{N}_j \mathscr{N}_j}}  \right)^{-2} \left[ \bm{R^{X_k,Y}}  \right]_{k \in \mathscr{N}_j} \right)   \right\}. \label{eq:upp-bd-log-Lambda}
\end{multline}
Therefore, by maximizing the right hand side of \eqref{eq:upp-bd-log-Lambda}, we are approximately minimizing $\Lambda(M,\phi)$. Unfortunately, most of the terms depend on $\mathscr{N}_j$, which is unknown. As a consequence, we choose to maximize $ \log\left\{ \tr \left( \bm{\Sigma^{Y}} \right)  \right\}$, which does not depend on $\mathscr{N}_j$. This term is maximized when the function basis $\bm{\phi}$ is the FPCA basis of $g^Y_i$. More intuitively, the FPCA basis of $g^Y_i$ maximizes the signal strength of the response variable. In Section~\ref{sec:experimemts}, we confirm by extensive simulations that the FPCA basis of $g^Y_i$ indeed performs well by a slight margin.

\section{Simulations}
\label{sec:experimemts}

We illustrate the finite sample properties of our neighborhood selection procedure through a simulation study. 
{\newText
We defer the wall-clock runtime analysis of different methods to Appendix~\ref{sec:wall-clock}.
}
We generate the simulated data with the following procedure. Let 
\[
g_{ij}(t) = (\bm{a_{ij}})^{\top}\bm{f}(t), \qquad i \in [n], j\in [p],
\]
where $\bm{a_{ij}}\in\mathbb{R}^{M^{\ast}}$ and $((\bm{a_{i1}})^\top,\ldots,(\bm{a_{ip}})^\top)^\top\in\mathbb{R}^{pM^{\ast}}$ follows a mean zero Gaussian distribution with covariance matrix $\bm{\Sigma}=\bm{\Theta}^{-1}$, and $\bm{f}(t)$ is a vector that contains the first $M^{\ast}$ Fourier basis functions. We consider the following four settings for the precision matrix $\bm{\Theta}$. 
 
\begin{itemize}
\item\textbf{Model A}. (Block Banded -- Full) We generate a block-banded precision matrix $\bm{\Theta} \in \mathbb{R}^{pM^{\ast} \times pM^{\ast}}$ with $M^{\ast} = 15$. Define a Toeplitz matrix $\bm{T}$ such that $\bm{T_{jj}} = 1$, and $\bm{T_{jl}} = 1/2|j-l|$ for all $j\neq l$. Let $\bm{A} \in \mathbb{R}^{M^{\ast} \times M^{\ast}}$ be a tridiagonal matrix with $\bm{A}_{kk}=1$ and $\bm{A}_{k,k+1}=\bm{A}_{k+1,k}=0.5$. All other entries of $\bm{A}$ are set to 0. The blocks of the precision matrix $\bm{\Theta}$ are then given as $\bm{\Theta_{jj}}=\bm{T}$, $\bm{\Theta_{j,j+1}}= \bm{\Theta_{j+1,j}}= 0.4\bm{A}$, and $\bm{\Theta_{j,j+2}}=\bm{\Theta_{j+2,j}}=0.2\bm{I}_{M^*}$. All remaining blocks of $\bm{\Theta}$ are set to $0$.

\item\textbf{Model B}. (Block Banded -- Partial) We generate a partially block-banded precision matrix $\bm{\Theta} \in \mathbb{R}^{pM^{\ast} \times pM^{\ast}}$ with $M^{\ast} = 15$. In this setting, every alternating block of 10 nodes have similar connection pattern as in Model A, and the remaining nodes are fully isolated. Precisely, $\bm{\Theta}$ is a block diagonal matrix, with each of its $10M^{\ast}\times 10M^{\ast}$ blocks denoted by $\bm{\Theta^{(k)}}$, $k=1,\cdots,\lceil \frac{p}{10}\rceil$. For even $k$, we set $\bm{\Theta^{(k)}} = \bm{I}_{10M^{\ast}}$. For odd $k$, we set $\bm{\Theta^{(k)}}\in \mathbb{R}^{10M^{\ast} \times 10M^{\ast}}$ to be the block-banded precision matrix such that $\bm{(\Theta^{(k)})_{jj}}=\bm{A}$, $\bm{(\Theta^{(k)})_{j,j+1}}= \bm{(\Theta^{(k)})_{j+1,j}}= 0.4\bm{A}$, and $\bm{(\Theta^{(k)})_{j,j+2}}=\bm{(\Theta^{(k)})_{j+2,j}}=0.2\bm{A}$. All remaining blocks of $\bm{\Theta^{(k)}}$ are set to $0$.

\item\textbf{Model C}. (Hub Model) We generate a hub-connected precision matrix $\bm{\Theta} \in \mathbb{R}^{pM^{\ast} \times pM^{\ast}}$ with $M^{\ast} = 5$. We generate the edge set $E$ from a power law distribution as follows. For each node, the number of neighbors $m$ follows a power law distribution $p_m=m^{-\alpha}$ with $\alpha=2$, and the exact neighbors are sampled uniformly. A disjoint sequence of edge sets $E_1,\dots,E_5$ is generated from $E$, yielding 5 adjacency matrices $\bm{G_l}, l=1,\dots,5$. The detailed algorithm of this step is given in Section 8 of \citet{zapata2021partial}. Then we generate $p\times p$ precision submatrices $\bm{\Omega_l}, l=1,\dots,5$, whose supports are exactly $\bm{G_l}$. First, $p \times p$ matrices $\bm{\tilde{\Omega}_l}$ are generated:
    $$(\bm{\tilde{\Omega}_l})_{ij} =
    \begin{cases}
    1 & \text{if } i=j\\
    0 & \text{if } (\bm{G_l})_{ij}=0 \text{ or }i<j\\
    \sim\text{Unif}(\mathcal{D}) &\text{if } (\bm{G_l})_{ij}=1
    \end{cases}$$
    where $\mathcal{D} = [-\frac{2}{3},-\frac{1}{3}]\cup[\frac{1}{3},\frac{2}{3}]$. Next, we rescale the rows of $\bm{\tilde{\Omega}_l}$ so that the $\ell_2$-norm of each row is 1. We then obtain $\bm{\Omega_l}$ by symmetrizing  $\bm{\tilde{\Omega}_l}$; we average $\bm{\tilde{\Omega}_l}$ with its transpose, and set the diagonals to one. Let $\bm{\Sigma}_{\text{ps}} = \text{diag}(\bm{\Sigma_1},\dots,\bm{\Sigma_5})$ with $\bm{\Sigma_l} = \varpi_l\bm{\Omega_l}^{-1}$, where $\varpi_l = 3l^{-1.8}$. To break the partial separability condition, we define a block precision matrix $\bm{\bar{\Omega}}\in\mathbb{R}^{pM^* \times pM^*}$, whose diagonal blocks of $p\times p$ are $\bm{\bar{\Omega}_{l,l}}=\bm{\Omega_l}$ and off-diagonal blocks are $\bm{\bar{\Omega}_{l,l+1}}=\bm{\bar{\Omega}_{l+1,l}} = (\bm{\Omega^*_l} + \bm{\Omega^*_{l+1}})/2$ where $\bm{\Omega_l^*} = \bm{\Omega_l} - \text{diag}(\bm{\Omega_l})$. We then calculate
    \[\bm{\Sigma}_{\text{nps}} = \text{diag}(\bm{\Sigma}_{\text{ps}})^\frac{1}{2}\left(\text{diag}(\bm{\bar{\Omega}})^{-\frac{1}{2}}\bm{\bar{\Omega}}\text{diag}(\bm{\bar{\Omega}})^{-\frac{1}{2}}\right)^{-1}\text{diag}(\bm{\Sigma}_{\text{ps}})^\frac{1}{2}.\]
    Finally, we obtain the covariance matrix as  $(\bm{\Sigma}_{jl})_{st}=(\bm{\Sigma}_{\text{nps},st})_{jl}$, $1\leq j,l\leq p$ and $1\leq s,t\leq 5$. Finally, the precision matrix $\bm{\Theta} = (\bm{\Sigma})^{-1}$.

\item\textbf{Model D}. (Randomly Connected) This model is similar to the setting introduced in \citet{rothman2008sparse}, but modified to fit functional data. We generate a random block sparse precision matrix $\bm{\Theta} \in \mathbb{R}^{pM^{\ast} \times pM^{\ast}}$ with $M^{\ast} = 15$. Each off-diagonal block $\bm{\Theta_{jl}}, 1\leq j\neq l\leq p$ is set to $0.5\bm{I}_{M^{\ast}}$ with probability 0.1, and $\bm{0}_{M^{\ast}}$ otherwise. The diagonal blocks are set as $\bm{\Theta_{jj}} = \delta' \bm{I}_{M^{\ast}}$, where $\delta'$ is chosen to guarantee the positive definiteness of the precision matrix, i.e., $\bm{\Theta}\succ 0$. It is sufficient to choose $\delta'$ such that it exceeds the maximum row sum of the absolute values of the off-diagonal entries of $\bm{\Theta}$, thus the diagonal dominance ensures positive definiteness. Notice that the partial separability condition is satisfied under this model.
\end{itemize}

Models A and B are similar to Models 1 and 2 in Section 5.1 of \citet{Qiao2015Functional}, but modified so that partial separability is violated. 
Model C---where partial separability is also violated---is used as a counter-example in \citet{zapata2021partial}. In these three models, the partial separability condition is violated, that is, it is impossible to separate the multivariate and functional aspects of the data. However, Model D satisfies the partial separability by construction. 

For each setting, we fix $n=100$ and let $p =50,100,150$. Each random function is observed on a grid with $T=100$ equally spaced points on $[0,1]$. For $T$ observed time points, $(t_1,\dots,t_T)$, uniformly spread on $[0,1]$, the observed values are generated by
\[g^{\text{obs}}_{ij}(t_k) = \sum_{l=1}^{M^{\ast}} a_{ijl}\bm{f}_l(t_k) + \epsilon_{ijk}\]
where $\epsilon_{ijk}\sim N(0,\sigma^2)$. In models A, B, and D, we set $\sigma = 0.5$, while in model C, $\sigma^2=0.05\times {\sum_{l=1}^{M^{\ast}} \text{tr}(\bm{\Sigma_l})}/{p}$. We report the results averaged over 50 independent runs.

In all experiments, we set $M$, the number of principal components used to model each function, to be $5$ for all nodes. 
This is a typical value selected by the cross-validation process described in Section \ref{CH_Param_Selection}.
The first simulation experiment compares the performance of our proposed method to current baseline methods, including FGLasso \citep{Qiao2015Functional} and PSKL \citep{zapata2021partial}.
Since the theoretically appropriate choice of the threshold $\epsilon_n$ depends on problem parameters that are generally unknown in practice, we use $\epsilon_n = 0$ for this part of experiment. We plot the ROC curve as the penalty parameter $\lambda_n$ changes. More specifically, let $\lambda_n=\lambda_{j,n}$ explicitly denote the parameter choice for the node $j$. According to Proposition~\ref{prop:lambda_max}, there exists $\lambda_{\max}$ such that $\mathscr{\hat N}_j$ is empty. Let $\lambda_{j,n,t}=t_\lambda \cdot \lambda_{j,\max}$, where $t_\lambda \in [0,1]$ is the same for all nodes. We plot the ROC curve as $t_\lambda$ changes from $1$ to $0$.
The second simulation experiment illustrates the performance of our method under various choices of $\epsilon_n$. From the comparison of ROC's under each model setting, our empirical results confirm that optimal performance is typically achieved with a non-zero $\epsilon_n$.
The third simulation experiment is dedicated to assess the accuracy of a single selected graph. We use the SCV-RSS criterion introduced in Section~\ref{CH_Param_Selection} to select $\lambda_{j,n}$ for all nodes, and then evaluate the performance by reporting the precision and recall on the specific graph that is selected.

\begin{figure}
    \centering
    
    \includegraphics[width=15cm]{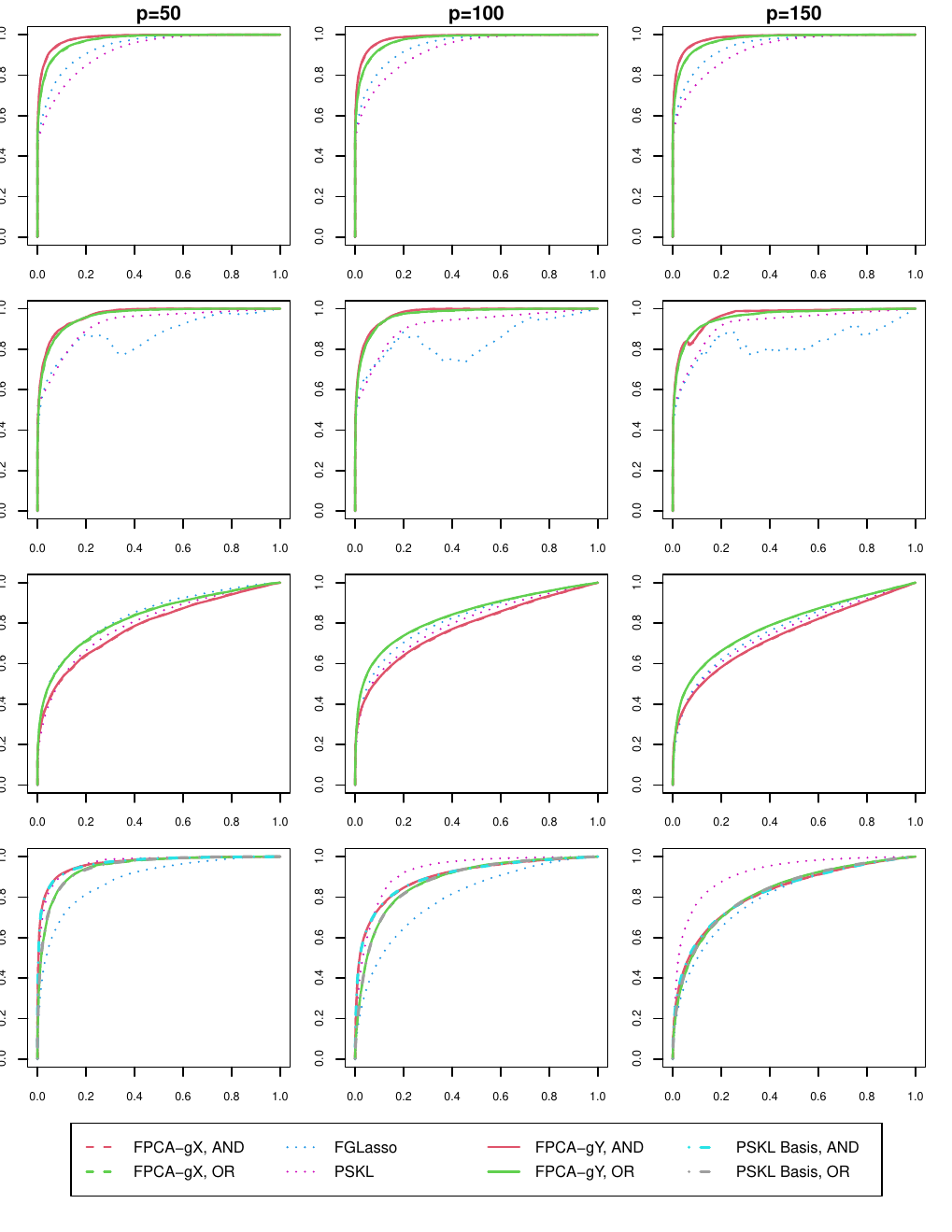}
    
    \caption{ROC curves for different models and values of $p$. From top to bottom: Models A, B, C, D. From left to right: $p=50,100,150$. Horizontal axis: FPR; vertical axis: TPR. For FPCA-$g_X$, each function is projected onto its own FPCA basis, while FPCA-$g_Y$ projects all other functions onto the FPCA basis of $g_{ij}$.}
    \label{fig:roc}
\end{figure}

\subsection{Comparison with Baseline Methods}

We compare our method with the functional Graphical lasso (FGLasso) procedure \citep{Qiao2015Functional} and the PSKL procedure \citep{zapata2021partial}. For FGLasso, we select the parameters as proposed in \citep{Qiao2015Functional}. For PSKL, we use the package ``fgm'' with the default setting \citep{zapata2021partial}. As Model D is partially separable, we also implemented our method using the PSKL function basis that we assume is known a priori, which we call PSKL Basis---in this case, it is Fourier basis. To demonstrate the advantages of using a single function basis when estimating $\mathscr{N}_j$---as explained in Section \ref{sec:choice-fun-basis}---we implemented the following two methods to estimate the FPCA scores and compared their performances. The first method, which we call FPCA-$g_X$, projects each function onto its own FPCA basis and uses those projection scores for all subsequent tasks. The second method, which we call FPCA-$g_Y$, projects all other functions onto the FPCA basis of $g_j$ when selecting $\mathscr{N}_j$.

To compare the methods, we plot their respective ROC curves for each model and different values of $p$. For each value of the tuning parameters $\lambda_n$ and $t_\epsilon$, we compare the estimated edge set to the true edge set. Specifically, we calculate the true positive rate $\text{TPR}(\lambda_n, t_\epsilon) = \text{TP}(\lambda_n, t_\epsilon) / (\text{TP}(\lambda_n, t_\epsilon) + \text{FN}(\lambda_n, t_\epsilon))$ and the false positive rate $\text{FPR}(\lambda_n, t_\epsilon) = \text{FP}(\lambda_n, t_\epsilon) / (\text{TP}(\lambda_n, t_\epsilon) + \text{TN}(\lambda_n, t_\epsilon))$, where $\text{TP}(\lambda_n, t_\epsilon), \text{FP}(\lambda_n, t_\epsilon), \text{TN}(\lambda_n, t_\epsilon), \text{FN}(\lambda_n, t_\epsilon)$ stand for the number of true positive, false positive, true negative, and false negative number of edges, respectively. Recall that we use $t_\epsilon = 0$ for the comparison of different methods. The ROC curves are plotted by varying the penalty parameter $\lambda_n$, with $\text{TPR}(\lambda_n,0)$ on the vertical axis and $\text{FPR}(\lambda_n,0)$ on the horizontal axis. We also calculate the area under the ROC curve (AUC). The ROC curves are shown in Figure \ref{fig:roc} and the average AUC is given in Table~\ref{tab:auc}.

\begin{table}
\centering
    \begin{tabular}{|p{1cm}|p{0.5cm}||p{1.1cm}|p{1.1cm}|p{1.1cm}|p{1.1cm}|p{1.2cm}|p{1.1cm}|p{1.1cm}|p{1.1cm}|}
    \hline
    Model & $p$ & FPCA-$g_Y$, AND & FPCA-$g_Y$, OR & FPCA-$g_X$, AND & FPCA-$g_X$, OR & FGLasso & PSKL & FPCA-PSKL, AND & FPCA-PSKL, OR\\
    \hhline{|=|=||=|=|=|=|=|=|=|=|}
    \multicolumn{1}{|c|}{\multirow{6}{*}{A}} & 
    \multirow{2}{4em}{50} & {\bf 0.984} (0.004) & 0.974 (0.007) & {\bf 0.984} (0.005) & 0.973 (0.007) & 0.942 (0.010) & 0.920 (0.010) && \\
    \cline{2-8}
    &\multirow{2}{4em}{100} & {\bf0.985} (0.003) & 0.976 (0.004) & 0.984 (0.003) & 0.975 (0.004) & 0.947 (0.006) & 0.925 (0.007) &
    \multicolumn{1}{|c|}{\multirow{2}{*}{N/A}} & 
    \multicolumn{1}{|c|}{\multirow{2}{*}{N/A}}\\
    \cline{2-8}
    &\multirow{2}{4em}{150} & {\bf 0.985} (0.003) & 0.976 (0.003) & 0.984 (0.003) & 0.975 (0.003) & 0.948 (0.005) & 0.927 (0.007) && \\
    \hhline{|=|=||=|=|=|=|=|=|=|=|}
    \multicolumn{1}{|c|}{\multirow{6}{*}{B}} & 
    \multirow{2}{4em}{50} & {\bf 0.969} (0.008) & 0.964 (0.009) & {\bf 0.969} (0.008) & 0.964 (0.009) & 0.806 (0.100) & 0.924 (0.013) && \\
    \cline{2-8}
    &\multirow{2}{4em}{100} & {\bf 0.976} (0.005) & 0.971 (0.006) & {\bf 0.976} (0.005) & 0.970 (0.006) & 0.703 (0.077) & 0.918 (0.021) &
    \multicolumn{1}{|c|}{\multirow{2}{*}{N/A}}&
    \multicolumn{1}{|c|}{\multirow{2}{*}{N/A}} \\
    \cline{2-8}
    &\multirow{2}{4em}{150} & {\bf 0.965} (0.006) & 0.961 (0.007) & 0.964 (0.006) & 0.960 (0.008) & 0.620 (0.067) & 0.924 (0.012) && \\
    \hhline{|=|=||=|=|=|=|=|=|=|=|}
    \multicolumn{1}{|c|}{\multirow{6}{*}{C}} &
    \multirow{2}{4em}{50} & 0.785 (0.035) & 0.828 (0.037) & 0.785 (0.035) & 0.828 (0.038) & {\bf 0.838} (0.037) & 0.799 (0.042) && \\
    \cline{2-8}
    &\multirow{2}{4em}{100} & 0.780 (0.040) & {\bf 0.839} (0.036) & 0.777 (0.039) & 0.837 (0.036) & 0.822 (0.101) & 0.797 (0.061) &
    \multicolumn{1}{|c|}{\multirow{2}{*}{N/A}}&
    \multicolumn{1}{|c|}{\multirow{2}{*}{N/A}} \\
    \cline{2-8}
    &\multirow{2}{4em}{150} & 0.740 (0.061) & {\bf 0.792} (0.053) & 0.738 (0.060) & 0.790 (0.053) & 0.768 (0.115) & 0.755 (0.077) &&\\
    \hhline{|=|=||=|=|=|=|=|=|=|=|}
    \multicolumn{1}{|c|}{\multirow{6}{*}{D}} &
    \multirow{2}{4em}{50} & {\bf 0.967} (0.012) & 0.948 (0.017) & 0.966 (0.013) & 0.947 (0.017) & 0.888 (0.081) & 0.966 (0.044) & 0.966 (0.013) & 0.948 (0.017)\\
    \cline{2-10}
    &\multirow{2}{4em}{100} & 0.902 (0.029) & 0.882 (0.022) & 0.900 (0.029) & 0.881 (0.022) & 0.798 (0.092) & {\bf 0.929} (0.037) & 0.902 (0.030) & 0.881 (0.022)\\
    \cline{2-10}
    &\multirow{2}{4em}{150} & 0.823 (0.013) & 0.824 (0.010) & 0.821 (0.013) & 0.822 (0.011) & 0.802 (0.040) & {\bf 0.917} (0.009) & 0.823 (0.013) & 0.824 (0.010)\\
    \hline
    \end{tabular}
    
    \caption{\label{tab:auc}The average AUC ROC for each method across 50 runs; the standard errors of AUC are given in the parentheses. For FPCA-$g_X$, each function is projected onto its own FPCA basis, while FPCA-$g_Y$ projects all other functions onto the FPCA basis of $g_{ij}$. The maximum of each row is marked in bold.}

\end{table}

Although FPCA-$g_Y$ and FPCA-$g_X$ perform similarly, FPCA-$g_Y$ slightly outperforms FPCA-$g_X$ across all settings. Moreover, both FPCA-$g_X$ and FPCA-$g_Y$ drastically outperform FGLasso in Models A, B, and D, and slightly outperforms FGLasso in most settings under Model C.  In Models A, B, and C, where partial separability does not hold, the PSKL procedure generally underperforms the other procedures. Even in Model D, where partial separability holds, PSKL has a performance that is comparable to ours when the dimension is low. We also note that in Model D, the neighborhood selection procedure that uses the ``optimal'' PSKL basis, which we assume is known a priori, has a very similar performance to the procedure that uses the FPCA basis and does not require prior knowledge. This suggests that using the data-selected FPCA basis is a good idea across a variety of settings.

\subsection{The Effect of $\epsilon_n$}
\label{sec:effec-epsilon-n}

{\newText

In this section, we empirically demonstrate how $\epsilon_n$ impacts practical performance. The experimental setting remains identical to that in Section 5. We apply our proposed method by setting $\epsilon_n = t_\epsilon\lambda_n$, and compute an ROC for each fixed $t_\epsilon$ value. For each model, we select five distinct $t_\epsilon$ values, inclusive of 0. Figure~\ref{fig:roc.t} provides a visually intuitive comparison and Table~\ref{tab:thres} illustrates the area under the curve (AUC) for each of the five $t_\epsilon$ values under each setting. In most scenarios, the maximal AUC is achieved by a non-zero $t_\epsilon$. However, the marginal benefit of using the optimal $t_\epsilon$ over simply setting $t_\epsilon=0$ is relatively insignificant in most cases. This empirical investigation justifies the practical choice of $\epsilon_n=0$.

\begin{figure}[p]
\centering
    
\includegraphics[width=15cm]{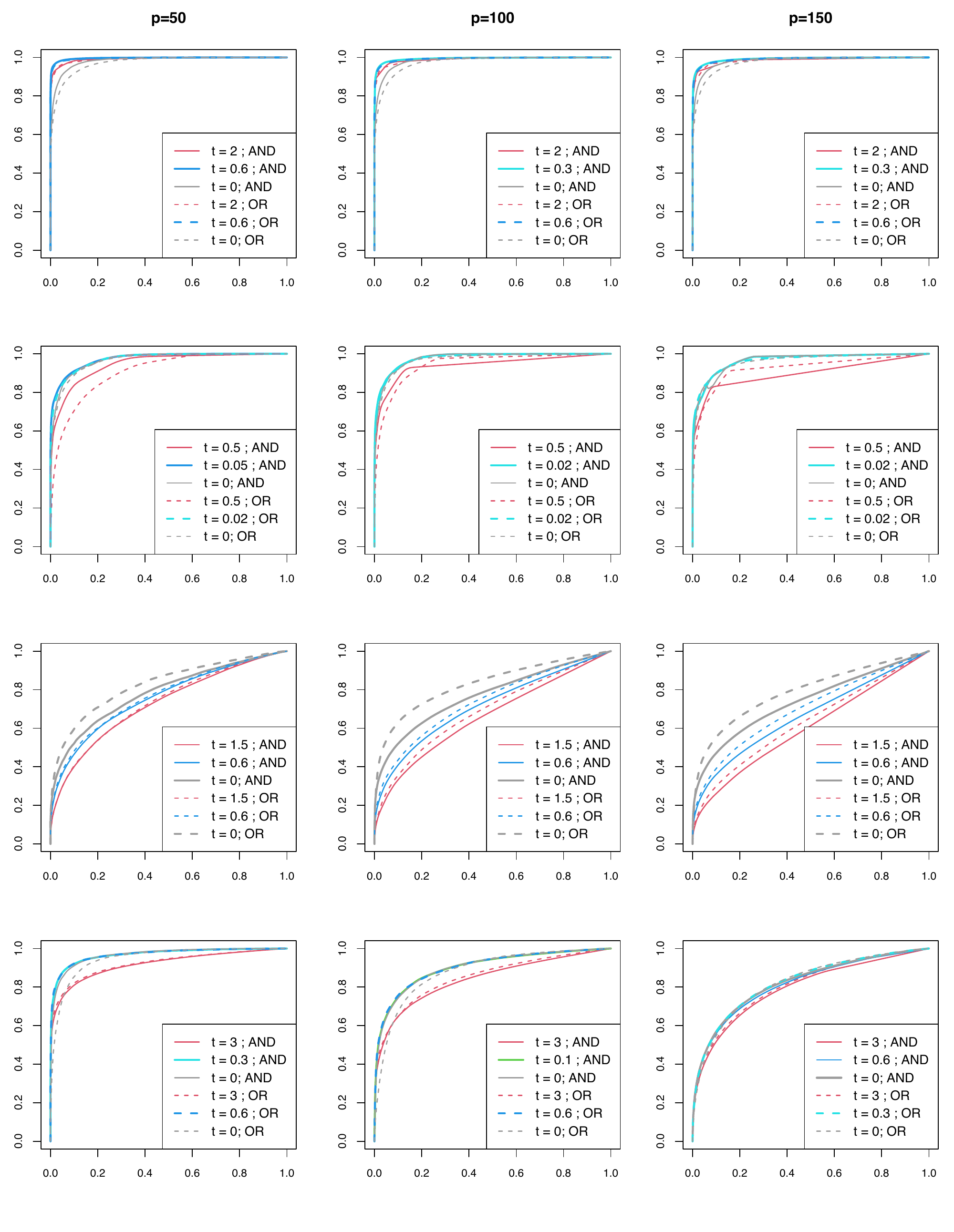}
    
\caption{ROC for different models and values of $p$ under different threshold coefficient $t_\epsilon$ using FPCA-$g_X$ method. From top to bottom: Models A, B, C, D. From left to right: $p=50,100,150$. Horizontal axis: FPR; vertical axis: TPR.}
\label{fig:roc.t}
\end{figure}

\begin{table}[p]
\centering
    \begin{tabular}{|p{1cm}|p{0.8cm}||p{2.0cm}|p{2.1cm}|p{2.1cm}|p{2.1cm}|p{2.1cm}|}
    \hline
    \multicolumn{2}{|c||}{Model A}& $t_\epsilon = 2.0$ & $t_\epsilon = 0.6$ & $t_\epsilon = 0.3$ & $t_\epsilon = 0.1$ & $t_\epsilon = 0$\\
    \hhline{|=|=||=|=|=|=|=|}
    \multirow{2}{4em}{p=50} &
    \multirow{1}{4em}{AND} & 0.992 (0.004) & {\bf 0.997} (0.003) & 0.996 (0.003) & 0.993 (0.004) & 0.984 (0.005) \\
    
    &\multirow{1}{4em}{OR} & 0.991 (0.004) &  {\bf 0.996} (0.003) & 0.995 (0.003) & 0.989 (0.004) & 0.973 (0.007) \\
    \hline
    
    \multirow{2}{4em}{p=100} &
    \multirow{1}{4em}{AND} & 0.970 (0.009) & {\bf 0.994} (0.003) & {\bf 0.994} (0.003) & 0.992 (0.003) & 0.984 (0.003) \\
    &\multirow{1}{4em}{OR} & 0.973 (0.005) & {\bf 0.993} (0.003) & {\bf 0.993} (0.003) & 0.989 (0.003) & 0.975 (0.004) \\
    \hline
    
    \multirow{2}{4em}{p=150} &
    \multirow{1}{4em}{AND} & 0.985 (0.006) & 0.991 (0.003) & {\bf 0.992} (0.003) & 0.990 (0.002) & 0.984 (0.003) \\
    &\multirow{1}{4em}{OR} & 0.987 (0.004) & {\bf 0.991} (0.002) & {\bf 0.991} (0.002) & 0.987 (0.002) & 0.975 (0.003) \\
    \hline
    \end{tabular}

    \vspace{1cm}
    
    \begin{tabular}{|p{1cm}|p{0.8cm}||p{2.0cm}|p{2.1cm}|p{2.1cm}|p{2.1cm}|p{2.1cm}|}
    \hline
    \multicolumn{2}{|c||}{Model B}& $t_\epsilon = 0.5$ & $t_\epsilon = 0.2$ & $t_\epsilon = 0.05$ & $t_\epsilon = 0.02$ & $t_\epsilon = 0$\\
    \hhline{|=|=||=|=|=|=|=|}
    \multirow{2}{4em}{p=50} &
    \multirow{1}{4em}{AND} & 0.901 (0.026) & 0.964 (0.010) & {\bf 0.974} (0.009) & 0.973 (0.009) & 0.969 (0.008) \\
    
    &\multirow{1}{4em}{OR} & 0.856 (0.020) & 0.949 (0.011) & 0.970 (0.009) & {\bf 0.971} (0.009) & 0.964 (0.009) \\
    \hline
    
    \multirow{2}{4em}{p=100} &
    \multirow{1}{4em}{AND} & 0.940 (0.013) & 0.966 (0.009) & 0.977 (0.005) & {\bf 0.978} (0.005) & 0.976 (0.005) \\
    &\multirow{1}{4em}{OR} & 0.943 (0.009) & 0.963 (0.007) & 0.975 (0.006) & {\bf 0.976} (0.006) & 0.970 (0.006) \\
    \hline
    
    \multirow{2}{4em}{p=150} &
    \multirow{1}{4em}{AND} & 0.896 (0.015) & 0.946 (0.009) &  {\bf 0.965} (0.007) & {\bf 0.965} (0.007) & 0.964 (0.006) \\
    &\multirow{1}{4em}{OR} & 0.924 (0.012) & 0.952 (0.008) & 0.962 (0.008) & {\bf 0.964} (0.008) & 0.960 (0.008) \\
    \hline
    \end{tabular}

    \vspace{1cm}
    
    \begin{tabular}{|p{1cm}|p{0.8cm}||p{2.0cm}|p{2.1cm}|p{2.1cm}|p{2.1cm}|p{2.1cm}|}
    \hline
    \multicolumn{2}{|c||}{Model C}& $t_\epsilon = 1.5$ & $t_\epsilon = 0.6$ & $t_\epsilon = 0.1$ & $t_\epsilon = 0.05$ & $t_\epsilon = 0$\\
    \hhline{|=|=||=|=|=|=|=|}
    \multirow{2}{4em}{p=50} &
    \multirow{1}{4em}{AND} & 0.725 (0.043) & 0.758 (0.041) & 0.783 (0.037) & {\bf 0.785} (0.037) & {\bf 0.785} (0.035) \\
    
    &\multirow{1}{4em}{OR} & 0.730 (0.035) & 0.766 (0.036) & 0.808 (0.032) & 0.817 (0.033) & {\bf 0.828} (0.038) \\
    \hline
    
    \multirow{2}{4em}{p=100} &
    \multirow{1}{4em}{AND} & 0.659 (0.079) & 0.715 (0.073) & 0.761 (0.060) & 0.766 (0.058) & {\bf 0.771} (0.054) \\
    &\multirow{1}{4em}{OR} & 0.688 (0.076) & 0.738 (0.072) & 0.805 (0.062) & 0.818 (0.059) & {\bf 0.830} (0.061) \\
    \hline
    
    \multirow{2}{4em}{p=150} &
    \multirow{1}{4em}{AND} & 0.602 (0.076) & 0.666 (0.070) &  0.723 (0.063) & 0.730 (0.062) & {\bf 0.738} (0.059) \\
    &\multirow{1}{4em}{OR} & 0.632 (0.081) & 0.699 (0.070) & 0.765 (0.067) & 0.778 (0.062) & {\bf 0.791} (0.052) \\
    \hline
    \end{tabular}

    \vspace{1cm}

    \begin{tabular}{|p{1cm}|p{0.8cm}||p{2.0cm}|p{2.1cm}|p{2.1cm}|p{2.1cm}|p{2.1cm}|}
    \hline
    \multicolumn{2}{|c||}{Model D} & $t_\epsilon = 3.0$ & $t_\epsilon = 0.6$ & $t_\epsilon = 0.3$ & $t_\epsilon = 0.1$ & $t_\epsilon = 0$\\
    \hhline{|=|=||=|=|=|=|=|}
    \multirow{2}{4em}{p=50} &
    \multirow{1}{4em}{AND} & 0.918 (0.115) & 0.966 (0.024) & {\bf 0.969} (0.015) & 0.968 (0.013) & 0.966 (0.013) \\
    
    &\multirow{1}{4em}{OR} & 0.922 (0.113) & {\bf 0.969} (0.018) & 0.966 (0.014) & 0.956 (0.014) & 0.948 (0.017) \\
    \hline
    
    \multirow{2}{4em}{p=100} &
    \multirow{1}{4em}{AND} & 0.840 (0.104) & 0.896 (0.036) & 0.900 (0.031) & {\bf 0.901} (0.030) & 0.900 (0.029) \\
    &\multirow{1}{4em}{OR} & 0.851 (0.100) & {\bf 0.903} (0.031) & 0.899 (0.027) & 0.889 (0.024) & 0.881 (0.022) \\
    \hline
    
    \multirow{2}{4em}{p=150} &
    \multirow{1}{4em}{AND} & 0.794 (0.042) & 0.814 (0.011) & 0.818 (0.012) & 0.820 (0.013) & {\bf 0.821} (0.013) \\
    &\multirow{1}{4em}{OR} & 0.805 (0.034) & 0.824 (0.013) & {\bf 0.825} (0.013) & 0.823 (0.011) & 0.822 (0.011) \\
    \hline
    \end{tabular}
    
    \caption{The average AUC for each $t_\epsilon$ across 50 runs; the standard errors of AUC are given in the parentheses. The method is FPCA-$g_X$. The maximum of each row is marked in bold.}
    \label{tab:thres}

\end{table}

}

\subsection{Performance of Cross-Validation}
\label{sec:perf-cv}

{\newText

Practitioners may want a single graph rather than a series of graphs corresponding to different penalty and threshold parameters. Thus, we also evaluate the precision and recall of the final graph selected using the parameters obtained through selective cross-validation algorithm stated in Algorithm~\ref{Alg:scv.grid}. When choosing $\lambda_n,\epsilon_n$, we let candidate $\epsilon_n$'s to be $\epsilon_n=t_\epsilon\cdot\lambda_n$, where $t_\epsilon \in \{0, 0.2, 0.4, 0.8, 1.2, 1.6, 2\}$. 
We denote the chosen tuning parameters as $(\lambda_n^*, t_\epsilon^*)$.
The precision and recall of $(\lambda_n^*, t_\epsilon^*)$ are defined as
\begin{align*}
\text{Precision}(\lambda_n^*, t_\epsilon^*) &= \text{TP}(\lambda_n^*, t_\epsilon^*) / (\text{TP}(\lambda_n^*, t_\epsilon^*) + \text{FP}(\lambda_n^*, t_\epsilon^*)), \\
\text{Recall}(\lambda_n^*, t_\epsilon^*) &= \text{TP}(\lambda_n^*, t_\epsilon^*) / (\text{TP}(\lambda_n^*, t_\epsilon^*) + \text{FN}(\lambda_n^*, t_\epsilon^*)).
\end{align*}
A larger value of precision and recall indicate better performance. The results under all models using FPCA-$g_X$ basis are shown in Table~\ref{tab:precAndRecall}. From Table~\ref{tab:precAndRecall} we see that our method obtains satisfactory performance under most models, even in the  high-dimensional setting.
In applications where a type-I error is more costly, the AND scheme may be preferable because it enjoys a higher precision; when we want to minimize type-II errors, the OR scheme is preferred.

\begin{table}
    \begin{tabular}{|p{1.2cm}|p{1cm}||p{2.55cm}|p{2.55cm}|p{2.55cm}|p{2.55cm}|}
    \hline
    Model & $p$ & AND, Precision & AND, Recall & OR, Precision & OR, Recall\\
    \hhline{|=|=||=|=|=|=|}
    \multirow{3}{4em}{A} & 
    50 & 1.000 (0.000) & 0.644 (0.050) & 0.985 (0.012) & 0.843 (0.037) \\
    &100 & 1.000 (0.002) & 0.630 (0.032) & 0.970 (0.013) & 0.826 (0.031) \\
    &150 & 1.000 (0.001) & 0.626 (0.029) & 0.964 (0.010) & 0.815 (0.022) \\
    \hline
    \multirow{3}{4em}{B} & 
    50 & 0.934 (0.071) & 0.463 (0.066) & 0.442 (0.120) & 0.659 (0.058) \\
    &100 & 0.601 (0.105) & 0.528 (0.049) & 0.155 (0.024) & 0.746 (0.052) \\
    &150 & 0.338 (0.061) & 0.556 (0.038) & 0.102 (0.009) & 0.782 (0.032) \\
    \hline
    \multirow{3}{4em}{C} & 
    50 & 0.853 (0.212) & 0.050 (0.033) & 0.549 (0.124) & 0.145 (0.048))\\
    &100 & 0.902 (0.080) & 0.076 (0.031) & 0.646 (0.093) & 0.211 (0.064)\\
    &150 & 0.849 (0.085) & 0.062 (0.023) & 0.590 (0.070) & 0.172 (0.057)\\
    \hline
    \multirow{3}{4em}{D} & 
    50 & 0.998 (0.014) & 0.122 (0.127) & 0.989 (0.030) & 0.263 (0.240)\\
    &100 & 0.966 (0.150) & 0.034 (0.024) & 0.957 (0.055) & 0.114 (0.086)\\
    &150 & 0.988 (0.058) & 0.004 (0.009) & 0.979 (0.044) & 0.012 (0.031)\\
    \hline
    \end{tabular}
    \caption{\label{tab:precAndRecall}The average precision and recall of the graph using FPCA-$g_X$ method. The optimal $\lambda_n$ and $t_\epsilon$ is selected by the SCV-RSS criterion across 50 runs; the standard deviation is given in paranthesis.}
\end{table}

}

\section{Data Analysis}
\label{sec:real_data}

{\newText
In this section, we illustrate the practical application of our method on two functional magnetic resonance imaging (fMRI) datasets. Raw brain magnetic resonance images are segmented into temporal signals for 116 regions of interest (ROIs) using the automatic anatomic labeling (AAL) parcellation approach~\citep{Tzourio-Mazoyer2002Automated}. Table~\ref{tab:aal} in Appendix~\ref{sec:roi-labels} lists the names and corresponding labels of all 116 ROIs. By applying this approach, we average the signal within all ROIs to obtain 116 distinct time series, which we interpret as observations of 116 corresponding random functions. Using the neighborhood selection procedure, we can recover the conditional independence (CI) graphs associated with different ROIs.

Recent research uncovers a hierarchical structure in brain connectivity. For instance, heteromodal areas, such as the prefrontal cortex, inferior parietal lobe, and superior temporal sulcus, project to paralimbic areas like the insula, orbitofrontal, cingulate, parahippocampal, and temporopolar regions. These, in turn, project to limbic areas, namely the amygdala and hippocampus. The latter two are the only parts of the cortex with substantial connections to the hypothalamus, a key node for homeostatic, autonomic, and endocrine aspects of the internal milieu \citep{mesulam2012evolving}. By learning the conditional independence graph of ROIs, we gain insight into these brain connectivity patterns. Moreover, comparing conditional independence graphs from populations with and without specific neurodevelopmental conditions could yield clues about the origins of certain symptoms.

Our functional graphical models approach offers significant advantages over traditional non-functional analyses of fMRI signals. Specifically, it can detect spatio-temporal interactions among ROIs. For instance, our method can identify the influence of one node at time $t$ on another node at a different time $t'$. We subsequently apply our method to two fMRI datasets: one pertaining to Autism Spectrum Disorder (ASD) and the other to Attention Deficit Hyperactivity Disorder (ADHD).

}

\subsection*{ASD Dataset}

{\newText

\begin{figure}[t]
    \centering
    
    \begin{subfigure}[b]{0.48\textwidth}
        \centering
        \includegraphics[width=\textwidth]{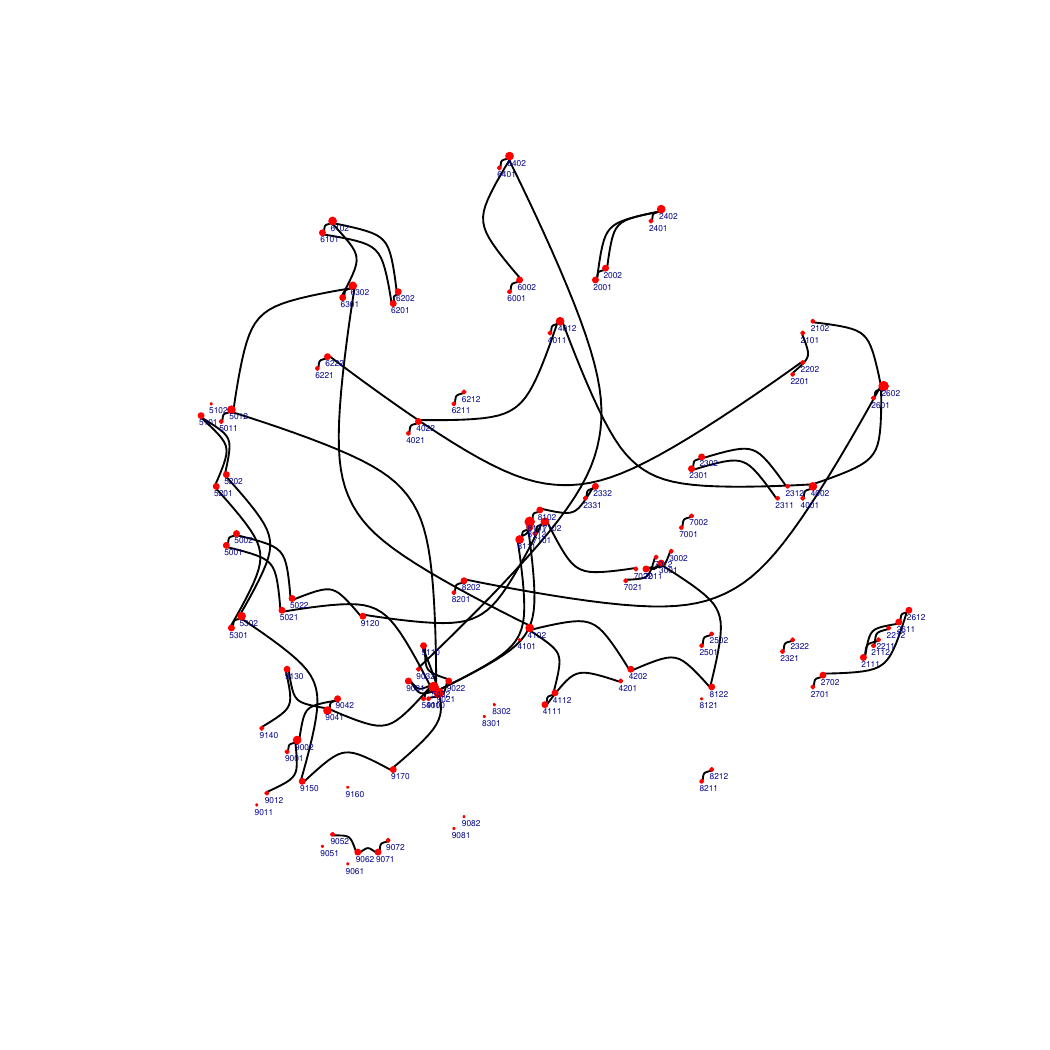}
        \caption{Autism group}
        \label{fig:ABIDE.SCV.autism}
    \end{subfigure}
    \hfill
    \begin{subfigure}[b]{0.48\textwidth}
        \centering
        \includegraphics[width=\textwidth]{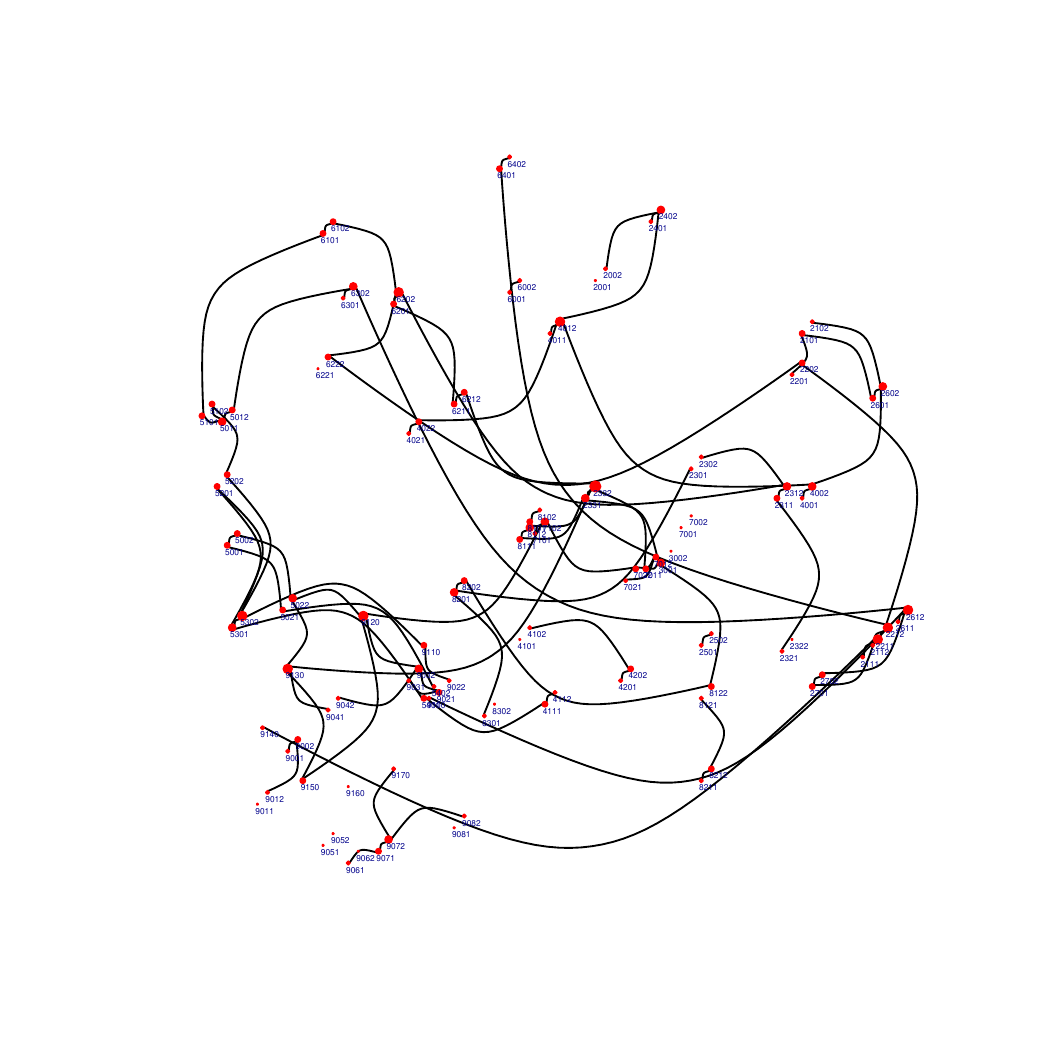}
        \caption{Control group}
        \label{fig:ABIDE.SCV.control}
    \end{subfigure}
    
    \caption{Brain connectome graph of autism and control groups using FPCA-$g_X$ method, obtained by SCV process}
    \label{fig:ABIDE.SCV}
\end{figure}

\begin{figure}[t]
    \centering
    
    \begin{subfigure}[b]{0.48\textwidth}
        \centering
        \includegraphics[width=\textwidth]{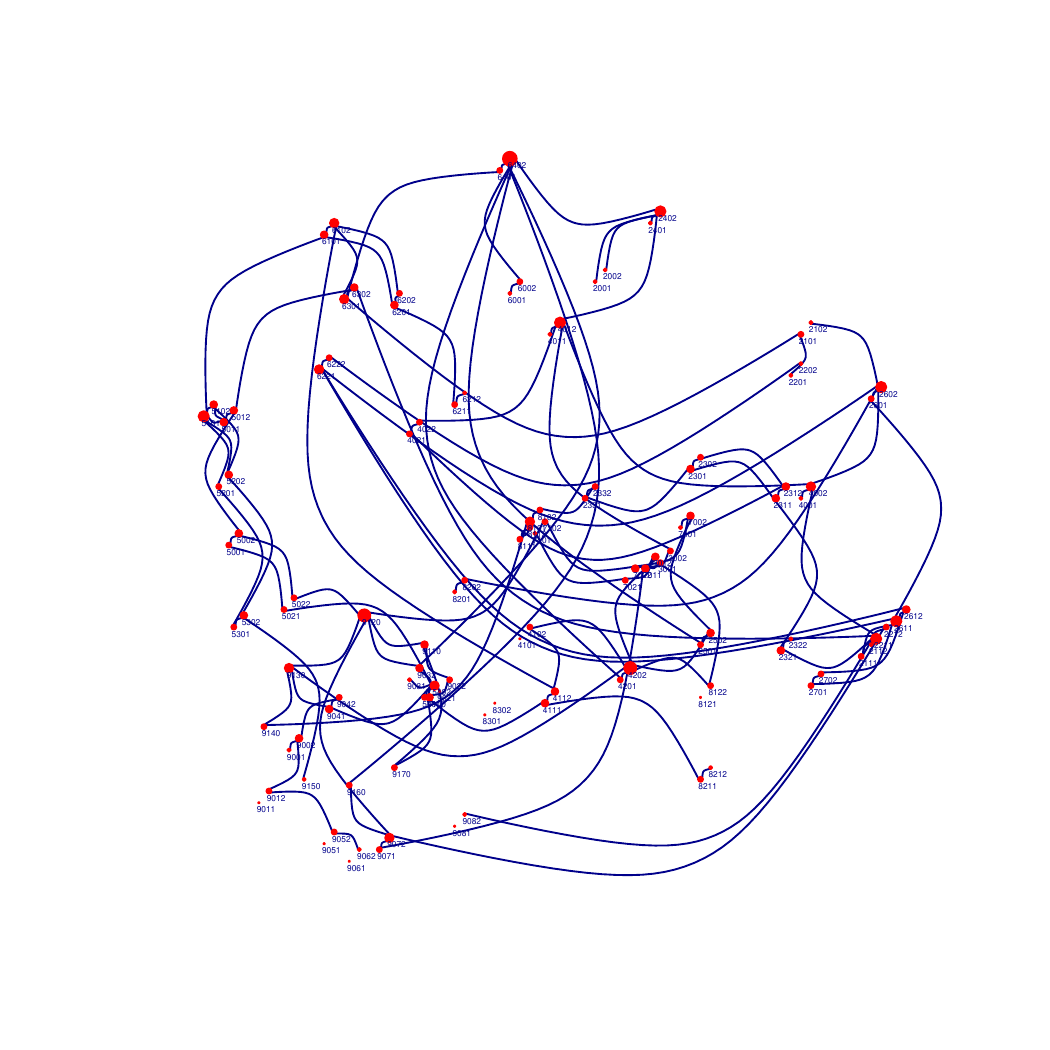}
        \caption{Autism group}
        \label{fig:ABIDE.two.pct.autism}
    \end{subfigure}
    \hfill
    \begin{subfigure}[b]{0.48\textwidth}
        \centering
        \includegraphics[width=\textwidth]{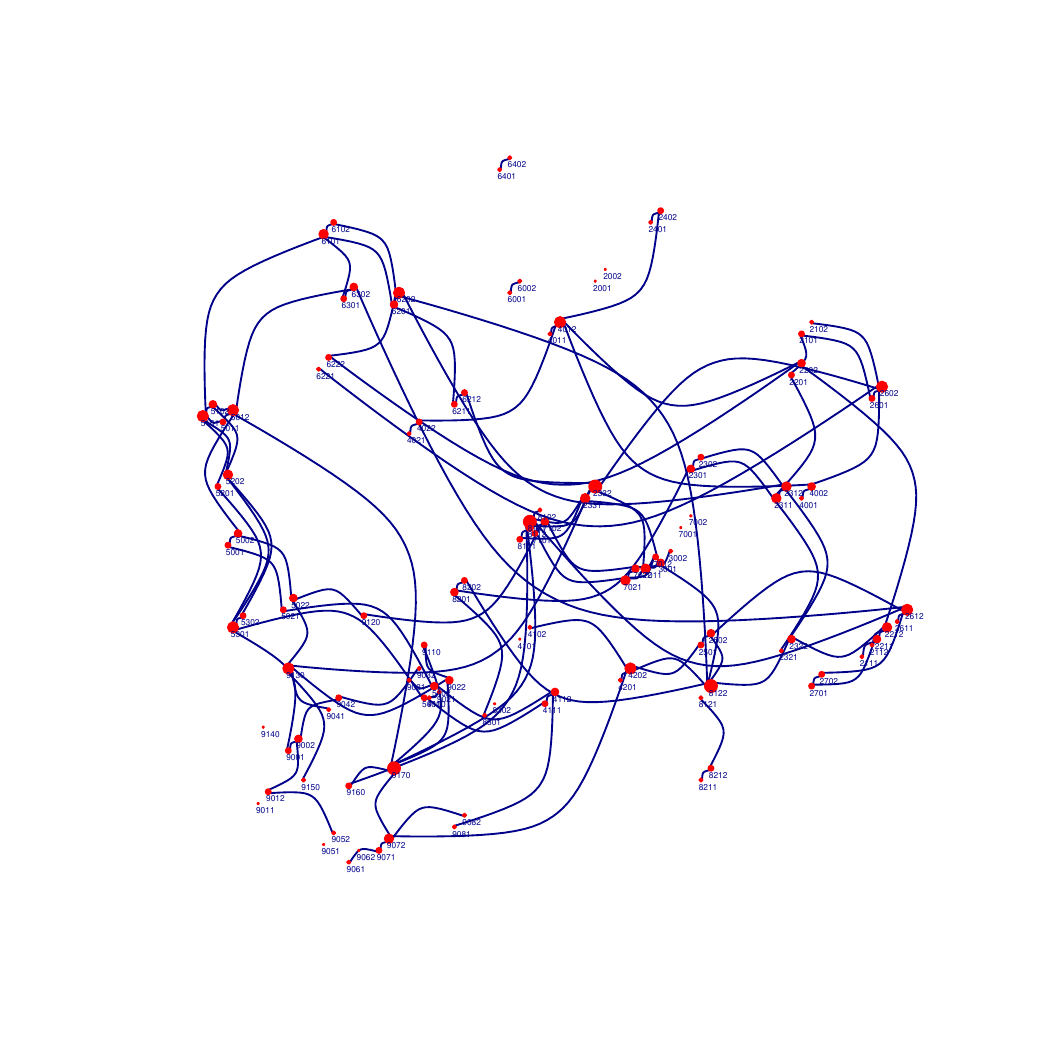}
        \caption{Control group}
        \label{fig:ABIDE.two.pct.control}
    \end{subfigure}
    
    \caption{Brain connectome graph of autism and control groups using FPCA-$g_X$ method, with sparsity fixed to 2\%}
    \label{fig:ABIDE.two.pct}
\end{figure}

Autism Spectrum Disorder (ASD) is a chronic neurodevelopmental disorder associated with both sensory processing and high-level functional deficits \citep{christensen2018prevalence}. Functional magnetic resonance imaging (fMRI) analysis provides a method for characterizing connectome anomalies in individuals with ASD.

ASD is characterized by a dissociation of a transmodal core, which combines long-distance connections from peripheral networks with primarily short-range connectivity \citep{hong2019atypical}. In contrast to a neurotypical brain, which exhibits distributed functional activation patterns, an autistic brain features more regionally localized connections where selective core activation is less prominent \citep{belmonte2004autism}.

We apply our procedure to data from the Autism Brain Imaging Data Exchange (ABIDE), a consortium that provides previously gathered fMRI data from both autism and control groups \citep{di2014autism}. The selected samples encompass whole-brain fMRI scans from 73 ASD-diagnosed patients ($n_{\text{autism}}=73$) and 98 controls ($n_{\text{control}}=98$).\footnote{The dataset includes fMRI measurements from eight different sites. For consistency, we only used data from New York University.} Given that $p=116$, this dataset is high-dimensional. We use the time series, preprocessed by \citet{craddock2013neuro} using AAL parcellation, derived from the raw data.

{\newTextTwo The interpretation of the results naturally depend on the sparsity level, and the network sparsity level should be treated as a tuning parameter, which may be chosen by either domain knowledge or a data-driven approach.}
We initially estimate the CI graphs of both autism and control groups using the SCV procedure separately, as depicted in Figure \ref{fig:ABIDE.SCV}. Comparison of the connectivity graphs of autism and control groups reveals an overall reduction in connectivity across different brain centers in the autism group, aligning with prior findings of cortical underconnection in ASD \citep{maximo2014implications}. Notably, the orbitofrontal regions (nodes 2111, 2112, 2211, 2212, 2321, 2322, 2611, 2612) appear almost isolated from the rest of the brain. This finding suggests that the orbito-frontal region, a typical paralimbic area according to \citet{mesulam2012evolving}, is less connected to limbic areas like the amygdala and hippocampus. This result is consistent with previous findings of diminished activity in the hypothalamus, leading to decreased oxytocin and vasopressin synthesis and release, which may contribute to impaired social cognition and behavior in ASD \citep{caria2020morphofunctional}.

Additionally, we estimate the CI graphs of both groups under a fixed 2\% sparsity {\newTextTwo following the same approach as previous analyses~\citep{Qiao2015Functional,Li2018nonparametric} where the authors also set the network density to a small fixed level.}
{\newTextTwo We choose 2\% because we observe that further increasing the sparsity level will induce substantially more suspicious connections in the estimated networks for both Autism and Control groups.}
The results are provided in Figure \ref{fig:ABIDE.two.pct}. One notable observation is the reduced rich-club connection \footnote{In neuroscience literature, the brain connectome structure where connections are centered around certain hub nodes is called rich-club.} in the autism group. Figure \ref{fig:ABIDE.two.pct} shows a less hierarchical brain connectome in the autism group compared to the control group. The control group exhibits more centralized connections and fewer regions without connections, while the autism group displays a more evenly distributed connection pattern across all nodes. For the control group, 22 nodes have at least 4 connections, 13 nodes have at least 5 connections, and 4 nodes have at least 6 connections. In comparison, for the autism group, 17 nodes have at least 4 connections, 9 nodes have at least 5 connections, and 3 nodes have at least 6 connections. Given that the total number of edges in both groups is identical, the standard deviation of the degree of all nodes is 1.52 for the control group and 1.37 for the autism group. These observations corroborate the results in \citet{hong2019atypical}, suggesting that ASD is associated with selective disruption in long-range connectivity, coupled with a deficit in fully activating the ``rich-club.'' Our findings also align with previous fMRI studies showing that individuals with ASD exhibit more spatially diffuse activations in the cerebellum's motor-related regions \citep{allen2004cerebellar}.

Another notable observation from both Figures \ref{fig:ABIDE.SCV} and \ref{fig:ABIDE.two.pct} is that the autism group displays increased connectivity in the precentral (Nodes 2001, 2002), postcentral (Nodes 6001, 6002), and paracentral (Nodes 6401, 6402) regions. This observation aligns with reports by \citet{patriquin2016neuroanatomical}. These regions are critical components of the motor control network, and abnormal activities within these areas could potentially be associated with ASD \citep{nebel2014precentral}.
}

\subsection*{ADHD Dataset}

{\newText

\begin{figure}[t]
    \centering
    
    \begin{subfigure}[b]{0.48\textwidth}
        \centering
        \includegraphics[width=\textwidth]{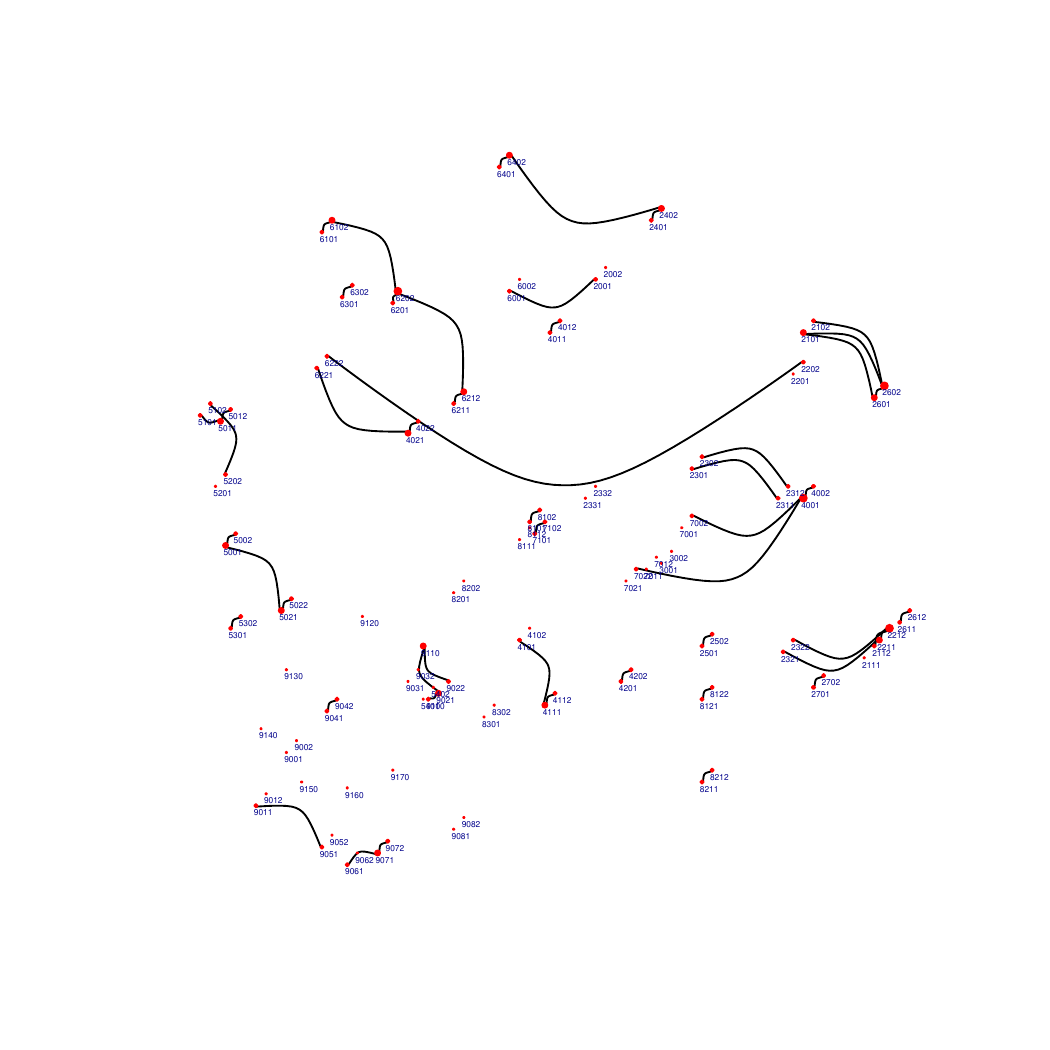}
        \caption{ADHD group}
        \label{fig:ADHD.SCV.ADHD}
    \end{subfigure}
    \hfill
    \begin{subfigure}[b]{0.48\textwidth}
        \centering
        \includegraphics[width=\textwidth]{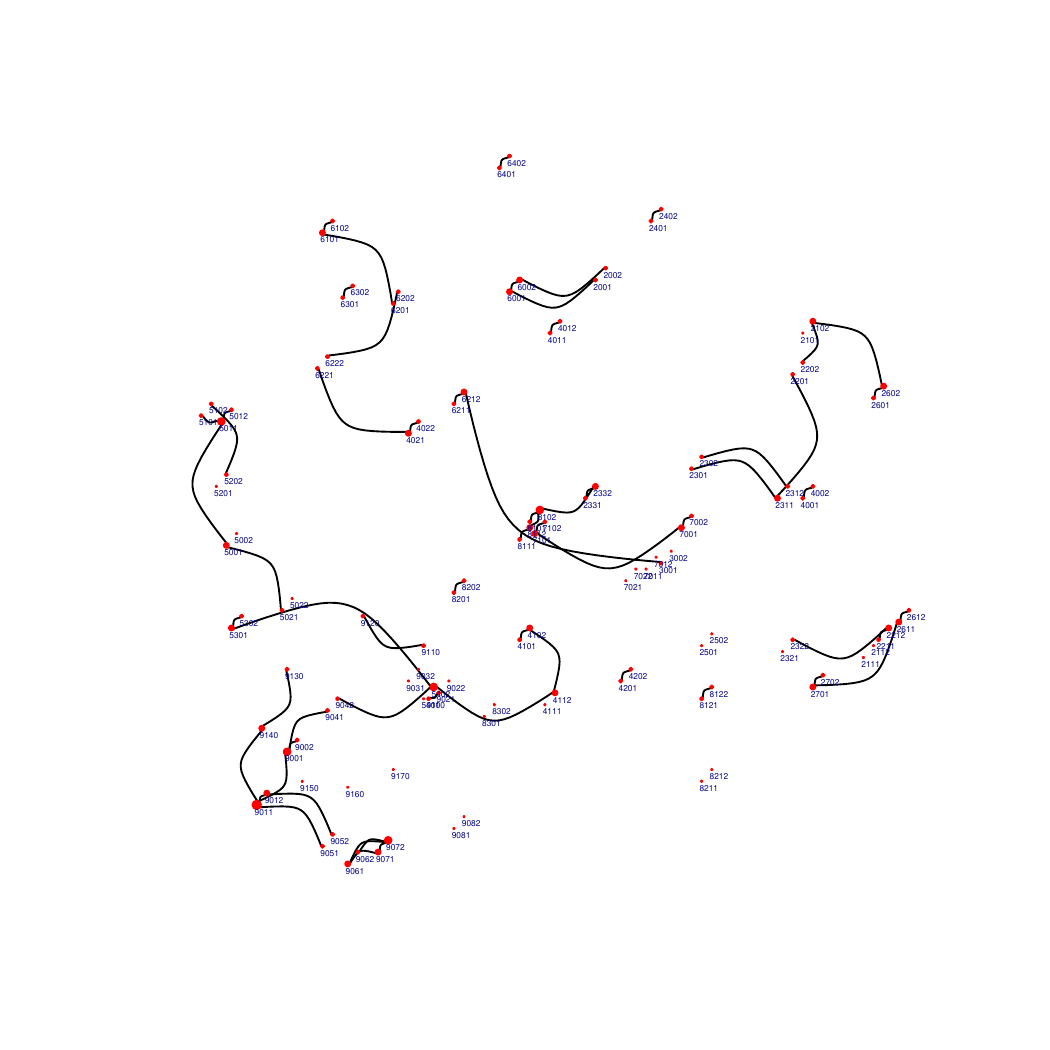}
        \caption{Control group}
        \label{fig:ADHD.SCV.control}
    \end{subfigure}
    
    \caption{Brain connectome graph of ADHD and control groups using FPCA-$g_X$ method, obtained by SCV process}
    \label{fig:ADHD.SCV}
\end{figure}

\begin{figure}[t]
    \centering
    
    \begin{subfigure}[b]{0.48\textwidth}
        \centering
        \includegraphics[width=\textwidth]{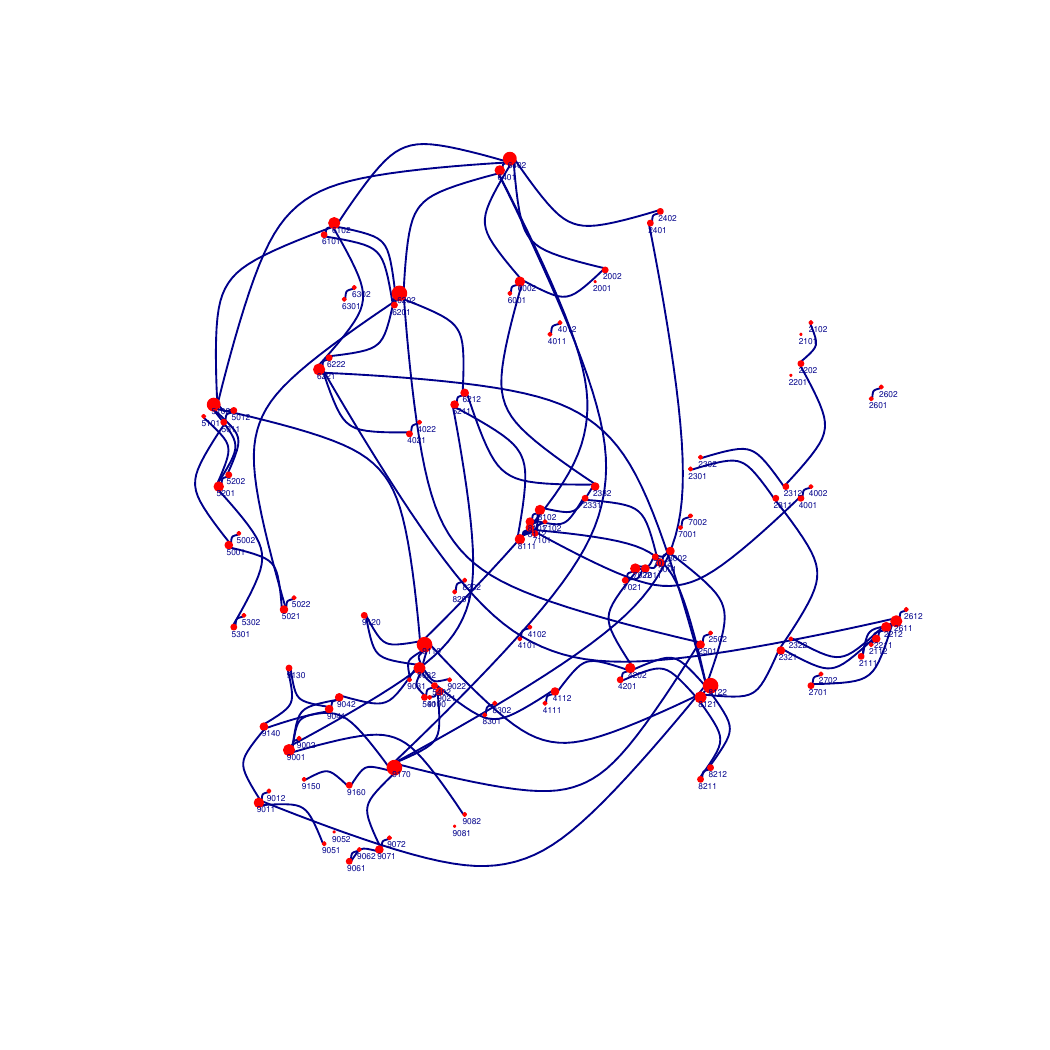}
        \caption{ADHD group}
        \label{fig:ADHD.two.pct.ADHD}
    \end{subfigure}
    \hfill
    \begin{subfigure}[b]{0.48\textwidth}
        \centering
        \includegraphics[width=\textwidth]{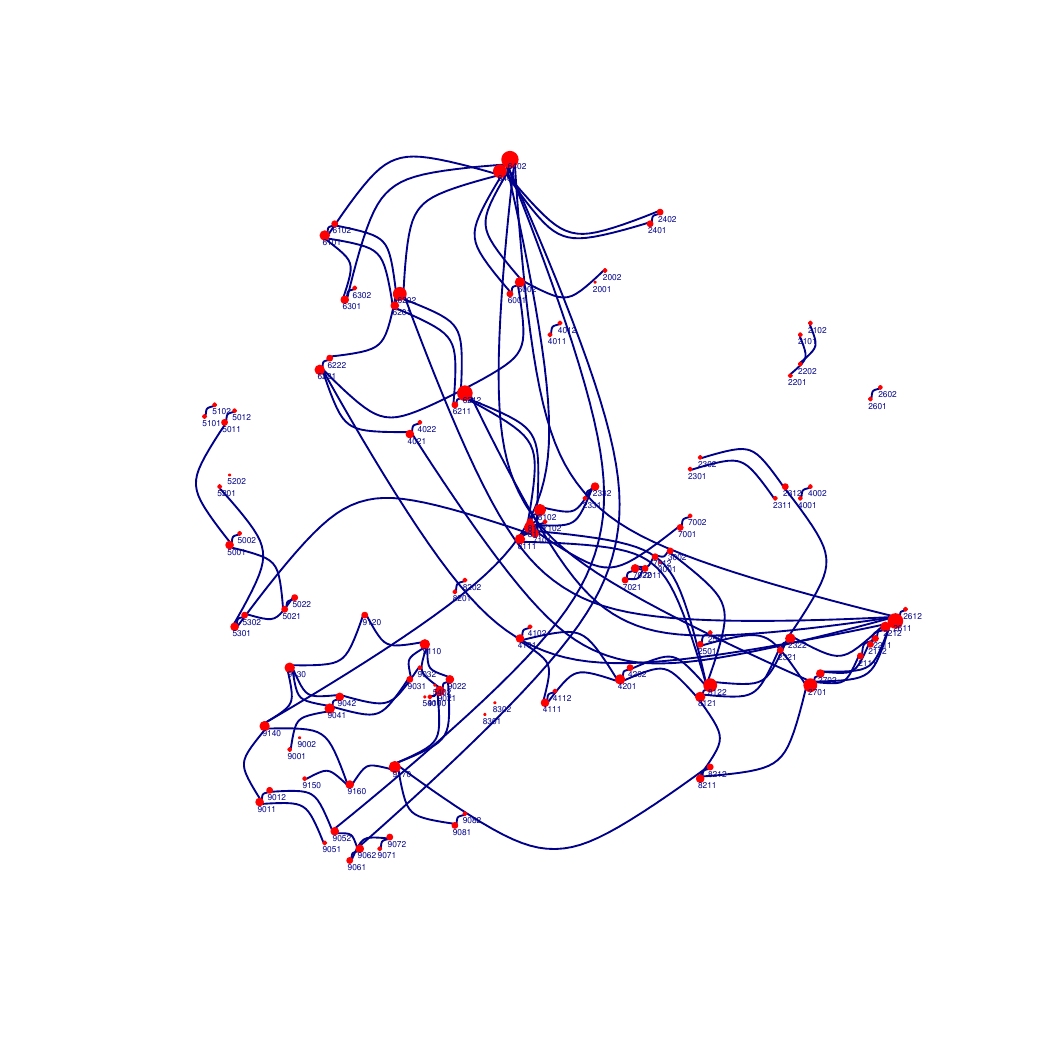}
        \caption{Control group}
        \label{fig:ADHD.two.pct.control}
    \end{subfigure}
    
    \caption{Brain connectome graph of ADHD and control groups using FPCA-$g_X$ method, with sparsity fixed to 2\%}
    \label{fig:ADHD.two.pct}
\end{figure}

\begin{figure}[t]
    \centering
    
    \begin{subfigure}[b]{0.48\textwidth}
        \centering
        \includegraphics[width=\textwidth]{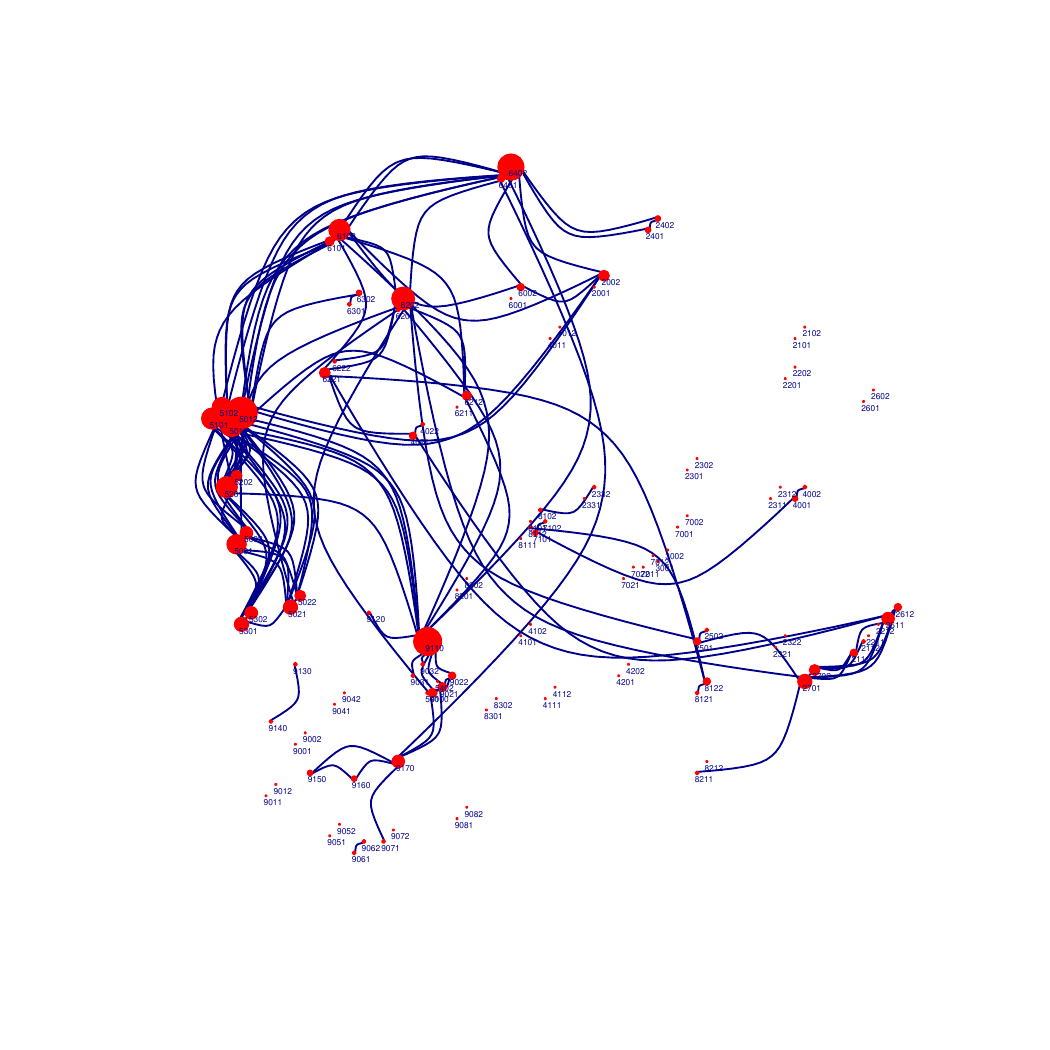}
        \caption{ADHD group}
        \label{fig:ADHD.FGLasso.ADHD}
    \end{subfigure}
    \hfill
    \begin{subfigure}[b]{0.48\textwidth}
        \centering
        \includegraphics[width=\textwidth]{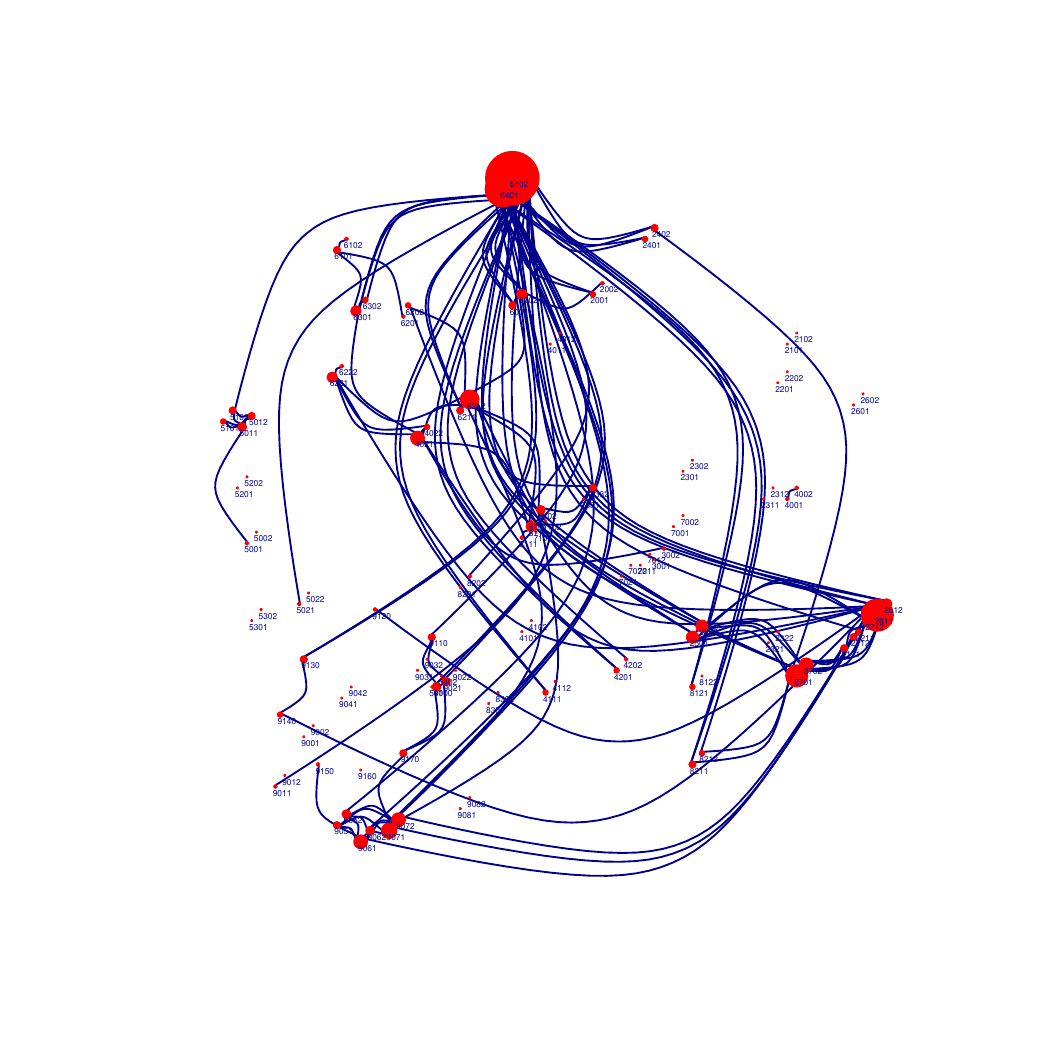}
        \caption{Control group}
        \label{fig:ADHD.FGLasso.control}
    \end{subfigure}
    
    \caption{Brain connectome graph of ADHD and control groups using FGLasso method~\citep{Qiao2015Functional}, with sparsity fixed to 2\%}
    \label{fig:ADHD.FGLasso}
\end{figure}

Attention Deficit Hyperactivity Disorder (ADHD) is a mental health disorder characterized by persistent issues such as difficulty maintaining attention, hyperactivity, and impulsive behavior. Functional graphical modeling may be instrumental in identifying abnormal brain connectivity associated with this condition.

We apply our procedure to data from the ADHD-200 Consortium \citep{milham2012adhd}. The samples used in our analysis include whole-brain fMRI scans from 74 ADHD-diagnosed patients ($n_{\text{ADHD}}=74$) and 109 controls ($n_{\text{control}}=109$)\footnote{The dataset includes fMRI measurements from eight different sites. For consistency, we solely utilized data from Peking University that passed quality tests.}. This dataset is high-dimensional, as neither sample size exceeds $p=116$. The time series preprocessed by \citet{bellec2017neuro} using AAL parcellation from the raw data is used in our study.

We initially estimate the CI graphs of both the ADHD and control groups using the SCV procedure separately, as demonstrated in Figure \ref{fig:ADHD.SCV}. We observe significantly reduced brain connectivity in the ADHD group across the entire brain network. The connectivity graph of the ADHD group in Figure \ref{fig:ADHD.SCV.ADHD} contains 51 edges, while the control group in Figure \ref{fig:ADHD.SCV.control} has 62 edges. This observation aligns with the findings in \citet{wang2020disrupted} suggesting decreased homotopic, intrahemispheric, and heterotopic functional connectivity (i.e., disconnection) within the ADHD group. Specifically, a weaker connection is apparent within the cerebellum regions (nodes on the bottom left of Figures \ref{fig:ADHD.SCV.ADHD} and \ref{fig:ADHD.SCV.control} with labels beginning with "90") in the ADHD group. This observation is consistent with the conclusion in \citet{cao2013probabilistic} stating that individuals with ADHD exhibit altered connectivity in cerebellum circuits, which are linked to timing disorders.

Furthermore, we estimate the CI graphs of both groups under a fixed 2\% sparsity.
{\newTextTwo Similar to the analysis of ASD dataset, we choose 2\% because we find that further increasing the sparsity level will induce substantially more suspicious connections in the estimated network for both ADHD and Control groups.}
The results are shown in Figure \ref{fig:ADHD.two.pct}. The connectivity graph of the control group in Figure \ref{fig:ADHD.two.pct.control} features several highly centralized areas, for instance, the paracentral lobule (Nodes 6401 and 6402 at the top) and prefrontal regions (e.g., Nodes 2111, 2112, 2211, 2212, 2321, 2322, 2611, 2612 on the right). The connectivity to these rich-club nodes is markedly reduced in the ADHD group. These rich-club connections are theorized to play a central role in integrating information among different brain subsystems. ADHD may be characterized by diminished structural integrity of the rich-club backbone, potentially leading to a decrease in globally efficient communication capacity and altered functional brain dynamics \citep{wang2021rich}. Specifically, the diminished prefrontal activities in ADHD have been pinpointed by neuroscientific studies \citep{konrad2006dysfunctional, cao2013probabilistic}. Deficits in these regions have been associated with impairments in cognitive functions and the capacity to adapt behavior to changing circumstances flexibly \citep{bu2021structural}. Differences between the ADHD and control group are also identified in non-rich-club regions. The connections stemming from the precuneus regions (Nodes 6301 and 6302 atthe top left of the graph) are markedly reduced in the ADHD group---their connections to the inferior parietal (Nodes 6101, 6102) and paracentral regions are no longer detected. The precuneus is linked to functional disturbances in regulatory control, attention, and aspects of executive function. Our observation aligns with the findings of \citet{noordermeer2017structural}, which underscore connectivity abnormalities in the precuneus among ADHD patients.
}

{\newText
We have also applied the FGLasso method by \citet{Qiao2015Functional} to the ADHD dataset, adjusting the connection sparsity to 2\% by tuning the penalty parameter. The resulting connectome graph can be seen in Figure~\ref{fig:ADHD.FGLasso}. When compared with Figure~\ref{fig:ADHD.two.pct.control}, it's noticeable that FGLasso tends to generate more rich-club results. This observation aligns with simulation results wherein FGLasso exhibits relatively good performance when the underlying model features a rich-club connection structure (Model C). However, even when the underlying model has minimal rich-club structure (e.g., Models A, B, and D), FGLasso still tends to impose a rich-club structure, leading to subpar performance. As a consequence, while the FGLasso method is effective in identifying the most active regions in the connectome, it may result in a biased conclusion if such a rich-club structure is not present in reality.

For instance, according to Figure~\ref{fig:ADHD.FGLasso}, the visual cortex region of the ADHD patients (Nodes 5001, 5002, 5011, 5012, 5021, 5022, 5101, 5102, 5201, 5202, 5301, 5302) appears to be densely connected to other brain regions. However, considering that the fMRI dataset we use is gathered during a resting state and that the visual cortex is primarily dedicated to visual functions~\citep{grill2004human}, such a connection pattern within the ADHD group is unexpected. Furthermore, within the ADHD group, Node 9110, part of the cerebellum, appears densely connected to many regions of the cerebrum. Modern neuroscience, however, posits that the cerebellum and cerebrum serve relatively independent functions \citep{glickstein2009cerebellum}, which suggests that such extensive connections are unlikely to occur. In contrast, our method tends to yield a graph in which the node degrees are more evenly distributed, thereby offering a more balanced and potentially accurate representation.

}

\section{Conclusion}
\label{sec:conclusion}

We propose a neighborhood selection method for estimating the structure of a functional graphical model and show that it can consistently recover the conditional independence graph in the high-dimensional setting. Specifically, we pose the problem of graph selection as a series of function-on-function regressions, and we approximate the function-on-function regressions with a vector-on-vector regression approach that is achieved by functional dimension reduction. Through extensive simulations, we demonstrate that the proposed method outperforms existing approaches in a variety of settings. Finally, we apply our method on fMRI data sets that include patients with ASD and patients with ADHD, as well as corresponding control groups.
We estimate the connectivity pattern between brain regions and find results that agree with previous neuroscience research.

A key step in our method is the choice of the basis for dimension reduction. Although we suggest using the FPCA basis for most settings, our methodology allows an arbitrary orthonormal basis. We also provide a theoretically motivated procedure for choosing a particular basis. However, developing a more rigorous data-driven approach is still an open problem that we hope to study in the future. Another fruitful avenue for future work is the development of methods that allow for inference and hypothesis testing in functional graphs. For example, \citet{Kim2019Two} has developed inferential tools for high-dimensional Markov networks, and future work may extend their results to the functional graph setting.

\section*{Acknowledgements}

We would like to thank Zhaohan Wu from Florida State University for his suggestions on fMRI data analysis.
This work was completed in part with resources provided by the University of Chicago Booth Mercury Computing Cluster.
The research of MK is supported in part by NSF Grant ECCS-2216912.

\FloatBarrier

\clearpage

\newpage

\bibliography{boxinz-papers}
\bibliographystyle{apalike}

\newpage

\appendix

\section{Technical Proofs}

We give proofs of the technical results that appear in the main text.

\subsection{Proof of Theorem~\ref{thm:neigh-represent}}
\label{sec:proof-neigh-represent}

For all $k \in [p]$ and $k \neq j$, we define $\mathscr{B}_{jk}: \mathbb{H} \mapsto \mathbb{H}$ as 
\begin{equation*}
\mathscr{B}_{jk}(h) \coloneqq \mathscr{B}_j((0,\ldots,0,\underset{\text{k-th}}{\underline{h}},0,\ldots,0)) \quad \text{ for all } h \in \mathbb{H}.
\end{equation*}
Since $\mathscr{B}_j$ is Hilbert-Schmidt, we claim that $\mathscr{B}_{jk} \in \mathcal{B}_{\text{HS}}(\mathbb{H})$. To prove this claim, note that for any CONS of $\mathbb{H}$ denoted by $\{e_n\}^{\infty}_{n=1}$, we have
\begin{equation*}
\left\{ \left\{ (e_n,0,\ldots,0)  \right\}^{\infty}_{n=1}, \left\{ (0,e_n,0,\ldots,0)\right\}^{\infty}_{n=1},\ldots,\left\{ (0,\ldots,0,e_n)  \right\}^{\infty}_{n=1}  \right\} 
\end{equation*}
to be a CONS of $\mathbb{H}^{p-1}$. Given the assumption that $\mathscr{B}_j$ is Hilbert-Schmidt, we have
\begin{align*}
\sum^{\infty}_{n=1} \left\Vert \mathscr{B}_{jk} \left( e_n \right) \right\Vert^2
& =
\sum^{\infty}_{n=1} \left\Vert \mathscr{B}_{j} \left( ( 0,\ldots,0,\underset{\text{k-th}}{\underline{e_n}},\ldots,0 ) \right) \right\Vert^2 \\
& \leq 
\sum^{\infty}_{n=1} \left\Vert \mathscr{B}_{j} \left( ( e_n,0,\ldots,0 ) \right) \right\Vert^2 + \ldots + \left\Vert \mathscr{B}_{j} \left( ( 0,\ldots,0,e_n ) \right) \right\Vert^2 \\
& < \infty,
\end{align*}
which implies that $\mathscr{B}_{jk} \in \mathcal{B}_{\text{HS}}(\mathbb{H})$.
By the linearity of $\mathscr{B}_j$, then for all 
$$
h=(h_1,\ldots,h_{j-1},h_{j+1}\ldots,h_{p}) \in \mathbb{H}^{p-1},
$$
we have
\begin{equation*}
\mathscr{B}_j(h) = \mathscr{B}_j \left( (h_1,0,\ldots,0) \right) + \ldots + \mathscr{B}_j \left( (0,\ldots,0,h_{p}) \right) = \sum^{p}_{k=1,k\neq j} \mathscr{B}_{jk}(h_k).
\end{equation*}
Thus, we have
\begin{equation}
\label{eq:proof-neigh-represent-eq1}
\mathbb{E} \left[ g_j \mid \bm{g}_{-j} \right] = \mathscr{B}_j \left( \bm{g}_{-j} \right) = \sum_{k \neq j} \mathscr{B}_{jk} \left( g_k \right).
\end{equation}

The rest of the proof is composed of two steps. 
We first construct functions $\{\beta_{jk}(t,t^{\prime})\}_{k \neq j}$ such that~\eqref{eq:function-linear} holds and then show~\eqref{eq:beta-l-expansion} and~\eqref{eq:def-b-ast}.
For any choice of CONS $\{\phi_m\}^{\infty}_{m=1}$ for $\mathbb{H}$, by Theorem 4.4.5 of~\citet{Hsing2015Theoretical} and the fact that $\mathscr{B}_{jk} \in \mathcal{B}_{\text{HS}}(\mathbb{H})$, we have $$\mathscr{B}_{jk}=\sum^{\infty}_{m=1}\sum^{\infty}_{m^{\prime}=1} b^{\ast}_{jk,m m^{\prime}} \phi_m \otimes \phi_{m^{\prime}},$$ 
where $b^{\ast}_{jk,m m^{\prime}} \coloneqq \langle \mathscr{B}_{jk}(\phi_{m^{\prime}}),\phi_m \rangle$ and $\Vert \mathscr{B}_{jk} \Vert^2_{\text{HS}} = \sum^{\infty}_{m=1}\sum^{\infty}_{m^{\prime}} (b^{\ast}_{jk,m m^{\prime}})^2 < \infty$.
Let
\begin{equation*}
\beta_{jk} (t,t^{\prime}) = \sum^{\infty}_{m,m^{\prime}=1} b^{\ast}_{jk,m m^{\prime}} \phi_m (t) \phi_{m^{\prime}} (t^{\prime})
\end{equation*}
for all $(t,t^{\prime}) \in \mathcal{T} \times \mathcal{T}$.
Then, for any $h \in \mathbb{H}$, we have
\begin{align*}
\mathscr{B}_{jk}(h)(t) &= \sum^{\infty}_{m=1}\sum^{\infty}_{m^{\prime}=1} b^{\ast}_{jk,m m^{\prime}} \langle h, \phi_{m^{\prime}} \rangle \phi_m(t) \\
&= \sum^{\infty}_{m=1}\sum^{\infty}_{m^{\prime}=1} b^{\ast}_{jk,m m^{\prime}} \left( \int_{\mathcal{T}} h(t^{\prime}) \phi_{m^{\prime}}(t^{\prime}) d t^{\prime} \right) \phi_m(t) \\
&= \int_{\mathcal{T}} \sum^{\infty}_{m=1}\sum^{\infty}_{m^{\prime}=1} b^{\ast}_{jk,m m^{\prime}} \phi_m(t) \phi_{m^{\prime}}(t^{\prime}) h(t^{\prime}) d t^{\prime} \\
& = \int_{\mathcal{T}} \beta_{jk} (t,t^{\prime}) h(t^{\prime}) d t^{\prime} \numberthis \label{eq:proof-neigh-represent-eq2}
\end{align*}
for all $t \in \mathcal{T}$, where the third equality is by Fubini's Theorem. 
In this way, $\mathscr{B}_{jk}$ is the integral operator with the kernel $\beta_{jk}(t,t^{\prime})$. By Theorem 4.6.7 of~\citet{Hsing2015Theoretical}, we have $\Vert \beta_{jk} \Vert_{\text{HS}}=\Vert \mathscr{B}_{jk} \Vert_{\text{HS}}<\infty$.
The relation~\eqref{eq:function-linear} follows by combining~\eqref{eq:proof-neigh-represent-eq2} and~\eqref{eq:proof-neigh-represent-eq1}.

We then show~\eqref{eq:beta-l-expansion} and~\eqref{eq:def-b-ast}. 
Let $\{\tilde{\phi}_m\}^{\infty}_{m=1}$ be another CONS of $\mathbb{H}$. Let $\tilde{b}^{\ast}_{jk,m m^{\prime}} \coloneqq \langle \mathscr{B}_{jk}(\tilde{\phi}_{m^{\prime}}),\tilde{\phi}_m \rangle$, 
and we can similarly define $\tilde{\beta}_{jk}(t,t^{\prime})$ by
\begin{equation*}
\tilde{\beta}_{jk} (t,t^{\prime}) = \sum^{\infty}_{m,m^{\prime}=1} \tilde{b}^{\ast}_{jk,m m^{\prime}} \tilde{\phi}_m (t) \tilde{\phi}_{m^{\prime}} (t^{\prime}).
\end{equation*}
Similar to~\eqref{eq:proof-neigh-represent-eq2}, we can show $\mathscr{B}_{jk}(h)(t)=\int_{\mathcal{T}} \tilde{\beta}_{jk} (t,t^{\prime}) h(t^{\prime}) d t^{\prime}$ for all $h \in \mathbb{H}$ and $t \in \mathcal{T}$. Thus, we have
\begin{equation*}
\tilde{b}^{\ast}_{jk,m m^{\prime}} = \langle \mathscr{B}_{jk}(\tilde{\phi}_{m^{\prime}}),\tilde{\phi}_m \rangle = \int_{\mathcal{T}} \tilde{\beta}_{jk}(t,t^{\prime}) \tilde{\phi}_m(t) \tilde{\phi}_{m^{\prime}}(t^{\prime}) dt^{\prime} dt.
\end{equation*}
In this way, to finish the proof, we only need to show that $\beta_{jk}(t,t^{\prime})=\tilde{\beta}_{jk}(t,t^{\prime})$ a.e., or equivalently $\Vert \beta_{jk} - \tilde{\beta}_{jk} \Vert_{\text{HS}}=0$. This is obvious since, for any CONS $\{\bar{\phi}_m\}_{m \geq 1}$ for $\mathbb{H}$, we have
\begin{equation*}
\mathscr{B}_{jk}(\bar{\phi}_m)(t)=\int_{\mathcal{T}} \beta_{jk} (t,t^{\prime}) \bar{\phi}_m(t^{\prime}) d t^{\prime}=\int_{\mathcal{T}} \tilde{\beta}_{jk} (t,t^{\prime}) \bar{\phi}_m(t^{\prime}) d t^{\prime}
\end{equation*}
for all $t \in \mathcal{T}$, which implies that
\begin{equation*}
\Vert \beta_{jk} - \tilde{\beta}_{jk} \Vert^2_{\text{HS}} = \sum^{\infty}_{m=1} \Vert \mathscr{B}_{jk}(\bar{\phi}_m) - \mathscr{B}_{jk}(\bar{\phi}_m) \Vert^2 = \sum^{\infty}_{m=1} 0 = 0.
\end{equation*}

\subsection{Derivation of \eqref{eq:regression-target} and \eqref{eq:regression-target-hat}}
\label{sec:deri2formulas}

Recall that $\beta_{jk}(t,t^{\prime})$ is defined in~\eqref{eq:function-linear} and $\hat{\phi}_{jm}$ is an estimate of the true basis function $\phi_{jm}$.
Let $b^{\ast}_{jk,m m^{\prime}} = \int_{\mathcal{T}\times \mathcal{T}} \beta_{jk} (t^{\prime},t) \phi_m (t) \phi_{m^{\prime}} (t^{\prime}) dt^{\prime} dt$.
We focus on a given node $j \in [p]$, and we drop the index $j$ from the notation to simplify the discussion.

We first prove~\eqref{eq:regression-target}.
By \eqref{eq:function-linear} and \eqref{eq:beta-l-expansion}, we have
\begingroup
\allowdisplaybreaks
\begin{align*}
a^Y_{i m} & = \sum^{p-1}_{k=1} \int_{\mathcal{T} \times \mathcal{T}} \beta_{k} (t,t^{\prime}) g^{X_k}_i(t^{\prime}) \phi_m (t) dt^{\prime} dt + \int_{\mathcal{T}} e_i(t) \phi_m (t) dt \\
& = \sum^{p-1}_{k=1} \int_{\mathcal{T} \times \mathcal{T}} \left( \sum^{\infty}_{m^{\prime\prime},m^{\prime}=1} b^{\ast}_{k,m^{\prime\prime} m^{\prime}} \phi_{m^{\prime\prime}} (t) \phi_{m^{\prime}} (t^{\prime}) \right) g^{X_k}_i(t^{\prime}) \phi_m (t) dt^{\prime} dt + \int_{\mathcal{T}} e_i(t) \phi_m (t) dt \\
& = \sum^{p-1}_{k=1} \sum^{\infty}_{m^{\prime}, m^{\prime\prime}=1} b^{\ast}_{k,m^{\prime\prime} m^{\prime}} \int_{\mathcal{T} \times \mathcal{T}} \phi_{m^{\prime\prime}} (t) \phi_{m^{\prime}} (t^{\prime})  g^{X_k}_i(t^{\prime}) \phi_m (t) dt^{\prime} dt + \int_{\mathcal{T}} e_i(t) \phi_m (t) dt \\
& = \sum^{p-1}_{k=1} \sum^{\infty}_{m^{\prime}, m^{\prime\prime}=1} b^{\ast}_{k,m^{\prime\prime} m^{\prime}} \left( \int_{\mathcal{T}} \phi_{m^{\prime\prime}} (t) \phi_m (t) dt \right) \left( \int_{\mathcal{T}} \phi_{m^{\prime}} (t^{\prime}) g^{X_k}_i(t^{\prime}) dt^{\prime} \right) + \int_{\mathcal{T}} e_i(t) \phi_m (t) dt \\
& = \sum^{p-1}_{k=1} \sum^{\infty}_{m^{\prime}=1} b^{\ast}_{k,m m^{\prime}} \left( \int_{\mathcal{T}} g^{X_k}_i(t^{\prime}) \phi_{m^{\prime}} (t^{\prime}) dt^{\prime} \right) + \int_{\mathcal{T}} e_i(t) \phi_m (t) dt \\
& = \sum^{p-1}_{k=1} \sum^{\infty}_{m^{\prime}=1} b^{\ast}_{k,m m^{\prime}} a^{X_k}_{i m^{\prime}} + \int_{\mathcal{T}} e_i(t) \phi_m (t) dt \\
& = \sum^{p-1}_{k=1} \sum^{\infty}_{m^{\prime}=M+1} b^{\ast}_{k,m m^{\prime}} a^{X_k}_{i m^{\prime}} + \sum^{p-1}_{k=1} \sum^{M}_{m^{\prime}=1} b^{\ast}_{k,m m^{\prime}} a^{X_k}_{i m^{\prime}} + \int_{\mathcal{T}} e_i(t) \phi_m (t) dt \numberthis \label{eq:aYm}.
\end{align*}
\endgroup
Then \eqref{eq:regression-target} follows directly from \eqref{eq:aYm} by setting $\bm{B^{\ast}_{k, M}}=(b^{\star}_{k,m m^{\prime}})_{1 \leq m,m^{\prime} \leq M}$,
\begin{equation*}
r_{im} = \sum^{p-1}_{k=1} \sum^{\infty}_{m^{\prime}=M+1} b^{\ast}_{k,m m^{\prime}} a^{X_k}_{i m^{\prime}},
\end{equation*}
$\bm{r_{i,M}}=(r_{i1},\ldots,r_{iM})^{\top}$, $w_{im}=\int_{\mathcal{T}} e_i(t) \phi_m (t) dt$ and $\bm{w_{i,M}}=(w_{i1},\ldots,w_{iM})^{\top}$.

To show~\eqref{eq:regression-target-hat}, we only need to redefine relevant concepts. We define
\begin{equation}\label{eq:def-b-tilde}
\tilde{b}_{k,m m^{\prime}} = \int_{\mathcal{T}\times \mathcal{T}} \beta_k (t^{\prime},t) \hat{\phi}_m (t^{\prime}) \hat{\phi}_{m^{\prime}} (t) dt^{\prime} dt.
\end{equation}
Thus $\tilde{b}_{k,m m^{\prime}}=0$ for all $m,m^{\prime}\geq 1$ when $k \notin \mathscr{N}_j$. Similarly, let
\[\tilde{w}_{im}=\int_{\mathcal{T}} e_i(t) \hat{\phi}_m (t) dt
\qquad\text{and}\qquad
\tilde{r}_{im} = \sum^{p-1}_{k=1} \sum^{\infty}_{m^{\prime}=M+1} \tilde{b}_{k,m m^{\prime}} \hat{a}^{X_k}_{i m^{\prime}}.
\]
Furthermore, let $\bm{\tilde{w}_{i,M}}=(\tilde{w}_{i1},\cdots,\tilde{w}_{iM})^{\top}$, $\bm{\tilde{r}_{i,M}}=(\tilde{r}_{i1},\cdots,\tilde{r}_{iM})^{\top} \in \mathbb{R}^{M}$, $\bm{\tilde{B}^M_l}$ is a $M \times M$ matrix with the $m$-th row equal to $(\tilde{b}_{l,m1},\dots,\tilde{b}_{l,mM})$,
and
\begin{equation}\label{eq:def-v}
\bm{v_{iM}} = \sum^{p-1}_{k=1} (\bm{\tilde{B}_{k,M}}-\bm{B^{\ast}_{k,M}}) \bm{\hat{a}^{X_k}_{i,M}} + (\bm{\tilde{r}_{i,M}}-\bm{r_{i,M}}) + (\bm{\tilde{w}_{i,M}} - \bm{w_{i,M}}).
\end{equation}
By~\eqref{eq:beta-l-expansion}, we have
\begin{equation*}
\beta_k (t^{\prime},t) = \sum^{\infty}_{m,m^{\prime}=1} \tilde{b}^{\ast}_{k,m m^{\prime}} \hat{\phi}_m (t) \hat{\phi}_{m^{\prime}} (t^{\prime}) \quad \text{almost everywhere}.
\end{equation*}
Then by a similar argument to~\eqref{eq:aYm}, we have
\begin{equation}
\label{eq:hataYm}
\hat{a}^Y_{im} = \sum^{p-1}_{k=1} \sum^{\infty}_{m^{\prime}>M} \tilde{b}_{k,m m^{\prime}} \hat{a}^{X_l}_{i m^{\prime}} + \sum^{p-1}_{k=1} \sum^{M}_{m^{\prime}=1} \tilde{b}_{k,m m^{\prime}} \hat{a}^{X_l}_{i m^{\prime}} + \int_{\mathcal{T}} e_i(t) \hat{\phi}_m (t) dt,
\end{equation}
which implies \eqref{eq:regression-target-hat} combined with~\eqref{eq:def-v}.

\subsection{Simplification of ADMM Optimization Problems}\label{CH_ADMM}

We explain how to obtain the problem in \eqref{EQ_QOpt}. Let $g(\sum^{p-1}_{k=1} \bm{Q_k}) = \frac{1}{2n}\|\bm{A^Y}-\sum^{p-1}_{k=1} \bm{Q_k}\|^2_{\text{F}}$ and $\bm{W_k}^{h+1} = \bm{A^{X_k}}\bm{P_k^{h+1}} + \bm{U_k^h}$. The update to matrices $\{\bm{Q_k}\}_{k\in[p-1]}$ can be rewritten as
\begin{align*}
    \min_{\{\bm{Q_k}\}_{k\in[p-1]}} g((p-1)\bar{\bm{Q}}) + \frac{\rho}{2}\sum^{p-1}_{k=1}\|\bm{Q_k} - \bm{W_k}^{h+1}\|_F^2
    \quad\text{subject to}\quad
    \bar{\bm{Q}} = \frac{1}{p-1}\sum^{p-1}_{k=1} \bm{Q_k}.
\end{align*}
Let $\phi:\mathbb{R}^{n\times M}\rightarrow\mathbb{R}$ be a function that satisfies $\phi(\mathbf{0})=0$ and $\nabla\phi(\mathbf{0})\neq 0$.
The Lagrangian function is then
\[
(\bm{Q_1},\dots,\bm{Q_{p-1}};\mu) = g((p-1)\bar{\bm{Q}})+\frac{\rho}{2}\sum^{p-1}_{k=1}\|\bm{Q_k}-\bm{W_k}^{h+1}\|_F^2 + \mu \phi\left(\bar{\bm{Q}}-\frac{1}{p-1}\sum^{p-1}_{k=1}\bm{Q_k}\right).
\]
The matrix $\bm{Q_k}$ that minimizes the Lagrangian satisfies
\begin{align*}
    \frac{\partial L}{\partial \bm{Q_k}} &= \rho(\bm{Q_k}-\bm{W_k}^{h+1}) - \frac{\mu}{p-1} \nabla\phi\left(\bar{\bm{Q}}-\frac{1}{p-1}\sum^{p-1}_{k=1} \bm{Q_k}\right) = 0,\\
    \frac{\partial L}{\partial \mu} &= \bar{\bm{Q}}-\frac{1}{p-1}\sum^{p-1}_{k=1} \bm{Q_k} = 0.
\end{align*}
This is equivalent to $\rho(\bm{Q_k}-\bm{W_k}^{h+1})=\frac{\mu}{p-1}\nabla\phi(\mathbf{0})$. As a result, we see that each entry in $\bm{Q_k}-\bm{W_k}^{h+1}$ does not vary with $k$. Let $\bar{\bm{R}}^{h+1}=\frac{1}{p-1}\sum^{p-1}_{k=1} \bm{W_k}^{h+1}$. Then $\bm{Q_k}^{h+1}$ can be replaced by $\bar{\bm{Q}}^{h+1}+\bm{W_k}^{h+1}-\bar{\bm{R}}^{h+1}$ and 
\[
\bm{Q_k}^{h+1} = \bar{\bm{Q}}^{h+1} + \bm{A^{X_k}}\bm{P_k}^{h+1} + \bm{W_k}^h - \overline{\bm{A^X P}}^{h+1} - \bar{\bm{U}}^h,
\]
where $\overline{\bm{A^X P}}^h = \frac{1}{p-1}\sum^{p-1}_{k=1}\bm{A^{X_k}}\bm{P_k}^h$. Therefore, we have obtained the problem in \eqref{EQ_QOpt}.

\subsection{Derivation of \eqref{eq:upp-bd-log-Lambda}}
\label{sec:der-minimize-upp-bd}

We drop the subscript $M$. By \eqref{eq:regression-target} and the definition of $\mathscr{N}_j$, we have
\begin{equation*}
\bm{a^{Y}_i} = \sum_{k \in \mathscr{N}_j} \bm{B^{\ast}_k} \bm{a^{X_k}_i} + \bm{w_i} + \bm{r_i},
\end{equation*}
where $\bm{r_i}=(r_{i1},r_{i2},\dots,r_{iM})^{\top}$ with
\begin{equation*}
r_{im} = \sum_{k \in \mathscr{N}_j} \sum_{m^{\prime}>M} b^{\ast}_{k,m m^{\prime}} a^{X_k}_{im^{\prime}} \, .
\end{equation*}
Let $\bm{\Sigma^Y}=\mathbb{E} \left[ \bm{a^Y_i} (\bm{a^Y_i})^{\top}  \right]$, $\bm{\Sigma^{Y,X_k}}=\mathbb{E} \left[ \bm{a^Y_i} (\bm{a^{X_k}_i})^{\top} \right]$, $k \in [p-1]$. Note that
\[
((\bm{a^{Y}}_i)^{\top}, (\bm{a^{X_1}}_i)^{\top}, (\bm{a^{X_2}}_i)^{\top}, \dots, (\bm{a^{X_{p-1}}_i})^{\top} )^{\top}
\]
is a multivariate Gaussian vector. Then 
\begin{equation}\label{eq:max-strategy-1}
\left[ \bm{B^*_k} \right]_{k \in \mathscr{N}_j} = \left[ \bm{\Sigma^{Y,X_k}} - \bm{\Sigma^{r,X_k}}  \right]_{k \in \mathscr{N}_j} \left( \bm{\Sigma^X_{\mathscr{N}_j \mathscr{N}_j}}  \right)^{-1}
\end{equation}
and 
\begin{align*}
\left\Vert \left[ \bm{B^*_k} \right]_{k \in \mathscr{N}_j} \right\Vert_{\text{F}} & =  \left\Vert \left[ \bm{\Sigma^{Y,X_k}} - \bm{\Sigma^{r,X_k}}  \right]_{k \in \mathscr{N}_j} \left( \bm{\Sigma^X_{\mathscr{N}_j \mathscr{N}_j}}  \right)^{-1} \right\Vert_{\text{F}} \\ & \geq  \left\Vert  \left[ \bm{\Sigma^{Y,X_k}} \right]_{k \in \mathscr{N}_j} \left( \bm{\Sigma^X_{\mathscr{N}_j \mathscr{N}_j}}  \right)^{-1}  \right\Vert_{\text{F}} - \left\Vert  \left[ \bm{\Sigma^{r,X_k}} \right]_{k \in \mathscr{N}_j} \left( \bm{\Sigma^X_{\mathscr{N}_j \mathscr{N}_j}}  \right)^{-1}  \right\Vert_{\text{F}}.
\end{align*}
Since the correlation between $\bm{a^Y_i}$ and $\bm{a^{X_k}_i}$ is larger than correlation between $\bm{r_i}$ and $\bm{a^{X_k}_i}$, $k \in \mathscr{N}_j$, when $M$ is large enough, we will have
\begin{equation*}
\left\Vert  \left[ \bm{\Sigma^{Y,X_k}} \right]_{k \in \mathscr{N}_j} \left( \bm{\Sigma^X_{\mathscr{N}_j \mathscr{N}_j}}  \right)^{-1}  \right\Vert_{\text{F}} \gg \left\Vert  \left[ \bm{\Sigma^{r,X_k}} \right]_{k \in \mathscr{N}_j} \left( \bm{\Sigma^X_{\mathscr{N}_j \mathscr{N}_j}}  \right)^{-1}  \right\Vert_{\text{F}}.
\end{equation*}
Combining the last two displays, we have
\begin{equation}\label{eq:low-bd-Bk-norm}
\left\Vert \left[ \bm{B^*_k} \right]_{k \in \mathscr{N}_j} \right\Vert_{\text{F}} \gtrsim \left\Vert  \left[ \bm{\Sigma^{Y,X_k}} \right]_{k \in \mathscr{N}_j} \left( \bm{\Sigma^X_{\mathscr{N}_j \mathscr{N}_j}}  \right)^{-1}  \right\Vert_{\text{F}}.
\end{equation}
Furthermore, let $\bm{R}^{Y,X_k}=(\bm{\Sigma^Y})^{-1/2} \bm{\Sigma^{Y,X_k}} (\bm{\Sigma^{X_k}})^{-1/2}$, $k \in  [p-1]$, and
\begin{equation*}
\bm{R^X_{\mathscr{N}_j \mathscr{N}_j}} = 
\text{diag}\left( \left[  \left( \bm{\Sigma^X_{kk}} \right)^{-1/2}  \right]_{k \in \mathscr{N}_j} \right)
\bm{\Sigma^X_{\mathscr{N}_j \mathscr{N}_j}} \,
\text{diag}\left( \left[  \left( \bm{\Sigma^X_{kk}} \right)^{-1/2}  \right]_{k \in \mathscr{N}_j} \right).
\end{equation*}
Then
\begin{align*}
\left[ \bm{\Sigma^{Y,X_k}}  \right]_{k \in \mathscr{N}_j} \left( \bm{\Sigma^X_{\mathscr{N}_j \mathscr{N}_j}}  \right)^{-1} & = 
\left( \bm{\Sigma^{Y}} \right)^{\frac{1}{2}} \left[ \bm{R^{Y,X_k}}  \right]_{k \in \mathscr{N}_j} \text{diag} \left( \left[ \left( \bm{\Sigma^{X}_{kk}} \right)^{1/2} \right]_{k \in \mathscr{N}_j} \right) \left( \bm{\Sigma^X_{\mathscr{N}_j \mathscr{N}_j}}  \right)^{-1} \\
& =  \left( \bm{\Sigma^{Y}} \right)^{\frac{1}{2}} \left[ \bm{R^{Y,X_k}}  \right]_{k \in \mathscr{N}_j}  \left( \bm{R^X_{\mathscr{N}_j \mathscr{N}_j}}  \right)^{-1} \text{diag} \left( \left[ \left( \bm{\Sigma^{X}_{kk}} \right)^{-1/2} \right]_{k \in \mathscr{N}_j} \right)
\end{align*}
and 
\begin{equation*}
\left[ \bm{\Sigma^{r,X_k}}  \right]_{k \in \mathscr{N}_j} \left( \bm{\Sigma^X_{\mathscr{N}_j \mathscr{N}_j}}  \right)^{-1} = \left( \bm{\Sigma^{r}} \right)^{\frac{1}{2}} \left[ \bm{R^{r,X_k}}  \right]_{k \in \mathscr{N}_j}  \left( \bm{R^X_{\mathscr{N}_j \mathscr{N}_j}}  \right)^{-1} \text{diag} \left( \left[ \left( \bm{\Sigma^{X}_{kk}} \right)^{-1/2} \right]_{k \in \mathscr{N}_j} \right).
\end{equation*}
By Lemma~\ref{lemma:mat-prop}, we have 
\begin{align*}
& \left\Vert  \left[ \bm{\Sigma^{Y,X_k}}  \right]_{k \in \mathscr{N}_j} \left( \bm{\Sigma^X_{\mathscr{N}_j \mathscr{N}_j}}  \right)^{-1}  \right\Vert_{\text{F}} \\
& \geq  \left\Vert \left( \bm{\Sigma^{Y}} \right)^{\frac{1}{2}} \right\Vert_{\text{F}} \left\{ \rho_{\min}  \left( \left[ \bm{R^{Y,X_k}}  \right]_{k \in \mathscr{N}_j}  \left( \bm{R^X_{\mathscr{N}_j \mathscr{N}_j}}  \right)^{-1} \cdot \right.\right. \\
& \qquad \left.\left. \text{diag} \left( \left[ \left( \bm{\Sigma^{X}_{kk}} \right)^{-1} \right]_{k \in \mathscr{N}_j} \right)  \left( \bm{R^X_{\mathscr{N}_j \mathscr{N}_j}}  \right)^{-1} \left[ \bm{R^{X_k,Y}}  \right]_{k \in \mathscr{N}_j} \right) \right\}^{1/2} \\
& =  \sqrt{ \tr \left( \bm{\Sigma^{Y}} \right) } \left\{ \rho_{\min}  \left( \left[ \bm{R^{Y,X_k}}  \right]_{k \in \mathscr{N}_j}  \left( \bm{R^X_{\mathscr{N}_j \mathscr{N}_j}}  \right)^{-1} \cdot \right.\right. \\
& \qquad \left.\left. \text{diag} \left( \left[ \left( \bm{\Sigma^{X}_{kk}} \right)^{-1} \right]_{k \in \mathscr{N}_j} \right)  \left( \bm{R^X_{\mathscr{N}_j \mathscr{N}_j}}  \right)^{-1} \left[ \bm{R^{X_k,Y}}  \right]_{k \in \mathscr{N}_j} \right) \right\}^{1/2} \\
& \geq  \sqrt{ \tr \left( \bm{\Sigma^{Y}} \right) } \sqrt{ \rho_{\min} \left(  \text{diag} \left( \left[ \left( \bm{\Sigma^{X}_{kk}} \right)^{-1} \right]_{k \in \mathscr{N}_j} \right) \right) } \cdot \\
& \qquad \sqrt{ \rho_{\min}  \left( \left[ \bm{R^{Y,X_k}}  \right]_{k \in \mathscr{N}_j}  \left( \bm{R^X_{\mathscr{N}_j \mathscr{N}_j}}  \right)^{-2} \left[ \bm{R^{X_k,Y}}  \right]_{k \in \mathscr{N}_j} \right) } \\
& \geq  \sqrt{ \tr \left( \bm{\Sigma^{Y}} \right) } \sqrt{ \frac{1}{\max_{k \in \mathscr{N}_j} \rho_{max} \left( \bm{\Sigma^{X}_{kk}} \right) } } \sqrt{ \rho_{\min}  \left( \left[ \bm{R^{Y,X_k}}  \right]_{k \in \mathscr{N}_j}  \left( \bm{R^X_{\mathscr{N}_j \mathscr{N}_j}}  \right)^{-2} \left[ \bm{R^{X_k,Y}}  \right]_{k \in \mathscr{N}_j} \right) },
\end{align*}
and, therefore,
\begin{align*}
& \sqrt{\kappa(M)} \left\Vert  \left[ \bm{\Sigma^{Y,X_k}} \right]_{k \in \mathscr{N}_j} \left( \bm{\Sigma^X_{\mathscr{N}_j \mathscr{N}_j}}  \right)^{-1}  \right\Vert_{\text{F}} \\
& \geq  \sqrt{ \tr \left( \bm{\Sigma^{Y}} \right) } \sqrt{ \frac{ \rho_{\min} \left( \bm{\Sigma^X_{\mathscr{N}_j \mathscr{N}_j}} \right) }{\max_{k \in \mathscr{N}_j} \rho_{max} \left( \bm{\Sigma^{X}_{kk}} \right) } } \sqrt{ \rho_{\min}  \left( \left[ \bm{R^{Y,X_k}}  \right]_{k \in \mathscr{N}_j}  \left( \bm{R^X_{\mathscr{N}_j \mathscr{N}_j}}  \right)^{-2} \left[ \bm{R^{X_k,Y}}  \right]_{k \in \mathscr{N}_j} \right) } \\
& \geq  \sqrt{ \tr \left( \bm{\Sigma^{Y}} \right) } \sqrt{ \frac{ \rho_{\min} \left( \bm{\Sigma^X_{\mathscr{N}_j \mathscr{N}_j}} \right) }{ \rho_{max} \left( \bm{\Sigma^{X}_{ \mathscr{N}_j \mathscr{N}_j }} \right) } } \sqrt{ \rho_{\min}  \left( \left[ \bm{R^{Y,X_k}}  \right]_{k \in \mathscr{N}_j}  \left( \bm{R^X_{\mathscr{N}_j \mathscr{N}_j}}  \right)^{-2} \left[ \bm{R^{X_k,Y}}  \right]_{k \in \mathscr{N}_j} \right) }. \numberthis \label{eq:low-bd-term1}
\end{align*}
Combining \eqref{eq:low-bd-Bk-norm}, \eqref{eq:low-bd-term1}, and the definition of $\Lambda(M,\phi)$, we arrive at \eqref{eq:upp-bd-log-Lambda}.

\subsection{Proposition~\ref{prop:lambda_max} and its Proof}
\label{sec:app4opt}

\begin{proposition}
\label{prop:lambda_max}
If the tuning parameter $\lambda_n$ satisfies
\begin{equation}\label{eq:lambda-cond-opt}
\lambda_n > \max_{k\in[p-1]} \frac{1}{n}\|(\bm{A^{X_k}})^{\top}\bm{A^Y}\|_F,
\end{equation}
where $\bm{A^{X_k}}$ and $\bm{A^Y}$ are defined in Section~\ref{sec:ADMM},
then the estimated support set $\hat{\mathscr{N}}_j$ is empty.
\end{proposition}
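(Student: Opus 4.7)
The plan is to verify that the all-zeros configuration $\bm{B_k} = \mathbf{0}$ for all $k \in [p-1]$ satisfies the Karush-Kuhn-Tucker conditions for the convex problem~\eqref{eq:loss-lasso} whenever $\lambda_n$ exceeds the stated threshold. Since the objective is convex (a sum of a quadratic and a norm), the KKT conditions are both necessary and sufficient for global optimality, so exhibiting a KKT-compatible subgradient at the origin will force $\hat{\bm{B}}_k = \mathbf{0}$ for every $k$, and hence $\hat{\mathscr{N}}_j = \emptyset$ by the definition in~\eqref{eq:nbg-set-est}.

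Concretely, I would first compute the subdifferential of the objective
\[
F(\bm{B_1},\dots,\bm{B_{p-1}}) \;=\; \frac{1}{2n}\Big\| \bm{A^Y} - \sum_{l=1}^{p-1} \bm{A^{X_l}} \bm{B_l} \Big\|_F^2 + \lambda_n \sum_{k=1}^{p-1} \|\bm{B_k}\|_F
\]
with respect to each block $\bm{B_k}$. The gradient of the quadratic term in $\bm{B_k}$ is $-\tfrac{1}{n}(\bm{A^{X_k}})^\top(\bm{A^Y} - \sum_l \bm{A^{X_l}}\bm{B_l})$, and the subdifferential of the Frobenius-norm penalty at the origin is the unit ball, namely $\partial \|\bm{B_k}\|_F\bigr|_{\bm{B_k}=\mathbf{0}} = \{\bm{G_k} \in \mathbb{R}^{M\times M} : \|\bm{G_k}\|_F \le 1\}$.

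Evaluating the stationarity condition $\mathbf{0} \in \partial_{\bm{B_k}} F$ at the candidate point $\bm{B_l} = \mathbf{0}$ for all $l$ yields the requirement
\[
\frac{1}{n}(\bm{A^{X_k}})^\top \bm{A^Y} \;=\; \lambda_n \bm{G_k}, \qquad \|\bm{G_k}\|_F \le 1,
\]
for each $k \in [p-1]$. This is solvable (by setting $\bm{G_k} = \tfrac{1}{n\lambda_n}(\bm{A^{X_k}})^\top \bm{A^Y}$) precisely when $\tfrac{1}{n}\|(\bm{A^{X_k}})^\top \bm{A^Y}\|_F \le \lambda_n$ for every $k$. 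Under the hypothesis~\eqref{eq:lambda-cond-opt}, the strict inequality ensures $\|\bm{G_k}\|_F < 1$ for all $k$, so the all-zero configuration is a valid KKT point; by convexity it is a global minimizer, giving $\hat{\bm{B}}_k = \mathbf{0}$ and therefore $\|\hat{\bm{B}}_k\|_F = 0 \le \epsilon_n$ for all $k$, whence $\hat{\mathscr{N}}_j = \emptyset$.

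There is essentially no obstacle here: the argument is the standard ``$\lambda_{\max}$'' computation for the group lasso, adapted to matrix-valued blocks with a Frobenius-norm penalty. The only point that warrants care is correctly identifying the subdifferential of $\|\cdot\|_F$ at the origin as the closed Frobenius unit ball (a direct consequence of $\|\cdot\|_F$ being its own dual norm), which makes the threshold $\tfrac{1}{n}\|(\bm{A^{X_k}})^\top \bm{A^Y}\|_F$ the natural one; no estimation-theoretic machinery is needed.
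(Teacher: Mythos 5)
Your argument is correct in substance and rests on the same KKT computation as the paper's proof, but it runs in the opposite direction: you verify that the all-zero configuration satisfies the optimality conditions, whereas the paper assumes a solution with some $\bm{\hat{B}_k}\neq \mathbf{0}$ and derives a contradiction from the stationarity system restricted to the nonzero blocks (via the matrix-norm inequality of Lemma~\ref{lemma:mat-prop2}). The one step you should not gloss over is the passage from ``$\mathbf{0}$ is a global minimizer'' to ``$\bm{\hat{B}_k}=\mathbf{0}$'': the quadratic term is not strictly convex in $(\bm{B_1},\dots,\bm{B_{p-1}})$ when $(p-1)M>n$, so the minimizer need not be unique a priori, and the estimator returned by the solver could in principle be a different minimizer with nonzero blocks. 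The strict inequality in~\eqref{eq:lambda-cond-opt} closes this gap: all minimizers share the same fitted value $\sum_{l}\bm{A^{X_l}}\bm{B_l}$ (the squared loss is strictly convex in the fitted value, and the penalty is convex), so that fitted value equals the one at the zero solution, namely $\mathbf{0}$; hence at any minimizer the loss gradient in block $k$ is $-\tfrac{1}{n}(\bm{A^{X_k}})^{\top}\bm{A^Y}$, whose Frobenius norm is strictly below $\lambda_n$, and stationarity then forces $\bm{B_k}=\mathbf{0}$ for every $k$ (a nonzero block would require the certificate to have norm exactly $\lambda_n$). With that one sentence added, your proof is complete, and it is arguably more direct than the paper's contradiction argument, which in any case establishes the same conclusion for an arbitrary solution.
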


\begin{proof}
This threshold of $\lambda_{n}$ can be derived using the KKT condition. 
We use the notation introduced in Section~\ref{sec:ADMM}.
The subgradient of the objective in \eqref{eq:loss-lasso} with respect to $\bm{B_k}$ can be written as
\begin{equation}\label{eq:subgradient}
-\frac{1}{n}(\bm{A^{X_k}})^\top \left(\bm{A^Y}-\sum^{p-1}_{l=1}\bm{A^{X_l}}\bm{B_l}\right) + \lambda_n \bm{\Upsilon_k},
\end{equation}
where $\bm{\Upsilon_k} = \|\bm{B_k}\|_F^{-1}\bm{B_k}$ if $\bm{B_k}\ne \mathbf{0}$, and $\bm{\Upsilon_k} \in \mathbb{R}^{M\times M}$, $\|\bm{\Upsilon_k}\|_F\leq 1$ otherwise. 
We assume that $\hat{\mathscr{N}}_j$ is non-empty. That is, there exists some $k$ such that $\bm{B_k}\neq \bm{0}$. By~\eqref{eq:subgradient}, we have
\begin{equation}\label{eq:KKTsubgr}
\lambda_n \frac{\bm{B_k}}{\Vert \bm{B_k} \Vert_F} = \frac{1}{n}(\bm{A^{X_k}})^\top \bm{A^Y} - \frac{1}{n}(\bm{A^{X_k}})^\top \sum_{l \in \hat{\mathscr{N}}_j} \bm{A^{X_l}} \bm{B_l} \quad \text{for all } k \in \hat{\mathscr{N}}_j.
\end{equation}
Let $\bm{A^X_{\hat{\mathscr{N}}_j}} = \left[ \bm{A^{X_k}} \right]_{k \in \hat{\mathscr{N}}_j} \in \mathbb{R}^{n \times \vert \hat{\mathscr{N}}_j\vert M}$ and $\bm{B_{\hat{\mathscr{N}}_j}}=[\bm{B_l}]_{l \in \hat{\mathscr{N}}_j} \in \mathbb{R}^{M \times \vert \hat{\mathscr{N}}_j\vert M }$
be the submatrix of $\bm{A^X}$ and $[\bm{B_1}^{\top},\ldots,\bm{B_p}^{\top}]^{\top}$ that correspond to $k \in \hat{\mathscr{N}}_j$.
By~\eqref{eq:KKTsubgr}, we have
\begin{equation}\label{eq:KKTsubgr-2}
\left( \frac{1}{n} \left( \bm{A^X_{\hat{\mathscr{N}}_j}} \right)^{\top}  \left( \bm{A^X_{\hat{\mathscr{N}}_j}} \right) + \lambda_n \text{diag} \left( \left\{ \frac{1}{ \Vert \bm{B_l} \Vert_F } \bm{I_M} \right\}_{l \in \hat{\mathscr{N}}_j} \right) \right)\bm{B_{\hat{\mathscr{N}}_j}} = \frac{1}{n} \left( \bm{A^X_{\hat{\mathscr{N}}_j}} \right)^{\top} \bm{A^Y}.
\end{equation}
Since $\frac{1}{n} \left( \bm{A^X_{\hat{\mathscr{N}}_j}} \right)^{\top}  \left( \bm{A^X_{\hat{\mathscr{N}}_j}} \right)$ is positive semi-definite and we have assumed that~\eqref{eq:lambda-cond-opt} holds, the left hand side of~\eqref{eq:KKTsubgr-2} then satisfies that
\begin{align*}
& \quad \left\Vert \left( \frac{1}{n} \left( \bm{A^X_{\hat{\mathscr{N}}_j}} \right)^{\top}  \left( \bm{A^X_{\hat{\mathscr{N}}_j}} \right) + \lambda_n \text{diag} \left( \left\{ \frac{1}{ \Vert \bm{B_l} \Vert_F }\bm{I_M} \right\}_{l \in \hat{\mathscr{N}}_j} \right) \right)\bm{B_{\hat{\mathscr{N}}_j}} \right\Vert_{\text{F}} \\
& \geq \left\Vert \lambda_n \text{diag} \left( \left\{ \frac{1}{ \Vert \bm{B_l} \Vert_F } \bm{I_M} \right\}_{l \in \hat{\mathscr{N}}_j} \right)\bm{B_{\hat{\mathscr{N}}_j}} \right\Vert_{\text{F}} \\
& \geq \lambda_n \vert \hat{\mathscr{N}}_j \vert \\
& > \vert \hat{\mathscr{N}}_j \vert \cdot \max_{k\in[p-1]} \frac{1}{n}\|(\bm{A^{X_k}})^{\top}\bm{A^Y}\|_F,
\end{align*}
where the first inequality follows from Lemma~\ref{lemma:mat-prop2}.
On the other hand, the right hand side of~\eqref{eq:KKTsubgr-2} satisfies that
\begin{equation*}
\left\Vert \frac{1}{n} \left( \bm{A^X_{\hat{\mathscr{N}}_j}} \right)^{\top} \bm{A^Y} \right\Vert_F \leq \vert \hat{\mathscr{N}}_j \vert \cdot \max_{k\in[p-1]} \frac{1}{n}\|(\bm{A^{X_k}})^{\top}\bm{A^Y}\|_F.
\end{equation*}
Combine the above two equations with~\eqref{eq:KKTsubgr-2}, we have
a contradiction. Thus, we conclude that $\hat{\mathscr{N}}_j$ must be empty.


\end{proof}

\subsection{Proof of Theorem~\ref{thm:nb-edge-recov-fix}}
\label{sec:proof-thm1}

In this section, we prove Theorem~\ref{thm:nb-edge-recov-fix}. We first introduce some useful notation.

Let $\tilde{\lambda}(n,p,M,\delta)$ be defined as 
\begin{multline}\label{eq:lambda_tilde_exact}
\tilde{\lambda} (n, p, M, \delta) =  2 \mathcal{C}_{n,\delta} \left(\frac{M  \sqrt{\Xi_1 (M)}}{\sqrt{n}}+ 2 \sqrt{\Xi_1 (M)} \sqrt{ \frac{ \log (4(p-1)/\delta) }{n} } \right) \\
+ 2 \omega (M) \left\{ 
7\sqrt{3} \frac{M \sqrt{ \log (6(p-1)/\delta) + 2 \log M}}{\sqrt{n}} \right. \\
\left. + 
 \frac{ 8 M c (\log (2n)) (\log (6(p-1)/\delta) + 2\log M) }{3n} \right\},
\end{multline}
where $c$ is some universal constant that does not depend on $n$, $p$ or $M$.

To simplify the notation, we omit the basis dimension, $M$, and let $\bm{a^Y_i}=\bm{a^Y_{i,M}}$, $\bm{a^{X_k}_i} = \bm{a^{X_k}_{i,M}}$, and $\bm{B^*_k}=\bm{B^{\ast}_{k, M}}$ for all $k \in [p-1]$.
Then by \eqref{eq:regression-target}, for all $i \in [n]$, we have
\begin{equation}\label{eq:score-equation}
\bm{a^{Y}_i} = \sum^{p-1}_{k=1} \bm{B^{\ast}_{k}} \bm{a^{X_k}_i} + \bm{u}_i,
\end{equation}
where $\bm{u_i}=\bm{w_i}+\bm{r_i}$, and $\bm{w_i}$, $\bm{r_i}$ are defined in Appendix~\ref{sec:deri2formulas}.
With this notation, we give a proof of Theorem~\ref{thm:nb-edge-recov-fix}.

\begin{proof}
The equation \eqref{eq:score-equation} can be rewritten as
\begin{equation}\label{eq:loss-lasso-vec}
\bm{a^Y_i} = \sum^{p-1}_{k=1} \left( (\bm{a^{X_k}_i})^{\top} \otimes \bm{I_M} \right) \text{vec}(\bm{B^{\ast}_k}) + \bm{u_i}.
\end{equation}
Let 
$\bm{Z^{X_k}_i}=\bm{a^{X_k}_i} \otimes \bm{I_M} \in \mathbb{R}^{M^2 \times M}$, $i \in [n]$, and let $\bm{\beta^{\ast}_k}=\text{vec}(\bm{B^{\ast}_k}) \in \mathbb{R}^{M^2}$, $k \in [p-1]$. 
Furthermore, let
\begin{align*}
\bm{a^{Y}}&=((\bm{a^Y_1})^{\top}, (\bm{a^Y_2})^{\top}, \dots, (\bm{a^Y_n})^{\top})^{\top} \in \mathbb{R}^{nM}, \\
\bm{\beta^{\ast}}&=((\bm{\beta^{\ast}_1})^{\top},(\bm{\beta^{\ast}_2})^{\top},\dots,(\bm{\beta^{\ast}_{p-1}})^{\top})^{\top} \in \mathbb{R}^{(p-1)M^2},
\end{align*}
and
\begin{equation}
\label{eq:u-def}
\bm{u}=(\bm{u_1}^{\top},\bm{u_2}^{\top},\dots,\bm{u_n}^{\top})^{\top} \in \mathbb{R}^{nM}.
\end{equation}
Let $\bm{Z_i}=((\bm{Z^{X_1}_i})^{\top}, (\bm{Z^{X_2}_i})^{\top}, \dots, (\bm{Z^{X_{p-1}}_i})^{\top})^{\top} \in \mathbb{R}^{(p-1) M^2 \times M}$ for $i \in [n]$, and
\begin{equation*}
\bm{Z} = 
\begin{bmatrix}
\bm{Z_1}^{\top}\\
\bm{Z_2}^{\top}\\
\vdots\\
\bm{Z_n}^{\top}
\end{bmatrix}
=
\begin{bmatrix}
(\bm{Z^{X_1}_1})^{\top} & (\bm{Z^{X_2}_1})^{\top} & \dots & (\bm{Z^{X_{p-1}}_1})^{\top}\\
(\bm{Z^{X_1}_2})^{\top} & (\bm{Z^{X_2}_2})^{\top} & \dots & (\bm{Z^{X_{p-1}}_2})^{\top}\\
\vdots\\
(\bm{Z^{X_1}_n})^{\top} & (\bm{Z^{X_2}_n})^{\top} & \dots & (\bm{Z^{X_{p-1}}_n})^{\top}
\end{bmatrix}
\in\mathbb{R}^{nM\times (p-1)M^2}.
\end{equation*}
Then we can further formulate \eqref{eq:loss-lasso-vec} as
\begin{equation}\label{eq:loss-lasso-comp}
\bm{a^Y} = \bm{Z} \bm{\beta^*} + \bm{u}.
\end{equation}

Recall that
\begin{equation*}
\bm{A^X} = 
\begin{bmatrix}
(\bm{a^X_1})^{\top} \\
(\bm{a^X_2})^{\top} \\
\vdots \\
(\bm{a^X_n})^{\top}
\end{bmatrix}
=
\begin{bmatrix}
(\bm{a^{X_1}_1})^{\top} & (\bm{a^{X_2}_1})^{\top} & \dots & (\bm{a^{X_{p-1}}_1})^{\top} \\
(\bm{a^{X_1}_2})^{\top} & (\bm{a^{X_2}_2})^{\top} & \dots & (\bm{a^{X_{p-1}}_2})^{\top} \\
\vdots \\
(\bm{a^{X_1}_n})^{\top} & (\bm{a^{X_2}_n})^{\top} & \dots & (\bm{a^{X_{p-1}}_n})^{\top}
\end{bmatrix}
\in \mathbb{R}^{n \times (p-1)M},
\end{equation*}
then we have $\bm{Z}=\bm{A^X} \otimes \bm{I_M}$.
We divide columns of $\bm{A^X}$ into $p-1$ groups with equal group size $M$, that is, $\bm{A^X}=(\bm{A^{X_1}},\bm{A^{X_2}},\dots,\bm{A^{X_{p-1}}})$, where $\bm{A^{X_k}} \in \mathbb{R}^{n \times M}$ for all $k \in [p-1]$. 
Similarly, we divide the columns of $\bm{Z}$ into $p-1$ groups with equal group size $M^2$, that is, $\bm{Z}=(\bm{Z_1},\bm{Z_2},\dots, \bm{Z_{p-1}})$, where $\bm{Z_k}\in\mathbb{R}^{nM\times M^2}$ for all $k\in [p-1]$. Then, we have $(\bm{Z}^{\top}\bm{u})_{k}=\bm{Z_k}^{\top}\bm{u}$. Besides, by definition of $\bm{Z}$, it is easy to see that $\bm{Z_k}=\bm{A^{X_k}}\otimes \bm{I_M}$.

Besides, we can rewrite \eqref{eq:loss-lasso} as
\begin{equation}
\bm{\hat{\beta}} = \argmin_{\bm{\beta}} \left\{ \mathcal{L}(\bm{\beta}) + \lambda_n \mathcal{R}(\bm{\beta}) \right\},
\end{equation}
where
\begin{align*}
\mathcal{L}(\bm{\beta}) & =  \frac{1}{2n} \Vert \bm{a^Y} - \bm{Z} \bm{\beta} \Vert^2_2, \numberthis \label{eq:Loss} \\
\mathcal{R}(\bm{\beta}) & =  \sum^{p-1}_{k=1} \Vert \bm{\beta_k} \Vert_2, \numberthis \label{eq:Penalty}
\end{align*}
and $\bm{\beta}=((\bm{\beta_1})^{\top},\dots,(\bm{\beta_{p-1}})^{\top})^{\top} \in \mathbb{R}^{(p-1)M^2}$, with $\bm{\beta_k} \in \mathbb{R}^{M^2}$ for $k \in [p-1]$.

Thus, the support set defined in \eqref{eq:neighborhoodDef} and its estimator defined in \eqref{eq:nbg-set-est} can be expressed as
\begin{equation*}
\mathscr{N}_j = \left\{ k \in [p-1]: \Vert \bm{\beta^*_k} \Vert_2 > 0 \right\},
\end{equation*}
and
\begin{equation*}
\hat{\mathscr{N}}_j = \left\{ k \in [p-1]: \Vert \bm{\hat{\beta}_k} \Vert_2 > \epsilon_n \right\}.
\end{equation*}

We define the model space $\mathcal{M}(\mathscr{N}_j)$ with which the penalty term $\mathcal{R}(\cdot)$ is decomposable. Let
\begin{equation*}
\mathcal{M}=\mathcal{M}(\mathscr{N}_j) = \{\bm{\beta}=((\bm{\beta_1})^{\top},\dots,(\bm{\beta_{p-1}})^{\top})^{\top} \in \mathbb{R}^{(p-1)M^2}: \bm{\beta_{k}}=0 \; \text{for all} \; k \notin \mathscr{N}_j \},
\end{equation*}
we then have its orthogonal complement as
\begin{equation*}
\mathcal{M}^{\bot}=\mathcal{M}(\mathscr{N}_j)^{\bot} = \{\bm{\beta}=((\bm{\beta_1})^{\top},\dots,(\bm{\beta_{p-1}})^{\top})^{\top} \in \mathbb{R}^{(p-1)M^2}: \bm{\beta_{k}}=0 \; \text{for all} \; k \in \mathscr{N}_j \}.
\end{equation*}
It is then easy to verify that $\mathcal{R}(\cdot)$ defined in \eqref{eq:Penalty} is decomposable with respect to\\ $(\mathcal{M}(\mathscr{N}_j),\mathcal{M}(\mathscr{N}_j)^{\bot})$ (Example 2, Section 2.2 of \citet{negahban2010unified}), that is
\begin{equation*}
\mathcal{R}(\bm{\theta}+\bm{\gamma})=\mathcal{R}(\bm{\theta})+\mathcal{R}(\bm{\gamma}) \quad \text{for all} \; \bm{\theta}\in\mathcal{M}(\mathscr{N}_j) \; \text{and} \; \bm{\gamma}\in\mathcal{M}(\mathscr{N}_j)^{\bot}.
\end{equation*}

When $\lambda_n = \tilde{\lambda}(n,p,M,\delta)$, where $\tilde{\lambda}(n,p,M,\delta)$ is defined in \eqref{eq:lambda_tilde_exact}, then by Lemma~\ref{lemma:LowBoundLambda}, we have
\begin{equation}\label{eq:CondForLambda}
\lambda_n \geq \frac{2}{n} \max_{k \in [p-1]} \Vert (\bm{Z}^{\top} \bm{u})_{k} \Vert_2
\end{equation}
hold with probability at least $1-2\delta$. This way, by Lemma 1 of \citet{negahban2010unified}, we have that $\hat{\bm{\beta}}-\bm{\beta^{\ast}}$ lies in a constrained space $\mathbb{C}(\mathscr{N}_j)$ defined by
\begin{equation}\label{eq:ConstrainedSpace}
\mathbb{C}(\mathscr{N}_j) = \{\bm{\theta} \in\mathbb{R}^{(p-1)M^2}: \Vert \bm{\theta_{\mathcal{M}^{\bot}}} \Vert_{1,2} \leq 3 \Vert \bm{\theta_{\mathcal{M}}} \Vert_{1,2} \}.
\end{equation}
with probability at least $1-2\delta$.
Note that $\mathbb{C}(\mathscr{N}_j)$ depends on support $\mathscr{N}_j$ through $\mathcal{M}$.

The error term of a first-order Taylor series expansion is
\begin{equation*}
\mathcal{L}(\bm{\beta^{\ast}}+\bm{\Delta\hat{\beta}})-\mathcal{L}(\bm{\beta^{\ast}})-\langle \nabla\mathcal{L}(\bm{\beta^{\ast}}),\bm{\Delta\hat{\beta}}\rangle = \frac{1}{2n}\Vert \bm{Z} \bm{\Delta\hat{\beta}} \Vert^2_2
\end{equation*}
where $\bm{\Delta\hat{\beta}}=\bm{\hat{\beta}}-\bm{\beta^{\ast}}$.
Then by Lemma~\ref{lemma:restricted-eigen-value} and $\bm{\Delta\hat{\beta}} \in \mathbb{C}(\mathscr{N}_j)$ with probability at least $1-2\delta$, we have
\begin{equation}\label{eq:RSCPre}
\mathbb{P} \left\{ \frac{1}{2n}\Vert \bm{Z} \bm{\Delta\hat{\beta}} \Vert^2_2 \geq \frac{\kappa}{4} \Vert \bm{\Delta\hat{\beta}} \Vert^2_2 \right\} \geq 1-3\delta.
\end{equation}
Thus, by Lemma~\ref{lemma:bd-B-err}, we then have
\begin{equation}\label{eq:BetaL2ErrBound}
\mathbb{P} \left\{ \sqrt{\sum^{p-1}_{k=1} \left\Vert \hat{\bm{B_k}} - \bm{B^*_k} \right\Vert^2_{\text{F}}} \leq \chi (n,p,M,\delta) + 12 \sqrt{s} \frac{\omega (M)}{ \sqrt{\kappa(M)}} \right\} \geq 1-3\delta,
\end{equation}
where $\kappa(M)$ is defined in \eqref{eq:kappa-def} and $\chi (n,p,M,\delta)$ is defined in \eqref{eq:chi-fun}.
Given the inequality in the left hand side parenthesis of \eqref{eq:BetaL2ErrBound} holds, we then have
\begin{equation*}
\left\Vert \bm{\hat{B}_k} - \bm{B^*_k} \right\Vert_{\text{F}} \leq \chi (n,p,M,\delta) + 12 \sqrt{s} \frac{\omega (M)}{\sqrt{\kappa(M)}} \quad \text{for all } k \in [p-1].
\end{equation*}

Next, we prove that $\mathbb{P}\{\hat{\mathscr{N}}_j=\mathscr{N}_j\}\geq 1-3\delta$.
To show that, we only need to prove that the above inequality implies that $\hat{\mathscr{N}}_j=\mathscr{N}_j$. Under the assumption that the above inequality holds, note that for any $k \notin \mathscr{N}_j$, we have $\bm{B^*_k}=\bm{0}$ for all $M$, thus, we have
\begin{equation*}
\left\Vert \bm{\hat{B}_k} \right\Vert_{\text{F}} = \left\Vert \bm{\hat{B}_k} - \bm{B^*_k} \right\Vert_{\text{F}} \leq \chi (n,p,M,\delta) + 12 \sqrt{s} \frac{\omega (M)}{\sqrt{\kappa(M)}}.
\end{equation*}
On the other hand, for $k \in \mathscr{N}_j$, we have
\begin{align*}
\left\Vert \bm{\hat{B}_k} \right\Vert_{\text{F}} &= \left\Vert \bm{B^*_k} + \bm{\hat{B}_k} - \bm{B^*_k} \right\Vert_{\text{F}} \geq \left\Vert \bm{B^*_k} \right\Vert_{\text{F}} - \left\Vert \bm{\hat{B}_k} - \bm{B^*_k} \right\Vert_{\text{F}} \\
& \geq \tau (M) - \chi (n,p,M,\delta) - 12 \sqrt{s} \frac{\omega (M)}{\sqrt{\kappa(M)}}.
\end{align*}
Because $M \geq M^*$, and by the definition of $M^*$ in \eqref{eq:Mstar} and the definition of $\nu (M)$ in \eqref{eq:sig-strength}, when
\begin{equation}\label{eq:req-2}
\chi (n,p,M,\delta) \leq \frac{\nu (M)}{3},
\end{equation}
we have
\begin{align*}
\tau (M) - \chi (n,p,M,\delta) - 12 \sqrt{s} \frac{\omega (M)}{\sqrt{\kappa(M)}} > \chi (n,p,M,\delta) + 12 \sqrt{s} \frac{\omega (M)}{\sqrt{\kappa(M)}}.
\end{align*}
Recall that $\epsilon_n = \chi (n,p,M,\delta) + 12 \sqrt{s / \kappa(M)} \omega (M)$, we then have
\begin{equation*}
\max_{k \notin \mathscr{N}_j} \left\Vert \bm{\hat{B}_k} \right\Vert_{\text{F}} \leq \epsilon_n < \min_{k \in \mathscr{N}_j} \left\Vert \bm{\hat{B}_k} \right\Vert_{\text{F}}.
\end{equation*}
which implies that $\hat{\mathscr{N}}_j=\mathscr{N}_j$ by \eqref{eq:nbg-set-est}. 

Finally, we only need to show an asymptotic lower bound for $n$ such that $\Gamma (n,p,M,\delta) \leq \kappa(M)/(32 M^2 s)$ and $\chi (n,p,M,\delta) \leq \nu(M)/3$ are both satisfied. 

First, to satisfy $\Gamma (n,p,M,\delta) \leq \kappa(M)/(32 M^2 s)$, where $\Gamma (n,p,M,\delta)$ is defined in \eqref{eq:Gamma_n}, by dropping the $\log n$ term, we have
\begin{equation*}
n \geq \tilde{O} \left( \frac{M^4 s^2 \log (p^2 M^2/\delta)}{\kappa^2(M)} \right).
\end{equation*}
Note that $\Xi_i(M),i=1,2,3$ are all uniformly bounded for all $M$. 
Next, to satisfy \[\chi (n,p,M,\delta) \leq \nu(M)/3,\] where $\chi (n,p,M,\delta)$ is defined in \eqref{eq:chi-fun}, we need
\begin{equation*}
n \geq \tilde{O} \left( \frac{s \cdot \max \left\{ M^2, \log (p/\delta), M^2 \omega^2 (M) \log (M^2 p /\delta) \right\} }{\kappa (M) \nu^2 (M)} \right).
\end{equation*}
Combine the above results and note that decreasing $3\delta$ to $\delta$ doesn't affect the asymptotic order of $n$, we the have the final result.

\end{proof}

\subsection{Proof of Theorem~\ref{thm:nb-edge-recov-data-driven}}
\label{sec:proof-thm2}

In this section, we prove Theorem~\ref{thm:nb-edge-recov-data-driven}.
In addition to the notations introduced in Appendix~\ref{sec:proof-thm1}, we define more notations that will be used in this section. Let
\begin{equation}
\label{eq:useful-quantities}
\begin{aligned}
\mathcal{Q}^1_{n,\delta} &\coloneqq  1 + 8 \left(  \frac{\log (2/\delta)}{n} + \sqrt{ \frac{\log (2/\delta)}{n} } \right), \\
\mathcal{Q}^2_{n,p,\delta} &\coloneqq 1 + 8 \left(  \frac{\log (2 p/\delta)}{n} + \sqrt{ \frac{\log (2p/\delta)}{n} } \right), \\
\mathcal{Q}^3_{n,p,M,\delta} &\coloneqq 1 + 8 \left(  \frac{\log (2 (p-1) M/\delta)}{n} + \sqrt{ \frac{\log (2 (p-1) M/\delta)}{n} } \right).
\end{aligned}
\end{equation}
Also, let
\begin{multline}
\label{eq:def-Gamma-v}
\Gamma_v =   \frac{ 12 \sigma_{\text{max},0} \vert \mathscr{N}_j \vert d^2_{j0} \log (2p/\delta)}{n} \left( \sum_{k \in \mathscr{N}_j} \Vert \beta_{k} (t,t^{\prime}) \Vert^2_{\text{HS}} \right)   \mathcal{Q}^1_{n,\delta} \\
+ 3 \Phi(M) \sigma_{\text{max},0} \vert \mathscr{N}_j \vert \mathcal{Q}^1_{n,\delta} \left( d^2_{j0} - d^2_{js}(M) \right) \\
+ 3 \sigma_{jr} \mathcal{Q}^1_{n,\delta} \sqrt{d_{js}(M)}.
\end{multline}
We then state the exact form of $\check{\lambda}(n,p,M,\delta)$ as below:
\begin{equation}\label{eq:lambda_check}
\begin{aligned}
\check{\lambda} (n,p,M,\delta)& =  \tilde{\lambda} (n,p,M,\delta) + 2 \Gamma^{1/2}_v \sqrt{ \max_{k \in [p-1]} \tr \left(\bm{\Sigma^X_{kk}}\right)} \sqrt{\mathcal{Q}^2_{n,p-1,\delta}} \\
& \quad + \sqrt{\sigma_{\text{max},0}} \sqrt{ \mathcal{Q}^2_{n,p,\delta} } \sqrt{\frac{(1/c_1)\log(c_2/\delta)}{n}} d_{js}(M) \cdot \left( \sqrt{\Xi_4 (M)} \sqrt{ \mathcal{Q}^2_{n,p,\delta} } + \Gamma^{1/2}_v \right).
\end{aligned}
\end{equation}
We also define that
\begin{equation}\label{eq:Gamma_check}
\begin{aligned}
\check{\Gamma} (n,p,M,\delta)
& =  \Gamma (n,p,M,\delta) +  d^2_{j,\text{max}} \sigma_{\text{max},0} \mathcal{Q}^2_{n,p,\delta} \frac{ (1/c_1) \log \left( c_2 / \delta \right)}{n} \\
&\quad +  2 \sqrt{\Xi_2 (M) } d_{j,\text{max}} \sqrt{\sigma_{\text{max},0}} \sqrt{\mathcal{Q}^2_{n,p,\delta}} \sqrt{\frac{ (1/c_1) \log \left( c_2 / \delta \right)}{n}} \cdot \sqrt{\mathcal{Q}^3_{n,p,M,\delta}}.
\end{aligned}
\end{equation}

As in the proof of Theorem~\ref{thm:nb-edge-recov-fix}, we omit the basis dimension, $M$, and let $\bm{\hat{a}^Y_i}=\bm{\hat{a}^Y_{i,M}}$, $\bm{\hat{a}^{X_k}_i} = \bm{\hat{a}^{X_k}_{i,M}}$, and $\bm{B^*_k}=\bm{B^{\ast}_{k, M}}$ for all $k \in [p-1]$.
Then by \eqref{eq:regression-target-hat}, for all $i \in [n]$, we have
\begin{equation}\label{eq:score-equation-approx}
\bm{\hat{a}^{Y}_i} = \sum^{p-1}_{k=1} \bm{B^{\ast}_{k}} \bm{\hat{a}^{X_k}_i} + \bm{\tilde{u}}_i,
\end{equation}
where $\bm{\tilde{u}_i}=\bm{w_i}+\bm{r_i}+\bm{v_i}$, and $\bm{w_i}$, $\bm{r_i}$, $\bm{v_i}$ are defined in Appendix~\ref{sec:deri2formulas}.
We now give the proof of Theorem~\ref{thm:nb-edge-recov-data-driven}.

\begin{proof}

Let
\begin{equation*}
\bm{\hat{A}^X} = 
\begin{bmatrix}
(\bm{\hat{a}^{X_1}_1})^{\top} &  (\bm{\hat{a}^{X_2}_1})^{\top} & \cdots & (\bm{\hat{a}^{X_{p-1}}_1})^{\top} \\
(\bm{\hat{a}^{X_1}_2})^{\top} &  (\bm{\hat{a}^{X_2}_2})^{\top} & \cdots & (\bm{\hat{a}^{X_{p-1}}_2})^{\top} \\
\vdots & & \vdots \\
(\bm{\hat{a}^{X_1}_n})^{\top} &  (\bm{\hat{a}^{X_2}_n})^{\top} & \cdots & (\bm{\hat{a}^{X_{p-1}}_n})^{\top} \\
\end{bmatrix}
\in \mathbb{R}^{n \times (p-1) M}.
\end{equation*}
We divide columns of $\bm{\hat{A}^X}$ into $p-1$ groups with equal group size $M$, that is, 
$$\bm{\hat{A}^X}=[\bm{\hat{A}^{X_1}},\bm{\hat{A}^{X_2}},\dots,\bm{\hat{A}^{X_{p-1}}}],$$
where $\bm{\hat{A}^{X_k}} \in \mathbb{R}^{n \times M}$ for all $k \in [p-1]$. 
Let $\bm{\hat{Z}}=\bm{\hat{A}^X} \otimes \bm{I_M}$.
Similarly, we divide the columns of $\bm{\hat{Z}}$ into $p-1$ groups with equal group size $M^2$, that is, $\bm{\hat{Z}^X}=[\bm{\hat{Z}^{X_1}},\bm{\hat{Z}^{X_2}},\dots,\bm{\hat{Z}^{X_{p-1}}}]$, where $\bm{\hat{Z}^{X_k}}\in\mathbb{R}^{nM\times M^2}$ for all $k\in [p-1]$. Then, we have $(\bm{\hat{Z}}^{\top}\bm{u})_{k}=\bm{\hat{Z}_k}^{\top}\bm{u}$.
By definition of $\bm{\hat{Z}}$, it is easy to see that $\bm{\hat{Z}_k}=\bm{\hat{A}^{X_k}}\otimes \bm{I_M}$.
In addition, let
$\bm{\hat{a}^{Y}}=((\bm{\hat{a}^Y_1})^{\top}, \cdots, (\bm{\hat{a}^Y_n})^{\top})^{\top} \in \mathbb{R}^{nM}$.
Furthermore, let
$\bm{\tilde{\Sigma}^X_n}=\frac{1}{n}(\bm{\hat{A}^X})^{\top} \bm{\hat{A}^X}$.
Thus, we can rewrite \eqref{eq:loss-lasso} as
\begin{equation*}
\bm{\hat{\beta}} = \argmin_{\bm{\beta}} \left\{ \hat{\mathcal{L}}(\bm{\beta}) + \lambda_n \mathcal{R}(\bm{\beta}) \right\},
\end{equation*}
where
\begin{align*}
\hat{\mathcal{L}}(\bm{\beta}) & =  \frac{1}{2n} \Vert \bm{\hat{a}^Y} - \bm{\hat{Z}} \bm{\beta} \Vert^2_2, \\
\mathcal{R}(\bm{\beta}) & =  \sum^{p-1}_{k=1} \Vert \bm{\beta_k} \Vert_2,
\end{align*}
and $\bm{\beta}=((\bm{\beta_1})^{\top},\dots,(\bm{\beta_{p-1}})^{\top})^{\top} \in \mathbb{R}^{(p-1)M^2}$, with $\bm{\beta_k} \in \mathbb{R}^{M^2}$ for $k \in [p-1]$.

We can then follow the similar proof of Theorem~\ref{thm:nb-edge-recov-fix} to prove Theorem~\ref{thm:nb-edge-recov-data-driven}. The only two modifications needed are new upper bounds for 
\[
\frac{2}{n} \max_{k \in [p-1]} \Vert (\bm{\hat{Z}}^{\top} \bm{\tilde{u}})_{k} \Vert_2 \, \text{ and } \, \vertiii{\bm{\tilde{\Sigma}^X_n}-\bm{\Sigma^{X}}}_{\infty}.
\]
In fact, when we have
\begin{equation}
\label{eq:thm2-proof-helper}
\lambda_n = \check{\lambda} (n,p,M,\delta) \geq \frac{2}{n} \max_{k \in [p-1]}  \Vert (\bm{\hat{Z}}^{\top} \bm{\tilde{u}} )_{k} \Vert_2 \, \text{ and } \, \vertiii{\bm{\tilde{\Sigma}^X_n}-\bm{\Sigma^{X}}}_{\infty} \leq  \check{\Gamma} (n,p,M,\delta)
\end{equation}
hold, where $\check{\chi} (n,p,M,\delta)$ is defined in~\eqref{eq:chi-fun-check}, then by the similar argument in the proof of Theorem~\ref{thm:nb-edge-recov-fix}, we can show that
\begin{equation*}
\sqrt{\sum^{p-1}_{k=1} \left\Vert \hat{\bm{B_k}} - \bm{B^*_k} \right\Vert^2_{\text{F}}} \leq \check{\chi} (n,p,M,\delta) + 12 \sqrt{s} \frac{\omega (M)}{ \sqrt{\kappa(M)}},
\end{equation*}
which further implies that $\hat{\mathscr{N}}_j=\mathscr{N}_j$. Thus, to show that $\hat{\mathscr{N}}_j=\mathscr{N}_j$ holds with high probability, we only need to prove that \eqref{eq:thm2-proof-helper} holds with high probability.


We define the following events.
\begin{align*}
\mathcal{A}_1 & =  \left\{ \frac{1}{n} \sum^n_{i=1} \Vert g^{(i)}_j \Vert^2 \leq  \sigma_{\text{max},0} \mathcal{Q}^2_{n,p,\delta} \, \text{ for all } j \in [p]  \right\}, \\
\mathcal{A}_2 & =  \left\{ \frac{1}{n} \sum^n_{i=1} \Vert e^{(i)} \Vert^2 \leq  \sigma_{jr} \mathcal{Q}^1_{n,p,\delta} \right\}, \\
\mathcal{A}_3 & =  \left\{ \frac{1}{n} \sum^n_{i=1} \Vert \bm{r_i} + \bm{w_i} \Vert^2_2 \leq  \Xi_4 (M) \mathcal{Q}^1_{n,p,\delta} \right\}, \\
\mathcal{A}_4 & =  \left\{ \frac{1}{n} \sum^n_{i=1} \Vert \bm{a^{X_k}}_i \Vert^2_2 \leq \tr \left(\bm{\Sigma^X_{kk}}\right) \mathcal{Q}^2_{n,p-1,\delta} \, \text{ for all } k \in [p-1] \right\}, \\
\mathcal{A}_5 & =  \left\{ \frac{2}{n} \max_{k \in [p-1]} \Vert (\bm{Z}^{\top} (\bm{w}+\bm{r}) )_{k} \Vert_2 \leq \tilde{\lambda} (n,p,M,\delta) \right\}, \\
\mathcal{A}_6 & =  \left\{ \vertiii{\bm{\hat{\Sigma}^X_n}-\bm{\Sigma^{X}}}_{\infty} \leq \Gamma (n,p,M,\delta)  \right\}, \\
\mathcal{A}_7 & =  \left\{ \frac{1}{n} \sum^n_{i=1} \Vert a^{X_k}_{i m} \Vert^2_2 \leq \Xi_2 (M) \mathcal{Q}^3_{n,p,\delta} \, \text{ for all } m \in [M] \text{ and } k \in [p-1] \right\},
\end{align*}
where $\tilde{\lambda} (n,p,M,\delta)$ is defined in \eqref{eq:lambda_tilde}. Then we claim that under Assumption~\ref{assump:basis-est-accu}, we have
\begin{equation}\label{eq:claim2proof}
\cap^{7}_{i=1} \mathcal{A}_i \Longrightarrow \frac{2}{n} \max_{k \in [p-1]} \Vert (\bm{\hat{Z}}^{\top} \bm{\tilde{u}} )_{k} \Vert_2 \leq \check{\lambda} (n,p,M,\delta) \, \text{and} \, \vertiii{\bm{\tilde{\Sigma}^X_n}-\bm{\Sigma^{X}}}_{\infty} \leq  \check{\Gamma} (n,p,M,\delta).
\end{equation}

We now prove the above claim. We first prove that $\cap^{7}_{i=1} \mathcal{A}_i$ implies that 
\[
\frac{2}{n} \max_{k \in [p-1]} \Vert (\bm{\hat{Z}}^{\top} \bm{\tilde{u}} )_{k} \Vert_2 \leq \check{\lambda} (n,p,M,\delta).
\] 
Note that for all $k \in [p-1]$, we have
\begin{align*}
\Vert (\bm{\hat{Z}}^{\top} \bm{\tilde{u}} )_{k} \Vert_2 & = \Vert \bm{\hat{Z}_k}^{\top} \bm{\tilde{u}} \Vert_2 \leq \Vert (\bm{\hat{Z}_k} - \bm{Z}_k)^{\top}\bm{\tilde{u}} \Vert_2 + \Vert \bm{Z}_k^{\top}\bm{\tilde{u}} \Vert_2 \\
& = \Vert (\bm{\hat{Z}_k} - \bm{Z}_k)^{\top}\bm{\tilde{u}} \Vert_2 + \Vert \bm{Z_k}^{\top} (\bm{w}+\bm{r}+\bm{v}) \Vert_2 \\
& \leq \Vert (\bm{\hat{Z}_k} - \bm{Z}_k)^{\top}\bm{\tilde{u}} \Vert_2 + \Vert \bm{Z_k}^{\top} (\bm{w}+\bm{r}) \Vert_2 + \Vert \bm{Z_k}^{\top} \bm{v} \Vert_2,
\end{align*}
where $\bm{v} = (v_{im})_{\{m \in [M]\}}$ with $v_{im}$ defined in~\eqref{eq:def-v}.
The above inequality implies that
\begin{multline}\label{eq:Zu-decompose}
\frac{2}{n} \max_{k \in [p-1]} \Vert (\bm{Z}^{\top} \bm{u} )_{k} \Vert_2 \leq \frac{2}{n} \max_{k \in [p-1]} \Vert (\bm{\hat{Z}_k} - \bm{Z}_k)^{\top}\bm{\tilde{u}} \Vert_2 \\
+  \frac{2}{n} \max_{k \in [p-1]} \Vert \bm{Z_k}^{\top} (\bm{w}+\bm{r}) \Vert_2 + \frac{2}{n} \max_{k \in [p-1]} \Vert \bm{Z_k}^{\top} \bm{v} \Vert_2
\end{multline}
By $\mathcal{A}_5$, we have the second term to be bounded by $\tilde{\lambda}(n,p,M,\delta)$. To bound the third term, note that
\begin{equation}
\label{eq:proof-thm2-helper1}
\left\Vert \frac{1}{n} \bm{Z_k}^{\top} \bm{v} \right\Vert_2 = \left\Vert \frac{1}{n} (\bm{A^{X_k}} \otimes \bm{I_M})^{\top} \bm{v} \right\Vert_2 = \left\Vert \frac{1}{n} \sum^n_{i=1} (\bm{a^{X_k}_i} \otimes \bm{I_M}) \bm{v_i} \right\Vert_{2} = \left\Vert \frac{1}{n} \sum^n_{i=1} \bm{a^{X_k}_i} \bm{v_i}^{\top} \right\Vert_{\text{F}},
\end{equation}
and by Lemma~\ref{lemma:Frob-bd}, we further have
\begin{equation}
\label{eq:proof-thm2-helper2}
\left\Vert \frac{1}{n} \bm{Z_k}^{\top} \bm{v} \right\Vert_2 \leq \sqrt{\frac{1}{n} \sum^n_{i=1} \left\Vert \bm{a^{X_k}_i} \right\Vert^2_2 } \sqrt{\frac{1}{n} \sum^n_{i=1} \left\Vert \bm{v_i} \right\Vert^2_2 }.
\end{equation}
By Lemma~\ref{lemma:v-bd} with
\begin{align*}
\Gamma_g(j) & = \sigma_{\text{max},0} \mathcal{Q}^2_{n,p,\delta} \, \text{ for all } j \in [p] \\
\Gamma_e & = \sigma_{jr} \mathcal{Q}^1_{n,\delta} \\
\Gamma_{\phi}(m) & = d_{jm} \sqrt{\frac{ (1/c_1) \log \left( c_2 / \delta \right)}{n}}
\end{align*}
because of $\mathcal{A}_1$, $\mathcal{A}_2$ and Assumption~\ref{assump:basis-est-accu}, we have
\begin{equation}
\label{eq:def-Gamma_v}
\frac{1}{n} \sum^n_{i=1} \Vert \bm{v_i} \Vert^2 \leq \Gamma_v,
\end{equation}
where $\Gamma_v$ is defined by~\eqref{eq:def-Gamma-v}.
And combine with $\mathcal{A}_4$, we have
\begin{align*}
& \left\Vert \frac{2}{n} \bm{Z_k}^{\top} \bm{v} \right\Vert_2 \leq 2 \sqrt{\frac{1}{n} \sum^n_{i=1} \left\Vert \bm{a^{X_k}_i} \right\Vert^2_2 } \sqrt{\frac{1}{n} \sum^n_{i=1} \left\Vert \bm{v_i} \right\Vert^2_2 } \leq  2 \sqrt{ \tr \left(\bm{\Sigma^X_{kk}}\right)} \sqrt{\mathcal{Q}^2_{n,p-1,\delta}}\cdot \Gamma^{1/2}_v.
\end{align*}
Finally, to bound the first term in~\eqref{eq:Zu-decompose}, by similar arguments as~\eqref{eq:proof-thm2-helper1} and~\eqref{eq:proof-thm2-helper2}, we have
\begin{equation}
\label{eq:proof-thm2-helper3}
\left\Vert \frac{1}{n} (\bm{\hat{Z}_k} -\bm{Z_k})^{\top} \bm{\tilde{u}} \right\Vert_2 \leq \sqrt{\frac{1}{n} \sum^n_{i=1} \left\Vert \bm{\hat{a}^{X_k}_i} - \bm{a^{X_k}_i} \right\Vert^2_2 } \sqrt{\frac{1}{n} \sum^n_{i=1} \left\Vert \bm{\tilde{u}_i} \right\Vert^2_2 }.
\end{equation}
Since we have
\begin{align*}
\frac{1}{n} \sum^n_{i=1} \left\Vert \bm{\hat{a}^{X_k}_i} - \bm{a^{X_k}_i} \right\Vert^2_2 & \leq \left( \frac{1}{n} \sum^n_{i=1} \Vert g_{ik} \Vert^2 \right) \sum^M_{m=1} \Vert \hat{\phi}_m - \phi_m \Vert^2 \\
& \leq \sigma_{\text{max},0} \mathcal{Q}^2_{n,p,\delta} \frac{(1/c_1)\log(c_2/\delta)}{n} d^2_{js}(M)
\end{align*}
by $\mathcal{A}_1$ and Assumption~\ref{assump:basis-est-accu},
and
\begin{align*}
\sqrt{\frac{1}{n} \sum^n_{i=1} \left\Vert \bm{\tilde{u}_i} \right\Vert^2_2} & \leq \sqrt{\frac{1}{n} \sum^n_{i=1} \left\Vert \bm{r_i} + \bm{w_i} \right\Vert^2_2} + \sqrt{\frac{1}{n} \sum^n_{i=1} \left\Vert \bm{v_i} \right\Vert^2_2} \\
& \leq \sqrt{\Xi_4 (M)} \sqrt{ \mathcal{Q}^1_{n,\delta} } + \Gamma^{1/2}_v
\end{align*}
by $\mathcal{A}_3$ and~\eqref{eq:def-Gamma_v}, then by~\eqref{eq:proof-thm2-helper3}, we have
\begin{multline*}
\left\Vert \frac{1}{n} (\bm{\hat{Z}_k} -\bm{Z_k})^{\top} \bm{\tilde{u}} \right\Vert_2 \leq \sqrt{\sigma_{\text{max},0}} \sqrt{ \mathcal{Q}^2_{n,p,\delta} } \sqrt{\frac{(1/c_1)\log(c_2/\delta)}{n}} \cdot \\ d_{js}(M) \left( \sqrt{\Xi_4 (M)} \sqrt{ \mathcal{Q}^1_{n,\delta} } + \Gamma^{1/2}_v \right).
\end{multline*}

Thus, combine the bounds of three terms in~\eqref{eq:Zu-decompose} and by the definition of $\check{\lambda}(n,p,M,\delta)$ in~\eqref{eq:lambda_check}, we have
\begin{equation*}
\frac{2}{n} \max_{k \in [p-1]} \left\Vert \bm{Z_k}^{\top} \bm{v} \right\Vert_2 \leq \check{\lambda}(n,p,M,\delta).
\end{equation*}

We then prove that $\cap^{7}_{i=1} \mathcal{A}_i$ implies that $\vertiii{\bm{\tilde{\Sigma}^X_n}-\bm{\Sigma^{X}}}_{\infty} \leq \check{\Gamma} (n,p,M,\delta)$. Note that
$$\vertiii{\bm{\tilde{\Sigma}^X_n}-\bm{\Sigma^{X}}}_{\infty} \leq \vertiii{\bm{ \tilde{\Sigma}^{X}} - \bm{\hat{\Sigma}^X_n} }_{\infty} + \vertiii{ \bm{\hat{\Sigma}^X_n} -\bm{\Sigma^{X}} }_{\infty},
$$
and by $\mathcal{A}_6$, we thus only need to prove that $\vertiii{\bm{ \tilde{\Sigma}^{X}} - \bm{\hat{\Sigma}^X_n} }_{\infty} \leq \check{\Gamma}(n,p,M,\delta)-\Gamma(n,p,M,\delta)$. Note that
\begin{align*}
\vertiii{\bm{ \tilde{\Sigma}^{X}} - \bm{\hat{\Sigma}^X_n} }_{\infty} & =  \max_{1 \leq k, k^{\prime} \leq p-1} \vertiii{ \frac{1}{n} (\bm{ \hat{A}^{X_k} })^{\top} \bm{ \hat{A}^{X_{k^{\prime}}} } - \frac{1}{n} \bm{ (A^{X_k} })^{\top} \bm{ A^{X_{k^{\prime}}} } }_{\infty} \\
& =  \max_{1 \leq k, k^{\prime} \leq p-1}  \max_{1 \leq m, m^{\prime} \leq M} \left\vert \frac{1}{n} \sum^n_{i=1} \hat{a}^{X_k}_{im} \hat{a}^{X_{k^{\prime}}}_{im^{\prime}} - a^{X_{k}}_{im} a^{X_{k^{\prime}}}_{im^{\prime}} \right\vert, \numberthis \label{eq:intermi-step-1}
\end{align*}
and
\begin{align*}
& \left\vert \frac{1}{n} \sum^n_{i=1} \hat{a}^{X_k}_{im} \hat{a}^{X_{k^{\prime}}}_{im^{\prime}} - a^{X_{k}}_{im} a^{X_{k^{\prime}}}_{im^{\prime}} \right\vert \\
& =  \left\vert \frac{1}{n} \sum^n_{i=1} \left( \hat{a}^{X_k}_{im} -  a^{X_{k}}_{im} \right) \left( \hat{a}^{X_{k^{\prime}}}_{im^{\prime}} - a^{X_{k^{\prime}}}_{im^{\prime}}\right) + \frac{1}{n} \sum^n_{i=1} a^{X_{k}}_{im} \left( \hat{a}^{X_{k^{\prime}}}_{im^{\prime}} - a^{X_{k^{\prime}}}_{im^{\prime}}\right) \right. \\
& \quad \left. + \frac{1}{n} \sum^n_{i=1} a^{X_{k^{\prime}}}_{im^{\prime}} \left( \hat{a}^{X_k}_{im} -  a^{X_{k}}_{im} \right) \right\vert \\
& \leq  \left\vert \frac{1}{n} \sum^n_{i=1} \left( \hat{a}^{X_k}_{im} -  a^{X_{k}}_{im} \right) \left( \hat{a}^{X_{k^{\prime}}}_{im^{\prime}} - a^{X_{k^{\prime}}}_{im^{\prime}} \right) \right\vert + \left\vert \frac{1}{n} \sum^n_{i=1} a^{X_{k}}_{im} \left( \hat{a}^{X_{k^{\prime}}}_{im^{\prime}} - a^{X_{k^{\prime}}}_{im^{\prime}}\right) \right\vert \\
& \quad + \left\vert \frac{1}{n} \sum^n_{i=1} a^{X_{k^{\prime}}}_{im^{\prime}} \left( \hat{a}^{X_k}_{im} -  a^{X_{k}}_{im} \right) \right\vert \\
& \leq   \sqrt{\frac{1}{n} \sum^n_{i=1} \left( \hat{a}^{X_k}_{im} -  a^{X_{k}}_{im} \right)^2} \sqrt{\frac{1}{n} \sum^n_{i=1} \left( \hat{a}^{X_{k^{\prime}}}_{im^{\prime}} - a^{X_{k^{\prime}}}_{im^{\prime}} \right)^2} \\
& \quad + \sqrt{\frac{1}{n} \sum^n_{i=1} \left( \hat{a}^{X_k}_{im} \right)^2} \sqrt{\frac{1}{n} \sum^n_{i=1} \left( \hat{a}^{X_{k^{\prime}}}_{im^{\prime}} - a^{X_{k^{\prime}}}_{im^{\prime}} \right)^2} + \sqrt{\frac{1}{n} \sum^n_{i=1} \left( a^{X_{k^{\prime}}}_{im^{\prime}} \right)^2} \sqrt{\frac{1}{n} \sum^n_{i=1} \left( \hat{a}^{X_k}_{im} -  a^{X_{k}}_{im} \right)^2}. \numberthis \label{eq:intermi-step-2}
\end{align*}
By Lemma~\ref{lemma:a-diff-bd} with
\begin{align*}
\Gamma_g(j) & = \sigma_{\text{max},0} \mathcal{Q}^2_{n,p,\delta} \, \text{ for all } j \in [p], \\
\Gamma_{\phi}(m) & = d_{jm} \sqrt{\frac{ (1/c_1) \log \left( c_2 / \delta \right)}{n}},
\end{align*}
we have
\begin{equation*}
\frac{1}{n} \sum^n_{i=1} \left( \hat{a}^{X_k}_{im} - a^{X_k}_{im} \right)^2 \leq  d^2_{j,\text{max}} \sigma_{\text{max},0} \mathcal{Q}^2_{n,p,\delta} \frac{ (1/c_1) \log \left( c_2 / \delta \right)}{n}
\end{equation*}
for all $m \in [M]$ and $k \in [p-1]$. And by $\mathcal{A}_7$, we have
\begin{equation*}
\frac{1}{n} \sum^n_{i=1} \Vert a^{X_k}_{i m} \Vert^2_2 \leq \Xi_2 (M) \mathcal{Q}^3_{n,p,\delta}
\end{equation*}
for all $m \in [M]$ and $k \in [p-1]$. Combine with \eqref{eq:intermi-step-1} and \eqref{eq:intermi-step-2}, we have
\begin{align*}
& \vertiii{\bm{ \tilde{\Sigma}^{X}} - \bm{\hat{\Sigma}^X_n} }_{\infty} \\
\leq \, &  d^2_{j,\text{max}} \sigma_{\text{max},0} \mathcal{Q}^2_{n,p,\delta} \frac{ (1/c_1) \log \left( c_2 / \delta \right)}{n} \\
+ & 2 \sqrt{\Xi_2 (M) } d_{j,\text{max}} \sqrt{\sigma_{\text{max},0}} \sqrt{\mathcal{Q}^2_{n,p,\delta}} \sqrt{\frac{ (1/c_1) \log \left( c_2 / \delta \right)}{n}} \cdot \sqrt{\mathcal{Q}^3_{n,p,M\delta}} \\
& =  \check{\Gamma}(n,p,M,\delta)-\Gamma(n,p,M,\delta) \leq \check{\Gamma}(n,p,M,\delta).
\end{align*}

Therefore, by \eqref{eq:claim2proof} and Lemma~\ref{lemma:LowBoundLambda}, Lemma~\ref{lemma:EleAbsCovMaxBound}, Lemma~\ref{lemma:g2moment} and Lemma~\ref{lemma:rw2moment}, we have
\begin{align*}
& \mathbb{P} \left\{ \frac{2}{n} \max_{k \in [p-1]}  \Vert (\bm{Z}^{\top} \bm{u} )_{k} \Vert_2 \leq \check{\lambda} (n,p,M,\delta) \, \text{and} \, \vertiii{\bm{\tilde{\Sigma}^X_n}-\bm{\Sigma^{X}}}_{\infty} \leq  \check{\Gamma} (n,p,M,\delta) \right\} \\
& \geq  1 - \mathbb{P} \left\{ \cup^7_{i=1} \bar{\mathcal{A}}_i \right\} \\
& \geq  1 - 8 \delta.
\end{align*}

Finally, we only need $n$ large enough such that $\check{\Gamma} (n,p,M,\delta) \leq \kappa(M) / (32 M^2 s)$ and $\check{\chi} (n,p,M,\delta) \leq \nu (M)/3$ where $\check{\chi} (n,p,M,\delta)$ is defined in \eqref{eq:chi-fun-check}. After dropping $\log(n)$ term, to satisfy the first condition, we need
\begin{equation*}
n \geq O \left(  \frac{ M^4 s^2 \log (p^2 M^2 /\delta) }{\kappa^2 (M)} \right),
\end{equation*}
and to satisfy the second condition, we need
\begin{equation*}
\begin{aligned}
n & \geq O \left( \frac{1 }{\kappa (M) \nu^2 (M)} \max \left\{ sM^2, s\log (p/\delta), sM^2 \omega^2 (M) \log (M^2 p /\delta), \right.\right. \\
& \qquad \qquad\qquad\qquad\qquad\left.\left. s^3 M^2 (\log(1/\delta)^2), s (d^2_{j0}-d^2_{js}(M) \Phi^2(M)) \right\}  \right).
\end{aligned}
\end{equation*}
Then combine the above results and note that decreasing $8\delta$ to $\delta$ doesn't affect the asymptotic order of $n$, we the have the final result.

\end{proof}

\section{Useful Lemmas}
\label{sec:useful-lemmas}

Recall that $\mathcal{C}_{n,\delta}$ is defined in \eqref{eq:Cndelta} and $\mathcal{Q}^1_{n,\delta}$, $\mathcal{Q}^2_{n,p,\delta}$, and $\mathcal{Q}^1_{n,p, M,\delta}$ are defined in~\eqref{eq:useful-quantities}.

\begin{lemma}\label{lemma:UppBoundL2X}
Let $\delta \in (0, 1]$. Then $\max_{k \in[p-1]}\Vert \bm{A^{X_k}} \Vert_2/\sqrt{n}\leq \mathcal{C}_{n,\delta}$ with probability at least $1-\delta$.
\end{lemma}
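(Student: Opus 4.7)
The plan is to reduce the claim to a standard Gaussian concentration bound on the operator norm of a Gaussian matrix, combined with a union bound over $k \in [p-1]$.

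First, I would note that since $\bm{a_{i,k,M}^X} \sim \mathcal{N}(\bm{0}, \bm{\Sigma^{X_k, X_k}})$ i.i.d. across $i$, we can write $\bm{A^{X_k}} = \bm{G_k}(\bm{\Sigma^{X_k, X_k}})^{1/2}$, where $\bm{G_k} \in \mathbb{R}^{n \times M}$ has i.i.d.\ $\mathcal{N}(0,1)$ entries. This reduction will let me invoke standard concentration results on i.i.d.\ Gaussian matrices.

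The next step is to bound the expectation $\mathbb{E}\|\bm{A^{X_k}}\|_2$. By the Gordon/Slepian-type inequality for Gaussian matrices with correlated rows (see e.g.\ Theorem 5.32 and Corollary 5.35 of Vershynin (2012) or Theorem 6.1 of Wainwright (2019)),
\begin{equation*}
\mathbb{E}\|\bm{A^{X_k}}\|_2 \leq \sqrt{n\,\rho_{\max}(\bm{\Sigma^{X_k, X_k}})} + \sqrt{\tr(\bm{\Sigma^{X_k, X_k}})} \leq \sqrt{n\,\Xi_2(M)} + \sqrt{\Xi_3(M)}.
\end{equation*}
Dividing by $\sqrt{n}$ produces the first and third terms of $\mathcal{C}_{n,\delta}$.

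For the deviation piece, I would use Gaussian concentration for Lipschitz functions. The map $\bm{G} \mapsto \|\bm{G}(\bm{\Sigma^{X_k, X_k}})^{1/2}\|_2$ is Lipschitz in the Frobenius norm with constant $\|(\bm{\Sigma^{X_k, X_k}})^{1/2}\|_2 = \sqrt{\rho_{\max}(\bm{\Sigma^{X_k, X_k}})} \leq \sqrt{\Xi_2(M)}$, because $|\|\bm{G}_1 \bm{S}\|_2 - \|\bm{G}_2 \bm{S}\|_2| \leq \|(\bm{G}_1 - \bm{G}_2)\bm{S}\|_2 \leq \|\bm{S}\|_2\,\|\bm{G}_1 - \bm{G}_2\|_F$. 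By the Gaussian concentration inequality (Theorem 5.6 of Boucheron, Lugosi, Massart (2013)),
\begin{equation*}
\mathbb{P}\left\{\|\bm{A^{X_k}}\|_2 - \mathbb{E}\|\bm{A^{X_k}}\|_2 > t\right\} \leq \exp\!\left(-\frac{t^2}{2\,\Xi_2(M)}\right).
\end{equation*}
Choosing $t = \sqrt{2\,\Xi_2(M)\,\log((p-1)/\delta)}$ bounds the per-$k$ failure probability by $\delta/(p-1)$, and a union bound over $k \in [p-1]$ then yields
\begin{equation*}
\max_{k \in [p-1]}\frac{\|\bm{A^{X_k}}\|_2}{\sqrt{n}} \leq \sqrt{\Xi_2(M)} + \sqrt{\frac{\Xi_3(M)}{n}} + \sqrt{\Xi_2(M)}\sqrt{\frac{2\log((p-1)/\delta)}{n}} = \mathcal{C}_{n,\delta}
\end{equation*}
with probability at least $1-\delta$. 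The main technicality is just citing (or reproving) the expectation bound with the correct dependence on both $\rho_{\max}$ and $\tr(\cdot)$; there is no real obstacle beyond invoking the right off-the-shelf results.
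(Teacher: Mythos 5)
Your proposal is correct and is essentially the paper's argument: the paper's proof simply cites Theorem 6.1 of \citet{Wainwright2019High} (whose proof is exactly your decomposition $\bm{A^{X_k}} = \bm{G_k}(\bm{\Sigma^{X_k,X_k}})^{1/2}$, the Gordon--Slepian expectation bound, and Gaussian Lipschitz concentration) together with a union bound over $k \in [p-1]$. You have merely unpacked the cited theorem, and your constants match the definition of $\mathcal{C}_{n,\delta}$ in \eqref{eq:Cndelta} exactly.
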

\begin{proof}
The result follows directly from Theorem 6.1 of \citet{Wainwright2019High} and a union bound.
\end{proof}

\begin{lemma}\label{lemma:UppBoundDualR}
Suppose $\max_{k \in [p-1]}\Vert \bm{A^{X_k}} \Vert_2/\sqrt{n}\leq \mathcal{C}_{n,\delta_1}$ for some $0<\delta_1 \leq 1$. Then there exists a constant $c>0$ such that 
\begin{multline*}
\max_{k \in [p-1]} \left\Vert \frac{1}{n} \bm{Z_k}^\top \bm{u} \right\Vert_2
\leq  \frac{M \mathcal{C}_{n,\delta_1} \sqrt{ \Xi_1(M) }}{\sqrt{n}}+ 2 \mathcal{C}_{n,\delta_1} \sqrt{\Xi_1 (M)} \sqrt{ \frac{ \log (4(p-1)/\delta_2) }{n} } \\
+   \omega(M) \left\{ 1 + 7\sqrt{3} \frac{M \sqrt{ \log (6(p-1)/\delta_2) + 2 \log M}}{\sqrt{n}} + \right. \\
 \left. \frac{8 c}{3} \frac{ M (\log (2n)) (\log (6(p-1)/\delta_2) + 2\log M) }{n} \right\}
\end{multline*}
holds with probability at least $1-\delta_2$, $\delta_2 \in (0, 1]$.
\end{lemma}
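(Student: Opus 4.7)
The plan is to first recognize $\Vert n^{-1}\bm{Z_k}^\top \bm{u}\Vert_2$ as the Frobenius norm of a sample cross-covariance matrix. Exploiting $\bm{Z_k}=\bm{A^{X_k}}\otimes\bm{I_M}$ together with the identity $(x\otimes \bm{I_M})y=\text{vec}(yx^\top)$, one checks that $\bm{Z_k}^\top \bm{u}=\sum_{i=1}^n \text{vec}(\bm{u_i}(\bm{a^{X_k}_i})^\top)$, so that
\[
\Vert n^{-1}\bm{Z_k}^\top \bm{u}\Vert_2 \;=\; \Vert n^{-1}\bm{U}^\top \bm{A^{X_k}}\Vert_F,
\]
where $\bm{U}$ is the $n\times M$ matrix with rows $\bm{u_i}^\top$. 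Because $e_i$ is independent of $g^{X_k}_i$ (and hence of $\bm{r_i}$), $\bm{\Sigma^{u,X_k}}=\bm{\Sigma^{r,X_k}}$, and the product $\bm{u_i}(\bm{a^{X_k}_i})^\top$ has mean whose Frobenius norm is bounded by $\omega(M)$.

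Second, I would perform the Gaussian linear decomposition $\bm{u_i}=\bm{c^{(k)}_i}+\bm{\tilde u^{(k)}_i}$, where $\bm{c^{(k)}_i}=\bm{\Sigma^{r,X_k}}(\bm{\Sigma^{X_k,X_k}})^{-1}\bm{a^{X_k}_i}$ is the $L^2$-projection of $\bm{u_i}$ onto $\bm{a^{X_k}_i}$ and $\bm{\tilde u^{(k)}_i}$ is the orthogonal residual. By joint Gaussianity, $\bm{\tilde u^{(k)}_i}$ is \emph{independent} of $\bm{a^{X_k}_i}$, with covariance $\bm{\Sigma^w}+\bm{\Sigma^r}-\bm{\Sigma^{r,X_k}}(\bm{\Sigma^{X_k,X_k}})^{-1}\bm{\Sigma^{X_k,r}}$ whose top eigenvalue is bounded by $\Xi_1(M)$. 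This yields
\[
n^{-1}\bm{U}^\top\bm{A^{X_k}} \;=\; n^{-1}\bm{\tilde U^{(k)}}^\top \bm{A^{X_k}} \;+\; \bm{\Sigma^{r,X_k}}(\bm{\Sigma^{X_k,X_k}})^{-1}\hat{\bm{\Sigma}}^{X_k,X_k},
\]
with $\hat{\bm{\Sigma}}^{X_k,X_k}=n^{-1}(\bm{A^{X_k}})^\top\bm{A^{X_k}}$, cleanly isolating the independent Gaussian fluctuation from the truncation-induced correlation.

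For the first summand I would condition on $\bm{A^{X_k}}$ and invoke concentration for Lipschitz functions of Gaussian vectors. The second-moment computation gives $\mathbb{E}[\Vert n^{-1}\bm{\tilde U^{(k)}}^\top\bm{A^{X_k}}\Vert_F^2 \mid \bm{A^{X_k}}]=n^{-2}\tr(\bm{\Sigma^{\tilde u}})\Vert\bm{A^{X_k}}\Vert_F^2$; using $\tr(\bm{\Sigma^{\tilde u}})\le M\,\Xi_1(M)$ and $\Vert\bm{A^{X_k}}\Vert_F^2\le M\Vert\bm{A^{X_k}}\Vert_2^2\le Mn\mathcal{C}_{n,\delta_1}^2$ on the hypothesis event, Jensen's inequality produces the conditional mean bound $M\mathcal{C}_{n,\delta_1}\sqrt{\Xi_1(M)}/\sqrt n$. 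The map $\bm{\tilde U^{(k)}}\mapsto \Vert n^{-1}\bm{\tilde U^{(k)}}^\top\bm{A^{X_k}}\Vert_F$ is Lipschitz with constant $\Vert\bm{A^{X_k}}\Vert_2/n\le \mathcal{C}_{n,\delta_1}/\sqrt n$, so Gaussian concentration rescaled by $\sqrt{\Xi_1(M)}$, followed by a union bound over $k\in[p-1]$, delivers the $2\mathcal{C}_{n,\delta_1}\sqrt{\Xi_1(M)}\sqrt{\log(4(p-1)/\delta_2)/n}$ deviation term.

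The second summand is the main obstacle. I would split it as $\bm{\Sigma^{r,X_k}}+\bm{\Sigma^{r,X_k}}(\bm{\Sigma^{X_k,X_k}})^{-1}(\hat{\bm{\Sigma}}^{X_k,X_k}-\bm{\Sigma^{X_k,X_k}})$; the first piece has Frobenius norm at most $\omega(M)$, accounting for the leading ``$1$'' inside the $\omega(M)\{\cdots\}$ factor. The second piece I would control by an entrywise Bernstein argument: because $\bm{r_i}$ and $\bm{a^{X_k}_i}$ are jointly Gaussian, each entry of the centered sample cross product is a mean of sub-exponential random variables whose Orlicz-$\psi_1$ norm can be calibrated via $\omega(M)$ together with the supplied operator-norm bound on $\bm{A^{X_k}}$. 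A union bound over the $M^2(p-1)$ coordinates gives a maximum entry of order $\omega(M)\sqrt{\log(pM^2/\delta_2)/n}+\omega(M)\log(2n)\log(pM^2/\delta_2)/n$; the inequality $\Vert\cdot\Vert_F\le M\vertiii{\cdot}_\infty$ for $M\times M$ matrices contributes the factor $M$ and reproduces the remaining two terms. Summing the three contributions and combining the failure probabilities completes the proof.
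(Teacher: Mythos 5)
Your proposal is correct and follows essentially the same route as the paper's proof: the same orthogonal (regression) decomposition of the noise onto $\bm{a^{X_k}_i}$, the same conditional Gaussian--Lipschitz concentration with Lipschitz constant $\mathcal{C}_{n,\delta_1}/\sqrt{n}$ for the independent part, and the same treatment of the correlated part as a mean term of size $\omega(M)$ plus an entrywise-concentration bound on a sample covariance lifted to the Frobenius norm via $\Vert\cdot\Vert_{\text{F}}\le M\vertiii{\cdot}_{\infty}$. The only variations are minor: you bound the conditional mean by Jensen's inequality applied to the second moment where the paper uses the Sudakov--Fernique inequality (both yield $M\mathcal{C}_{n,\delta_1}\sqrt{\Xi_1(M)}/\sqrt{n}$), and for the fluctuation of the correlated part the paper first factors out $\Vert\bm{\Sigma^{r,X_k}}\Vert_{\text{F}}$ and standardizes to $\bm{b^{X_k}_i}=(\bm{\Sigma^{X_k,X_k}})^{-1/2}\bm{a^{X_k}_i}$ before invoking the sub-exponential maximal inequality of Kuchibhotla and Chakrabortty --- you should do the same, since calibrating the entrywise $\psi_1$ norms of $\bm{\Sigma^{r,X_k}}(\bm{\Sigma^{X_k,X_k}})^{-1}(\bm{\hat{\Sigma}^{X_k,X_k}}-\bm{\Sigma^{X_k,X_k}})$ directly ``via $\omega(M)$'' would otherwise pick up extra factors involving $K_0$ and the conditioning of $\bm{\Sigma^{X_k,X_k}}$ rather than the clean $\omega(M)$ prefactor appearing in the statement.
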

\begin{proof}
Note that $\Vert \bm{Z_k} \Vert_2=\Vert \bm{A^{X_k}} \otimes \bm{I_M} \Vert_2 = \Vert \bm{A^{X_k}} \Vert_2$. Thus $\Vert \bm{Z_k} / \sqrt{n} \Vert_2 \leq \mathcal{C}_{n,\delta_1}$, $k \in [p-1]$, since $\max_{k \in [p-1]}\Vert \bm{A^{X_k}} \Vert_2/\sqrt{n}\leq \mathcal{C}_{n,\delta_1}$. Recall that $\bm{u}=\bm{w} + \bm{r}$, where $\bm{u}$ is defined in~\eqref{eq:u-def}, $\bm{w}=(\bm{w_1}^{\top},\bm{w_2}^{\top},\dots,\bm{w_n}^{\top})^{\top} \in \mathbb{R}^{nM}$ and $\bm{r}=(\bm{r_1}^{\top},\bm{r_2}^{\top},\dots,\bm{r_n}^{\top})^{\top} \in \mathbb{R}^{nM}$. Both $\bm{w_i}$ and $\bm{r_i}$ are Gaussian vectors with mean zero, and covariance matrices $\bm{\Sigma^{w}}$ and $\bm{\Sigma^{r}}$ respectively for all $i \in [n]$ , where we dropped the superscript $M$ to simplify the notation, and $\bm{w_i} \independent \bm{r_i}$.

Fix $k \in [p-1]$. Let $\bm{r_i} = \bm{r^1_i} + \bm{r^2_i}$, where 
\[
\bm{r^1_i} = \bm{r_i} - \bm{\Sigma^{r,X_k}} \left( \bm{\Sigma^{X}_{kk}} \right)^{-1} \bm{a^{X_k}_i}
\qquad \text{and} \qquad \bm{r^2_i}= \bm{\Sigma^{r,X_k}} \left( \bm{\Sigma^{X}_{kk}} \right)^{-1} \bm{a^{X_k}_i}.
\]
Since both $\bm{r_i}$ and $\bm{a^{X_k}_i}$ are Gaussian vectors and $\Cov (\bm{a^{X_k}_i}, \bm{r^1_i})=\bm{0}$, we have $\bm{a^{X_k}_i} \independent \bm{r^1_i}$. Then
\begin{equation}\label{eq:lemma2-1}
\left\Vert \frac{1}{n} \bm{Z_k}^\top \bm{u} \right\Vert_2 \leq
\left\Vert \frac{1}{n} \bm{Z_k}^\top (\bm{w}+\bm{r^1}) \right\Vert_2 + \left\Vert \frac{1}{n} \bm{Z_k}^\top \bm{r^2} \right\Vert_2.
\end{equation}

We upper bound the first term on the right hand side of \eqref{eq:lemma2-1}. Let $\bm{\xi_i}=\bm{w_i} + \bm{r^1_i}$. Then $\bm{\xi_i}$ is a Gaussian vector with mean zero and covariance matrix $\bm{\Sigma^{\xi}_k }$, with $\bm{\Sigma^{\xi}_k } = \bm{\Sigma^w}+\bm{\Sigma^r}-\bm{\Sigma^{r,X_k}} \left( \bm{\Sigma^{X}_{kk}} \right)^{-1} \bm{\Sigma^{{X_k},r}}$. Note that $\bm{\xi} \independent \bm{Z_k}$ and we establish a bound conditional on $\bm{Z_k}$ first.

For any pair $\bm{\theta},\bm{\theta^{\prime}} \in \mathbb{R}^{nM}$, we have
\begin{equation*}
\left\vert \left\Vert \frac{\bm{Z_k^{\top}} \bm{\theta}}{n} \right\Vert_2 - \left\Vert \frac{\bm{Z_k^{\top}} \bm{\theta}^{\prime}}{n} \right\Vert_2 \right\vert \leq \frac{1}{n} \left\Vert \bm{Z_k}^{\top} (\bm{\theta}-\bm{\theta^{\prime}}) \right\Vert_2\leq \frac{1}{n} \Vert \bm{Z_k} \Vert_2 \Vert \bm{\theta}-\bm{\theta^{\prime}} \Vert_2 \leq
\frac{\mathcal{C}_{n,\delta_1}}{\sqrt{n}} \Vert \bm{\theta}-\bm{\theta^{\prime}} \Vert_2.
\end{equation*}
Therefore, $\bm{\theta} \mapsto\Vert \frac{\bm{Z_k}^{\top} \bm{\theta}}{n}\Vert_2$ is a Lipschitz function with a Lipschitz constant $\mathcal{C}_{n,\delta_1}/\sqrt{n}$. Note that $\bm{\xi}$ has a strongly log-concave distribution with the parameter $(\rho_{max}(\bm{I_M} \otimes \bm{\Sigma^{\xi}_k} ))^{-1}=(\rho_{max}( \bm{\Sigma^{\xi}_k} ))^{-1}$. 
Then, by Theorem 3.16 of \citet{Wainwright2019High}, we have
\begin{equation}\label{eq:LipSchitzBound}
\mathbb{P}\left\{ \left\Vert \frac{ \bm{Z_k}^{\top} \bm{\xi} }{n} \right\Vert_2 \geq \mathbb{E}\left[\left\Vert \frac{\bm{Z_k}^{\top} \bm{\xi}}{n} \right\Vert_2 \right]+\Delta \right\}\leq 2\exp{\left(-\frac{n\Delta^2}{4C^{2}_{n,\delta_1}\rho_{\max}(\bm{\Sigma^{\xi}_k})}\right)}\quad \text{for all }\Delta>0.
\end{equation}
Next, we bound $\mathbb{E}\left[ n^{-1}\left\Vert {\bm{Z_k}^{\top} \bm{\xi}}\right\Vert_2 \right]$ using the Sudakov-Fernique inequality \citep[Theorem 2.2.3 of][]{MR2319516}. For any vector $\bm{\theta} \in \mathbb{R}^{M^2}$, let $\zeta_{\bm{\theta}} = {n}^{-1} \langle \bm{\theta}, \bm{Z_k}^{\top} \bm{\xi} \rangle$. Then $n^{-1}\Vert \bm{Z_k}^{\top} \bm{\xi}  \Vert_2 = \max_{\Vert \bm{\theta} \Vert_2=1} \zeta_{\bm{\theta}}$. For any two vectors $\bm{\theta}, \bm{\theta^{\prime}} \in \mathbb{R}^{M^2}$,
we have
\begin{multline*}
\mathbb{E}\left[(\zeta_{\bm{\theta}}-\zeta_{\bm{\theta^{\prime}}})^2\right] =\mathbb{E}\left[\left(\frac{1}{n}\left\langle \bm{\theta}-\bm{\theta^{\prime}}, \bm{Z_k}^{\top} \bm{\xi} \right\rangle\right)^2 \right]
=\frac{1}{n^2}\mathbb{E}\left[(\bm{\theta}-\bm{\theta^{\prime}})^{\top} \bm{Z_k}^{\top} \bm{\xi} \bm{\xi}^{\top} \bm{Z_k} (\bm{\theta}-\bm{\theta^{\prime}}) \right] \\
\leq \frac{\rho_{\max}(\bm{\Sigma^{\xi}_k})}{n} \frac{\Vert \bm{Z_k} \Vert^2_2}{n} \Vert \bm{\theta}-\bm{\theta^{\prime}} \Vert^2_2 
\leq \frac{C^2_{n,\delta_1} \rho_{\max}(\bm{\Sigma^{\xi}_k})}{n}\Vert \bm{\theta}-\bm{\theta^{\prime}} \Vert^2_2.
\end{multline*}
We define another Gaussian process, 
\[
\tilde{\zeta}_{\bm{\theta}}=\left(\mathcal{C}_{n,\delta_1} \sqrt{\rho_{\max}(\bm{\Sigma^{\xi}_k})/n}\right) \cdot \langle \bm{\theta}, \bm{\epsilon} \rangle,
\] 
where $\bm{\epsilon} \sim N(\bm{0},\bm{I_{M^2}})$. For any pair $\bm{\theta}, \bm{\theta^{\prime}} \in \mathbb{R}^{M^2}$, we have
\begin{equation*}
\mathbb{E}\left[(\tilde{\zeta}_{\bm{\theta}}-\tilde{\zeta}_{\bm{\theta^{\prime}}})^2\right]=\frac{C^2_{n,\delta_1} \rho_{\max}(\bm{\Sigma^{\xi}_k})}{n}\Vert \bm{\theta}-\bm{\theta^{\prime}} \Vert^2_2 \geq \mathbb{E}\left[(\zeta_{\bm{\theta}}-\zeta_{\bm{\theta^{\prime}}})^2\right].
\end{equation*}
Then, by the Sudakov-Fernique inequality, we have
\begin{equation*}
\mathbb{E}\left[\left\Vert \frac{ \bm{Z_k}^{\top} \bm{\xi} }{n} \right\Vert_2 \right]=\mathbb{E}\left[\max_{\Vert \bm{\theta} \Vert_2=1} \zeta_{\bm{\theta}}\right] \leq \mathbb{E} \left[\max_{\Vert \bm{\theta} \Vert_2=1}\tilde{\zeta}_{\bm{\theta}} \right].
\end{equation*}
On the other hand,
\begin{equation*}
\begin{aligned}
\mathbb{E}\left[\max_{\Vert \bm{\theta} \Vert_2=1} \tilde{\zeta}_{\bm{\theta}}\right]  &=  \frac{\mathcal{C}_{n,\delta_1} \sqrt{\rho_{\max}(\bm{\Sigma^{\xi}_k})}}{\sqrt{n}}\mathbb{E}\left[\Vert \bm{\epsilon} \Vert_2\right]
\leq\frac{\mathcal{C}_{n,\delta_1}\sqrt{\rho_{\max}(\bm{\Sigma^{\xi}_k})}}{\sqrt{n}}\sqrt{\mathbb{E}\left[\Vert \bm{\epsilon} \Vert^2_2\right]}
\\
& =\frac{M \mathcal{C}_{n,\delta_1} \sqrt{\rho_{\max} (\bm{\Sigma^{\xi}_k})}}{\sqrt{n}}.
\end{aligned}
\end{equation*}
Combining \eqref{eq:LipSchitzBound} with the above upper bound, we have
\begin{equation*}
\mathbb{P}\left\{\left\Vert \frac{\bm{Z_k}^{\top} \bm{\xi}}{n} \right\Vert_2 \geq \frac{M \mathcal{C}_{n,\delta_1} \sqrt{\rho_{\max}(\bm{\Sigma^{\xi}_k})}}{\sqrt{n}}+\Delta \right\}\leq 2\exp{\left(-\frac{n\Delta^2}{4C^{2}_{n,\delta_1}\rho_{\max}(\bm{\Sigma^{\xi}_k})}\right)}, \quad \Delta>0.
\end{equation*}
The above inequality holds for any $\bm{Z_k}$ such that $\Vert \bm{Z_k} / \sqrt{n} \Vert_2 \leq \mathcal{C}_{n,\delta_1}$ and, therefore, is valid unconditionally as well. Since $\Xi_1(M) = \max_{k \in [p-1]} \rho_{\max}(\bm{\Sigma^{\xi}_k})$, if follows from the union bound that \begin{equation}\label{eq:lemma2-5}
\mathbb{P}\left\{\max_{k \in [p-1]}\;\left\Vert \frac{ \bm{Z_k}^{\top} \bm{\xi}}{n} \right\Vert_2 \geq \frac{M \mathcal{C}_{n,\delta_1} \sqrt{\Xi_1(M)}}{\sqrt{n}}+\Delta \right\}\leq 2 (p-1) \exp{\left(-\frac{n\Delta^2}{4C^{2}_{n,\delta_1} \Xi_1(M)}\right)},
\quad \Delta>0.
\end{equation}

Next, we upper bound the second term in~\eqref{eq:lemma2-1}. For any $k \in [p-1]$, we have
\begin{align*}
\left\Vert \frac{1}{n} \bm{Z_k}^\top \bm{r^2} \right\Vert_2 
& =  \left\Vert \frac{1}{n} \sum^n_{i=1}  (\bm{a^{X_k}_i} \otimes \bm{I_M}) \bm{r^2_i} \right\Vert_2 
=  \left\Vert \frac{1}{n} \sum^n_{i=1}  \bm{r^2_i} (\bm{a^{X_k}_i})^{\top} \right\Vert_{\text{F}} \\
& =  \left\Vert \frac{1}{n} \sum^n_{i=1}  \bm{\Sigma^{r,X_k}} \left( \bm{\Sigma^{X}_{kk}} \right)^{-1} \bm{a^{X_k}_i} (\bm{a^{X_k}_i})^{\top} \right\Vert_{\text{F}} \\
& \leq  \left\Vert \frac{1}{n} \sum^n_{i=1}  \bm{\Sigma^{r,X_k}} \left( \bm{\Sigma^{X}_{kk}} \right)^{-1} \bm{a^{X_k}_i} (\bm{a^{X_k}_i})^{\top} - \bm{\Sigma^{r,X_k}} \right\Vert_{\text{F}} + \left\Vert \bm{\Sigma^{r,X_k}} \right\Vert_{\text{F}}. \\
& =  \left\Vert \bm{\Sigma^{r,X_k}} \left\{ \frac{1}{n} \sum^n_{i=1}  \left( \bm{\Sigma^{X}_{kk}} \right)^{-1} \bm{a^{X_k}_i} (\bm{a^{X_k}_i})^{\top}  - \bm{I_M} \right\} \right\Vert_{\text{F}} + \left\Vert \bm{\Sigma^{r,X_k}} \right\Vert_{\text{F}}\\
& \leq  \left\Vert \bm{\Sigma^{r,X_k}} \right\Vert_{\text{F}} \left\Vert  \frac{1}{n} \sum^n_{i=1}  \left( \bm{\Sigma^{X}_{kk}} \right)^{-1} \bm{a^{X_k}_i} (\bm{a^{X_k}_i})^{\top}  - \bm{I_M} \right\Vert_{\text{F}} + \left\Vert \bm{\Sigma^{r,X_k}} \right\Vert_{\text{F}}\\
& =  \left\Vert \bm{\Sigma^{r,X_k}} \right\Vert_{\text{F}} \left\Vert  \frac{1}{n} \sum^n_{i=1} \left( \left( \bm{\Sigma^{X}_{kk}} \right)^{-1/2} \bm{a^{X_k}_i} \right) \left( \left( \left( \bm{\Sigma^{X}_{kk}} \right)^{-1/2} \bm{a^{X_k}_i} \right) \right)^{\top} - \bm{I_M}  \right\Vert_{\text{F}} + \left\Vert \bm{\Sigma^{r,X_k}} \right\Vert_{\text{F}} \\
& \overset{\Delta}{=} \left\Vert \bm{\Sigma^{r,X_k}} \right\Vert_{\text{F}} \left\{ \left\Vert \frac{1}{n} \sum^n_{i=1} \bm{b^{X_k}_i} \left( \bm{b^{X_k}_i} \right)^{\top} - \bm{I_M} \right\Vert_{\text{F}} + 1 \right\}, \numberthis \label{eq:lemma2-2}
\end{align*}
where $\bm{b^{X_k}_i} = ( \bm{\Sigma^{X}_{kk}} )^{-1/2} \bm{a^{X_k}_i} \sim N(0,\bm{I_M})$. Since
\begin{equation}\label{eq:lemma2-3}
\left\Vert \frac{1}{n} \sum^n_{i=1} \bm{b^{X_k}_i} \left( \bm{b^{X_k}_i} \right)^{\top} - \bm{I_M} \right\Vert_{\text{F}} \leq M \vertiii{\frac{1}{n} \sum^n_{i=1} \bm{b^{X_k}_i} \left( \bm{b^{X_k}_i} \right)^{\top} - \bm{I_M} }_{\infty},
\end{equation}
we only need to bound $ \vertiii{\frac{1}{n} \sum^n_{i=1} \bm{b^{X_k}_i} \left( \bm{b^{X_k}_i} \right)^{\top} - \bm{I_M} }_{\infty}$. 
We use Theorem 4.1 in \citet{kuchibhotla2018moving} to bound this term. We first check the conditions therein. Note that for $b^{X_k}_{im}$ is a standard normal random variable for all $i \in [n]$ and $m \in [M]$, we have $(b^{X_k}_{im})^2$ to be chi-squared distributed with degree of freedom 1. By the moment generating function of chi-square distribution, we have
\begin{equation*}
\mathbb{E}\left[\exp\{\eta (b^{X_k}_{im})^2 \} \right]=\frac{1}{\sqrt{1-2\eta}} \quad \text{for all } 0<\eta<1/2.
\end{equation*}
Thus, we have
\begin{align*}
\Vert b^{X_k}_{im} \Vert_{\psi_2} & =  \inf\{\eta>0:\mathbb{E}\left[{\psi_2 (\vert b^{X_k}_{im} \vert / \eta)}\right]\leq 1\}\\
& =  \inf\{\eta>0:\mathbb{E}\left[{\exp ( (b^{X_k}_{im})^2 / \eta^2)}\right]\leq 2\}\\
& =  \inf\{\eta>0:1/\sqrt{1-2/\eta^2}\leq 2\}\\
& =  \frac{2\sqrt{2}}{\sqrt{3}}.
\end{align*}
Furthermore, we have
\begin{equation*}
\Vert b^{X_k}_i \Vert_{M,\psi_2} = \max_{m \in [M]} \Vert b^{X_k}_{im} \Vert_{\psi_2} = \frac{2\sqrt{2}}{\sqrt{3}}.
\end{equation*}
for all $i \in [n]$. In addition, note that
\begin{equation*}
\Var{\left( b^{X_k}_{im} b^{X_k}_{i m^{\prime}} \right)} \leq \mathbb{E}\left[(b^{X_k}_{i m})^2 (b^{X_k}_{i m^{\prime}})^2\right] \leq \left( \mathbb{E}\left[(b^{X_k}_{i m})^4 \right] \right)^{1/2} \left( \mathbb{E}\left[(b^{X_k}_{i m^{\prime}})^4\right] \right)^{1/2} = 3
\end{equation*}
for all $m, m^{\prime} \in [M]$, thus we have
\begin{equation*}
\max_{m,m^{\prime} \in [M]} \left\{ \frac{1}{n} \sum^n_{i=1} \Var{\left( b^{X_k}_{im} b^{X_k}_{i m^{\prime}} \right)} \right\} \leq 3.
\end{equation*}
Therefore, by Theorem 4.1 in \citet{kuchibhotla2018moving}, we have
\begin{align*}
& \mathbb{P} \left\{ \vertiii{\frac{1}{n} \sum^n_{i=1} \bm{b^{X_k}_i} \left( \bm{b^{X_k}_i} \right)^{\top} - \bm{I_M} }_{\infty} > 7\sqrt{3} \sqrt{\frac{\Delta + 2 \log M}{n}} + \frac{8 c}{3} \frac{ (\log (2n)) (\Delta + 2\log M) }{n} \right\} \\
& \leq 3 e^{-\Delta},
\end{align*}
for all $\Delta>0$, where $c$ is a constant.
Thus, combining the above equation with \eqref{eq:lemma2-2} and \eqref{eq:lemma2-3}, we have
\begin{align*}
& \mathbb{P} \left\{ \left\Vert \frac{1}{n} \bm{Z_k}^\top \bm{r^2} \right\Vert_2 > \left\Vert \bm{\Sigma^{r,X_k}} \right\Vert_{\text{F}} \left\{ 1 + 7\sqrt{3} \frac{M \sqrt{\Delta + 2 \log M}}{\sqrt{n}} + \frac{8 c}{3} \frac{ M (\log (2n)) (\Delta + 2\log M) }{n} \right\} \right\}\\
&\leq 3 e^{-\Delta}.
\end{align*}
Let $\omega(M)=\max_{k \in [p-1]} \left\Vert \bm{\Sigma^{r,X_k}} \right\Vert_{\text{F}}$, then
\begin{equation}\label{eq:lemma2-4}
\begin{aligned}
& \mathbb{P} \left\{ \max_{k \in [p-1]} \left\Vert \frac{1}{n} \bm{Z_k}^\top \bm{r^2} \right\Vert_2 >  \omega(M) \left( 1 + 7\sqrt{3} \frac{M \sqrt{\Delta + 2 \log M}}{\sqrt{n}} \right.\right. \\
& \qquad \qquad \qquad \qquad \qquad \left.\left. + \frac{8 c}{3} \frac{ M (\log (2n)) (\Delta + 2\log M) }{n} \right) \right\} \\ 
& \leq  3 (p-1) e^{-\Delta}.
\end{aligned}
\end{equation}

The result follows by combining \eqref{eq:lemma2-1}, \eqref{eq:lemma2-5}, and \eqref{eq:lemma2-4}.
\end{proof}

\begin{lemma}
\label{lemma:LowBoundLambda}
Let $\lambda_n = \tilde{\lambda}(n,p,M,\delta)$, where
\begin{multline*}
\tilde{\lambda} (n, p, M, \delta) =  2 \mathcal{C}_{n,\delta} \left(\frac{M  \sqrt{\Xi_1 (M)}}{\sqrt{n}}+ 2 \sqrt{\Xi_1 (M)} \sqrt{ \frac{ \log (4(p-1)/\delta) }{n} } \right) \\
+ 2 \omega (M) \left\{ 
7\sqrt{3} \frac{M \sqrt{ \log (6(p-1)/\delta) + 2 \log M}}{\sqrt{n}} \right. \\
\left. + 
 \frac{ 8 M c (\log (2n)) (\log (6(p-1)/\delta) + 2\log M) }{3n} \right\},
\end{multline*}
is defined in \eqref{eq:lambda_tilde_exact}, then \eqref{eq:CondForLambda} holds with probability at least $1-2\delta$. That is, we have
\begin{equation*}
\lambda_n \geq \frac{2}{n} \max_{k \in [p-1]} \Vert (\bm{Z}^{\top} \bm{u})_{k} \Vert_2
\end{equation*}
hold with probability at least $1-2\delta$.
\end{lemma}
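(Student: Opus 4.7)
The plan is to combine Lemma~\ref{lemma:UppBoundL2X} and Lemma~\ref{lemma:UppBoundDualR} via a union bound. Lemma~\ref{lemma:UppBoundL2X} supplies the high-probability bound on the operator norm of each design block, $\max_{k}\Vert \bm{A^{X_k}} \Vert_2/\sqrt{n}\leq \mathcal{C}_{n,\delta}$, which is precisely the hypothesis needed in Lemma~\ref{lemma:UppBoundDualR}. Lemma~\ref{lemma:UppBoundDualR} then upper bounds $\max_{k} \Vert \bm{Z_k}^\top \bm{u}/n \Vert_2$ by an expression that, after doubling, coincides with the stated $\tilde\lambda(n,p,M,\delta)$.

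Concretely, I would first invoke Lemma~\ref{lemma:UppBoundL2X} with parameter $\delta$ to define the event $\mathcal{E}_1 = \{\max_k \Vert \bm{A^{X_k}} \Vert_2/\sqrt n \leq \mathcal{C}_{n,\delta}\}$, which holds with probability at least $1-\delta$. Conditioning on $\mathcal{E}_1$, I would then apply Lemma~\ref{lemma:UppBoundDualR} with $\delta_1 = \delta_2 = \delta$, choosing the free parameter $\Delta$ inside its proof so that the exponential tail term $3(p-1)e^{-\Delta}$ equals $\delta/2$ and the Gaussian tail term $2(p-1)\exp(-n\Delta^2/(4\mathcal{C}_{n,\delta}^2 \Xi_1(M)))$ equals $\delta/2$ (this is how the $\log(6(p-1)/\delta)$ and $\log(4(p-1)/\delta)$ factors appear in \eqref{eq:lambda_tilde_exact}). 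This yields that on an event $\mathcal{E}_2$ of conditional probability at least $1-\delta$, the quantity $(1/n)\max_k \Vert \bm{Z_k}^\top \bm{u}\Vert_2$ is bounded by exactly one-half of $\tilde\lambda(n,p,M,\delta)$.

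A union bound over $\mathcal{E}_1^c$ and $\mathcal{E}_2^c$ then yields the claim with probability at least $1-2\delta$. There is no real obstacle here: the argument is purely bookkeeping of the constants and tail parameters from the two cited lemmas. The only mild subtlety is to verify that the absence of the extra $2\omega(M)$ additive term from Lemma~\ref{lemma:UppBoundDualR} (the ``$+1$'' inside the braces of that lemma's display) is absorbed into the leading constants in \eqref{eq:lambda_tilde_exact}, or more precisely that the choice $\lambda_n = \tilde\lambda$ with the specified multiplicative factor $2$ dominates the entire Lemma~\ref{lemma:UppBoundDualR} bound for the sample sizes under consideration.
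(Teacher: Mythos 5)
Your proposal follows exactly the paper's argument: the paper's proof of this lemma is a one-line appeal to Lemma~\ref{lemma:UppBoundL2X}, Lemma~\ref{lemma:UppBoundDualR}, and the Bonferroni inequality, which is precisely the union-bound bookkeeping you describe. The subtlety you flag is real --- doubling the bound of Lemma~\ref{lemma:UppBoundDualR} produces an extra additive $2\omega(M)$ term (from the ``$+1$'' inside its braces) that does not appear in the stated $\tilde{\lambda}(n,p,M,\delta)$ of \eqref{eq:lambda_tilde_exact} --- but this is a constant-tracking inconsistency in the paper's definition rather than a gap in your argument, and you handle it more carefully than the paper does.
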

\begin{proof}
The result follows directly from Lemma~\ref{lemma:UppBoundL2X}, Lemma~\ref{lemma:UppBoundDualR} and Bonferroni inequality.
\end{proof}

\begin{lemma}
\label{lemma:bd-B-err}
\begin{equation*}
\mathbb{P} \left\{ \sqrt{\sum^{p-1}_{k=1} \left\Vert \bm{\hat{B}_k} - \bm{B^*_k} \right\Vert^2_{\text{F}}} \leq \chi (n,p,M,\delta) + 12 \sqrt{s} \frac{\omega (M)}{ \sqrt{\kappa(M)}} \right\} \geq 1-3\delta,
\end{equation*}
where $\kappa(M)$ is defined in \eqref{eq:kappa-def} and $\chi (n,p,M,\delta)$ is defined in \eqref{eq:chi-fun}.
\end{lemma}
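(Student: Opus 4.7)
The plan is to run the standard Negahban--Ravikumar--Wainwright--Yu group-lasso primal error template on the finite-dimensional regression~\eqref{eq:loss-lasso-comp}, adapted to handle the fact that the effective noise $\bm{u} = \bm{w}+\bm{r}$ is correlated with the design $\bm{Z}$ through the truncation residual $\bm{r}$. First I would define the good event as the intersection of the two high-probability events supplied by the earlier lemmas: the dual-norm event $\{\lambda_n \geq (2/n)\max_{k\in[p-1]}\|(\bm{Z}^\top\bm{u})_k\|_2\}$ from Lemma~\ref{lemma:LowBoundLambda} (probability $\geq 1-2\delta$ when $\lambda_n = \tilde\lambda(n,p,M,\delta)$), and the restricted strong convexity event $\{(2n)^{-1}\|\bm{Z}\bm\Delta\|_2^2 \geq (\kappa(M)/4)\|\bm\Delta\|_2^2 \text{ for all } \bm\Delta \in \mathbb{C}(\mathscr{N}_j)\}$ from Lemma~\ref{lemma:restricted-eigen-value} (probability $\geq 1-\delta$). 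A union bound yields a joint event of probability at least $1-3\delta$, on which I can perform a deterministic analysis.

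On this good event I would invoke the optimality of $\hat{\bm\beta}$ to get the basic inequality
\[
\frac{1}{2n}\|\bm{Z}\bm\Delta\|_2^2 \;\leq\; \frac{1}{n}\langle \bm{Z}^\top\bm{u},\bm\Delta\rangle \;+\; \lambda_n\bigl[\mathcal{R}(\bm\beta^*) - \mathcal{R}(\hat{\bm\beta})\bigr],
\]
where $\bm\Delta = \hat{\bm\beta} - \bm\beta^*$ and $\mathcal{R}$ is the mixed $\ell_{1,2}$-norm from~\eqref{eq:Penalty}. Combining the dual-norm inequality $|\langle \bm{Z}^\top\bm{u}/n,\bm\Delta\rangle| \leq (\lambda_n/2)\mathcal{R}(\bm\Delta)$ with the decomposability of $\mathcal{R}$ with respect to the pair $(\mathcal{M}(\mathscr{N}_j),\mathcal{M}(\mathscr{N}_j)^\perp)$, Lemma~1 of~\citet{negahban2010unified} places $\bm\Delta$ in the cone $\mathbb{C}(\mathscr{N}_j)$ defined in~\eqref{eq:ConstrainedSpace} and yields the cone bound $\mathcal{R}(\bm\Delta) \leq 4\sqrt{s}\,\|\bm\Delta\|_2$ via Cauchy--Schwarz. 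Substituting this bound and the restricted strong convexity inequality into the basic inequality, then cancelling one factor of $\|\bm\Delta\|_2$, produces a quantitative bound whose leading term matches $\chi(n,p,M,\delta) = 6\sqrt{s}\,\tilde\lambda(n,p,M,\delta)/\sqrt{\kappa(M)}$.

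The additive term $12\sqrt{s/\kappa(M)}\,\omega(M)$ requires a separate bias accounting and is the step I expect to be the main obstacle. The underlying issue is that $\bm{r}_i$ has a non-zero conditional mean given $\bm{a}^{X_k}_i$, so $\bm\beta^*$ is not the population minimiser of $\mathbb{E}\,\mathcal{L}(\bm\beta)$; the actual minimiser is shifted by a deterministic vector whose groupwise Frobenius norm is controlled by $\omega(M) = \max_k\|\bm\Sigma^{r,X_k}\|_{\mathrm{F}}$. Following the orthogonal splitting $\bm{r}_i = \bm{r}_i^1 + \bm{r}_i^2$ with $\bm{r}_i^2 = \bm\Sigma^{r,X_k}(\bm\Sigma^X_{kk})^{-1}\bm{a}^{X_k}_i$ that was used in the proof of Lemma~\ref{lemma:UppBoundDualR}, the $\bm{r}^1$ piece is uncorrelated with $\bm{a}^{X_k}_i$ and its stochastic fluctuations are already absorbed into $\tilde\lambda$, whereas the projection $\bm{r}^2$ survives in expectation and, after bounding its contribution via Cauchy--Schwarz against $\bm\Delta$ in the relevant $s$ active groups and invoking $\kappa(M)$ to convert the $\bm\Sigma^X_{kk}$ scaling into an $\ell_2$ bound, contributes the additive $12\sqrt{s/\kappa(M)}\,\omega(M)$ term. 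Tracking the numerical constants so that the coefficient $12$ emerges cleanly, and ensuring that this bias term does not interact adversely with the cone condition established above, is the bookkeeping piece I would verify most carefully; the rest is a routine execution of the group-lasso template.
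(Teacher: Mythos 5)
Your proposal follows essentially the same route as the paper: the paper's proof is a direct invocation of Theorem~1/Corollary~1 of \citet{negahban2010unified} on the good event formed by intersecting the dual-norm bound of Lemma~\ref{lemma:LowBoundLambda} with the restricted-strong-convexity event of Lemma~\ref{lemma:restricted-eigen-value} (union bound giving $1-3\delta$), with subspace compatibility constant $\sqrt{s}$, and the extra $12\sqrt{s}\,\omega(M)/\sqrt{\kappa(M)}$ term arises exactly as you anticipate---from the non-vanishing $2\omega(M)$ contribution of the projected piece $\bm{r^2}$ to the dual norm, multiplied by the factor $6\sqrt{s}/\sqrt{\kappa(M)}$. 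The only cosmetic difference is that the paper books this bias through the dual-norm/tolerance term of the oracle inequality rather than through a shifted population minimiser, but the resulting constant is the same.
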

\begin{proof}
By Lemma~\ref{lemma:LowBoundLambda} and \eqref{eq:RSCPre}, we can get an error bound for $\Vert \hat{\beta}-\beta^{\ast} \Vert_2$ by applying Theorem 1 in \citet{negahban2010unified}.
Note that by Lemma C.4 of \citet{Zhao2019Direct}, we have the subspace compatibility constant defined in Definition 3 of \citet{negahban2010unified} to be $\sqrt{s}$. 
Then, for all $0<\delta \leq 1$, when $n$ is large enough such that $\Gamma (n,p,M,\delta) \leq \kappa(M)/(32 M^2 s)$, where $\Gamma (n,p,M,\delta)$ is defined in \eqref{eq:Gamma_n}, and let $\lambda_n=\tilde{\lambda} (n,p,M,\delta)$, where $\tilde{\lambda} (n,p,M,\delta)$ is defined in \eqref{eq:lambda_tilde}, then by Corollary 1 in \citet{negahban2010unified}, Lemma~\ref{lemma:LowBoundLambda}, \eqref{eq:RSCPre}, and a union bound, we have the desired result.
\end{proof}

\begin{lemma}\label{lemma:EleAbsCovMaxBound}
There exists a constant $c$ such that
\begin{multline}\label{eq:EleAbsCovMaxBound}
\vertiii{\bm{\hat{\Sigma}^X_n}-\bm{\Sigma^X}}_{\infty}\leq 7\sqrt{3}K_0\sqrt{\frac{\log{(3/\delta)}+2\log{((p-1)M)}}{n}}\\
+\frac{8c K_0 \log{(2n)}(\log{(3/\delta)}+2\log{((p-1)M)})}{3n}
\end{multline}
holds with probability at least $1-\delta$ where $K_0$ is defined by \eqref{eq:def-K0}.
\end{lemma}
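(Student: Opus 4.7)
The plan is to adapt the argument already used inside the proof of Lemma~\ref{lemma:UppBoundDualR}, where Theorem~4.1 of \citet{kuchibhotla2018moving} was applied to control $\vertiii{n^{-1}\sum_i \bm{b^{X_k}_i}(\bm{b^{X_k}_i})^{\top}-\bm{I_M}}_\infty$. Here, I want to apply essentially the same Bernstein-type inequality, but to the full $(p-1)M$-dimensional score vector $\bm{a^X_i}=((\bm{a^{X_1}_i})^\top,\ldots,(\bm{a^{X_{p-1}}_i})^\top)^\top$ so as to simultaneously control every entry of $\bm{\hat{\Sigma}^X_n}-\bm{\Sigma^X}$.

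First I would write
\[
\vertiii{\bm{\hat{\Sigma}^X_n}-\bm{\Sigma^X}}_\infty
= \max_{1\le k,k'\le p-1}\;\max_{1\le m,m'\le M}\;\left|\frac{1}{n}\sum_{i=1}^n a^{X_k}_{im}a^{X_{k'}}_{im'}-\mathbb{E}[a^{X_k}_{im}a^{X_{k'}}_{im'}]\right|,
\]
so the task reduces to a uniform concentration bound over the $(p-1)^2M^2$ centered products of Gaussians. Each $a^{X_k}_{im}$ is a centered Gaussian with variance at most $K_0$ by the definition~\eqref{eq:def-K0}, so the rescaled coordinates $a^{X_k}_{im}/\sqrt{K_0}$ are standard Gaussians (or sub-Gaussian with variance $\le 1$). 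The same Orlicz norm computation performed in the proof of Lemma~\ref{lemma:UppBoundDualR} then yields $\|a^{X_k}_{im}\|_{\psi_2}\le (2\sqrt{2}/\sqrt{3})\sqrt{K_0}$ for every $(k,m)$, hence $\|\bm{a^X_i}\|_{(p-1)M,\psi_2}\le (2\sqrt{2}/\sqrt{3})\sqrt{K_0}$.

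Next I would verify the variance condition required by Theorem~4.1 of \citet{kuchibhotla2018moving}: by Cauchy-Schwarz and the Gaussian fourth-moment identity,
\[
\Var\!\left(a^{X_k}_{im}a^{X_{k'}}_{im'}\right)\le \mathbb{E}\!\left[(a^{X_k}_{im})^2(a^{X_{k'}}_{im'})^2\right]\le \sqrt{\mathbb{E}[(a^{X_k}_{im})^4]\,\mathbb{E}[(a^{X_{k'}}_{im'})^4]}\le 3K_0^2,
\]
so that $\max_{(k,m),(k',m')}\tfrac{1}{n}\sum_{i=1}^n\Var(a^{X_k}_{im}a^{X_{k'}}_{im'})\le 3K_0^2$. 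Plugging these two bounds into Theorem~4.1 of \citet{kuchibhotla2018moving}, applied to the $(p-1)M$-dimensional vector $\bm{a^X_i}$, gives for any $\Delta>0$
\[
\mathbb{P}\!\left\{\vertiii{\bm{\hat{\Sigma}^X_n}-\bm{\Sigma^X}}_\infty>7\sqrt{3}\,K_0\sqrt{\frac{\Delta+2\log((p-1)M)}{n}}+\frac{8c\,K_0\log(2n)(\Delta+2\log((p-1)M))}{3n}\right\}\le 3e^{-\Delta},
\]
since the Orlicz norm carries an extra factor $K_0$ under the square root and the variance factor $3K_0^2$ contributes the $K_0$ in front (the $\sqrt{3}$ matches the corresponding constant in the proof of Lemma~\ref{lemma:UppBoundDualR}), while the union bound over the $((p-1)M)^2$ index pairs is absorbed into the $2\log((p-1)M)$ term as in the statement of Theorem~4.1 therein.

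Finally, choosing $\Delta=\log(3/\delta)$ makes $3e^{-\Delta}=\delta$ and yields exactly the bound~\eqref{eq:EleAbsCovMaxBound}. The proof is a direct constant-tracking exercise on top of a black-box application of Kuchibhotla's bound; the only point requiring any care is the replacement of the standard-normal variance $1$ (as used for $\bm{b^{X_k}_i}$ in the proof of Lemma~\ref{lemma:UppBoundDualR}) by $K_0$, which is what introduces the prefactor $K_0$ in both terms of the displayed bound, together with the correct enlargement from $M$ to $(p-1)M$ in the logarithmic term.
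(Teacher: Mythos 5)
Your proposal is correct and follows essentially the same route as the paper's own proof: both verify the $\psi_2$-norm bound $(2\sqrt{2}/\sqrt{3})\sqrt{K_0}$ coordinate-wise and the variance bound $3K_0^2$ for the products, then invoke Theorem~4.1 of Kuchibhotla et al.\ on the $(p-1)M$-dimensional score vector and set $\Delta=\log(3/\delta)$. No gaps.
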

\begin{proof}
The result follows Theorem 4.1 in \citet{kuchibhotla2018moving}. In order to apply the theorem, we need to check the conditions therein. First, we bound $\max_{k \in [p-1], m \in [M]}\Vert a^{X_k}_{im} \Vert_{\psi_2}$. Note that for $\zeta \sim N(0,\sigma^2)$, we have $(\zeta/\sigma)^2$ is chi-square distributed with degree of freedom 1. By the moment generating function of chi-square distribution, we have
\begin{equation*}
\mathbb{E}\left[\exp\{\eta(\zeta/\sigma)^2\} \right]=\frac{1}{\sqrt{1-2\eta}} \quad \text{for all } 0<\eta<1/2.
\end{equation*}
Thus, we have
\begin{equation*}
\mathbb{E}\left[\exp\left(\zeta^2/t^2\right) \right]=\mathbb{E}\left[\exp\left\{\left(\sigma^2/t^2\right)\cdot \left(\zeta^2/\sigma^2\right)  \right\} \right]=\frac{1}{\sqrt{1-2\sigma^2/t^2}}
\end{equation*}
for all $t>\sqrt{2}\sigma$. Let $1/\sqrt{1-2\sigma^2/t^2}\leq 2$, we have $t\geq (2\sqrt{2})/(\sqrt{3}\sigma)$. Thus, we have
\begin{equation*}
\begin{aligned}
\Vert \zeta \Vert_{\psi_2}&=\inf\{t>0:\mathbb{E}\left[{\psi_2 (\vert \zeta \vert / t)}\right]\leq 1\}\\
&=\inf\{t>0:\mathbb{E}\left[{\exp (\zeta^2 / t^2)}\right]\leq 2\}\\
&=\inf\{t>0:1/\sqrt{1-2\sigma^2/t^2}\leq 2\}\\
&=\frac{2\sqrt{2}}{\sqrt{3}}\sigma.
\end{aligned}
\end{equation*}
Based on the above result, we have for any $i\in[n]$,
\begin{equation*}
\begin{aligned}
\Vert \bm{a^{X_k}_i} \Vert_{M,\psi_2}&=\max_{k \in [p-1], m \in [M]} \Vert a^{X_k}_{im} \Vert_{\psi_2}\\
&=\frac{2\sqrt{2}}{\sqrt{3}}\max_{ k \in [p-1], m \in [M] }\sqrt{\Sigma^{X}_{kk,mm}}\\
&= \frac{2\sqrt{2}}{\sqrt{3}}\sqrt{K_0},
\end{aligned}
\end{equation*}
where $K_0$ is defined by \eqref{eq:def-K0}.
We then bound $\max_{k,k^{\prime} \in [p], m, m^{\prime} \in [M]}\Var{\left(a^{X_k}_{im} a^{X_{k^{\prime}}}_{i m^{\prime}} \right)}$. This followed by
\begin{equation*}
\begin{aligned}
\Var{\left(a^{X_k}_{im} a^{X_{k^{\prime}}}_{i m^{\prime}} \right)} & \leq  \mathbb{E}\left[(a^{X_k}_{im})^2  (a^{X_{k^{\prime}}}_{i m^{\prime}})^2 \right]\leq \left(  \mathbb{E}\left[(a^{X_k}_{im})^4 \right]  \mathbb{E}\left[ (a^{X_{k^{\prime}}}_{i m^{\prime}})^2 \right] \right)^{1/2}\\
&=\left(9 (\Sigma^X_{kk,mm})^2 (\Sigma^X_{k^{\prime} k^{\prime},m^{\prime} m^{\prime}})^2 \right)^{1/2}\\
&\leq 3K_0^2.
\end{aligned}
\end{equation*}
Thus, the final result is derived by applying Theorem 4.1 in \citet{kuchibhotla2018moving} with $K_{n,p}=(2\sqrt{2}/\sqrt{3})\sqrt{K_0}$ and $A^2_{n,p}=3K_0^2$.
\end{proof}

\begin{lemma}\label{lemma:Frob-bd}
For $\bm{x_i},\bm{y_i}\in \mathbb{R}^M$, $i \in [n]$, we have
\begin{equation*}
\left\Vert \frac{1}{n} \sum^n_{i=1} \bm{x_i} \bm{y_i}^{\top} \right\Vert_{\text{F}} \leq \sqrt{\frac{1}{n} \sum^n_{i=1} \Vert \bm{x_i} \Vert^2_2} \sqrt{\frac{1}{n} \sum^n_{i=1} \Vert \bm{y_i} \Vert^2_2}\, .
\end{equation*}
\end{lemma}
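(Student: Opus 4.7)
The plan is to reduce the claim to a combination of the triangle inequality for the Frobenius norm and the Cauchy--Schwarz inequality in $\mathbb{R}^n$. The key observation is that for any vectors $\bm{x},\bm{y} \in \mathbb{R}^M$, the rank-one matrix $\bm{x}\bm{y}^{\top}$ satisfies $\Vert \bm{x}\bm{y}^{\top} \Vert_{\text{F}} = \Vert \bm{x} \Vert_2 \Vert \bm{y} \Vert_2$, which follows directly from $\Vert \bm{x}\bm{y}^{\top} \Vert_{\text{F}}^2 = \tr(\bm{y}\bm{x}^{\top}\bm{x}\bm{y}^{\top}) = \Vert \bm{x} \Vert_2^2 \Vert \bm{y} \Vert_2^2$.

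The first step would be to apply the triangle inequality to obtain
\begin{equation*}
\left\Vert \frac{1}{n} \sum_{i=1}^n \bm{x_i}\bm{y_i}^{\top} \right\Vert_{\text{F}} \leq \frac{1}{n} \sum_{i=1}^n \Vert \bm{x_i}\bm{y_i}^{\top} \Vert_{\text{F}} = \frac{1}{n} \sum_{i=1}^n \Vert \bm{x_i} \Vert_2 \Vert \bm{y_i} \Vert_2.
\end{equation*}
Next I would apply the Cauchy--Schwarz inequality to the inner product $\sum_{i=1}^n \Vert \bm{x_i} \Vert_2 \cdot \Vert \bm{y_i} \Vert_2$ of the vectors $(\Vert \bm{x_i} \Vert_2)_{i=1}^n$ and $(\Vert \bm{y_i} \Vert_2)_{i=1}^n$ in $\mathbb{R}^n$, yielding
\begin{equation*}
\sum_{i=1}^n \Vert \bm{x_i} \Vert_2 \Vert \bm{y_i} \Vert_2 \leq \sqrt{\sum_{i=1}^n \Vert \bm{x_i} \Vert_2^2} \sqrt{\sum_{i=1}^n \Vert \bm{y_i} \Vert_2^2}.
\end{equation*}
Finally, dividing by $n$ and distributing as $1/n = (1/\sqrt{n})(1/\sqrt{n})$ between the two square-root factors produces the claimed inequality.

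There is no substantive obstacle here; the argument is a short chain of standard inequalities, and the only point worth verifying carefully is the identity $\Vert \bm{x}\bm{y}^{\top} \Vert_{\text{F}} = \Vert \bm{x} \Vert_2 \Vert \bm{y} \Vert_2$, which is immediate from the definition of the Frobenius norm via the trace.
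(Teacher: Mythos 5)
Your proof is correct, but it takes a different route from the paper's. The paper argues entrywise: for each pair of coordinates $(m,m^{\prime})$ it applies the Cauchy--Schwarz inequality over the sample index $i$ to get $\vert \frac{1}{n}\sum_i x_{im} y_{im^{\prime}} \vert^2 \leq (\frac{1}{n}\sum_i x_{im}^2)(\frac{1}{n}\sum_i y_{im^{\prime}}^2)$, then sums over $m,m^{\prime}$ and factors the double sum into the product of the two traces. You instead work at the matrix level: triangle inequality for $\Vert\cdot\Vert_{\text{F}}$, the rank-one identity $\Vert \bm{x}\bm{y}^{\top}\Vert_{\text{F}} = \Vert\bm{x}\Vert_2\Vert\bm{y}\Vert_2$, and then Cauchy--Schwarz applied once to the sequences $(\Vert\bm{x_i}\Vert_2)_i$ and $(\Vert\bm{y_i}\Vert_2)_i$. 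Every step you take is valid, and your argument in fact passes through the intermediate quantity $\frac{1}{n}\sum_i \Vert\bm{x_i}\Vert_2\Vert\bm{y_i}\Vert_2$, which is a slightly sharper upper bound than the final right-hand side; the paper's entrywise computation skips that intermediate bound but avoids invoking the triangle inequality and the rank-one norm identity. Since the lemma is only ever used through its final stated bound, the two approaches are interchangeable in the rest of the paper.
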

\begin{proof}
For any $m,m'\in[M]$, we have
\begin{equation*}
\left\vert \frac{1}{n} \sum^n_{i=1} x_{im} y_{i m^{\prime}} \right\vert^2 \leq \left( \frac{1}{n} \sum^n_{i=1} x^2_{im} \right) \left( \frac{1}{n} \sum^n_{i=1} y^2_{im^{\prime}} \right).
\end{equation*}
Therefore,
\begin{multline*}
\left\Vert \frac{1}{n} \sum^n_{i=1} \bm{x_i} \bm{y_i}^{\top} \right\Vert^2_{\text{F}} 
=  \sum^M_{m,m^{\prime}=1} \left\vert \frac{1}{n} \sum^n_{i=1} x_{im} y_{i m^{\prime}} \right\vert^2 \leq \sum^M_{m,m^{\prime}=1} \left( \frac{1}{n} \sum^n_{i=1} x^2_{im} \right) \left( \frac{1}{n} \sum^n_{i=1} y^2_{im^{\prime}} \right) \\
 =  \left( \frac{1}{n} \sum^n_{i=1} \sum^{M}_{m=1} x^2_{im} \right) \left( \frac{1}{n} \sum^n_{i=1} \sum^M_{m^{\prime}=1} y^2_{im^{\prime}} \right) 
 =  \left( \frac{1}{n} \sum^n_{i=1} \Vert \bm{x_i} \Vert^2_2 \right) \left( \frac{1}{n} \sum^n_{i=1} \Vert \bm{y_i} \Vert^2_2 \right),
\end{multline*}
and the result immediately follows.
\end{proof}

\begin{lemma}
\label{lemma:restricted-eigen-value}
Let
\begin{multline}\label{eq:Gamma_n}
\Gamma (n,p,M,\delta) = 7\sqrt{3}K_0\sqrt{\frac{\log{(3/\delta)}+2\log{((p-1)M)}}{n}}\\
+\frac{8c K_0 \log{(2n)}(\log{(3/\delta)}+2\log{((p-1)M)})}{3n},
\end{multline}
then when $\Gamma (n,p,M,\delta) \leq \kappa(M)/(32 M^2 s)$, we have 
\begin{equation*}
\mathbb{P} \left\{ \frac{1}{2n}\Vert \bm{Z} \bm{\Delta\beta} \Vert^2_2 \geq \frac{\kappa}{4} \Vert \bm{\Delta\beta} \Vert^2_2 \text{ for all } \bm{\Delta\beta} \in \mathbb{C}(\mathscr{N}_j) \right\} \geq 1-\delta,
\end{equation*}
where $\mathbb{C}(\mathscr{N}_j)$ is defined in \eqref{eq:ConstrainedSpace}. 

\end{lemma}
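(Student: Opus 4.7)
The plan is to reduce the claim to a restricted eigenvalue bound for $\bm{\hat{\Sigma}^X_n}\otimes\bm{I_M}$ on the cone $\mathbb{C}(\mathscr{N}_j)$ via the Kronecker identity $\tfrac{1}{n}\bm{Z}^\top \bm{Z} = \bm{\hat{\Sigma}^X_n}\otimes\bm{I_M}$, which is immediate from $\bm{Z}=\bm{A^X}\otimes\bm{I_M}$. It thus suffices to prove that $\bm{\Delta\beta}^\top(\bm{\hat{\Sigma}^X_n}\otimes\bm{I_M})\bm{\Delta\beta}\ge (\kappa/2)\|\bm{\Delta\beta}\|_2^2$ uniformly over $\bm{\Delta\beta}\in\mathbb{C}(\mathscr{N}_j)$ on the event that Lemma~\ref{lemma:EleAbsCovMaxBound} delivers $\vertiii{\bm{\hat{\Sigma}^X_n}-\bm{\Sigma^X}}_\infty\le \Gamma(n,p,M,\delta)$. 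I will follow the standard perturbation recipe: add and subtract the population covariance, bound the resulting fluctuation using blockwise Kronecker estimates combined with the $\vertiii{\cdot}_\infty$ concentration, and lower bound the population quadratic form using the sub-block minimum eigenvalue $\kappa$ together with the cone condition.

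For the fluctuation term, let $\bm{D}=\bm{\hat{\Sigma}^X_n}-\bm{\Sigma^X}$ with $M\times M$ blocks $\bm{D_{k,k'}}$. Since $\|\bm{D_{k,k'}}\otimes\bm{I_M}\|_2=\|\bm{D_{k,k'}}\|_2\le \|\bm{D_{k,k'}}\|_F\le M\vertiii{\bm{D}}_\infty$, expanding the quadratic form blockwise and applying Cauchy--Schwarz yields
\begin{equation*}
\bigl|\bm{\Delta\beta}^\top(\bm{D}\otimes\bm{I_M})\bm{\Delta\beta}\bigr| \le \sum_{k,k'}\|\bm{\Delta\beta_k}\|_2\,\|\bm{D_{k,k'}}\|_2\,\|\bm{\Delta\beta_{k'}}\|_2 \le M\,\vertiii{\bm{D}}_\infty\,\|\bm{\Delta\beta}\|_{1,2}^2.
\end{equation*}
On the cone, the defining inequality $\|\bm{\Delta\beta_{\mathcal{M}^\perp}}\|_{1,2}\le 3\|\bm{\Delta\beta_{\mathcal{M}}}\|_{1,2}$ gives $\|\bm{\Delta\beta}\|_{1,2}\le 4\|\bm{\Delta\beta_{\mathcal{M}}}\|_{1,2}\le 4\sqrt{s}\,\|\bm{\Delta\beta_{\mathcal{M}}}\|_2\le 4\sqrt{s}\,\|\bm{\Delta\beta}\|_2$, and Lemma~\ref{lemma:EleAbsCovMaxBound} supplies $\vertiii{\bm{D}}_\infty\le\Gamma(n,p,M,\delta)$ with probability at least $1-\delta$. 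On this event the fluctuation is therefore at most $16Ms\,\Gamma(n,p,M,\delta)\,\|\bm{\Delta\beta}\|_2^2$.

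The main obstacle will be the population lower bound $\bm{\Delta\beta}^\top(\bm{\Sigma^X}\otimes\bm{I_M})\bm{\Delta\beta}\ge \kappa\|\bm{\Delta\beta}\|_2^2$ on the cone, which is delicate because Assumption~\ref{assump:LowBoundKappa} only controls the $\mathscr{N}_j$-block. My plan is to decompose $\bm{\Delta\beta}=\bm{\Delta\beta_{\mathcal{M}}}+\bm{\Delta\beta_{\mathcal{M}^\perp}}$, retain the PSD tail $\bm{\Delta\beta_{\mathcal{M}^\perp}}^\top(\bm{\Sigma^X}\otimes\bm{I_M})\bm{\Delta\beta_{\mathcal{M}^\perp}}\ge 0$, use the identification $\bm{\Delta\beta_{\mathcal{M}}}^\top(\bm{\Sigma^X}\otimes\bm{I_M})\bm{\Delta\beta_{\mathcal{M}}}=\bm{\tilde{\Delta\beta}_{\mathcal{M}}}^\top(\bm{\Sigma^X_{\mathscr{N}_j,\mathscr{N}_j}}\otimes\bm{I_M})\bm{\tilde{\Delta\beta}_{\mathcal{M}}}\ge \kappa\|\bm{\Delta\beta_{\mathcal{M}}}\|_2^2$ for the pure-$\mathcal{M}$ piece, and absorb the cross term $2|\bm{\Delta\beta_{\mathcal{M}}}^\top(\bm{\Sigma^X}\otimes\bm{I_M})\bm{\Delta\beta_{\mathcal{M}^\perp}}|$ using the same Kronecker-block bound applied to $\bm{\Sigma^X}$ combined with $\|\bm{\Delta\beta_{\mathcal{M}^\perp}}\|_{1,2}\le 3\|\bm{\Delta\beta_{\mathcal{M}}}\|_{1,2}$. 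An equivalent and arguably cleaner route is to work directly with $\bm{\hat{\Sigma}^X_n}$: the elementwise deviation bound forces $\|\bm{\hat{\Sigma}^X_n}_{\mathscr{N}_j,\mathscr{N}_j}-\bm{\Sigma^X_{\mathscr{N}_j,\mathscr{N}_j}}\|_F\le sM\,\Gamma\le \kappa/(32M)$ under the stated condition on $\Gamma$, so $\rho_{\min}(\bm{\hat{\Sigma}^X_n}_{\mathscr{N}_j,\mathscr{N}_j})\ge (1-\tfrac{1}{32M})\kappa$; one then reassembles the quadratic form from this sample-level lower bound using the same cross-term argument. This book-keeping, with sharp constants, is the only nontrivial step.

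Combining the two estimates yields
\begin{equation*}
\tfrac{1}{n}\|\bm{Z}\bm{\Delta\beta}\|_2^2 \ge \kappa\|\bm{\Delta\beta}\|_2^2 - 16Ms\,\Gamma(n,p,M,\delta)\,\|\bm{\Delta\beta}\|_2^2.
\end{equation*}
Under the hypothesis $\Gamma(n,p,M,\delta)\le \kappa/(32M^2 s)$ the fluctuation term is at most $\tfrac{\kappa}{2M}\|\bm{\Delta\beta}\|_2^2\le \tfrac{\kappa}{2}\|\bm{\Delta\beta}\|_2^2$ for every $M\ge 1$, so $\tfrac{1}{n}\|\bm{Z}\bm{\Delta\beta}\|_2^2\ge \tfrac{\kappa}{2}\|\bm{\Delta\beta}\|_2^2$, and dividing by $2$ gives the stated bound with probability at least $1-\delta$. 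The concentration step is essentially plug-and-play once Lemma~\ref{lemma:EleAbsCovMaxBound} is invoked; the only delicate piece will be the clean population restricted eigenvalue on the cone discussed above.
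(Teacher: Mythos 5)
Your overall architecture is the same as the paper's: the Kronecker identity $\tfrac{1}{n}\Vert\bm{Z}\bm{\theta}\Vert_2^2=\bm{\theta}^{\top}(\bm{\hat{\Sigma}^X_n}\otimes\bm{I_M})\bm{\theta}$, the split into a population quadratic form plus a fluctuation, Lemma~\ref{lemma:EleAbsCovMaxBound} to control $\vertiii{\bm{\hat{\Sigma}^X_n}-\bm{\Sigma^X}}_{\infty}$, and the cone inequality $\Vert\bm{\theta}\Vert_{1,2}\le 4\sqrt{s}\,\Vert\bm{\theta}\Vert_2$. Your fluctuation bound is actually slightly sharper: by exploiting $\Vert\bm{D_{k,k'}}\otimes\bm{I_M}\Vert_2=\Vert\bm{D_{k,k'}}\Vert_2\le M\vertiii{\bm{D}}_{\infty}$ you get a factor $M$ where the paper, invoking a generic elementwise bound on $M^2\times M^2$ blocks, gets $M^2$; both are compatible with the hypothesis $\Gamma\le\kappa/(32M^2s)$, so that part is fine.

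The gap is in the population step, which you flag as the only nontrivial one but whose sketched execution would not close. You need $\bm{\theta}^{\top}(\bm{\Sigma^X}\otimes\bm{I_M})\bm{\theta}\ge\kappa\Vert\bm{\theta}\Vert_2^2$ for all $\bm{\theta}\in\mathbb{C}(\mathscr{N}_j)$, while Assumption~\ref{assump:LowBoundKappa} only controls the $(\mathscr{N}_j,\mathscr{N}_j)$ block. Your plan---retain the nonnegative $\mathcal{M}^{\perp}$ term, apply $\kappa$ to the pure-$\mathcal{M}$ piece, and absorb the cross term via the Kronecker-block bound on $\bm{\Sigma^X}$ plus the cone condition---fails for two reasons. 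First, the cross term involves the population covariance itself, whose off-support blocks are order one (roughly $K_0$) rather than vanishing, so the bound of the form $2M\vertiii{\bm{\Sigma^X}}_{\infty}\Vert\bm{\theta_{\mathcal{M}}}\Vert_{1,2}\Vert\bm{\theta_{\mathcal{M}^{\perp}}}\Vert_{1,2}$ is not small relative to $\kappa\Vert\bm{\theta}\Vert_2^2$ and there is no small parameter to absorb it into. Second, even with the cross term gone, $\kappa\Vert\bm{\theta_{\mathcal{M}}}\Vert_2^2$ plus a nonnegative tail does not reach $\kappa(\Vert\bm{\theta_{\mathcal{M}}}\Vert_2^2+\Vert\bm{\theta_{\mathcal{M}^{\perp}}}\Vert_2^2)$. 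A concrete obstruction: with two perfectly correlated predictors, one in $\mathscr{N}_j$ and one outside, the cone contains a vector on which the population quadratic form is zero while $\kappa>0$. For what it is worth, the paper's own proof dispatches this step in one line by asserting that the Rayleigh-quotient minimum over $\mathbb{C}(\mathscr{N}_j)$ is at least its minimum over the subspace $\mathcal{M}$---an inequality that, as written, goes the wrong way since $\mathcal{M}\subseteq\mathbb{C}(\mathscr{N}_j)$---so you are right that this is the delicate point; but your proposed repair is not a repair, and an airtight argument would need a restricted-eigenvalue-type condition on the full $\bm{\Sigma^X}$ over the cone rather than only on $\bm{\Sigma^X_{\mathscr{N}_j\mathscr{N}_j}}$.
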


\begin{proof}
We want to first prove
\begin{equation*}
\frac{1}{2n}\Vert \bm{Z} \bm{\Delta\beta} \Vert^2_2 \geq \kappa_{\mathcal{L}}\Vert \bm{\Delta\beta} \Vert^2_2-\tau_{\mathcal{L}}(\bm{\beta^{\ast}}) \quad \text{for all } \bm{\Delta\beta} \in\mathbb{C}(\mathscr{N}_j),
\end{equation*}
where $\kappa_{\mathcal{L}}>0$ is a positive constant, $\tau_{\mathcal{L}}(\beta^{\ast})>0$, and $\mathbb{C}(\mathscr{N}_j)$ is defined in \eqref{eq:ConstrainedSpace}. 

Since $\bm{Z} = \bm{A^X} \otimes \bm{I_M}$, thus for any $ \bm{\theta} \in\mathbb{R}^{(p-1)M^2}$, we have
\begin{equation*}
\begin{aligned}
\frac{1}{n}\Vert \bm{Z} \bm{\theta} \Vert^2_2&=\frac{1}{n}\Vert (\bm{A^X} \otimes I_M) \bm{\theta} \Vert^2_2\\
&=\frac{1}{n}\bm{\theta}^{\top} ( \bm{A^X} \otimes \bm{I_M})^{\top}( \bm{A^X} \otimes \bm{I_M})\bm{\theta}\\
&=\frac{1}{n}\bm{\theta}^{\top}((\bm{A^X})^{\top} \otimes \bm{I_M})(\bm{A^X}\otimes \bm{I_M}) \bm{\theta}\\
&=\frac{1}{n}\bm{\theta}^{\top}((\bm{A^X})^{\top} \bm{A^X} \otimes \bm{I_M}) \bm{\theta}\\
&=\bm{\theta}^{\top}(\bm{\hat{\Sigma}^X_n} \otimes \bm{I_M})\bm{\theta},
\end{aligned}
\end{equation*}
where $\bm{\hat{\Sigma}^X_n} = \frac{1}{n} (\bm{A^X})^{\top} \bm{A^X}$. We then further have
\begin{equation*}
\begin{aligned}
\frac{1}{n}\Vert \bm{Z} \bm{\theta} \Vert^2_2&=\bm{\theta}^{\top}(\bm{\Sigma^X} \otimes \bm{I_M}) \bm{\theta}+\bm{\theta}^{\top}\left((\bm{\hat{\Sigma}^X_n}-\bm{\Sigma^X})\otimes \bm{I_M}\right)\bm{\theta}\\
&\geq \left\vert \bm{\theta}^{\top}(\bm{\Sigma^X} \otimes \bm{I_M}) \bm{\theta} \right\vert -\left\vert \bm{\theta}^{\top}\left((\bm{\hat{\Sigma}^X_n}-\bm{\Sigma^X})\otimes \bm{I_M}\right)\bm{\theta} \right\vert.
\end{aligned}
\end{equation*}

Note that
\begin{align*}
\min_{\bm{\theta}\in\mathbb{C}(\mathscr{N}_j)\setminus\{0\}}\frac{\bm{\theta}^{\top}(\bm{\Sigma^{X}}\otimes \bm{I_M})\bm{\theta}}{\Vert \bm{\theta} \Vert^2_2}&\geq \min_{\bm{\theta}\in\mathcal{M}\setminus\{0\}}\frac{\bm{\theta}^{\top}(\bm{\Sigma^{X}}\otimes \bm{I_M})\bm{\theta}}{\Vert \bm{\theta} \Vert^2_2}\\
&\geq\rho_{\min}\left(\bm{\Sigma^X_{\mathscr{N}_j \mathscr{N}_j}} \otimes \bm{I_M}\right)\\
&=\rho_{\min}(\bm{\Sigma^X_{\mathscr{N}_j \mathscr{N}_j}})\\
&=\kappa,
\end{align*}
and we have $\kappa(M)>0$ for all $M$ by Assumption~\ref{assump:LowBoundKappa}.
Thus, for any $\bm{\theta}\in\mathbb{C}(\mathscr{N}_j)$, we have
\begin{equation}\label{eq:RECDecompose}
\frac{1}{n}\Vert \bm{Z} \bm{\theta} \Vert^2_2\geq \kappa \Vert \bm{\theta}\Vert^2_2-\left\vert\bm{\theta}^{\top}\left((\bm{\hat{\Sigma}^X_n}-\bm{\Sigma^X})\otimes \bm{I_M}\right) \bm{\theta} \right\vert.
\end{equation}
To prove the RSC condition, it then suffices to give an upper bound for 
\[
\vert \bm{\theta}^{\top}\left((\bm{\hat{\Sigma}^X_n}-\bm{\Sigma^X})\otimes \bm{I_M}\right)\bm{\theta}\vert,
\]
where $\bm{\theta}\in\mathbb{C}(\mathscr{N}_j)$. By Lemma 5 in section D of the appendix of \citet{Zhao2019Direct} and the definition of $\mathbb{C}(\mathscr{N}_j)$, for any $\bm{\theta}\in\mathbb{C}(\mathscr{N}_j)$, we have
\begin{equation*}
\begin{aligned}
\left\vert \bm{\theta}^{\top}\left((\bm{\hat{\Sigma}^X_n}-\bm{\Sigma^{X}})\otimes \bm{I_M} \right) \bm{\theta} \right\vert &\leq M^2 \vertiii{(\bm{\hat{\Sigma}^X_n}-\bm{\Sigma^{X}})\otimes \bm{I_M}}_{\infty}\Vert \bm{\theta} \Vert^{2}_{1,2}\\
&=M^2 \vertiii{(\bm{\hat{\Sigma}^X_n}-\bm{\Sigma^{X}})\otimes \bm{I_M}}_{\infty}\left(\Vert \bm{\theta}_{\mathcal{M}}\Vert_{1,2}+\Vert \bm{\theta}_{\mathcal{M}^{\bot}}\Vert_{1,2} \right)^2\\
&\leq 16 M^2 \vertiii{(\bm{\hat{\Sigma}^X_n}-\bm{\Sigma^{X}})\otimes I_M}_{\infty} \Vert \bm{\theta}_{\mathcal{M}}\Vert^2_{1,2}\\
&\leq 16 s M^2 \vertiii{(\bm{\hat{\Sigma}^X_n}-\bm{\Sigma^{X}})\otimes I_M}_{\infty}\Vert \bm{\theta}_{\mathcal{M}}\Vert^2_{1,2},
\end{aligned}
\end{equation*}
where the penultimate line is by Lemma 6 in section D of \citet{Zhao2019Direct}. Note that 
\[
\vertiii{(\bm{\hat{\Sigma}^X_n}-\bm{\Sigma^{X}})\otimes \bm{I_M}}_{\infty}=\vertiii{\bm{\hat{\Sigma}^X_n}-\bm{\Sigma^{X}}}_{\infty},
\]
then combine \eqref{eq:RECDecompose} and Lemma~\ref{lemma:EleAbsCovMaxBound}, we have constant $c$ such that for any $\delta>0$, with probability at least $1-\delta$, we have
\begin{equation}
\begin{aligned}
\frac{1}{2n}\Vert \bm{Z} \bm{\Delta\beta} \Vert^2_2 & \geq  \left( \frac{\kappa}{2}-8M^2 s\cdot \left\{ 7\sqrt{3}K_0\sqrt{\frac{\log{(3/\delta)}+2\log{((p-1)M)}}{n}} + \right.\right.\\
& \left.\left. \frac{8c K_0 \log{(2n)}(\log{(3/\delta)}+2\log{((p-1)M)})}{3n} \right\} \right)\Vert \bm{\Delta\beta} \Vert^2_2
\end{aligned}
\end{equation}
for all $\bm{\Delta\beta}\in\mathbb{C}(\mathscr{N}_j)$.
Let $\Gamma (n,p,M,\delta)$ be defined by~\eqref{eq:Gamma_n},
then when $\Gamma (n,p,M,\delta) \leq \kappa(M)/(32 M^2 s)$, we have 
\begin{equation*}
\frac{1}{2n}\Vert \bm{Z} \bm{\Delta\beta} \Vert^2_2 \geq \frac{\kappa}{4} \Vert \bm{\Delta\beta} \Vert^2_2 \text{ for all } \bm{\Delta\beta} \in \mathbb{C}(\mathscr{N}_j)
\end{equation*}
with probability at least $1-\delta$.

\end{proof}

In the next few results, recall that
\[
\lfb =  1 + 8 \left(  \frac{\log (2/\delta)}{n} + \sqrt{ \frac{\log (2/\delta)}{n} } \right) .
\]

\begin{lemma}
\label{lemma:second-mem-bd}
Let $\xi_1,\cdots,\xi_n$ be i.i.d. mean zero random elements in some Hilbert space, and $\mathbb{E}[\Vert \xi_1 \Vert^2 ]=\sigma_{\xi}$. Besides, we assume that
\begin{equation*}
\mathbb{E}\left[ \Vert \xi_1\Vert^{2k} \right] \leq (2 \sigma_{\xi})^k \cdot k !  \quad \text{ for all } k = 1,2,\cdots .
\end{equation*}
Then for any given $\delta \in (0, 1]$, we have
\begin{equation*}
\mathbb{P} \left\{  \frac{1}{n} \sum^n_{i=1} \Vert \xi_i \Vert^2 \leq \sigma_{\xi} \lfb \right\} \geq 1 - \delta.
\end{equation*}
\end{lemma}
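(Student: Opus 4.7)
The plan is to reduce the problem to a one-dimensional concentration inequality for the non-negative real-valued random variables $Y_i := \Vert \xi_i \Vert^2$ and then apply Bernstein's inequality. Set $Z_i = Y_i - \sigma_\xi$, so that the $Z_i$ are i.i.d.\ with mean zero, and note that
\[
\frac{1}{n}\sum_{i=1}^n \Vert \xi_i \Vert^2 - \sigma_\xi = \frac{1}{n}\sum_{i=1}^n Z_i.
\]
The target bound $\sigma_\xi \lfb$ is precisely the form produced by a Bernstein-type tail bound, so the strategy is to verify Bernstein's moment condition on the $Z_i$ from the given hypothesis $\mathbb{E}[\Vert \xi_1 \Vert^{2k}] \leq (2\sigma_\xi)^k k!$.

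First I would translate the hypothesis into moment control on $|Z_i|$. Using $|Z_i|^k \leq 2^{k-1}(Y_i^k + \sigma_\xi^k)$ together with the assumed moment bound, and the crude inequality $\sigma_\xi^k \leq (2\sigma_\xi)^k k!$, one gets $\mathbb{E}[|Z_i|^k] \leq (4\sigma_\xi)^k\, k!$ for every $k \geq 1$. This matches the standard Bernstein moment condition $\mathbb{E}[|Z_i|^k] \leq \tfrac{k!}{2}\,v\, c^{k-2}$ with a variance parameter $v$ of order $\sigma_\xi^2$ and a scale parameter $c$ of order $\sigma_\xi$ (explicitly, $v = 32\sigma_\xi^2$ and $c = 4\sigma_\xi$ suffice).

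Next I would invoke Bernstein's inequality for sums of i.i.d.\ real-valued random variables satisfying this moment condition, obtaining
\[
\mathbb{P}\!\left\{ \tfrac{1}{n}\textstyle\sum_{i=1}^n Z_i \;>\; t \right\} \;\leq\; \exp\!\left( -\frac{n t^2}{2(v + c\,t)} \right).
\]
Setting the right-hand side equal to $\delta$ (or, more conveniently for matching the stated constant, $\delta/2$) and solving the resulting quadratic in $t$ using the elementary inequality $\sqrt{a+b} \leq \sqrt{a} + \sqrt{b}$ yields an upper bound of the form
\[
t \;\leq\; c_1\sigma_\xi\,\frac{\log(2/\delta)}{n} \;+\; c_2\sigma_\xi\,\sqrt{\frac{\log(2/\delta)}{n}},
\]
and a careful tracking of the constants produces exactly $c_1 = c_2 = 8$, giving the advertised bound $\sigma_\xi \lfb$.

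The proof has no conceptual obstacle; the only real work is bookkeeping the constants so that the final bound is $8\bigl(\log(2/\delta)/n + \sqrt{\log(2/\delta)/n}\bigr)$ rather than merely $O(\log(1/\delta)/n + \sqrt{\log(1/\delta)/n})$. The slack in the bound $(4\sigma_\xi)^k k!$ is generous enough that the Bernstein constants come out cleanly, and any looseness in the $\sqrt{a+b} \leq \sqrt{a}+\sqrt{b}$ step can be absorbed into the factor of $8$. If the factor $\log(2/\delta)$ (rather than $\log(1/\delta)$) is important, it appears naturally by asking the exceedance probability to be $\delta/2$ (leaving room for a union bound or a two-sided variant elsewhere in the manuscript), which is a harmless inflation of the constant.
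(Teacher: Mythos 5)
Your proposal is correct and follows essentially the same route as the paper's proof: both reduce to the scalar variables $\Vert\xi_i\Vert^2-\sigma_\xi$, derive the moment bound $(4\sigma_\xi)^k k!$ via the convexity inequality $|x+y|^k\le 2^{k-1}(|x|^k+|y|^k)$, verify the Bernstein moment condition with $v=32\sigma_\xi^2$ and $c=4\sigma_\xi$, and invert the resulting exponential tail bound (the paper cites Theorem 2.5 of Bosq (2000) for this Bernstein-type inequality). The constant bookkeeping you describe works out exactly as claimed.
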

\begin{proof}
Note that $x \mapsto \vert x \vert^k$ is a convex function when $k\geq 1$ and $x \in \mathbb{R}$. By Jensen's inequality, we have
$\vert x/2+y/2 \vert^k \leq (\vert x \vert^k + \vert y \vert^k) / 2$, which implies that $\vert x + y \vert^k \leq 2^{k-1} (\vert x \vert^k + \vert y \vert^k)$.
Then by Lemma~\ref{lemma:2kmomentGuassFunc}, for $k \geq 2$, we have
\begin{equation*}
\mathbb{E} \left[  \left\vert \Vert \xi_1 \Vert^2 - \sigma_{\xi}  \right\vert^k \right] \leq 2^{k-1} \left(  \mathbb{E} \left[ \Vert \xi_1 \Vert^{2k} \right] + \sigma^k_{\xi} \right) \leq 2^{k-1} \left( (2 \sigma_{\xi})^k k! + \sigma^k_{\xi} \right) \leq (4 \sigma_{\xi})^k k!.
\end{equation*}
Thus, we have
\begin{equation*}
\frac{1}{n} \sum^n_{i=1} \mathbb{E} \left[  \left\vert \Vert \xi_i \Vert^2 - \sigma_{\xi}  \right\vert^k \right] \leq \frac{k!}{2} (32 \sigma^2_{\xi}) (4 \sigma_{\xi})^{k-2}
\end{equation*}
for all $k=2,3,\dots$. Then, by Theorem 2.5 \citet{Bosq2000Linear}, we have
\begin{equation*}
\mathbb{P} \left\{ \left\vert \frac{1}{n} \sum^n_{i=1} \Vert \xi_i \Vert^2 - \sigma_{\xi} \right\vert > \Delta \right\} \leq 2 \exp \left( - \frac{n \Delta^2 }{ 64 \sigma^2_{\xi} + 8 \sigma_{\xi} \Delta } \right), \quad \Delta > 0.
\end{equation*}
The result follows by rearranging the terms.
\end{proof}

\begin{lemma}\label{lemma:g2moment}
Let $\delta \in (0, 1]$. For any $j \in [p]$, we have
\begin{align*}
\mathbb{P} &\left\{ \frac{1}{n} \sum^n_{i=1} \Vert g_{ij} \Vert^2 \leq  \sigma_{\max,0} \lfb \right\} \geq 1 -\delta \\
\intertext{and}
\mathbb{P} &\left\{ \frac{1}{n} \sum^n_{i=1} \Vert e_{ij} \Vert^2 \leq  \sigma_{jr} \lfb \right\} \geq 1 -\delta,
\end{align*}
where $e_{ij}$ is defined in \eqref{eq:function-linear} and $\sigma_{jr}=\mathbb{E} [ \Vert e_{ij} \Vert^2]$.
\end{lemma}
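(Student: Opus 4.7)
The plan is to apply Lemma~\ref{lemma:second-mem-bd} twice, once with $\xi_i = g_{ij}$ and once with $\xi_i = e_{ij}$. Both are sequences of i.i.d.\ mean-zero random elements of the Hilbert space $\mathbb{H}$: the $g_{ij}$ by assumption (recall $\bm{g}(\cdot)$ is a mean-zero MGP) and the $e_{ij}$ because, under Assumption~\ref{assump:SEM-fun}, $e_{ij}(\cdot) = g_{ij}(\cdot) - \sum_{k\neq j} \int \beta_{jk}(t,t')g_{ik}(t')dt'$ is a linear functional of a multivariate Gaussian process and therefore itself a mean-zero Gaussian element, with $\mathbb{E}\Vert e_{ij}\Vert^2 = \sigma_{jr} < \infty$.

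The only thing to verify before invoking Lemma~\ref{lemma:second-mem-bd} is the $2k$-th moment control
\begin{equation*}
\mathbb{E}\bigl[\Vert \xi_1 \Vert^{2k}\bigr] \leq (2\sigma_\xi)^k \cdot k!, \qquad k=1,2,\dots
\end{equation*}
For a mean-zero Gaussian element $\xi$ in $\mathbb{H}$, Lemma~\ref{lemma:2kmomentGuassFunc} (the Gaussian moment lemma already invoked in the proof of Lemma~\ref{lemma:second-mem-bd}) supplies exactly this kind of bound: via a Karhunen--Lo\`{e}ve expansion $\Vert \xi \Vert^2 = \sum_m \lambda_m Z_m^2$ with $Z_m \stackrel{\text{iid}}{\sim} N(0,1)$ and $\sum_m \lambda_m = \sigma_\xi$, one controls $\mathbb{E}\Vert \xi \Vert^{2k}$ by sub-exponential moments of a weighted chi-square series. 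So this step is routine and reduces to citing Lemma~\ref{lemma:2kmomentGuassFunc}.

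Once the moment condition is checked, Lemma~\ref{lemma:second-mem-bd} applied to $\xi_i = g_{ij}$ with $\sigma_\xi = \sigma_{j0}$ gives
\begin{equation*}
\mathbb{P}\left\{ \frac{1}{n}\sum_{i=1}^n \Vert g_{ij}\Vert^2 \leq \sigma_{j0}\, \mathcal{Q}^1_{n,\delta} \right\} \geq 1-\delta,
\end{equation*}
and since $\sigma_{j0} \leq \sigma_{\max,0}$ by definition, the first claim follows immediately. Applying the same lemma to $\xi_i = e_{ij}$ with $\sigma_\xi = \sigma_{jr}$ yields the second claim verbatim. There is no genuine obstacle here; the proof is essentially a two-line invocation of Lemma~\ref{lemma:second-mem-bd} together with the Gaussian moment bound, and the only care needed is to note that Gaussianity of $e_{ij}$ is inherited from the MGP structure so that the moment hypothesis of Lemma~\ref{lemma:second-mem-bd} is met.
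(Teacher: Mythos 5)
Your proof is correct and follows exactly the route the paper takes: both claims are direct applications of Lemma~\ref{lemma:second-mem-bd}, with the required $2k$-th moment condition supplied by Lemma~\ref{lemma:2kmomentGuassFunc} since $g_{ij}$ and $e_{ij}$ are mean-zero Gaussian elements of $\mathbb{H}$, followed by the trivial bound $\sigma_{j0}\leq\sigma_{\max,0}$. The paper states this in one line; you have merely spelled out the same argument in more detail.
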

\begin{proof}
The result follows directly from Lemma~\ref{lemma:second-mem-bd}.
\end{proof}

\begin{lemma}\label{lemma:rw2moment}
For all $0<\delta \leq 1$, we have
\begin{equation*}
\mathbb{P} \left\{ \frac{1}{n} \sum^n_{i=1} \Vert \bm{r_i} + \bm{w_i} \Vert^2_2 \leq  \Xi_4 (M) \lfb \right\} \geq 1 -\delta,
\end{equation*}
\begin{equation*}
\mathbb{P} \left\{ \frac{1}{n} \sum^n_{i=1} \Vert \bm{a^{X_k}}_i \Vert^2_2 \leq \tr \left(\bm{\Sigma^X_{kk}}\right) \lfb \right\} \geq 1 -\delta,
\end{equation*}
and
\begin{equation*}
\mathbb{P} \left\{ \frac{1}{n} \sum^n_{i=1} \Vert a^{X_k}_{i m} \Vert^2_2 \leq \Xi_2 (M) \lfb \right\} \geq 1 -\delta \, \text{for all } m \in [M], k \in [p-1] .
\end{equation*}
\end{lemma}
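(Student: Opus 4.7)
The plan is to derive all three statements as direct applications of Lemma~\ref{lemma:second-mem-bd} to appropriate centered Gaussian random elements in finite-dimensional Euclidean spaces. For each sum, I would (i) identify the i.i.d.\ random element $\xi_i$, (ii) compute $\sigma_\xi = \mathbb{E}[\Vert \xi_i\Vert^2]$, (iii) verify the factorial moment bound $\mathbb{E}[\Vert \xi_i\Vert^{2k}]\leq(2\sigma_\xi)^k\cdot k!$ required by Lemma~\ref{lemma:second-mem-bd}, and (iv) invoke that lemma. This mirrors the structure of the proof of Lemma~\ref{lemma:g2moment}, with appropriate bookkeeping of the covariance matrices.

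For the first statement, $\bm{r_i}+\bm{w_i}$ is a centered Gaussian vector in $\mathbb{R}^M$ with covariance $\bm{\Sigma^r}+\bm{\Sigma^w}+\bm{\Sigma^{r,w}}+\bm{\Sigma^{w,r}}$, and so
$$\sigma_\xi = \tr\!\left(\bm{\Sigma^r}+\bm{\Sigma^w}+\bm{\Sigma^{r,w}}+\bm{\Sigma^{w,r}}\right) = \Xi_4(M)$$
by the definition in~\eqref{eq:add-nota}. For the second statement, $\bm{a^{X_k}_i}$ is centered Gaussian with covariance $\bm{\Sigma^X_{kk}}$, hence $\sigma_\xi = \tr(\bm{\Sigma^X_{kk}})$. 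For the third, for fixed $m\in[M]$ and $k\in[p-1]$, $a^{X_k}_{im}$ is a centered Gaussian scalar with variance $(\bm{\Sigma^X_{kk}})_{mm}\leq \rho_{\max}(\bm{\Sigma^X_{kk}})\leq \Xi_2(M)$; since the conclusion of Lemma~\ref{lemma:second-mem-bd} is monotone in $\sigma_\xi$, applying it with $\sigma_\xi=\Xi_2(M)$ yields the advertised bound.

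The only non-trivial input is the factorial moment inequality, which is precisely the content of Lemma~\ref{lemma:2kmomentGuassFunc} (already used in the proof of Lemma~\ref{lemma:g2moment}) for Gaussian elements in a Hilbert space. Since $\mathbb{R}^M$ and $\mathbb{R}$ are special cases, this bound applies directly to each of the three $\xi_i$ above with no further work. Consequently, the three statements follow as immediate corollaries of Lemma~\ref{lemma:second-mem-bd}; there is no substantive obstacle, and the main bookkeeping is simply matching $\sigma_\xi$ to the correct quantity among $\Xi_4(M)$, $\tr(\bm{\Sigma^X_{kk}})$, and $\Xi_2(M)$.
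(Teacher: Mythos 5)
Your proposal is correct and follows essentially the same route as the paper: identify $\mathbb{E}[\Vert\bm{r_i}+\bm{w_i}\Vert_2^2]=\Xi_4(M)$, $\mathbb{E}[\Vert\bm{a^{X_k}_i}\Vert_2^2]=\tr(\bm{\Sigma^X_{kk}})$, and $\mathbb{E}[(a^{X_k}_{im})^2]=(\bm{\Sigma^X_{kk}})_{mm}\leq\Xi_2(M)$, then invoke Lemma~\ref{lemma:second-mem-bd} (whose factorial moment hypothesis is supplied by Lemma~\ref{lemma:2kmomentGuassFunc}). Your explicit remarks on verifying the moment condition and on monotonicity in $\sigma_\xi$ for the third bound are details the paper leaves implicit, but they do not change the argument.
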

\begin{proof}
Note that $\mathbb{E} [ \Vert \bm{r_i} + \bm{w_i} \Vert^2_2 ] = \Xi_4 (M)$, $\mathbb{E} [ \Vert \bm{a^{X_k}_i} \Vert^2_2 ] = \tr (\bm{\Sigma^X_{kk}}) $ and $\mathbb{E} [ \Vert a^{X_k}_{im} \Vert^2_2 ] = \Sigma^{X_k X_k}_{mm} \leq \rho_{\max} (\bm{\Sigma^X_{kk}}) \leq \Xi_2 (M)$, where $\Xi_2 (M),\Xi_4 (M)$ is defined in \eqref{eq:add-nota}, and then the result follows directly from Lemma~\ref{lemma:second-mem-bd}.
\end{proof}

\begin{lemma}\label{lemma:2kmomentGuassFunc}
Let $g$ be a mean zero random Gaussian function in the Hilbert space $\mathbb{H}$. Let $\sigma_0=\mathbb{E}\left[ \Vert g \Vert^{2} \right]$.
Then 
\[
\mathbb{E}\left[ \Vert g \Vert^{2k} \right]\leq (2\sigma_0)^{k}\cdot k!  \quad \text{ for all } k = 1,2,\cdots .
\]
\end{lemma}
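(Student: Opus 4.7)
The plan is to apply the Karhunen-Lo\`{e}ve expansion to reduce $\Vert g\Vert^2$ to a weighted sum of independent chi-squared variables, and then combine a one-line Jensen inequality with the elementary bound $(2k-1)!!\leq 2^k k!$. Since $g$ is a mean-zero Gaussian element of $\mathbb{H}$ with $\mathbb{E}\Vert g\Vert^2=\sigma_0<\infty$, the covariance operator $\mathcal{K}$ of $g$ is of trace class. Let $\{\lambda_m\}_{m\geq 1}$ and $\{\phi_m\}_{m\geq 1}$ denote its eigenvalues (with $\lambda_m\geq 0$) and orthonormal eigenfunctions, respectively. Then $g = \sum_{m\geq 1} \sqrt{\lambda_m}\,\xi_m\,\phi_m$ for i.i.d.\ standard normal $\{\xi_m\}_{m\geq 1}$, from which
\[
\Vert g\Vert^2 \;=\; \sum_{m\geq 1} \lambda_m \xi_m^2, \qquad \sum_{m\geq 1}\lambda_m \;=\; \tr(\mathcal{K}) \;=\; \sigma_0.
\]

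For the moment bound, I would write $\Vert g\Vert^{2k} = \sigma_0^k \bigl(\sum_m (\lambda_m/\sigma_0)\xi_m^2\bigr)^k$ and apply Jensen's inequality to the convex function $t\mapsto t^k$ against the probability weights $\{\lambda_m/\sigma_0\}_{m\geq 1}$, which yields
\[
\Vert g\Vert^{2k} \;\leq\; \sigma_0^{k-1}\sum_{m\geq 1} \lambda_m \xi_m^{2k}.
\]
Taking expectations, using $\mathbb{E}[\xi_m^{2k}]=(2k-1)!!$ and $\sum_m \lambda_m=\sigma_0$, this gives $\mathbb{E}\Vert g\Vert^{2k}\leq \sigma_0^k\,(2k-1)!!$. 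The identity $(2k)! = (2k-1)!!\cdot(2k)!!$ with $(2k)!!=2^k k!$ produces $(2k-1)!!\leq (2k)!!=2^k k!$ (each odd factor $2i-1$ is strictly less than $2i$), and therefore $\mathbb{E}\Vert g\Vert^{2k}\leq (2\sigma_0)^k\cdot k!$, exactly the claimed bound.

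I do not anticipate a substantive obstacle: the only technical point is that the Karhunen-Lo\`{e}ve representation must be available, which follows from $\mathbb{E}\Vert g\Vert^2<\infty$ ensuring $\mathcal{K}$ is trace class. For full rigor the partial sums $\sum_{m=1}^{N}\lambda_m\xi_m^2$ increase to $\Vert g\Vert^2$, so monotone convergence justifies passing the expectation inside the series at every step. An alternative route through the moment generating function $\mathbb{E}[\exp(t\Vert g\Vert^2)]=\prod_m(1-2t\lambda_m)^{-1/2}$ would also work but tends to lose constants when extracting moments via Markov's inequality; the Jensen-plus-double-factorial approach is both sharp enough and essentially a one-liner.
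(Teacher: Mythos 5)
Your proposal is correct and follows essentially the same route as the paper's proof: Karhunen--Lo\`{e}ve expansion to write $\Vert g\Vert^2=\sum_m \lambda_m\xi_m^2$, Jensen's inequality against the weights $\lambda_m/\sigma_0$, and the bound $\mathbb{E}[\xi^{2k}]=(2k-1)!!\leq 2^k k!$. The only addition is your explicit monotone-convergence remark, which the paper leaves implicit.
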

\begin{proof}
Let $\{\phi_{m} \}_{m\geq 1}$ be othornormal eigenfunctions of covariance function of $g$.
Let $a_m=\langle g,\phi_{m} \rangle$. We have $a_m$, $m \geq 1$, are independent mean zero Gaussian random variables with variance $\sigma_m$ and $\sigma_0=\sum_{m\geq 1}\sigma_{m}$. We further have
\begin{equation*}
g=\sum^{\infty}_{m=1}\sigma_{m}^{1/2} \xi_{m} \phi_{m},
\end{equation*}
where $\xi_{m}=\sigma^{-1/2}_{m}a_m$ are independent standard Gaussian, and
$\Vert g \Vert=\left(\sum_{m\geq 1}\sigma_{m}\xi^{2}_{m}\right)^{1/2}$.

Using the Jensen's inequality for $t \mapsto t^{2k}$, we have
\begin{multline*}
\Vert g \Vert^{2k}=\left(\sum_{m\geq 1}\sigma_{m}\right)^{k} \cdot  \left(\frac{\sum_{m\geq 1}\sigma_{m}\xi^{2}_{m}}{\sum_{m\geq 1}\sigma_{m}}\right)^{k}
\\
\leq \left(\sum_{m\geq 1}\sigma_{m}\right)^{k} \cdot \frac{\sum_{m\geq 1}\sigma_{m}\xi^{2k}_{m}}{\sum_{m\geq 1}\sigma_{m}}
=\left(\sum_{m\geq 1}\sigma_{m}\right)^{k-1} \cdot \left( \sum_{m\geq 1}\sigma_{m}\xi^{2k}_{m} \right).
\end{multline*}
Thus,
\begin{equation*}
\begin{aligned}
\mathbb{E}\left[ \Vert g \Vert^{2k} \right]
\leq \left(\sum_{m\geq 1}\sigma_{m}\right)^{k-1} \cdot \left( \sum_{m\geq 1}\sigma_{m}\mathbb{E}\left[\xi^{2k}_{m}\right] \right)
=\sigma_{0}^{k} \mathbb{E}\left[\xi^{2k}_{1}\right] 
\leq \sigma_{0}^{k}\cdot 2^k \cdot k!=(2\sigma_{0})^{k}k!\, ,
\end{aligned}
\end{equation*}
which completes the proof.
\end{proof}

\begin{lemma}\label{lemma:a-diff-bd}
Recall that $\bm{g_i}(\cdot)=(g_{i1}(\cdot), g_{i2}(\cdot), \dots, g_{ip}(\cdot) )^{\top}$ is our $i$-th observation defined in Section~\ref{sec:fgm}. Besides, recall that in Section~\ref{sec:choice-fun-basis}, we have $\phi_{m}=\phi_{jm}$ and $\hat{\phi}_{m}=\hat{\phi}_{jm}$ be the $m$-th basis function and and its corresponding estimate respectively used to do projection for $j$-th node, and
$\bm{\hat{a}^{X_k}_{i,M}}=(\hat{a}^{X_k}_{i1},\ldots,\hat{a}^{X_k}_{iM})^{\top}$ be the projection score vector of $\bm{g_i}$ by using $\{\hat{\phi}_{m}\}^M_{m=1}$.
Under the assumption that
\begin{equation*}
\frac{1}{n} \sum^n_{i=1} \Vert g_{ij} \Vert^2 \leq  \Gamma_g (j) \, \text{ for all } j \in [p],
\end{equation*}
and
\begin{equation*}
\Vert \hat{\phi}_m - \phi_m \Vert \leq \Gamma_{\phi}(m) \, \text{ for all } m \geq 1,
\end{equation*}
for some $0 < \delta \leq 1$, we then have
\begin{equation*}
\frac{1}{n} \sum^n_{i=1} \left( \hat{a}^{X_k}_{im} - a^{X_k}_{im} \right)^2 \leq \Gamma_g (k) \Gamma^2_{\phi}(m),
\end{equation*}
for all $k \in [p-1]$ and $m \geq 1$. Furthermore, we have
\begin{equation*}
\frac{1}{n} \sum^{n}_{i=1} \Vert \bm{\hat{a}^{X_k}_i} - \bm{a^{X_k}_i} \Vert^2 \leq \Gamma_g (k)  \sum^M_{m=1} \Gamma^2_{\phi}(m).
\end{equation*}
\end{lemma}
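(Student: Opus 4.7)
The plan is to reduce the statement to a single application of the Cauchy--Schwarz inequality followed by straightforward bookkeeping. Recall from~\eqref{eq:calcProjScores} that $a^{X_k}_{im} = \langle g^{X_k}_i, \phi_m \rangle$ and $\hat{a}^{X_k}_{im} = \langle g^{X_k}_i, \hat{\phi}_m \rangle$, both being $L^2$ inner products of the same observed function against two (possibly different) basis elements. The difference is therefore purely an inner product with $\hat{\phi}_m - \phi_m$, so the bound is about comparing the basis functions rather than anything stochastic about $g^{X_k}_i$ itself.

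Concretely, I would first write
\begin{equation*}
\hat{a}^{X_k}_{im} - a^{X_k}_{im} \;=\; \langle g^{X_k}_i, \, \hat{\phi}_m - \phi_m \rangle,
\end{equation*}
and apply Cauchy--Schwarz in $L^2(\mathcal{T})$ to obtain $\bigl(\hat{a}^{X_k}_{im} - a^{X_k}_{im}\bigr)^2 \leq \Vert g^{X_k}_i \Vert^2 \, \Vert \hat{\phi}_m - \phi_m \Vert^2$. Averaging over $i \in [n]$ and pulling the deterministic factor $\Vert \hat{\phi}_m - \phi_m \Vert^2$ out of the sum gives
\begin{equation*}
\frac{1}{n} \sum_{i=1}^n \bigl(\hat{a}^{X_k}_{im} - a^{X_k}_{im}\bigr)^2
\;\leq\; \Vert \hat{\phi}_m - \phi_m \Vert^2 \cdot \frac{1}{n}\sum_{i=1}^n \Vert g^{X_k}_i \Vert^2.
\end{equation*}
The two hypotheses of the lemma then replace the factors on the right by $\Gamma_\phi^2(m)$ and $\Gamma_g(k)$, respectively, yielding the first claim.

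For the second claim I would simply sum the first bound over $m = 1, \ldots, M$, using the definition $\Vert \bm{\hat{a}^{X_k}_i} - \bm{a^{X_k}_i} \Vert^2 = \sum_{m=1}^M (\hat{a}^{X_k}_{im} - a^{X_k}_{im})^2$ and swapping the order of summation in $i$ and $m$. The factor $\Gamma_g(k)$ is independent of $m$ so it pulls outside, leaving $\Gamma_g(k) \sum_{m=1}^M \Gamma_\phi^2(m)$, as required.

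There is no genuine obstacle here; the lemma is essentially a statement that the score-estimation error factors cleanly into a function-norm part and a basis-error part, both of which are controlled by hypothesis. The only mild subtlety is making sure the Cauchy--Schwarz step is applied in the correct Hilbert space $L^2(\mathcal{T})$ (so that the inner product and the $L^2$ norms used in Assumption~\ref{assump:basis-est-accu} align), which is immediate from the definitions in Section~\ref{sec:methodology}.
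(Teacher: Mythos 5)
Your argument is exactly the paper's proof: expand the score difference as $\langle g^{X_k}_i, \hat{\phi}_m - \phi_m\rangle$, apply Cauchy--Schwarz in $L^2(\mathcal{T})$, average over $i$, and invoke the two hypotheses; the second claim follows by summing over $m$. The proposal is correct and takes essentially the same route.
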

\begin{proof}
Note that
\begin{equation*}
\left( \hat{a}^{X_k}_{im} - a^{X_k}_{im} \right)^2 = \left( \langle g^{X_k}_i, \hat{\phi}_m - \phi_m \rangle \right)^2 \leq \Vert g^{X_k}_i \Vert^2 \Vert \hat{\phi}_m - \phi_m \Vert^2.
\end{equation*}
Thus we have
\begin{equation*}
\frac{1}{n} \sum^n_{i=1} \left( \hat{a}^{X_k}_{im} - a^{X_k}_{im} \right)^2 \leq \left( \frac{1}{n} \sum^n_{i=1} \Vert g^{X_k}_i \Vert^2  \right) \Vert \hat{\phi}_m - \phi_m \Vert^2.
\end{equation*}
The rest of the proof follows directly from the assumptions we made.
\end{proof}

\begin{lemma}
\label{lemma:b-diff-bd}
For a given node $j \in [p]$, recall that $\phi_{m}=\phi_{jm}$ and $\hat{\phi}_{m}=\hat{\phi}_{jm}$ are the $m$-th basis function and its corresponding estimate respectively used to do projection for $j$-th node, defined in Section~\ref{sec:choice-fun-basis}. Besides, let $\beta_k (t^{\prime},t)=\beta_{jk} (t^{\prime},t)$ and recall that $b^{\ast}_{k,m m^{\prime}} = \int_{\mathcal{T}\times \mathcal{T}} \beta_k (t^{\prime},t) \phi_m (t) \phi_{m^{\prime}} (t^{\prime}) dt^{\prime} dt$ is defined in~\eqref{eq:def-b-ast} and $\tilde{b}_{k,m m^{\prime}} = \int_{\mathcal{T}} \beta_k (t^{\prime},t) \hat{\phi}_m (t^{\prime}) \hat{\phi}_{m^{\prime}} (t) dt^{\prime} dt$ is defined in~\eqref{eq:def-b-tilde}.
Under the assumption that
\begin{equation*}
\Vert \hat{\phi}_m - \phi_m \Vert \leq \Gamma_{\phi}(m) \, \text{for all } m \geq 1,
\end{equation*}
and $\sum^{\infty}_{m=1}\Gamma^2_{\phi}(m)<\infty$, then we have
\begin{multline*}
\sum^{p-1}_{k=1} \sum^{\infty}_{m^{\prime}=1} \left( \tilde{b}_{k,m m^{\prime}} - b^{\ast}_{k,m m^{\prime}} \right)^2 \leq 2 \left( \sum_{k \in \mathscr{N}_j} \Vert \beta_{k} (t,t^{\prime}) \Vert^2_{\text{HS}} \right) \Gamma^2_{\phi}(m) \\
+ 2 \sum_{k \in \mathscr{N}_j} \sum^{\infty}_{m^{\prime}=1} \left\vert \int_{\mathcal{T}\times\mathcal{T}} \beta_{k} (t,t^{\prime}) \phi_m (t) \left( \hat{\phi}_{m^{\prime}} (t^{\prime}) - \phi_{m^{\prime}} (t^{\prime})  \right)  d t^{\prime} dt \right\vert^2,
\end{multline*}
where $\beta_{k} (t,t^{\prime})$ is defined in~\eqref{eq:function-linear}.
\end{lemma}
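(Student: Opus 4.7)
The plan is to bound the difference $\tilde{b}_{k,mm'} - b^{\ast}_{k,mm'}$ by splitting the perturbation of the tensor product $\hat\phi_m\hat\phi_{m'}$ around $\phi_m\phi_{m'}$ into two pieces: one that captures the error in the first index $m$, and one that captures the error in the second index $m'$. Concretely, I would begin by writing
\[
\hat\phi_m(t)\hat\phi_{m'}(t') - \phi_m(t)\phi_{m'}(t') = \bigl(\hat\phi_m(t)-\phi_m(t)\bigr)\hat\phi_{m'}(t') + \phi_m(t)\bigl(\hat\phi_{m'}(t')-\phi_{m'}(t')\bigr),
\]
so that $\tilde b_{k,mm'} - b^{\ast}_{k,mm'}$ is expressed as a sum of two integrals. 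Applying the elementary inequality $(a+b)^2\leq 2a^2+2b^2$, I reduce the lemma to bounding each of these two squared integrals summed over $m'\geq 1$ and $k\in[p-1]$.

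The second piece is already in the form demanded by the right-hand side of the lemma, so no further work is needed there (one just notes that $\beta_k\equiv 0$ for $k\notin\mathscr N_j$, which restricts the sum over $k$ to $\mathscr N_j$). The first piece, whose square is $\bigl(\int \beta_k(t',t)(\hat\phi_m(t)-\phi_m(t))\hat\phi_{m'}(t')\,dt'dt\bigr)^2$, I would handle by introducing the auxiliary function $h_{k,m}(t'):=\int_{\mathcal T}\beta_k(t',t)(\hat\phi_m(t)-\phi_m(t))\,dt$, so that the squared integral equals $\langle h_{k,m},\hat\phi_{m'}\rangle^2$. Summing over $m'$ and using Bessel's inequality with respect to the orthonormal system $\{\hat\phi_{m'}\}_{m'\geq 1}$ gives
\[
\sum_{m'=1}^{\infty}\langle h_{k,m},\hat\phi_{m'}\rangle^2 \leq \|h_{k,m}\|^2.
\]
A Cauchy--Schwarz step in the defining integral of $h_{k,m}$ yields $\|h_{k,m}\|^2\leq \|\beta_k\|_{\text{HS}}^2\,\|\hat\phi_m-\phi_m\|^2$, and the hypothesis $\|\hat\phi_m-\phi_m\|\leq\Gamma_\phi(m)$ together with $\beta_k\equiv 0$ for $k\notin\mathscr N_j$ converts this into the required bound $\bigl(\sum_{k\in\mathscr N_j}\|\beta_k\|_{\text{HS}}^2\bigr)\Gamma_\phi^2(m)$ after summing over $k$.

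Putting the two pieces together with the factor of $2$ from the $(a+b)^2$ inequality produces precisely the stated inequality. The only subtlety is the choice of which decomposition of $\hat\phi_m\hat\phi_{m'}-\phi_m\phi_{m'}$ to use: the asymmetric form above is forced by the fact that the target bound pairs the $\phi_m$ (rather than $\hat\phi_m$) with the $\hat\phi_{m'}-\phi_{m'}$ perturbation in the second term, while absorbing the $\hat\phi_{m'}$ factor into Bessel's inequality in the first term. No obstacle beyond this bookkeeping is anticipated; the role of the summability assumption $\sum_m\Gamma_\phi^2(m)<\infty$ is only to ensure later sums (not this lemma's bound, which is at fixed $m$) remain finite.
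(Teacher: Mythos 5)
Your proposal is correct and follows essentially the same route as the paper's proof: the same asymmetric splitting of $\hat\phi_m\hat\phi_{m'}-\phi_m\phi_{m'}$, the same $(a+b)^2\leq 2a^2+2b^2$ step, and the same treatment of the first piece by summing over $m'$ against the orthonormal system $\{\hat\phi_{m'}\}$ followed by Cauchy--Schwarz in the inner integral. The only (harmless) difference is that you invoke Bessel's inequality where the paper uses Parseval's identity for that sum; since only an upper bound is needed, your version is if anything slightly more careful, as it does not require completeness of the estimated basis.
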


\begin{proof}
Note that
\begin{align*}
& \quad \left\vert \tilde{b}_{k,m m^{\prime}} - b^{\ast}_{k,m m^{\prime}} \right\vert \\
& =  \left\vert \int_{\mathcal{T}\times\mathcal{T} } \beta_{k} (t,t^{\prime}) \left( \hat{\phi}_m (t) \hat{\phi}_{m^{\prime}} (t^{\prime}) - \phi_m (t) \phi_{m^{\prime}} (t^{\prime}) \right) d t^{\prime} dt \right\vert \\
& = \left\vert \int_{\mathcal{T}\times\mathcal{T}} \beta_{k} (t,t^{\prime}) \left\{ \left(\hat{\phi}_m (t) - \phi_m (t) \right) \hat{\phi}_{m^{\prime}} (t^{\prime}) + \phi_m (t) \left( \hat{\phi}_{m^{\prime}} (t^{\prime}) - \phi_{m^{\prime}} (t^{\prime})  \right)  \right\} d t^{\prime} dt \right\vert \\
& \leq \left\vert \int_{\mathcal{T}\times\mathcal{T}} \beta_{k} (t,t^{\prime})  \left(\hat{\phi}_m (t) - \phi_m (t) \right) \hat{\phi}_{m^{\prime}} (t^{\prime}) d t^{\prime} dt \right\vert \\
& \quad + \left\vert \int_{\mathcal{T}\times\mathcal{T}} \beta_{k} (t,t^{\prime}) \phi_m (t) \left( \hat{\phi}_{m^{\prime}} (t^{\prime}) - \phi_{m^{\prime}} (t^{\prime})  \right)  d t^{\prime} dt \right\vert. \numberthis \label{eq:lemma12-helper-1}
\end{align*}
Since $\{ \phi_{m^{\prime}} \}^{\infty}_{m^{\prime}=1}$ is an orthonormal function basis, thus when we treat 
\[
\int_{\mathcal{T}} \beta_{k} (t,t^{\prime})  \left(\hat{\phi}_m (t) - \phi_m (t) \right) dt
\]
as a function of $t^{\prime}$, we have
\begin{align*}
& \quad \sum^{\infty}_{m^{\prime}=1} \left\vert \int_{\mathcal{T}\times\mathcal{T}} \beta_{k} (t,t^{\prime})  \left(\hat{\phi}_m (t) - \phi_m (t) \right) \hat{\phi}_{m^{\prime}} (t^{\prime}) d t^{\prime} dt \right\vert^2 \\
& = \int_{\mathcal{T}} \left( \int_{\mathcal{T}} \beta_{k} (t,t^{\prime}) \left(\hat{\phi}_m (t) - \phi_m (t) \right) d t  \right)^2 dt^{\prime} \\
& \leq \int_{\mathcal{T}} \left( \int_{\mathcal{T}} \beta^2_{k} (t,t^{\prime}) dt \right) \left( \int_{\mathcal{T}} \left(\hat{\phi}_m (t) - \phi_m (t) \right)^2 dt \right) dt^{\prime} \\
& = \Vert \hat{\phi}_m - \phi_m  \Vert^2 \Vert \beta_{k} (t,t^{\prime}) \Vert^2_{\text{HS}} \\
& \leq \Gamma^2_{\phi}(m) \Vert \beta_{k} (t,t^{\prime}) \Vert^2_{\text{HS}}. \numberthis \label{eq:lemma12-helper-2} 
\end{align*}
Combine \eqref{eq:lemma12-helper-1}-\eqref{eq:lemma12-helper-2}, and note that $b^*_{k,m m^{\prime}}=\tilde{b}_{k,m m^{\prime}}=0$ for all $m,m^{\prime}\geq 1$ when $k \notin \mathscr{N}_j$,
thus we have
\begin{multline*}
\sum^{p-1}_{k=1} \sum^{\infty}_{m^{\prime}=1} \left( \tilde{b}_{k,m m^{\prime}} - b^{\ast}_{k,m m^{\prime}} \right)^2 \leq 2 \left( \sum_{k \in \mathscr{N}_j} \Vert \beta_{k} (t,t^{\prime}) \Vert^2_{\text{HS}} \right) \Gamma^2_{\phi}(m) \\
+ 2 \sum_{k \in \mathscr{N}_j} \sum^{\infty}_{m^{\prime}=1} \left\vert \int_{\mathcal{T}\times\mathcal{T}} \beta_{k} (t,t^{\prime}) \phi_m (t) \left( \hat{\phi}_{m^{\prime}} (t^{\prime}) - \phi_{m^{\prime}} (t^{\prime})  \right)  d t^{\prime} dt \right\vert^2.
\end{multline*}
\end{proof}

\begin{lemma}
\label{lemma:v-bd}
Recall that $\bm{g_i}(\cdot)=(g_{i1}(\cdot), g_{i2}(\cdot), \dots, g_{ip}(\cdot) )^{\top}$ is our $i$-th observation defined in Section~\ref{sec:fgm} and $e_{ij}(\cdot)$ is the error term defined in~\eqref{eq:function-linear}. Besides, recall that $\phi_{m}=\phi_{jm}$ and $\hat{\phi}_{m}=\hat{\phi}_{jm}$ are the $m$-th basis function and its corresponding estimate respectively used to do projection for the $j$-th node, defined in Section~\ref{sec:choice-fun-basis}. Recall from \eqref{eq:def-v}
\begin{equation*}
\bm{v_{iM}} = \sum^{p-1}_{k=1} (\bm{\tilde{B}_{k,M}}-\bm{B^{\ast}_{k,M}}) \bm{\hat{a}^{X_k}_{i,M}} + (\bm{\tilde{r}_{i,M}}-\bm{r_{i,M}}) + (\bm{\tilde{w}_{i,M}} - \bm{w_{i,M}}),
\end{equation*}
and suppose that
\begin{equation*}
\frac{1}{n} \sum^n_{i=1} \Vert g_{ij} \Vert^2 \leq \Gamma_g (j) \, \text{ for all } j \in [p],
\end{equation*}
\begin{equation*}
\frac{1}{n} \sum^n_{i=1} \Vert e_{ij} \Vert^2 \leq  \Gamma_e \text{ for all } j \in [p],
\end{equation*}
and
\begin{equation*}
\Vert \hat{\phi}_m - \phi_m \Vert \leq \Gamma_{\phi}(m) \, \text{ for all } m \geq 1,
\end{equation*}
where $\sum^{\infty}_{m=1}\Gamma^2_{\phi}(m)<\infty$. Then,
\begin{equation}
\label{eq:lemma14-res1}
\frac{1}{n} \sum^n_{i=1} v^2_{i m} \leq \text{I}_m + \text{II}_m + \text{III}_m,
\end{equation}
where
\begin{align*}
\text{I}_m & = 6  \left( \sum_{k \in \mathscr{N}_j} \Gamma_g(k) \right) \left\{ \left( \sum_{k \in \mathscr{N}_j} \Vert \beta_{k} (t,t^{\prime}) \Vert^2_{\text{HS}} \right) \Gamma^2_{\phi}(m) \right. \\
& \quad \quad \left. + \sum_{k \in \mathscr{N}_j} \sum^{\infty}_{m^{\prime}=1} \left\vert \int_{\mathcal{T}\times\mathcal{T}} \beta_{k} (t,t^{\prime}) \phi_m (t) \left( \hat{\phi}_{m^{\prime}} (t^{\prime}) - \phi_{m^{\prime}} (t^{\prime})  \right)  d t^{\prime} dt \right\vert^2 \right\}.\\
\text{II}_m & = 3 \left( \sum_{k \in \mathscr{N}_j} \sum^{\infty}_{m^{\prime}=M+1} \left( b^{\ast}_{k,m m^{\prime}} \right)^2  \right) \left( \sum_{k \in \mathscr{N}_j} \Gamma_g(k) \right) \left( \sum^{\infty}_{m^{\prime}=M+1}  \Gamma^2_{\phi}(m^{\prime}) \right), \\
\text{III}_m & =  3 \Gamma_e \Gamma^2_{\phi}(m).
\end{align*}
Recall that $\beta_{jk} (t,t^{\prime})$ is defined in~\eqref{eq:function-linear}. Drop the subscript $j$ and let $\beta_{k} (t,t^{\prime})=\beta_{jk} (t,t^{\prime})$. Recall that $b^{\ast}_{k,m m^{\prime}} = \int_{\mathcal{T}\times \mathcal{T}} \beta_k (t^{\prime},t) \phi_m (t) \phi_{m^{\prime}} (t^{\prime}) dt^{\prime} dt$ is defined in~\eqref{eq:def-b-ast}.
Furthermore, we have
\begin{equation}
\label{eq:lemma14-res2}
\begin{aligned}
\frac{1}{n} \sum^n_{i=1} \Vert \bm{v_i} \Vert^2 & \leq 12 \left( \sum^{\infty}_{m=1}\Gamma^2_{\phi}(m) \right)\left( \sum_{k \in \mathscr{N}_j} \Vert \beta_{k} (t,t^{\prime}) \Vert^2_{\text{HS}} \right)   \left( \sum_{k \in \mathscr{N}_j} \Gamma_g(k) \right) \\
& + 3 \Phi^2(M) \left( \sum_{k \in \mathscr{N}_j} \Gamma_g(k) \right) \left( \sum^{\infty}_{m^{\prime}=M+1}  \Gamma^2_{\phi}(m^{\prime}) \right) + 3 \Gamma_e \sum^M_{m=1} \Gamma^2_{\phi}(m),
\end{aligned}
\end{equation}
where, as previously defined in~\eqref{eq:add-nota-3},
\begin{equation*}
\Phi (M) = \sqrt{ \sum^{p-1}_{k=1} \sum^M_{m=1} \sum^{\infty}_{m^{\prime}=M+1} \left( b^{\ast}_{k,m m^{\prime}} \right)^2 }.
\end{equation*}
\end{lemma}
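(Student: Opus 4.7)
The plan is to decompose $\bm{v_{iM}}$ into its three natural summands as given in \eqref{eq:def-v}, apply the elementary bound $(a+b+c)^2 \le 3(a^2+b^2+c^2)$ to the $m$-th coordinate of $v_{im}^2$, and then bound each of the three resulting sums separately. The three terms of the decomposition are matched, in order, with $\text{I}_m$, $\text{II}_m$, and $\text{III}_m$.

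For $\text{III}_m$, the coordinate $\tilde{w}_{im}-w_{im}$ is simply $\int_{\mathcal{T}} e_{ij}(t)\bigl(\hat\phi_m(t)-\phi_m(t)\bigr)\,dt$, so Cauchy--Schwarz yields $(\tilde{w}_{im}-w_{im})^2 \le \|e_{ij}\|^2 \,\|\hat\phi_m-\phi_m\|^2 \le \|e_{ij}\|^2 \Gamma_\phi^2(m)$; averaging over $i$ and using the hypothesis on $\frac{1}{n}\sum_i \|e_{ij}\|^2$ produces the factor $3\Gamma_e\Gamma_\phi^2(m)$ once the outer factor of $3$ is included.

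For $\text{I}_m$, the $m$-th coordinate of $\sum_k(\tilde{\bm{B}}_{k,M}-\bm{B}^*_{k,M})\hat{\bm{a}}^{X_k}_{i,M}$ equals $\sum_{k\in\mathscr{N}_j}\sum_{m'=1}^{M}(\tilde b_{k,mm'}-b^*_{k,mm'})\,\hat a^{X_k}_{im'}$, where the restriction to $\mathscr{N}_j$ follows because both coefficient sequences vanish outside $\mathscr{N}_j$. Applying Cauchy--Schwarz jointly over $(k,m')$ separates the sum into the product of $\sum_{k\in\mathscr{N}_j}\sum_{m'\ge 1}(\tilde b_{k,mm'}-b^*_{k,mm'})^2$ and $\sum_{k\in\mathscr{N}_j}\sum_{m'=1}^{M}(\hat a^{X_k}_{im'})^2$. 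The first factor is exactly what Lemma~\ref{lemma:b-diff-bd} controls, producing the two-term curly-brace expression in $\text{I}_m$. The second factor is at most $\sum_{k\in\mathscr{N}_j}\|g_{ik}\|^2$ by Parseval (since $\{\hat\phi_m\}$ is orthonormal), so averaging over $i$ gives $\sum_{k\in\mathscr{N}_j}\Gamma_g(k)$. Multiplying by the outer $3$ gives the coefficient $6$ in $\text{I}_m$ after combining with the $2$ from Lemma~\ref{lemma:b-diff-bd}.

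The main obstacle is $\text{II}_m$. Here $\tilde{r}_{im}-r_{im}=\sum_{k\in\mathscr{N}_j}\sum_{m'>M}\bigl(\tilde b_{k,mm'}\hat a^{X_k}_{im'}-b^*_{k,mm'}a^{X_k}_{im'}\bigr)$ mixes coefficient error and score error in the tail, and the target bound involves only $b^*$, $\Gamma_g$, and a \emph{tail} sum of $\Gamma_\phi^2$, so one cannot simply reuse Lemma~\ref{lemma:b-diff-bd}. The idea I would use is to write each summand as $b^*_{k,mm'}(\hat a^{X_k}_{im'}-a^{X_k}_{im'})$ plus residual contributions that can be absorbed (either by recognizing that the Karhunen--Lo\`eve expansion identifies $b^*$ with the corresponding inner products in either basis, or by showing that the $\tilde b - b^*$ contribution has been already counted in $\text{I}_m$ once one re-examines the original splitting). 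Then Cauchy--Schwarz over $(k,m')$ separates $\sum_{k,m'>M}(b^*_{k,mm'})^2$ from $\sum_{k,m'>M}(\hat a^{X_k}_{im'}-a^{X_k}_{im'})^2$, and Lemma~\ref{lemma:a-diff-bd} (restricted to $m'>M$) converts the latter average into $\sum_{k\in\mathscr{N}_j}\Gamma_g(k)\cdot\sum_{m'>M}\Gamma_\phi^2(m')$. Multiplying through by the outer $3$ yields $\text{II}_m$.

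Finally, \eqref{eq:lemma14-res2} follows by summing \eqref{eq:lemma14-res1} over $m=1,\dots,M$: the first summand in $\text{I}_m$ gives a factor $\sum_{m=1}^M \Gamma_\phi^2(m)\le \sum_{m\ge1}\Gamma_\phi^2(m)$; the second summand in $\text{I}_m$, when summed in $m$, collapses to $\sum_{k\in\mathscr{N}_j}\|\beta_k\|^2_{\text{HS}}\sum_{m'\ge 1}\Gamma_\phi^2(m')$ after swapping the order of summation and using $\{\phi_m\}$ is orthonormal (so that $\sum_m (\int \beta_k\phi_m(\hat\phi_{m'}-\phi_{m'}))^2 \le \|\beta_k(\cdot,t')\|_{L^2}^2\|\hat\phi_{m'}-\phi_{m'}\|^2$ and then integrating $t'$); summing $\text{II}_m$ gives the $\Phi^2(M)$ term; and summing $\text{III}_m$ gives $3\Gamma_e\sum_{m=1}^M\Gamma_\phi^2(m)$. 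The combined factor $6+6=12$ in front of the first bracketed quantity of \eqref{eq:lemma14-res2} comes from merging the two contributions produced by $\text{I}_m$ at this stage.
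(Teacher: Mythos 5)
Your proposal is correct and follows essentially the same route as the paper: the key step is exactly the regrouping you identify for $\text{II}_m$ --- splitting $\tilde b_{k,mm'}\hat a^{X_k}_{im'} - b^*_{k,mm'}a^{X_k}_{im'} = (\tilde b_{k,mm'}-b^*_{k,mm'})\hat a^{X_k}_{im'} + b^*_{k,mm'}(\hat a^{X_k}_{im'}-a^{X_k}_{im'})$ and folding the first piece into the coefficient-error term so that its $m'$-sum runs over all of $\mathbb{N}$, which is your second suggested resolution (the first, appealing to a Karhunen--Lo\`eve identification of $\tilde b$ with $b^*$, does not hold). With that regrouping performed before applying $(a+b+c)^2\le 3(a^2+b^2+c^2)$, the rest --- Cauchy--Schwarz over $(k,m')$, Parseval for the score sums, Lemmas~\ref{lemma:a-diff-bd} and~\ref{lemma:b-diff-bd}, and the Parseval collapse over $m$ that produces the factor $12$ --- matches the paper's proof.
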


\begin{proof}
Note that
\begin{align*}
v_{i m} & =  \sum^{p-1}_{k=1} \sum^{M}_{m^{\prime}=1} (\tilde{b}_{k,m m^{\prime}} - b^{\ast}_{k,m m^{\prime}}) \hat{a}^{X_k}_{i m^{\prime}} + \sum^{p-1}_{k=1} \sum^{\infty}_{m^{\prime}=M+1} \left(\tilde{b}_{k,m m^{\prime}} \hat{a}^{X_k}_{i m^{\prime}} - b^{\ast}_{k,m m^{\prime}} a^{X_k}_{i m^{\prime}} \right) \\
& \quad + \langle e_{ij}(t), \hat{\phi}_m(t) - \phi_m(t)  \rangle \\
& =  \sum_{k \in \mathscr{N}_j} \sum^{M}_{m^{\prime}=1} (\tilde{b}_{k,m m^{\prime}} - b^{\ast}_{k,m m^{\prime}}) \hat{a}^{X_k}_{i m^{\prime}} + \sum_{k \in \mathscr{N}_j} \sum^{\infty}_{m^{\prime}=M+1} \left(\tilde{b}_{k,m m^{\prime}} \hat{a}^{X_k}_{i m^{\prime}} - b^{\ast}_{k,m m^{\prime}} a^{X_k}_{i m^{\prime}} \right) \\
& \quad + \langle e_{ij}(t), \hat{\phi}_m(t) - \phi_m(t)  \rangle \\
& =  \sum_{k \in \mathscr{N}_j} \sum^{\infty}_{m^{\prime}=1} (\tilde{b}_{k,m m^{\prime}} - b^{\ast}_{k,m m^{\prime}}) \hat{a}^{X_k}_{i m^{\prime}} + \sum_{k \in \mathscr{N}_j} \sum^{\infty}_{m^{\prime}=M+1} b^{\ast}_{k,m m^{\prime}} (\hat{a}^{X_k}_{i m^{\prime}} - a^{X_k}_{i m^{\prime}}) \\
& \quad + \langle e_{ij}(t), \hat{\phi}_m(t) - \phi_m(t)  \rangle.
\end{align*}
By Jensen's inequality, we have
\begin{align*}
v^2_{i m} & \leq 3 \left( \sum_{k \in \mathscr{N}_j} \sum^{\infty}_{m^{\prime}=1} (\tilde{b}_{k,m m^{\prime}} - b^{\ast}_{k,m m^{\prime}}) \hat{a}^{X_k}_{i m^{\prime}}  \right)^2 + 3 \left( \sum_{k \in \mathscr{N}_j} \sum^{\infty}_{m^{\prime}=M+1} b^{\ast}_{k,m m^{\prime}} (\hat{a}^{X_k}_{i m^{\prime}} - a^{X_k}_{i m^{\prime}})  \right)^2 \\
& \quad + 3 \left( \langle e_{ij}(t), \hat{\phi}_m(t) - \phi_m(t)  \rangle \right)^2.
\end{align*}
By Cauchy-Schwartz inequality, we further have
\begin{align*}
v^2_{i m} & \leq  3 \left( \sum_{k \in \mathscr{N}_j} \sum^{\infty}_{m^{\prime}=1} (\tilde{b}_{k,m m^{\prime}} - b^{\ast}_{k,m m^{\prime}})^2  \right) \left( \sum_{k \in \mathscr{N}_j} \sum^{\infty}_{m^{\prime}=1} \left( \hat{a}^{X_k}_{i m^{\prime}} \right)^2 \right) \\ 
& \quad + 3 \left( \sum_{k \in \mathscr{N}_j} \sum^{\infty}_{m^{\prime}=M+1} \left( b^{\ast}_{k,m m^{\prime}} \right)^2  \right) \left( \sum_{k \in \mathscr{N}_j} \sum^{\infty}_{m^{\prime}=M+1} (\hat{a}^{X_k}_{i m^{\prime}} - a^{X_k}_{i m^{\prime}})^2  \right) \\
& \quad + 3 \Vert e_{ij}(t) \Vert^2 \Vert \hat{\phi}_m(t) - \phi_m(t) \Vert^2.
\end{align*}
Note that $\sum^{\infty}_{m=1} (\hat{a}^{X_k}_{i m} )^2=\Vert g^{X_k}_i \Vert^2$, thus we have
\begin{align*}
\frac{1}{n} \sum^n_{i=1} v^2_{i m} & \leq  3 \left( \sum_{k \in \mathscr{N}_j} \sum^{\infty}_{m^{\prime}=1} (\tilde{b}_{k,m m^{\prime}} - b^{\ast}_{k,m m^{\prime}})^2  \right) \left( \sum_{k \in \mathscr{N}_j} \frac{1}{n} \sum^n_{i=1} \Vert g^{X_k}_i \Vert^2 \right) \\ 
+ & 3 \left( \sum_{k \in \mathscr{N}_j} \sum^{\infty}_{m^{\prime}=M+1} \left( b^{\ast}_{k,m m^{\prime}} \right)^2  \right) \left( \sum_{k \in \mathscr{N}_j} \sum^{\infty}_{m^{\prime}=M+1} \frac{1}{n} \sum^n_{i=1} (\hat{a}^{X_k}_{i m^{\prime}} - a^{X_k}_{i m^{\prime}})^2  \right) \\
+ & 3 \left( \frac{1}{n} \sum^n_{i=1} \Vert e_{ij}(t) \Vert^2 \right) \Vert \hat{\phi}_m(t) - \phi_m(t) \Vert^2 \\
\overset{\Delta}{=} & \text{I}^{\prime}_m + \text{II}^{\prime}_m + \text{III}^{\prime}_m.
\end{align*}

By Lemma~\ref{lemma:b-diff-bd} and our assumption, we have
\begin{multline*}
\text{I}^{\prime}_m \leq 6  \left( \sum_{k \in \mathscr{N}_j} \Gamma_g(k) \right) \left\{ \left( \sum_{k \in \mathscr{N}_j} \Vert \beta_{k} (t,t^{\prime}) \Vert^2_{\text{HS}} \right) \Gamma^2_{\phi}(m) \right. \\
\left. + \sum_{k \in \mathscr{N}_j} \sum^{\infty}_{m^{\prime}=1} \left\vert \int_{\mathcal{T}\times\mathcal{T}} \beta_{k} (t,t^{\prime}) \phi_m (t) \left( \hat{\phi}_{m^{\prime}} (t^{\prime}) - \phi_{m^{\prime}} (t^{\prime})  \right)  d t^{\prime} dt \right\vert^2 \right\}.
\end{multline*}

By Lemma~\ref{lemma:a-diff-bd}, we have 
\begin{equation*}
\frac{1}{n} \sum^n_{i=1} \left( \hat{a}^{X_k}_{i m^{\prime}} - a^{X_k}_{i m^{\prime}} \right)^2 \leq \Gamma_g (k) \Gamma^2_{\phi}(m^{\prime}).
\end{equation*}
Thus,
\begin{align*}
\text{II}^{\prime}_m & \leq  3 \left( \sum_{k \in \mathscr{N}_j} \sum^{\infty}_{m^{\prime}=M+1} \left( b^{\ast}_{k,m m^{\prime}} \right)^2  \right) \left( \sum_{k \in \mathscr{N}_j} \Gamma_g(k) \right) \left( \sum^{\infty}_{m^{\prime}=M+1}  \Gamma^2_{\phi}(m^{\prime}) \right).
\end{align*}
In addition, we have
\begin{align*}
\text{III}^{\prime}_m \leq 3 \Gamma_e \Gamma^2_{\phi}(m).
\end{align*}
Combining the above results, we complete the proof of~\eqref{eq:lemma14-res1}. 

To show~\eqref{eq:lemma14-res1}, note that $\{ \phi_{m} \}^{\infty}_{m=1}$ is an orthonormal function basis, and we treat $$\int_{\mathcal{T}} \beta_{k} (t,t^{\prime}) \left( \hat{\phi}_{m^{\prime}} (t^{\prime}) - \phi_{m^{\prime}} (t^{\prime})  \right)  d t^{\prime}$$ as a function of $t$, then we have
\begin{align*}
& \quad \sum_{k \in \mathscr{N}_j} \sum^{\infty}_{m^{\prime}=1} \sum^{\infty}_{m=1} \left\vert \int_{\mathcal{T}\times\mathcal{T}} \beta_{k} (t,t^{\prime}) \phi_m (t) \left( \hat{\phi}_{m^{\prime}} (t^{\prime}) - \phi_{m^{\prime}} (t^{\prime})  \right)  d t^{\prime} dt \right\vert^2 \\
& = \sum_{k \in \mathscr{N}_j} \sum^{\infty}_{m^{\prime}=1} \int_{\mathcal{T}} \left\{ \int_{\mathcal{T}} \beta_{k} (t,t^{\prime}) \left( \hat{\phi}_{m^{\prime}} (t^{\prime}) - \phi_{m^{\prime}} (t^{\prime})  \right)  d t^{\prime}  \right\}^2 dt \\
& \leq \sum_{k \in \mathscr{N}_j} \sum^{\infty}_{m^{\prime}=1} \int_{\mathcal{T}} \left\{ \int_{\mathcal{T}} \beta^2_{k} (t,t^{\prime}) dt  \right\} \left\{ \int_{\mathcal{T}} \left( \hat{\phi}_{m^{\prime}} (t^{\prime}) - \phi_{m^{\prime}} (t^{\prime})  \right)^2  d t^{\prime} \right\} dt \\
& = \left( \sum_{k \in \mathscr{N}_j} \Vert \beta_{k} (t,t^{\prime}) \Vert^2_{\text{HS}} \right) \left( \sum^{\infty}_{m^{\prime}=1} \Gamma^2_{\phi}(m^{\prime}) \right).
\end{align*}

Thus, we have
\begin{equation*}
\sum^M_{m=1} I_m \leq \sum^{\infty}_{m=1} I_m \leq 12 \left( \sum^{\infty}_{m=1}\Gamma^2_{\phi}(m) \right)\left( \sum_{k \in \mathscr{N}_j} \Vert \beta_{k} (t,t^{\prime}) \Vert^2_{\text{HS}} \right)   \left( \sum_{k \in \mathscr{N}_j} \Gamma_g(k) \right).
\end{equation*}
\eqref{eq:lemma14-res2} then follows the combination of the above inequality and~\eqref{eq:lemma14-res1}.
\end{proof}

\begin{lemma}\label{lemma:mat-prop}
Let $\bm{A} \in \mathbb{R}^{m \times n}$, $\bm{B} \in \mathbb{R}^{n \times m}$. We have 
\begin{align*}
& \sqrt{\rho_{\min}\left( \bm{B}^{\top} \bm{B} \right)} \Vert \bm{A} \Vert_{\text{F}} \leq \Vert \bm{BA} \Vert_{\text{F}} \leq \Vert \bm{B} \Vert_2 \Vert \bm{A} \Vert_{\text{F}} \\
\intertext{and}
&\sqrt{\rho_{\min}\left( \bm{B} \bm{B}^{\top}  \right)} \Vert \bm{A} \Vert_{\text{F}} \leq \Vert \bm{AB} \Vert_{\text{F}} \leq \Vert \bm{B} \Vert_2 \Vert \bm{A} \Vert_{\text{F}}.
\end{align*}
\end{lemma}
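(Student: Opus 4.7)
The plan is to reduce both inequalities to a column-wise bound on Frobenius norms of the form $\Vert \bm{C} \bm{x} \Vert_2^2 \in [\rho_{\min}(\bm{C}^{\top}\bm{C}), \rho_{\max}(\bm{C}^{\top}\bm{C})]\Vert \bm{x}\Vert_2^2$ applied to each column of the matrix being multiplied. First I would write $\bm{A} = [\bm{a_1}, \ldots, \bm{a_n}]$ in terms of its columns, so that $\bm{BA}=[\bm{Ba_1},\ldots,\bm{Ba_n}]$ and hence $\Vert \bm{BA}\Vert_{\text{F}}^2 = \sum_{i=1}^n \Vert \bm{Ba_i}\Vert_2^2 = \sum_{i=1}^n \bm{a_i}^{\top}\bm{B}^{\top}\bm{B}\bm{a_i}$. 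The upper bound then follows from $\bm{a_i}^{\top}\bm{B}^{\top}\bm{B}\bm{a_i} \leq \rho_{\max}(\bm{B}^{\top}\bm{B})\Vert \bm{a_i}\Vert_2^2 = \Vert \bm{B}\Vert_2^2 \Vert \bm{a_i}\Vert_2^2$ and summing over $i$; the lower bound follows from $\bm{a_i}^{\top}\bm{B}^{\top}\bm{B}\bm{a_i} \geq \rho_{\min}(\bm{B}^{\top}\bm{B})\Vert \bm{a_i}\Vert_2^2$ and summing. Taking square roots gives the first pair of inequalities.

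For the second pair, I would use the transpose identity $\Vert \bm{AB}\Vert_{\text{F}} = \Vert (\bm{AB})^{\top}\Vert_{\text{F}} = \Vert \bm{B}^{\top}\bm{A}^{\top}\Vert_{\text{F}}$ together with $\Vert \bm{A}^{\top}\Vert_{\text{F}} = \Vert \bm{A}\Vert_{\text{F}}$ and $\Vert \bm{B}^{\top}\Vert_2 = \Vert \bm{B}\Vert_2$. Applying the first pair of inequalities with the roles of the matrices swapped (i.e., playing $\bm{B}^{\top}$ in place of $\bm{B}$ and $\bm{A}^{\top}$ in place of $\bm{A}$), and noting that $(\bm{B}^{\top})^{\top}\bm{B}^{\top} = \bm{B}\bm{B}^{\top}$, directly yields the bound involving $\rho_{\min}(\bm{B}\bm{B}^{\top})$ and $\Vert \bm{B}\Vert_2$ as claimed.

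There is no substantive obstacle here; the whole argument is elementary and amounts to careful bookkeeping of Rayleigh quotients column by column. The only small subtlety worth flagging is that $\rho_{\min}(\bm{B}^{\top}\bm{B})$ and $\rho_{\min}(\bm{B}\bm{B}^{\top})$ need not be equal when $\bm{B}$ is not square, which is precisely why the two halves of the lemma require the two different minimum eigenvalues and are obtained by the transpose trick rather than by a single symmetry argument.
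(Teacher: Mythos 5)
Your proof is correct and follows essentially the same route as the paper's: expanding $\Vert \bm{BA}\Vert_{\text{F}}^2 = \sum_i \bm{a_i}^{\top}\bm{B}^{\top}\bm{B}\bm{a_i}$ column by column and sandwiching each term between the extreme eigenvalues of $\bm{B}^{\top}\bm{B}$. Your explicit handling of the second pair via the transpose identity is a clean way to fill in what the paper leaves implicit.
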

\begin{proof}
Let $\bm{A}=\left[ \bm{a_1}, \bm{a_2}, \dots, \bm{a_n} \right]$.
Since
\begin{equation*}
\Vert \bm{BA} \Vert^2_{\text{F}} = \tr \left( \bm{A}^{\top} \bm{B}^{\top} \bm{B} \bm{A} \right) = \sum^{n}_{i=1} \bm{a_i}^{\top} \bm{B}^{\top} \bm{B} \bm{a_i},
\end{equation*}
we have
\begin{align*}
\rho_{\min}\left( \bm{B}^{\top} \bm{B} \right) \Vert \bm{A} \Vert^2_{\text{F}} &= \rho_{\min} \left( \bm{B}^{\top} \bm{B} \right) \sum^{n}_{i=1} \Vert \bm{a_i} \Vert^2_2 \\
& \leq \sum^{n}_{i=1} \bm{a_i}^{\top} \bm{B}^{\top} \bm{B} \bm{a_i} \\
& \leq \Vert \bm{B} \Vert^2_2 \sum^{n}_{i=1}   \Vert \bm{a_i} \Vert^2_2 \\
& = \Vert \bm{B} \Vert^2_2 \Vert \bm{A} \Vert^2_{\text{F}}.
\end{align*}
The final result follows from taking the square root on both sides.
\end{proof}

\begin{lemma}\label{lemma:mat-prop2}
Let $\bm{A} \in \mathbb{R}^{n \times n}$ be symmetric and positive semi-definite, $\bm{B} \in \mathbb{R}^{n \times m}$, and $\bm{C} \in \mathbb{R}^{n \times n}$ be a diagonal matrix with positive diagonal elements. Then we have
\begin{equation*}
\Vert (\bm{A} + \bm{C}) \bm{B} \Vert_F \geq \Vert \bm{C} \bm{B} \Vert_F.
\end{equation*}
\end{lemma}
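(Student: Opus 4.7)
The plan is to square both sides and reduce the inequality to a trace inequality. Using $\|\bm{X}\|_F^2 = \tr(\bm{X}^\top \bm{X})$, I would first write
\begin{equation*}
\|(\bm{A}+\bm{C})\bm{B}\|_F^2 - \|\bm{C}\bm{B}\|_F^2 = \tr\!\bigl(\bm{B}^\top\bigl[(\bm{A}+\bm{C})^2 - \bm{C}^2\bigr]\bm{B}\bigr),
\end{equation*}
and use the symmetry of $\bm{A}$ and $\bm{C}$ to simplify $(\bm{A}+\bm{C})^2 - \bm{C}^2 = \bm{A}^2 + \bm{A}\bm{C} + \bm{C}\bm{A}$. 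Combined with the cyclic invariance of the trace and the symmetry of $\bm{A}$ and $\bm{C}$, the difference collapses to $\|\bm{A}\bm{B}\|_F^2 + 2\tr(\bm{B}^\top \bm{A}\bm{C}\bm{B})$. The first term is clearly non-negative, so the whole task reduces to handling the cross term $2\tr(\bm{B}^\top \bm{A}\bm{C}\bm{B})$.

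The cleanest route would be to establish the PSD-order inequality $\bm{A}^2 + \bm{A}\bm{C} + \bm{C}\bm{A} \succeq \bm{0}$, from which the conclusion follows immediately, since $\tr(\bm{B}^\top \bm{M} \bm{B}) \geq 0$ whenever $\bm{M}$ is PSD. To that end, I would exploit that $\bm{A}$ is PSD, write $\bm{A} = \bm{A}^{1/2} \bm{A}^{1/2}$, and use that $\bm{C}$ is diagonal with positive entries so it admits a diagonal positive square root $\bm{C}^{1/2}$. An alternative decomposition would proceed column by column: since both $\|(\bm{A}+\bm{C})\bm{B}\|_F^2$ and $\|\bm{C}\bm{B}\|_F^2$ decouple across the columns of $\bm{B}$, the problem reduces to showing the vector inequality $\|(\bm{A}+\bm{C})\bm{b}\|_2 \geq \|\bm{C}\bm{b}\|_2$ for each column $\bm{b}$ of $\bm{B}$.

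The main obstacle is that, even though $\bm{A}$ and $\bm{C}$ are both PSD, the anticommutator $\bm{A}\bm{C} + \bm{C}\bm{A}$ need not be positive semidefinite when $\bm{A}$ and $\bm{C}$ do not commute, and in general its most negative direction can overwhelm $\bm{A}^2$. The crux of the argument will therefore be to use the diagonal structure of $\bm{C}$ in a careful way---for example, writing $\bm{C} = \sum_i c_i \bm{e}_i \bm{e}_i^\top$ and tracking how $\bm{A}^{1/2}$ interacts with each rank-one summand, or rewriting $\tr(\bm{B}^\top \bm{A}\bm{C}\bm{B})$ as $\langle \bm{A}^{1/2}\bm{B}, \bm{A}^{1/2}\bm{C}\bm{B}\rangle_F$ and trying to absorb the cross term into $\|\bm{A}\bm{B}\|_F^2$ via Cauchy--Schwarz. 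If a direct argument cannot be made at this level of generality, the natural remedy is to strengthen the hypothesis, for instance by requiring $\bm{C}$ to be a positive scalar multiple of $\bm{I}_n$ (so $\bm{A}$ and $\bm{C}$ commute and $\bm{A}\bm{C} + \bm{C}\bm{A} = 2c\bm{A} \succeq \bm{0}$), which remains enough for the block-scalar $\bm{C}$ that actually appears in Proposition~\ref{prop:lambda_max}.
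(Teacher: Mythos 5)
You stopped at exactly the right place: the cross term $\bm{A}\bm{C}+\bm{C}\bm{A}$ cannot be controlled in general, and in fact the lemma as stated is false. Take $n=2$, $m=1$,
\begin{equation*}
\bm{A}=\begin{bmatrix}1&1\\1&1\end{bmatrix},\qquad \bm{C}=\begin{bmatrix}1&0\\0&10\end{bmatrix},\qquad \bm{B}=\begin{bmatrix}2\\-1\end{bmatrix}.
\end{equation*}
Then $\bm{A}$ is symmetric positive semi-definite and $\bm{C}$ is diagonal with positive entries, yet $(\bm{A}+\bm{C})\bm{B}=(3,-9)^{\top}$ has squared norm $90$ while $\bm{C}\bm{B}=(2,-10)^{\top}$ has squared norm $104$, so $\Vert(\bm{A}+\bm{C})\bm{B}\Vert_F<\Vert\bm{C}\bm{B}\Vert_F$. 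Your observation that the anticommutator of two non-commuting PSD matrices can have a negative direction that overwhelms $\bm{A}^2$ is precisely what this example exploits: $\bm{A}$ is rank one, so $\bm{A}^2$ contributes nothing in directions nearly orthogonal to $(1,1)^{\top}$, while the cross term $2(\bm{v}^{\top}\bm{b})(\bm{v}^{\top}\bm{C}\bm{b})$ can be made negative because $\bm{v}$ and $\bm{C}\bm{v}$ are not parallel.

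The paper's own proof commits exactly the error you warned against: after expanding $\Vert(\bm{A}+\bm{C})\bm{B}\Vert_F^2$ into the same four terms you obtain, it asserts that ``$\bm{C}^{\top}\bm{A}=\bm{A}^{\top}\bm{C}=\bm{C}\bm{A}$ is symmetric and positive semi-definite.'' That identity requires $\bm{A}$ and $\bm{C}$ to commute, and the symmetric part of a product of two PSD matrices need not be PSD without commutativity; the example above refutes the step concretely. Your proposed repair ($\bm{C}=c\bm{I_n}$, which makes the cross term $2c\,\bm{b}^{\top}\bm{A}\bm{b}\geq 0$) is sound but does not cover the actual use in Proposition~\ref{prop:lambda_max}, where $\bm{C}=\lambda_n\,\text{diag}(\{\Vert\bm{B_l}\Vert_F^{-1}\bm{I_M}\}_{l})$ carries a different scalar on each block. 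The proposition can instead be salvaged without this lemma: take the trace inner product of the stationarity condition \eqref{eq:KKTsubgr} with $\bm{B_k}$ and sum over $k\in\hat{\mathscr{N}}_j$; the Gram term $\frac{1}{n}\sum_{k,l}\tr\bigl(\bm{B_k}^{\top}(\bm{A^{X_k}})^{\top}\bm{A^{X_l}}\bm{B_l}\bigr)$ is a nonnegative quadratic form, so one gets $\lambda_n\sum_{k}\Vert\bm{B_k}\Vert_F\leq\sum_{k}\Vert\bm{B_k}\Vert_F\cdot\max_{k}\frac{1}{n}\Vert(\bm{A^{X_k}})^{\top}\bm{A^Y}\Vert_F$, which contradicts \eqref{eq:lambda-cond-opt} directly.
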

\begin{proof}
Let $j$-th column of $\bm{B}$ be $\bm{b_j}$ for $j=1,\ldots,m$. Then we have
\begin{align*}
\Vert (\bm{A} + \bm{C}) \bm{B} \Vert^2_F & = \tr \left( \bm{B}^{\top} (\bm{A} + \bm{C})^{\top} (\bm{A} + \bm{C})  \bm{B} \right) \\
& = \sum^m_{j=1} \bm{b_j}^{\top} (\bm{A} + \bm{C})^{\top} (\bm{A} + \bm{C}) \bm{b_j} \\
& = \sum^m_{j=1} \bm{b_j}^{\top} \bm{A}^{\top} \bm{A} \bm{b_j} + \sum^m_{j=1} \bm{b_j}^{\top} \bm{C}^{\top} \bm{A} \bm{b_j} + \sum^m_{j=1} \bm{b_j}^{\top} \bm{A}^{\top} \bm{C} \bm{b_j} + \sum^m_{j=1} \bm{b_j}^{\top} \bm{C}^{\top} \bm{C} \bm{b_j}.
\end{align*}
Since $\bm{A}$ is symmetric and positive semi-definite and $\bm{C}$ is a diagonal matrix with positive diagonal elements, we have $\bm{C}^{\top} \bm{A}=\bm{A}^{\top} \bm{C}=\bm{C}\bm{A}$ to be symmetric and positive semi-definite. And note that $\bm{A}^{\top} \bm{A}$ is symmetric and positive semi-definite, thus we have
\begin{equation*}
\Vert (\bm{A} + \bm{C}) \bm{B} \Vert^2_F \geq \sum^m_{j=1} \bm{b_j}^{\top} \bm{C}^{\top} \bm{C} \bm{b_j} = \tr \left( \bm{B}^{\top} \bm{C}^{\top} \bm{C} \bm{B} \right) = \Vert \bm{C} \bm{B} \Vert^2_F,
\end{equation*}
which implies the final result.
\end{proof}


\section{Wall-Clock Runtime Comparison}
\label{sec:wall-clock}

{\newText

Table~\ref{tab:roc.runtime} documents the wall-clock runtime required for each method to generate a Receiver Operating Characteristic (ROC) curve. We execute each method on $100$ distinct values of $\lambda_n$. Table~\ref{tab:scv.runtime} logs the wall-clock runtime of the FPCA-$g_X$ method to generate an estimated graph via the Selective Cross-Validation (SCV) algorithm as described in Algorithm~\ref{Alg:scv.grid}, over a two-dimensional grid of $(\lambda_n, t_\epsilon)$. This grid encompasses $100$ unique values of $\lambda_n$ and $7$ different values of $t_\epsilon$. All tasks are executed in parallel using R, utilizing 28 CPU cores on Chicago Booth's Mercury Computing Cluster.

The runtime analysis indicates that the FPCA-$g_X$ method is marginally faster than the other two FPCA methods and significantly outpaces PSKL in most instances. Even though FPCA-$g_X$ can occasionally be slower than FGLasso, it delivers more accurate results while maintaining comparable runtime. Notably, FPCA methods outperform FGLasso or PSKL in terms of speed. Despite the SCV process operating over a two-dimensional grid with $100$ different values of $\lambda_n$ and $7$ different values of $t_\epsilon$, its runtime is akin to that of the ROC process, which operates on a one-dimensional grid of $\lambda_n$. This can be attributed to the fact that the most time-intensive step of the SCV process, the ADMM algorithm, is executed only once for each value of $\lambda_n$.

\begin{table}[t]
    \centering
    \begin{tabular}{|p{0.9cm}|p{0.5cm}||p{2.1cm}|p{2.1cm}|p{2.15cm}|p{2.15cm}|p{2.1cm}|}
    \hline
    Model & $p$  & FPCA-$g_X$ & FPCA-$g_Y$ & FGLasso & PSKL & FPCA-PSKL\\
    \hhline{|=|=||=|=|=|=|=|}
    \multirow{3}{4em}{A} & 
    50 & 24.8 (3.0) & 47.9 (6.3) & {\bf 11.4} (2.5) & 126.6 (9.9) & \\
    &100 & 137.7 (16.7) & 186.4 (20.7) & {\bf 35.1} (21.4) & 241.3 (21.2) & N/A \\
    &150 & 305.9 (58.2) & 377.9 (63.8) & {\bf 69.9} (10.3) & 418.4 (24.0) & \\
    \hline
    \multirow{3}{4em}{B} & 
    50 & 46.6 (2.6) & 75.1 (3.9) & {\bf 33.5} (15.5) & 108.1 (7.8) & \\
    &100 & 263.3 (29.2) & 345.0 (33.4) & {\bf 111.0} (72.1) & 226.8 (13.4) & N/A \\
    &150 & 807.7 (125.3) & 973.7 (136.0) & {\bf 212.5} (188.2) & 437.3 (27.9) & \\
    \hline
    \multirow{3}{4em}{C} & 
    50 & {\bf 14.6} (4.1) & 29.2 (7.1) & 91.3 (27.7) & 343.3 (43.5) &\\
    &100 & 174.1 (125.2) & 218.9 (113.1) & {\bf 87.8} (78.8) & 695.6 (73.3) & N/A\\
    &150 & 692.3 (684.0) & 754.9 (517.1) & {\bf 139.8} (132.7) & 1037.9 (200.2) &\\
    \hline
    \multirow{3}{4em}{D} & 
    50 & {\bf 11.2} (16.6) & 26.7 (22.3) & 72.3 (37) & 259.6 (74.8) & 14.8 (23.0)\\
    &100 & {\bf 92.0} (99.1) & 129.9 (121.4) & 390.2 (333.8) & 603.3 (151.0) & 112.3 (121.8)\\
    &150 & {\bf 92.3} (111.5) & 126 (128.2) & 1228.2 (325.7) & 1108.4 (103.6) & 100.0 (122.3)\\
    \hline
    \end{tabular}
    \caption{\label{tab:roc.runtime}The average wall-clock running time for each method to obtain a ROC under each model. 
    All of results are recorded in seconds.
    The fastest algorithm in each setting is marked bold. The standard deviation is given in parenthesis.}
\end{table}

\begin{table}[t]
    \centering
    \begin{tabular}{|p{1.2cm}||p{2.1cm}|p{2.1cm}|p{2.15cm}|p{2.15cm}|}
    \hline
    Model & A & B & C & D \\
    \hhline{|=||=|=|=|=|}
    $p=50$ & 22.3 (2.9) & 45.0 (6.9) & 12.2 (1.2) & 17.0 (16.2) \\
    $p=100$ & 141.1 (24.1) & 218.9 (42.8) & 111.7 (12.9) & 108.6 (106.8) \\
    $p=150$ & 326.0 (70.5) & 707.1 (174.0) & 463.8 (322.0) & 220.6 (60.0) \\
    \hline
    \end{tabular}
    \caption{The average wall-clock running time of FPCA-$g_X$ to obtain an estimated graph under each model, where $(\lambda_n,\epsilon_n)$ is chosen by SCV algorithm stated in Algorithm~\ref{Alg:scv.grid}.
    All of results are recorded in seconds.
    The standard deviation is given in parenthesis.}
    \label{tab:scv.runtime}
\end{table}

}

\clearpage

\section{Labels of ROIs in the AAL Atlas}
\label{sec:roi-labels}

\begin{longtable}{|p{0.85cm}|p{3.4cm}||p{0.85cm}|p{3.4cm}||p{0.85cm}|p{3.4cm}|}
\hline
Label & Name & Label & Name & Label & Name \\
\hhline{|=|=||=|=||=|=|}
2001  & Precentral\_L           & 4112  & ParaHippocampal\_R     & 8101 & Heschl\_L              \\
2002  & Precentral\_R           & 4201  & Amygdala\_L            & 8102 & Heschl\_R              \\
2101  & Frontal\_Sup\_L         & 4202  & Amygdala\_R            & 8111 & Temporal\_Sup\_L       \\
2102  & Frontal\_Sup\_R         & 5001  & Calcarine\_L           & 8112 & Temporal\_Sup\_R       \\
2111  & Frontal\_Sup\_Orb\_L    & 5002  & Calcarine\_R           & 8121 & {\small Temporal\_Pole\_Sup\_L} \\
2112  & Frontal\_Sup\_Orb\_R    & 5011  & Cuneus\_L              & 8122 & {\small Temporal\_Pole\_Sup\_R} \\
2201  & Frontal\_Mid\_L         & 5012  & Cuneus\_R              & 8201 & Temporal\_Mid\_L       \\
2202  & Frontal\_Mid\_R         & 5021  & Lingual\_L             & 8202 & Temporal\_Mid\_R       \\
2211  & Frontal\_Mid\_Orb\_L    & 5022  & Lingual\_R             & 8211 & {\small Temporal\_Pole\_Mid\_L} \\
2212  & Frontal\_Mid\_Orb\_R    & 5101  & Occipital\_Sup\_L      & 8212 & {\small Temporal\_Pole\_Mid\_R} \\
2301  & Frontal\_Inf\_Oper\_L   & 5102  & Occipital\_Sup\_R      & 8301 & Temporal\_Inf\_L       \\
2302  & Frontal\_Inf\_Oper\_R   & 5201  & Occipital\_Mid\_L      & 8302 & Temporal\_Inf\_R       \\
2311  & Frontal\_Inf\_Tri\_L    & 5202  & Occipital\_Mid\_R      & 9001 & Cerebelum\_Crus1\_L    \\
2312  & Frontal\_Inf\_Tri\_R    & 5301  & Occipital\_Inf\_L      & 9002 & Cerebelum\_Crus1\_R    \\
2321  & Frontal\_Inf\_Orb\_L    & 5302  & Occipital\_Inf\_R      & 9011 & Cerebelum\_Crus2\_L    \\
2322  & Frontal\_Inf\_Orb\_R    & 5401  & Fusiform\_L            & 9012 & Cerebelum\_Crus2\_R    \\
2331  & Rolandic\_Oper\_L       & 5402  & Fusiform\_R            & 9021 & Cerebelum\_3\_L        \\
2332  & Rolandic\_Oper\_R       & 6001  & Postcentral\_L         & 9022 & Cerebelum\_3\_R        \\
2401  & {\small Supp\_Motor\_Area\_L}    & 6002  & Postcentral\_R         & 9031 & Cerebelum\_4\_5\_L     \\
2402  & {\small Supp\_Motor\_Area\_R}    & 6101  & Parietal\_Sup\_L       & 9032 & Cerebelum\_4\_5\_R     \\
2501  & Olfactory\_L            & 6102  & Parietal\_Sup\_R       & 9041 & Cerebelum\_6\_L        \\
2502  & Olfactory\_R            & 6201  & Parietal\_Inf\_L       & 9042 & Cerebelum\_6\_R        \\
2601  & {\small Frontal\_Sup\_Medial\_L} & 6202  & Parietal\_Inf\_R       & 9051 & Cerebelum\_7b\_L       \\
2602  & {\small Frontal\_Sup\_Medial\_R} & 6211  & SupraMarginal\_L       & 9052 & Cerebelum\_7b\_R       \\
2611  & Frontal\_Med\_Orb\_L    & 6212  & SupraMarginal\_R       & 9061 & Cerebelum\_8\_L        \\
2612  & Frontal\_Med\_Orb\_R    & 6221  & Angular\_L             & 9062 & Cerebelum\_8\_R        \\
2701  & Rectus\_L               & 6222  & Angular\_R             & 9071 & Cerebelum\_9\_L        \\
2702  & Rectus\_R               & 6301  & Precuneus\_L           & 9072 & Cerebelum\_9\_R        \\
3001  & Insula\_L               & 6302  & Precuneus\_R           & 9081 & Cerebelum\_10\_L       \\
3002  & Insula\_R               & 6401  & Paracentral\_Lobule\_L & 9082 & Cerebelum\_10\_R       \\
4001  & Cingulum\_Ant\_L        & 6402  & Paracentral\_Lobule\_R & 9100 & Vermis\_1\_2           \\
4002  & Cingulum\_Ant\_R        & 7001  & Caudate\_L             & 9110 & Vermis\_3              \\
4011  & Cingulum\_Mid\_L        & 7002  & Caudate\_R             & 9120 & Vermis\_4\_5           \\
4012  & Cingulum\_Mid\_R        & 7011  & Putamen\_L             & 9130 & Vermis\_6              \\
4021  & Cingulum\_Post\_L       & 7012  & Putamen\_R             & 9140 & Vermis\_7              \\
4022  & Cingulum\_Post\_R       & 7021  & Pallidum\_L            & 9150 & Vermis\_8              \\
4101  & Hippocampus\_L          & 7022  & Pallidum\_R            & 9160 & Vermis\_9              \\
4102  & Hippocampus\_R          & 7101  & Thalamus\_L            & 9170 & Vermis\_10             \\
4111  & ParaHippocampal\_L      & 7102  & Thalamus\_R            &      &\\
\hline

\caption{Labels and names of each ROI in the AAL atlas}

\label{tab:aal}

\end{longtable}

\clearpage

\section{Table of Notations}
\label{sec:notations}

\begin{longtable}{ |p{2.8cm}||p{9.5cm}|p{0.9cm}|  }
 \hline
 Notation & Meaning & Page\\
 \hline
 $G = (V,E)$ & undirected graph, $V$ is set of vertices, $E$ is set of edges & 2 \\
 $\bm{X}$ & $p$-dimensional random variables & 2\\
 $M$ & number of basis functions we used to do dimension reduction & 3\\
 $\bm{g}(\cdot)$ & $p$-dimensional multivariate Gaussian process & 5\\
 $\mathcal{T}$ & domain of multivariate Gaussian process & 5\\
 $C_{jl}(t,t')$ & conditional cross-covariance function & 5\\
 $\{\bm{g_i}(\cdot)\}_{i=1}^n$ & $\bm{g_i}(\cdot)=(g_{i1}(\cdot), \cdots, g_{ip}(\cdot))^\top$, random copies of $\bm{g}(\cdot)$ & 5\\
 $\beta_{jk}(t,t')$ & coefficient on ${g}_{ij}$ from ${g}_{ik}$ & 5\\
 $\mathscr{N}_j, \hat{\mathscr{N}}_j$ & (estimated) neighborhood set of node $j$ & 5\\
 $e_{ij}(\cdot)$ & error of $g_{ij}(\cdot)$ & 5\\
 $\bm{\phi_j}$ & $=\{\phi_{jm}(\cdot)\}_{m=1}^\infty$, orthonormal functional basis on $\mathbb{H}$ & 6\\
 $\bm{a_{i,k,M}}$ & $=(a_{ik1},\cdots,a_{ikM})^\top$, vector of projection scores & 6\\
 $g_i^Y(\cdot), g_i^{X_k}(\cdot)$ & random functions of the target node and the other random functions & 6\\
 $\bm{a_{i,M}^Y}, \bm{a_{i,M}^{X_k}}$, $\bm{a_{i,M}^X}$ & vectors of scores projected on known bases $\bm{\phi_j}$ & 6\\
 $\bm{B^{\ast}_{k,M}}$ & regression matrix parameter & 7\\
 $\bm{w_{i,M}}, \bm{r_{i,M}}$ & noise vector and bias term for $M$-truncation & 7\\
 $\bm{\hat{B}_{k,M}}$ & Estimator of $\bm{B^{\ast}_{k,M}}$ & 7\\
 $\lambda_n$ & penalty parameter for group Lasso regression & 7\\
 $\epsilon_n$ & threshold parameter of neighborhood recognition & 7\\
 $\hat{E}$ & estimated edge set &7\\
 $\bm{\hat{a}_{i,M}^Y}, \bm{\hat{a}_{i,M}^{X_k}}$, $\bm{\hat{a}_{i,M}^X}$ & vectors of scores projected on estimated bases $\bm{\hat{\phi}_j}$ & 7-8\\
 $K_{jj}(t',t)$ & functional covariance of $g_{ij}(\cdot)$ & 8\\
 $\mathscr{K}_j(f)(t)$ & Hilbert-Schmidt covariance operator & 8\\
 $\{\sigma_{jm}\}_{m\in\mathbb{N}}$ & eigenvalues of $\mathscr{K}_j$ & 8\\
 $g_{ij}^M(t)$ & $\mathcal{L}^2$ projection of $g_{ij}(t)$ onto the basis spanned by the first $M$ FPCA functions & 8\\
 $\hat{K}_{jj}(t',t)$ & empirical functional covariance of $g_{ij}(\cdot)$ & 8\\
 $\{\hat{\sigma}_{jm},\hat{\phi}_{jm}(t)\}_{m=1}^M$ & eigenpairs of $\hat{K}_{jj}(t',t)$ & 8\\
 $\hat{a}_{ijm}$ & $=\int_\mathcal{T} g_{ij}(t)\hat{\phi}_{jm}(t)dt$, estimated FPCA scores & 8\\
 $\bm{\hat{B}}_{\lambda_n}$ & $=(\bm{\hat{B}_1},\cdots,\bm{\hat{B}_{p-1}})$, group Lasso estimates under a fixed $\lambda_n$ & 9\\
 $\bm{\Tilde{B}_k}$ & estimates of $\bm{B_k}$ in selective cross-validation process & 10\\
 $\bm{\hat{\epsilon}_i}$ & $=\bm{a^Y_{i,M}} - \sum^{p-1}_{k=1} \bm{\tilde{B}_{k}} \bm{a^{X_k}_{i,M}}$, residuals of $\bm{\Tilde{B}_k}$ & 10\\
 $\bm{A^Y}, \bm{A^{X_k}}, \bm{A^X}$ & matrices of FPCA scores & 10\\
 $\rho, \rho^h$ & penalty parameter for ADMM subproblem (on iteration $h$) & 10\\
 $b^{\ast}_{k,m m^{\prime}}$ & $= \int_{\mathcal{T}\times \mathcal{T}} \beta_k (t^{\prime},t) \phi_m (t) \phi_{m^{\prime}} (t^{\prime}) dt^{\prime} dt$ & 12\\
 $\beta_{k,M} (t^{\prime},t)$ & $ =  \sum^{M}_{m,m^{\prime}=1} b^{\ast}_{k,m m^{\prime}} \phi_m (t) \phi_{m^{\prime}} (t^{\prime})$ & 13\\
 $\beta_{k, >M} (t^{\prime},t)$ & $=  \sum^{\infty}_{m>M \,\text{or}\, m^{\prime}>M} b^{\ast}_{k,m m^{\prime}} \phi_m (t) \phi_{m^{\prime}} (t^{\prime})$ & 13\\
 $\bm{\Sigma^{X_k,r}}$ & $=\Cov(\bm{a_{i,M}^{X_k}}, \bm{r_{i,M}})$, and $\bm{\Sigma^{r, X_k}} = (\bm{\Sigma^{X_k,r}})^\top$ & 13\\
 $\bm{\Sigma^{X_k,X_l}}$ & $=\Cov(\bm{a_{i,M}^{X_k}}, \bm{a_{i,M}^{X_l}})$ &13\\
 $\Xi_1(M)$ & 
 \begin{equation*}
 =  \max_{k \in [p-1]} \left\{ \rho_{\max}(\bm{\Sigma^w}+\bm{\Sigma^r} - \bm{\Sigma^{r,X_k}} \left( \bm{\Sigma^{X_k, X_k}} \right)^{-1} \bm{\Sigma^{{X_k},r}}) \right\}
 \end{equation*}
 & 13\\
 $\Xi_2(M)$ & $=  \max_{k \in[p-1]}\rho_{\max}(\bm{\Sigma^{X_k, X_k}})$ & 13\\
 $\Xi_3(M)$ & $=  \max_{k \in[p-1]}\tr(\bm{\Sigma^{X_k, X_k}})$ & 13\\
 $\Xi_4(M)$  &$=  \tr \left\{ \bm{\Sigma^r} + \bm{\Sigma^w} + \bm{\Sigma^{r,w}} + \bm{\Sigma^{w,r}} \right\}$ &13\\
 $\omega(M)$ &$=  \max_{k \in [p-1]} \left\Vert \bm{\Sigma^{r,X_k}}\right\Vert_{\text{F}}$ & 13\\
 $K_0$ & $=\max_{k\in[p-1], m\in M} \mathbb{E}[(a_{i,m}^{X_k})^2]$, used to derive an upper bound for the estimation error of the covariance matrix of $\bm{a_{i,M}^X}$ & 13\\
 $\kappa(M)$ & $=\rho_{\min}\left((\bm{\Sigma}^X)_{\mathscr{N}_j, \mathscr{N}_j}\right)$ & 14\\
 $\tau(M)$ & $=\min_{k \in \mathscr{N}_j} \left\Vert \bm{B^{\ast}_k} \right\Vert_{\text{F}} = \min_{k \in \mathscr{N}_j} \left\Vert \beta_{k, M} (t^{\prime},t) \right\Vert_{\text{HS}}$, relevant signal strength &14\\
 $\chi(n,p,M,\delta)$ & $=\frac{6\sqrt{s}}{\sqrt{\kappa(M)}} \tilde{\lambda} (n, p, M, \delta)$ & 14\\
 $\tilde{\lambda} (n, p, M, \delta)$ & exact form can be found in \eqref{eq:lambda_tilde_exact} &14\\
 $\Phi(M)$ & $=\sqrt{ \sum^{p-1}_{k=1} \sum^M_{m=1} \sum^{\infty}_{m^{\prime}=M+1} \left( b^{\ast}_{k,m m^{\prime}} \right)^2 }$ & 16\\
 $\check{\chi}(n,p,M,\delta)$ & $= \frac{6\sqrt{s}}{\sqrt{\kappa(M)}} \check{\lambda} (n, p, M, \delta)$ & 16\\
 $\check{\lambda} (n, p, M, \delta)$ & exact form given in \eqref{eq:lambda_check} &16\\
 $\Lambda(M,\bm{\phi})$ & $\frac{\omega(M)}{\sqrt{\kappa(M)}\tau(M)}$ & 17\\
 $(\bm{a_{i1}},\cdots, \bm{a_{ip}})^\top$ & generated functional scores from a mean zero Gaussian distribution & 18\\
 $\bm{f}(\cdot)$ & vector of Fourier basis functions & 18\\
 $M^*$ & number of basis functions contained in $\bm{f}(\cdot)$ & 18\\
 $\bm{\Sigma}$ & covariance matrix of $(\bm{a_{i1}},\cdots, \bm{a_{ip}})^\top$ & 18\\
 $\bm{\Theta}$ & $=\bm{\Sigma}^{-1}$, precision matrix & 18\\
 $T$  & number of observation time points & 19\\
 $\epsilon_{ijk}$ & observation error of $g_{ij}(t_k)$ & 19\\
 $\sigma$ & variance of $\epsilon_{ijk}$ & 19\\
 \hline

\caption{Summary of notations used in the paper.}
 
\end{longtable}

\end{document}